\documentclass{article} 
\usepackage[margin=1.2in]{geometry}


\usepackage{hyperref}
\usepackage{url}

\usepackage{xcolor}
\usepackage{amsmath}
\usepackage{amsthm}
\usepackage[ruled, noend, noline]{algorithm2e}
\usepackage{mathtools}
\usepackage{amsfonts}
\usepackage{graphicx,epstopdf}
\usepackage{tensor}


\DeclareMathOperator{\tr}{tr}
\DeclareMathOperator{\rank}{rank}
\DeclareMathOperator{\conv}{\textbf{conv}}

\DeclareMathOperator{\find}{find}
\DeclareMathOperator{\vect}{vec}
\DeclareMathOperator{\ones}{\bar{1}}

\newtheorem{theorem}{Theorem}[section]
\newtheorem{remark}{Remark}[section]
\newtheorem{definition}{Definition}
\newtheorem{lemma}[theorem]{Lemma}
\newtheorem{corollary}[theorem]{Corollary}

\author{%
   \textbf{Burak Bartan} \\
  Department of Electrical Engineering\\
  Stanford University\\
  \texttt{bbartan@stanford.edu} \\
   \and
  \textbf{Mert Pilanci} \\
  Department of Electrical Engineering\\
  Stanford University\\
  \texttt{pilanci@stanford.edu} \\
  }

\title{Neural Spectrahedra and Semidefinite Lifts: Global Convex Optimization of Polynomial Activation Neural Networks in Fully Polynomial-Time}


\begin{document}
\maketitle

\begin{abstract}
The training of two-layer neural networks with nonlinear activation functions is an important non-convex optimization problem with numerous applications and promising performance in layerwise deep learning. In this paper, we develop exact convex optimization formulations for two-layer neural networks with second degree polynomial activations based on semidefinite programming. Remarkably, we show that semidefinite lifting is always exact and therefore computational complexity for global optimization is polynomial in the input dimension and sample size for all input data. The developed convex formulations are proven to achieve the same global optimal solution set as their non-convex counterparts. More specifically, the globally optimal two-layer neural network with polynomial activations can be found by solving a semidefinite program (SDP) and decomposing the solution using a procedure we call Neural Decomposition. Moreover, the choice of regularizers plays a crucial role in the computational tractability of neural network training. We show that the standard weight decay regularization formulation is NP-hard, whereas other simple convex penalties render the problem tractable in polynomial time via convex programming. We extend the results beyond the fully connected architecture to different neural network architectures including networks with vector outputs and convolutional architectures with pooling. We provide extensive numerical simulations showing that the standard backpropagation approach often fails to achieve the global optimum of the training loss. The proposed approach is significantly faster to obtain better test accuracy compared to the standard backpropagation procedure.
\end{abstract}
\section{Introduction} \label{sec:introduction}

We study neural networks from the optimization perspective by deriving equivalent convex optimization formulations with identical global optimal solution sets. The derived convex problems have important theoretical and practical implications concerning the computational complexity of optimal training of neural network models. Moreover, the convex optimization perspective provides a more concise parameterization of neural network models that enables further analysis of their interesting properties. 

In non-convex optimization, the choice of optimization method and its internal hyperparameters, such as initialization, mini-batching and step sizes, have a considerable effect on the quality of the learned model.
This is in sharp contrast to convex optimization problems, where locally optimal solutions are globally optimal and optimizer parameters have no influence on the solution and therefore the model. Moreover, the solutions of convex optimization problems can be obtained in a very robust, efficient and reproducible manner thanks to the elegant and extensively studied structure of convex programs. Therefore, our convex optimization based globally optimal training procedure enables the study of the neural network model and the optimization procedure in a \textit{decoupled} way. For instance, step sizes employed in the optimization can be considered hyperparameters of non-convex models, which affect the model quality and may require extensive tuning. For a classification task, in our convex optimization formulation, step sizes as well as the choice of the optimizers are no longer hyperparameters to obtain better classification accuracy. Any convex optimization solver can be applied to the convex problem to obtain a globally optimal model.

Various types of activation functions were proposed in the literature as nonlinearities in neural network layers. Among the most widely adopted ones is the ReLU (rectified linear unit) activation given by $\sigma(u)=\max(0,u)$. A recently proposed alternative is the swish activation $\sigma(u)=u (1+e^{-u})^{-1}$, which performs comparably well \cite{Ramachandran18swish}. Another important class is the polynomial activation where the activation function is a scalar polynomial of a fixed degree. We focus on second degree polynomial activation functions, i.e., $\sigma(u) = au^2+bu+c$. Although polynomial coefficients $a,b,c$ can be regarded as hyperparameters, it is often sufficient to choose the coefficients in order to approximate a target nonlinear activation function such as the ReLU or swish activation. ReLU and swish activations are plotted in Figure \ref{fig:plot_activation_functions} along with their second degree polynomial approximations. 

\begin{figure} 
\begin{minipage}[b]{0.48\linewidth}
  \centering
  \centerline{\includegraphics[width=0.8\columnwidth]{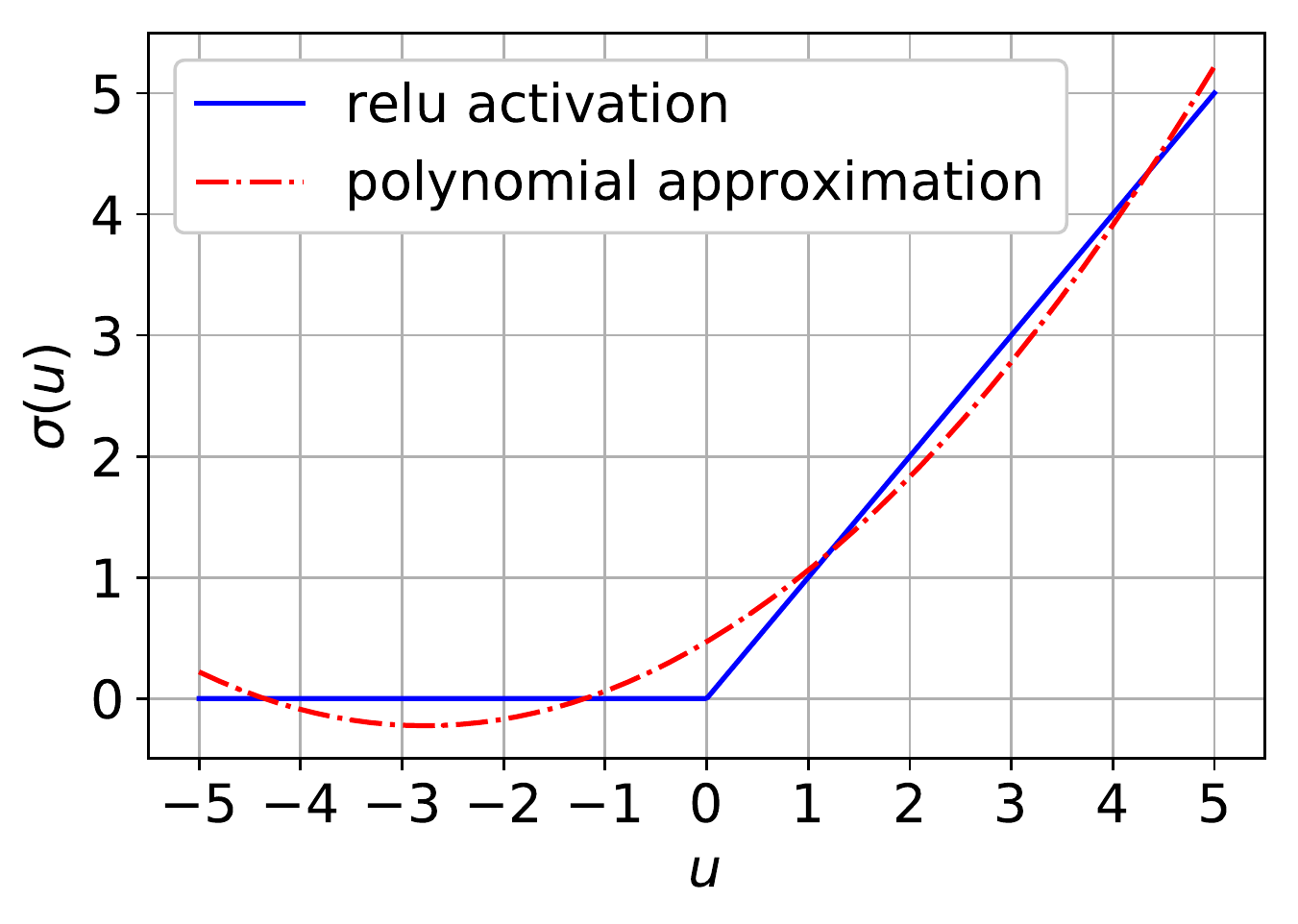}}
\end{minipage}
\hfill
\begin{minipage}[b]{0.48\linewidth}
  \centering
  \centerline{\includegraphics[width=0.8\columnwidth]{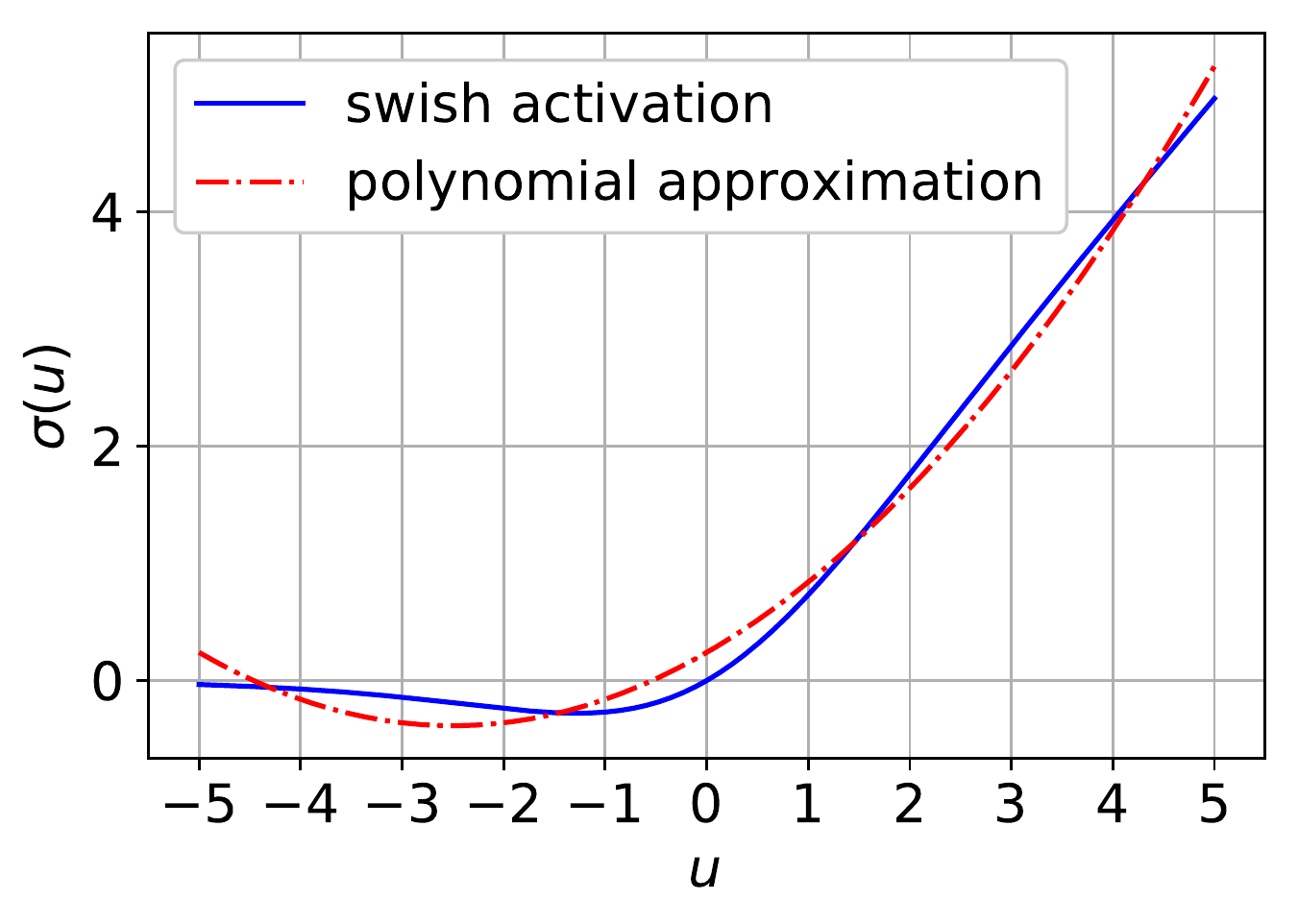}}
\end{minipage}
\caption{ReLU (left) and swish (right) activation functions and their second degree polynomial approximations. ReLU activation: $\sigma(u)=\max(0,u)$ and its polynomial approximation: $\sigma(u) = 0.09u^2+0.5u+0.47$. Swish activation: $\sigma(u)=u (1+e^{-u})^{-1}$ and its polynomial approximation: $\sigma(u)=0.1u^2+0.5u+0.24$.}
\label{fig:plot_activation_functions}
\end{figure}

Our derivation of the convex program for polynomial activations leverages convex duality and the S-procedure, and can be stated as a simple semidefinite program (SDP). We refer the reader to \cite{polik07slemma} for a survey of the S-procedure and applications in SDPs.
In addition, another commonly used activation function in the literature, quadratic activation, is a special case of polynomial activations ($b=c=0$) and we devote a separate section to this case (Section \ref{sec:quadratic_act}). The corresponding convex program is an SDP and takes a simpler form.



Main aspects of our work that differ from others in the literature that study the \textit{optimization landscape} of two-layer neural networks (e.g. see section \ref{sec:prior_work}) are the following: Our results (1) provide global optimal solutions in fully polynomial time (polynomial in all problem parameters), (2) uncover an important role of the regularizer in computational tractability, (3) hold for arbitrary loss function and other network architectures such as vector output, convolutional and pooling, (4) are independent of the choice of the numerical optimizer and its parameters.


We summarize the types of neural network architectures considered in this work and the corresponding convex problems that we have derived to train them to global optimality in Table \ref{table:summary_results}. The fourth column of Table \ref{table:summary_results} shows the upper bounds for critical width $m^*$, i.e., the optimal number of neurons that one needs for global optimization of any problems with number of neurons $m \geq m^*$. The fifth column, named "construction algorithm", refers to the method for obtaining the optimal neural network weights from the solution of the associated convex program. The last column contains the references to the theorems for each result.

\subsection{Overview of Our Contributions}
\begin{itemize}
    \item We show that the standard optimization formulation for training neural networks $f_\theta (x) = \sum_{j=1}^m \sigma(x^T u_j) \alpha_j $ with trainable parameters $\theta=(u_1,\dots,u_m,\alpha_1,\dots,\alpha_m)$ and degree two polynomial activations $\sigma(u) = au^2+bu+c$, training data $X=[x_1,\dots,x_n]^T\in\mathbb{R}^{n\times d}$, $y\in \mathbb{R}^n$, and $\ell_2^2$ regularization on the parameters given by
    \begin{align}
        \min_{\theta} \, \ell (f_\theta (X),y) + \beta \sum_{j=1}^m (\|u_j\|_2^2 + \|\alpha_j\|_2^2)
    \end{align}
    is computationally intractable via a reduction to the NP-hard subset sum problem. 
    \item Surprisingly, for quadratic activation networks, $\sigma(u)=u^2$, we show that modifying the quadratic weight decay regularization to \emph{cubic regularization}
    \begin{align}
        \min_{\theta } \, \ell (f_\theta (X),y) + \beta \sum_{j=1}^m (\|u_j\|_2^3 +  \|\alpha_j\|_2^3)
        \label{eq:contributions_cubic}
    \end{align}
    enables global optimization in fully polynomial time via convex semidefinite programming. The computational complexity is polynomial in all problem parameters ($n,d,m)$.
    \item Furthermore, for any degree two polynomial activation $\sigma$, the non-convex neural network training problem
        \begin{align}
        \min_{\theta\mbox{ s.t. } \|u_j\|_2=1 \,, \forall j\in[m]} \ell (f_\theta (X),y) + \beta \|\alpha\|_1
    \end{align}
    can be equivalently stated as a convex semidefinite problem and globally solved in fully polynomial time. In fact, the cubic regularization strategy in \eqref{eq:contributions_cubic} is a special case of this convex program. The result holds universally for all input data without any conditions and also holds when $\beta\rightarrow 0$.
    
    \item In deriving the convex formulations, we identify a concise re-parameterization of the neural network parameters that enables exact convexification by removing the redundancy in the classical overparameterized formulation. This is similar in spirit to the semidefinite lifting procedure in relaxations of combinatorial optimization problems. In contrast to these relaxations, we show that our lifting is always exact as soon as the network width exceeds a critical threshold which can be efficiently determined.
    
    \item We develop a matrix decomposition procedure called Neural Decomposition to extract the optimal network parameters from the solution of convex optimization, which is guaranteed to produce an optimal neural network. Neural Decomposition transforms the convex re-parameterization to the overparameterized, i.e., redundant, formulation in a similar spirit to (a non-orthogonal version of) Eigenvalue Decomposition.
    
    \item In addition to the fully connected neural network architecture, we derive the equivalent convex programs for various other architectures such as convolutional, pooling and vector output architectures.
    
    \item We provide extensive numerical simulations showing that the standard backpropagation approach with or without regularization fails to achieve the global optimum of the training loss. Moreover, the test accuracy of the proposed convex optimization is considerably higher in standard datasets as well as random planted models. Our convex optimization solver is significantly faster in total computation time to achieve similar or better test accuracy.
\end{itemize}

\begin{table}
\scalebox{0.67}{
\begin{tabular}{|l|l|l|l|l|l|}
\hline
& Non-convex objective  & Tractable convex & Upper bound on & Construction & Thms \\ 
&   & formulation & critical width $m^*$ & algorithm & \\ \hline
Poly (scalar) & $\ell \left( \sum_{j=1}^m \sigma(Xu_j) \alpha_j \,,\, y \right) + \beta \sum_{j=1}^m \vert\alpha_j\vert$ s.t. $\|u_j\|=1$ &  Eq \eqref{eq:polyact_convex_program_final}  & $2(d+1)$ & Neural decomp & Thm \ref{thm:poly_act_thm} \\ \hline
Poly (vector) & $\ell \left( \sum_{j=1}^m \sigma(Xu_j) \alpha_j^T \,,\, Y \right) + \beta \sum_{j=1}^m \|\alpha_j\|_1$ s.t. $\|u_j\|=1$ & Eq \eqref{eq:vector_output_convex_program} & $2(d+1)C$ & Neural decomp & Thm \ref{thm:vector_output_thm} \\ \hline 
Convolutional & $\ell \left( \sum_{j=1}^m \sum_{k=1}^K \sigma(X_k u_j) \alpha_{jk},\, y \right) + \beta \sum_{j=1}^m \|\alpha_j\|_1$ s.t. $\|u_j\|=1$ & Eq \eqref{eq:convolutional_convex_program} & $2(f+1)K^2 $ & Neural decomp & Thm \ref{thm:convolutional_thm} \\ \hline
Pooling & $\ell \left(\sum_{j=1}^m \sum_{k=1}^{K/P} \frac{1}{P} \sum_{l=1}^P \sigma(X_{(k-1)P+l} u_j) \alpha_{jk},\, y \right) + \beta \sum_{j=1}^m \|\alpha_j\|_1$ & Eq \eqref{eq:avgpooling_convex_program} & $2(f+1)\frac{K^2}{P^2}$ & Neural decomp & Thm \ref{thm:pooling_thm} \\
& \hspace*{\fill} s.t. $\|u_j\|=1$  & & & & \\ \hline
Quad (scalar, & $\ell\left(\sum_{j=1}^m\sigma(Xu_j)\alpha_j,\, y\right) + \beta \sum_{j=1}^m |\alpha_j|$ s.t. $\|u_j\|=1$, or & Eq \eqref{eq:convex_problem_quad_act} & $d$ & Eigen- & Thm \ref{thm:quad_act_thm} \\
cubic reg) & $\ell\left(\sum_{j=1}^m \sigma(Xu_j)\alpha_j,\, y\right) + \frac{\beta}{c} \sum_{j=1}^m (|\alpha_j|^3 + \|u_j\|_2^3)$ & & & decomposition & 
\\ \hline
Quad (scalar, & $\ell\left(\sum_{j=1}^m\sigma(Xu_j)\alpha_j,\, y\right) + \beta \sum_{j=1}^m |\alpha_j|^{2/3}$ s.t. $\|u_j\|=1$, or 
& NP-hard & - & - & Thm \ref{thm:nphard_general} \\
quad reg) & $\ell\left(\sum_{j=1}^m \sigma(Xu_j)\alpha_j,\, y\right) + \frac{\beta}{c} \sum_{j=1}^m (|\alpha_j|^2 + \|u_j\|_2^2)$ & (intractable) & & & 
\\ \hline
\end{tabular}
}
\caption{List of the neural network architectures that we have studied in this work and the corresponding convex programs. Abbreviations are as follows. Poly (scalar): Polynomial activation scalar output, Poly (vector): Polynomial activation vector output, Convolutional: CNN with polynomial activation, Pooling: CNN with polynomial activation and average pooling, Quad (scalar, cubic reg): Quadratic activation scalar output with cubic regularization, Quad (scalar, quad reg): Quadratic activation scalar output with quadratic regularization. $K$ is the number of patches and $f$ is the filter size for the convolutional architecture. $C$ is the output dimension for the vector output case. $P$ is the pool size for average pooling. $\sigma(u)$ is defined as $u^2$ for quadratic activation, and $au^2+bu+c$ for polynomial activation.
}
\label{table:summary_results}
\end{table}

\subsection{Prior Work} \label{sec:prior_work}
A considerable fraction of recent works on the analysis of optimization landscape of neural networks focuses on explaining why gradient descent performs well.
The works \cite{du2018quadact,soltan2019shallow_nn} consider the optimization landscape of a restricted class of neural networks with quadratic activation and quadratic regularization where the second layer weights are \textit{fixed}.
They show that when the neural network is overparameterized, i.e., $m \geq d$, the non-convex loss function has benign properties: all local minima are global and all saddle points have a direction of negative curvature. However, in this paper we show that training both the first and second layer weights with quadratic regularization in fact makes global optimization NP-hard. In contrast, we provide a different formulation to obtain the global optimal solution via convex optimization in the more general case when the second layer weights are also optimized, the activation function is any arbitrary degree two polynomial, and global optimum is achieved for all values of $m$.
%
The work in \cite{mannelli2020optimization} similarly studies two-layer neural networks with quadratic activation function and squared loss and states results on both optimization and generalization properties.
The authors in \cite{gamarnik2020stationary} focus on quadratic activation networks from the perspectives of optimization landscape and generalization performance, where the setting is based on a planted model with a full rank weight matrix. 
In \cite{lacotte2020all,lederer2020spurious} it was shown that sufficiently wide ReLU networks have a benign landscape when each layer is sufficiently wide, satisfying $m\ge n+1$.

%

Another recent work analyzing the training of neural networks with quadratic-like activations for deeper architectures is \cite{allenzhu2020backward}. Authors in \cite{allenzhu2020backward} consider polynomial activation functions and investigate layerwise training and compare with end-to-end training of layers. It is demonstrated in \cite{allenzhu2020backward} that the degree two polynomial activation function performs comparably to ReLU activation in deep networks. More specifically, it is reported in \cite{allenzhu2020backward} that for deep neural networks, ReLU activation achieves a classification accuracy of $0.96$ and a degree two polynomial activation yields an accuracy of $0.95$ on the Cifar-10 dataset. Similarly for the Cifar-100 dataset, they obtain an accuracy of $0.81$ for ReLU activation and $0.76$ for the degree two activation. These numerical results are obtained for the activation $\sigma(u)=u+0.1u^2$, which the authors prefer over the standard quadratic activation $\sigma(u)=u^2$ to make the neural network training stable. Moreover, the performance of layerwise learning with such activation functions is considerably high, although there is a gap between end-to-end trained models. These results verify that degree two polynomial activations are promising and worth studying from both theoretical and practical perspectives.
%

In a recent series of papers, the authors derived convex formulations for training ReLU neural networks to global optimality \cite{pilanci2020neural,ergen2020cnn,ergen2020convexdeep,ergen2020aistats,sahiner2020vector,sahiner2020convex}. Our work takes a similar convex duality approach in deriving the convex equivalents of non-convex neural network training problems. In particular, the previous work in this area deals with ReLU activations while in this work we focus on polynomial activations. Hence, the mathematical techniques involved in deriving the convex programs and the resulting convex programs are substantially different. The convex program derived for ReLU activation in \cite{pilanci2020neural} has polynomial time trainability for fixed rank data matrices, whereas the convex programs developed in this work are all polynomial-time trainable with respect to all problem dimensions. More specifically, their convex program is given by
\begin{align}
    \min_{\{v_i,w_i\}_{i=1}^P} &\frac{1}{2} \left\Vert \sum_{i=1}^PD_iX(v_i-w_i) - y \right\Vert_2^2 + \beta \sum_{i=1}^P (\|v_i\|_2 + \|w_i\|_2) \nonumber \\
    \mbox{s.t.} \quad &(2D_i-I_n)Xv_i \geq 0, \,\, (2D_i-I_n)Xw_i \geq 0,  \forall i \in [P]\,,
\end{align}
where the neural network weights are constructed from $v_i \in \mathbb{R}^d$ and $w_i \in \mathbb{R}^d$, $i=1,\dots,P$. The matrices $D_i$ are diagonal matrices whose diagonal entries consist of $1_{x_1^Tu \geq 0}, 1_{x_2^Tu \geq 0}, \dots, 1_{x_n^Tu \geq 0}$ for all possible $u \in \mathbb{R}^d$. The number of distinct $D_i$ matrices, denoted by $P$ is the number of hyperplane arrangements corresponding to the data matrix $X$. It is known that $P$ is bounded by $2 r \left( \frac{e(n-1)}{r} \right)^r$ where $r=\rank(X)$ (see \cite{pilanci2020neural} for the details). In particular, convolutional neural networks have a fixed value of $r$, for instance $m$ filters of size $3\times 3$ yield $r=9$. This is an exponential improvement over previously known methods that train optimal ReLU networks which are exponential in the number of neurons $m$ and/or the number of samples $n$ \cite{arora2018understanding,goel17a,bienstock2018principled}.

The work in \cite{blondel2015convexFM} presents formulations for convex factorization machines with nuclear norm regularization, which is known to obtain low rank solutions.
Vector output extension for factorization machines and \textit{polynomial networks}, which are different from \textit{polynomial activation networks}, is developed in \cite{blondel2017multi}. Polynomial networks are equivalent to quadratic activation networks with an addition of a linear neuron. In \cite{blondel2017multi}, the authors consider learning an infinitely wide quadratic activation layer by a greedy algorithm. However, this algorithm does not provide optimal finite width networks even in the quadratic activation case. Furthermore, \cite{roi2014training} presents a greedy algorithm for training polynomial networks. The algorithm provided in \cite{roi2014training} is based on gradually adding neurons to the neural network to reduce the loss. More recently, \cite{soltani2019learning} considers applying lifting for quadratic activation neural networks and presents non-convex algorithms for low rank matrix estimation for two-layer neural network training.



\subsection{Notation}
Throughout the text, $\sigma : \mathbb{R} \rightarrow \mathbb{R}$ denotes the activation function of the hidden layer. We refer to the function $\sigma(u) = u^2$ as quadratic activation and $\sigma(u) = au^2+bu+c$ where $a,b,c \in \mathbb{R}$ as polynomial activation.
We use $X \in \mathbb{R}^{n \times d}$ to denote the data matrix, where its rows $x_i \in \mathbb{R}^d$ correspond to data samples and columns are the features.
In the text, whenever we have a function mapping from $\mathbb{R}$ to $\mathbb{R}$ with a vector argument (e.g., $\sigma(v)$ or $v^2$ where $v$ is a vector), this means the elementwise application of that function to all the components of the vector $v$.
We denote a column vector of ones by $\ones$ and its dimension can be understood from the context.
$\vect(\cdot)$ denotes the vectorized version of its argument. In writing optimization problems, we use $\min$ and $\max$ to refer to "minimize" and "maximize". 
We use the notations $[m]$ and $1,\dots,m$ interchangeably. 

We use $\ell(\hat{y}, y)$ for convex loss functions throughout the text for both scalar and vector outputs. $\ell^*(v) = \sup_{z}(v^Tz - \ell(z, y))$ denotes the Fenchel conjugate of the function $\ell(\cdot, y)$. Furthermore, we assume $\ell^{**}=\ell$ which holds when $\ell$ is a convex and closed function \cite{boyd2004convex}. 

We use $Z \succeq 0$ for positive semidefinite matrices (PSD). $\mathbb{S}$ refers to the set of symmetric matrices. $\tr$ refers to matrix trace. $\otimes$ is used for outer product. The operator $\conv$ stands for the convex hull of a set.

\subsection{Preliminaries on Semidefinite Lifting}

We defer the discussion of semidefinite lifting for two-layer neural networks with polynomial activations to Section \ref{sec:semidefinite_representations}. We now briefly discuss a class of problems where SDP relaxations lead to exact solutions of the original problem and also instances where they fail to be exact. Let us consider the following quadratic objective problem with a single quadratic constraint:
\begin{align} \label{eq:quadratic_noncvx}
    \min_{u} \quad & u^TQ_1u + b_1^Tu + c_1 \nonumber \\
    \mbox{s.t.} \quad & u^TQ_2u + b_2^Tu + c_2 \leq 0
\end{align}
where $Q_1, Q_2$ are indefinite, i.e., not assumed to be positive semidefinite. Due to the indefinite quadratics, this is a non-convex optimization problem. By introducing a matrix variable $U=uu^T$, one can equivalently state this problem as
\begin{align} 
    \min_{U, u} \quad & \tr(Q_1 U) + b_1^Tu + c_1 \nonumber \\
    \mbox{s.t.} \quad & \tr(Q_2 U) + b_2^Tu + c_2 \leq 0 \nonumber \\
    & U = uu^T \,.
\end{align}
This problem can be relaxed by replacing the equality by the matrix inequality $U \succeq uu^T$. Re-writing the expression $U \succeq uu^T$ as a linear matrix inequality via the Schur complement formula yields the following SDP 
\begin{align} \label{eq:sdp_schur}
    \min_{U, u} \quad & \tr(Q_1 U) + b_1^Tu + c_1 \nonumber \\
    \mbox{s.t.} \quad & \tr(Q_2 U) + b_2^Tu + c_2 \leq 0 \nonumber \\
    & \begin{bmatrix} U & u \\ u^T & 1 \end{bmatrix} \succeq 0 \,.
\end{align}
Remarkably, it can be shown that the original non-convex problem in \eqref{eq:quadratic_noncvx} can be solved exactly by solving the convex SDP in \eqref{eq:sdp_schur} via duality, under the mild assumption that the original problem is strictly feasible (see \cite{boyd2004convex}). This shows that the SDP relaxation is exact in this problem, returning a globally optimal solution when one exists. We note that there are alternative numerical procedures to compute the global optimum of quadratic programs with one quadratic constraint \cite{boyd2004convex}.

We also note that the lifting approach $U=uu^T$ and the subsequent relaxation $U\succeq uu^T$ for quadratic programs with more than two quadratic constraints is not tight in general \cite{nesterov2000semidefinite,burer2012copositive}. A notable case with multiple constraints is the NP-hard Max-Cut problem and its SDP relaxation \cite{Goemans95}
\begin{align}
\max_{u_i^2=1,\forall i} u^TQu = \max_{u_i^2=1,\forall i} \tr(Quu^T) \le  \max_{U\succeq 0,\, U_{ii}=1,\forall i} \tr(QU).
\end{align}

The SDP relaxation of Max-Cut is not tight since its feasible set contains the cut polytope 
$$\textbf{conv}\left\{uu^T\,:u_i\in\{-1,+1\}\,\forall i\right\}$$
and other non-integral extreme points \cite{laurent1995positive}. Nevertheless, an approximation ratio of $0.878$ can be obtained via the Goemans-Williamson randomized rounding procedure \cite{Goemans95}. It is conjectured that this is the best approximation ratio for Max-Cut \cite{khot2007optimal}, whereas it can be formally proven to be NP-hard to approximate within a factor of $\frac{16}{17}$ \cite{haastad2001some,trevisan2000gadgets}. Hence, in general we cannot expect to obtain exact solutions to problems of combinatorial nature, such as Max-Cut and variants using SDP relaxations.

It is instructive to note that a naive application of the SDP lifting strategy is not immediately tractable for two-layer neural networks. For simplicity, consider a scalar output polynomial activation network $f(x)=\sum_{j=1}^m \sigma(x^Tu_j)\alpha_j$ where $\sigma(u)=u^2+u$, and $\{u_j,\alpha_j\}_{j=1}^m$ are trainable parameters. The corresponding training problem for a given loss function $\ell(\cdot, y)$ and its SDP relaxation are as follows
\begin{align}
    \min_{\{u_j,\alpha_j\}_{j=1}^m} \sum_{x \in \mathcal{X}} \ell\big( \sum_{j=1}^m ((x^Tu_j)^2+x^Tu_j)\alpha_j,\, y \big) \ge \min_{\{U_j\succeq u_ju_j^T,\alpha_j\}_{j=1}^m} \sum_{x \in \mathcal{X}} \ell\big( \sum_{j=1}^m x^TU_j x \alpha_j + x^Tu_j\alpha_j,\, y \big).
\end{align}
The above problem is \emph{non-convex} due to the bilinear terms $\{U_j\alpha_j\}_{j=1}^m$. Moreover, a variable change $\hat U_j=U_j\alpha_j$ does not respect semidefinite constraints $U_j\succeq u_ju_j^T$ when $\alpha_j \in \mathbb{R}$. Another limitation is the prohibitively high number of variables in the lifted space, which is $d^2 m+dm+m$ as opposed to $dm+m$ in the original problem. Therefore, a different convex analytic formulation is required to address all these concerns.

Although SDP relaxations are extensively studied for various non-convex problems (see e.g. \cite{wolkowicz2012handbook} for a survey of applications), instances with exact SDP relaxations are exceptionally rare. As will be discussed in the sequel, our main result for two-layer neural networks is another instance of an SDP relaxation leading to \emph{exact} formulations where the semidefinite lifting and relaxation is tight. 

In convex geometry, a \emph{spectrahedron} is a convex body that can be represented as a linear matrix inequality which are the feasible sets of semidefinite programs. An example is the \emph{elliptope} defined as the feasible set of the Max-Cut relaxation given by $U\succeq 0, U_{ii}=1\,\forall i$, which is a subset of $n\times n$ symmetric positive-definite matrices. Due to the existence of efficient projection operators and barrier functions of linear matrix inequalities, optimizing convex objectives over spectrahedra can be efficiently implemented, which renders SDPs tractable. We will show that polynomial activation neural networks can be represented via a class of simple linear matrix inequalities, dubbed \emph{neural spectrahedra} (see Figure \ref{fig:neuralcone} for an example), and enables global optimization in fully polynomial time and elucidates their parameterization in convex analytic terms.

\subsection{Paper Organization}

Section \ref{sec:semidefinite_representations} gives an overview of the theory developed in this work. Section \ref{sec:convex_duality} describes the convex optimization formulation via duality and S-procedure for polynomial activation neural networks. Section \ref{sec:neural_decomp} establishes via the neural decomposition method that the convex problem developed in Section \ref{sec:convex_duality} can be used to train two-layer polynomial activation networks to global optimality. Quadratic activation neural networks and the hardness result are studied in Section \ref{sec:quadratic_act} and \ref{sec:np_hardness}. Vector output and convolutional neural network architectures are studied in Section \ref{sec:vector_outputs} and \ref{sec:convolutional}, respectively and convolutional networks with average pooling is in Section \ref{sec:avg_pooling}. We discuss the implementation details for solving the convex programs and give experimental results in Section \ref{sec:numerical_results}.
\section{Lifted Representations of Networks with Polynomial Activations} \label{sec:semidefinite_representations}
\newcommand{\append}[1]{\left[ \begin{array}{c} #1 \\ 1 \end{array}\right]}
Consider the network $f(x) = \sum_{j=1}^m \sigma(x^Tu_j) \alpha_j$ where the activation function $\sigma$ is the degree two polynomial $\sigma(u) = au^2 + bu + c$. First, we note that the neural network output can be written as
\begin{align}
    f(x) = \sum_{j=1}^m \left(a(x^T u_j)^2 + b x^T u_j + c\right)\alpha_j &= \sum_{j=1}^m \left(\langle axx^T, u_j u_j^T \rangle + \langle b x, u_j \rangle + c \right)\alpha_j\nonumber\\
    & =  \left \langle \left[\begin{array}{c} a xx^T\\ bx \\ c \end{array}\right], \left[\begin{array}{c}\sum_{j=1}^m u_ju_j^T \alpha_j\\ \sum_{j=1}^m u_j \alpha_j \\ \sum_{j=1}^m \alpha_j  \end{array}\right] \right \rangle\nonumber\\
    &= \langle \phi(x), \psi(\{u_j,\alpha_j\}_{j=1}^m) \rangle\,,
    \label{eq:liftednetwork}
\end{align}
where $\phi:\mathbb{R}^d\rightarrow \mathbb{R}^{d^2+d+1}$ and  $\psi:\mathbb{R}^{m(d+1)}\rightarrow \mathbb{R}^{d^2+d+1}$ are formally defined in the sequel.
The above identity shows that the nonlinear neural network output is linear over the \emph{lifted features} $$\phi(x):=\big(axx^T, bx, c\big) \in \mathbb{R}^{d^2 + d + 1}.$$ In turn, the nonlinear model $f(x)$ is completely characterized by the \emph{lifted parameters} which we define as the following matrix-vector-scalar triplet $$\psi(\{u_j,\alpha_j\}_{j=1}^m):=\Big(\sum_{j=1}^m u_ju_j^T \alpha_j, \sum_{j=1}^m u_j \alpha_j, \sum_{j=1}^m \alpha_j  \Big) \in \mathbb{R}^{d^2 + d + 1}.$$
Optimizing over the lifted parameter space initially appears as hard as the original non-convex neural network training problem. This is due to the cubic and quadratic terms involving the weights of the hidden and output layer in the lifted parameters. Nevertheless, one of our main results shows that the lifted parameters can be exactly described using linear matrix inequalities.

We begin by characterizing the lifted parameter space as a non-convex cone.
\begin{figure}[!t]
    \centering
    \begin{minipage}{.50\textwidth}
        \centering
        \includegraphics[trim={0cm 2cm 0cm 2cm}, clip, width=0.7\linewidth]{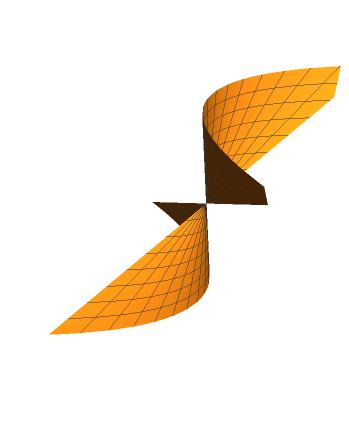}
    \end{minipage}%
    \hfill
    \begin{minipage}{0.50\textwidth}
        \centering
        \includegraphics[trim={2cm 2.6cm 0.5cm 5cm},clip, width=0.8\linewidth]{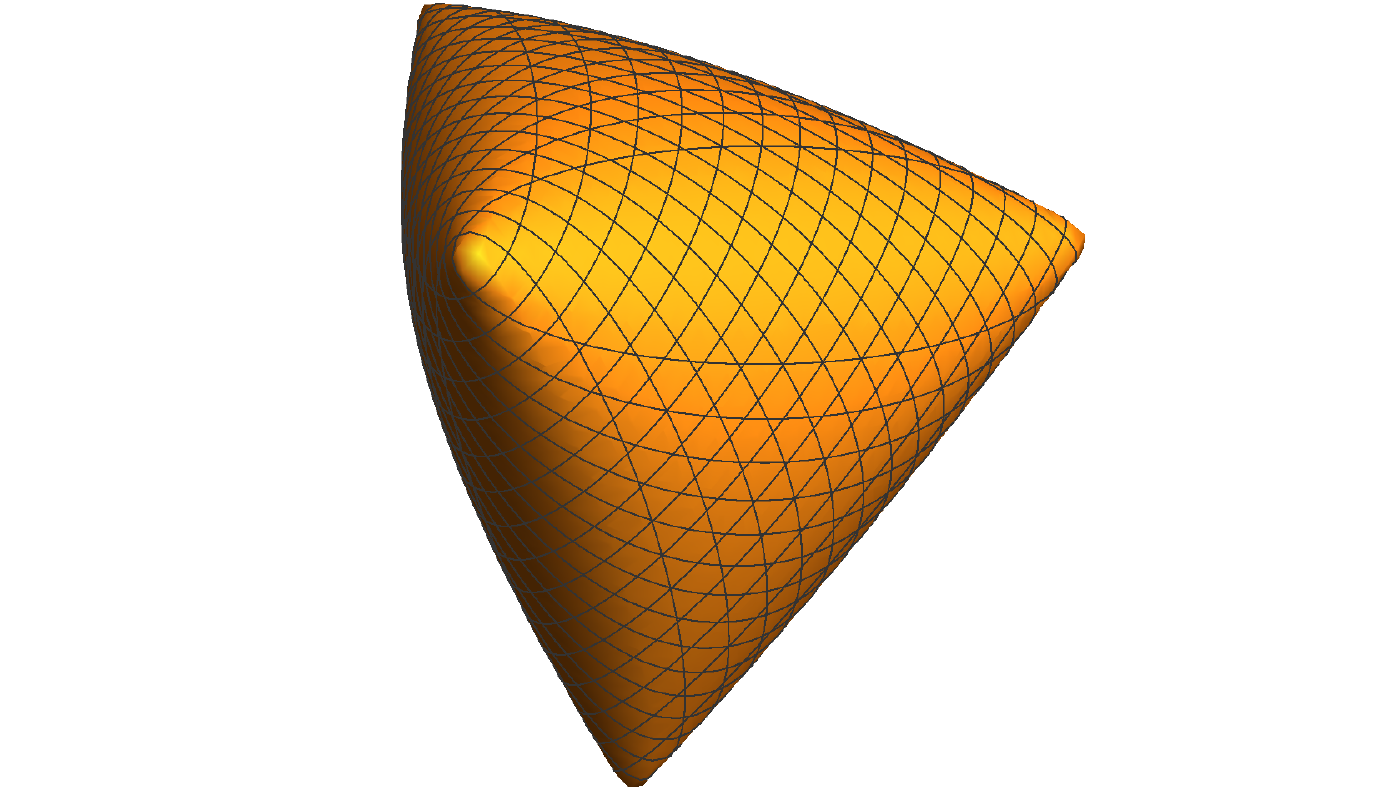}
    \end{minipage}
            \caption[]{(Left) The Neural Cone $\mathcal{C}_2^1$  described by $(u^2\alpha,u\alpha,\alpha)\in \mathbb{R}^3$ where $u,\alpha \in \mathbb{R}, |u|\le 1$. (Right) Neural Spectrahedron $\mathcal{M}(1)$ described by $(Z_{11},Z_{12},Z_{22})\in \mathbb{R}^3$ where $Z=\left[ \begin{array}{ccc} Z_{11} & Z_{12} & Z_{13} \\ Z_{12} & Z_{22} & Z_{23} \\ Z_{13} & Z_{23} & Z_{33}\end{array} \right]\succeq 0,\, Z_{11}+Z_{22}=Z_{33}\le 1$ (constrained to the slice $Z_{22}=Z_{11}$ and $Z^\prime=0$ in \eqref{eq:neuralset}).}
    \label{fig:neuralcone}
\end{figure}
%
\begin{definition}[\textbf{Neural Cone of degree two}]
We define the non-convex cone $\mathcal{C}^m_{2} \subseteq \mathbb{R}^{d^2+d+1}$ as
\begin{align}
    \mathcal{C}^m_{2} :=  \left\{ \Big(\sum_{j=1}^m u_ju_j^T \alpha_j, \sum_{j=1}^m u_j \alpha_j, \sum_{j=1}^m \alpha_j  \Big) \, : u_j\in\mathbb{R}^d, \|u_j\|_2 = 1,\, \alpha_j \in \mathbb{R}\,\forall j\in[m] \right\}.
\end{align}
See Figure \ref{fig:neuralcone} (left) for a depiction of $\mathcal{C}_2^1\subseteq\mathbb{R}^3$ corresponding to the case $m=1,d=1$.
\end{definition}
%
Surprisingly, we will show that the original non-convex neural network problem is solved exactly to \emph{global optimality} when the optimization is performed over a convex set which we define as the \emph{Neural Spectrahedron}, given by the convex hull of the cone $\mathcal{C}_2$. In other words, every element of the convex hull can be associated with a neural network of the form $f(x)=\sum_{j=1}^m \sigma(x^Tu_j)\alpha_j$ through a special matrix decomposition procedure which we introduce in Section \ref{sec:neural_decomp}. Moreover, a Neural Spectrahedron can be described by a simple linear matrix inequality. Consequently, these two results enable global optimization of neural networks with polynomial activations of degree two in fully polynomial time with respect to all problem parameters: dimension $d$, number of samples $n$ and number of neurons $m$. To the best of our knowledge, this is the first instance of a method that globally optimizes a standard neural network architecture with computational complexity polynomial in all problem dimensions. We refer the reader to the recent work \cite{pilanci2020neural} for a convex optimization formulation of networks with ReLU activation, where the worst case computational complexity is $\mathcal{O}((\frac{n}{r})^r)$ with $r=\rank(X)$.

It is equally important that our results characterize neural networks as constrained linear learning methods $\langle \phi(x), \psi \rangle$ in the lifted feature space $\phi(x)$, where the constraints on the lifted parameters $\psi$ are precisely described by a Neural Spectrahedron via linear matrix inequalities. These constraints can be easily tackled with convex semidefinite programming or closed-form projections onto these sets in iterative first-order algorithms. We also investigate interesting regularization properties of this convex set, and draw similarities to $\ell_1$ norm and nuclear norm. In contrast, Reproducing Kernel Hilbert Space methods and Neural Tangent Kernel approximations \cite{jacot2018neural,chizat2019lazy} are linear learning methods over lifted feature maps where the corresponding parameter constraints are ellipsoids. These approximations fall short of explaining the extraordinary power of finite width neural networks employed in practical applications.

We extend the definition of the Neural Cone to degree $k$ activations as follows.
\begin{definition}[\textbf{Neural Cone of degree $k$}]
We define the non-convex cone $\mathcal{C}^m_{k} \subseteq \mathbb{R}^{\sum_{i=0}^k d^i}$ as follows
\begin{align}
    \mathcal{C}^m_{k} :=  \left\{ \Big(\sum_{j=1}^m u_j^{\otimes k} \alpha_j, \cdots, \sum_{j=1}^m u_j \otimes u_j \alpha_j, \sum_{j=1}^m u_j \alpha_j, \sum_{j=1}^m \alpha_j  \Big) \, :  u_j\in\mathbb{R}^d,\|u_j\|_2 = 1,\, \alpha_j \in \mathbb{R}\,\forall j\in[m] \right\}
\end{align}
where we use the notation $ u^{\otimes k} := \underbrace{u\otimes\cdots\otimes u}_{\mbox{k times}}$.
\end{definition}
It is easy to see that two-layer neural networks with degree $k$ polynomial activations can be represented linearly using the lifted parameter space $\mathcal{C}_k$ and corresponding lifted features. Taking the closure of the union $\{\mathcal{C}\}_{k=0}^{\infty}$, any analytic activation function can be represented in this fashion. In this paper we limit the analysis to the degree $2$ case.

Next, we describe a compact set that we call \emph{neural spectrahedron} which describes the lifted parameter space of networks with a constraint on the $\ell_1$ norm of output layer weights.
\begin{definition}
A neural spectrahedron $\mathcal{S}^m_2(t) \subseteq \mathbb{R}^{d^2+d+1}$ is defined as the compact convex set
\begin{align}
    \mathcal{S}^m_2(t) := \conv \left\{ \Big(\sum_{j=1}^m u_ju_j^T \alpha_j, \sum_{j=1}^m u_j \alpha_j, \sum_{j=1}^m \alpha_j  \Big) : \|u_j\|_2 = 1, \alpha_j \in \mathbb{R},\forall j=1,\dots,m,  \sum_{j=1}^m |\alpha_j| \le t\right\}
\end{align}
\end{definition}
\newcommand{\M}{\mathcal{M}}
We will show that a neural spectrahedron can be equivalently described as a linear matrix inequality via defining $S_2^m(t)=\big(\M_{11}(t), \M_{12}(t), \M_{22}(t)\big)$ for all $m\ge m^*$ where
\begin{align}
    \M(t) = \left\{ Z - Z^\prime : Z=\left[ \begin{array}{c c} Z_1 & Z_2 \\ Z_2^T & Z_4 \end{array}\right]\succeq 0,\, Z^
    \prime=\left[ \begin{array}{c c} Z_1^\prime & {Z_2^\prime}  \\ {Z_2^\prime}^T & Z_4^\prime \end{array}\right]\succeq 0, \tr(Z_1) = Z_4,\tr(Z_1^\prime) = Z_4^\prime,\, Z_4 + Z_4^\prime \le t  \right\},
    \label{eq:neuralset}
\end{align}
 $Z,Z^\prime \in \mathbb{S}^{(d+1)\times (d+1)}$, $Z_1,Z_1^\prime \in \mathbb{S}^{d\times d}$, $Z_2,Z_2^\prime \in \mathbb{R}^{d\times 1}$ and $Z_4,Z_4^\prime \in \mathbb{R}_+$, and $m^*=m^*(t)$ is a critical number of neurons that satisfies $m^*(0)=0$ and $m^*(t)\le2(d+1)\,\forall t$, which will be explicitly defined in the sequel.
Therefore, an efficient description of the set $\mathcal{M}(t)$ in terms of linear matrix inequalities enables efficient convex optimization methods in polynomial time. Moreover, it should be noted that in non-convex optimization, the choice of the optimization algorithm and its internal hyperparameters, such as initialization, mini-batching and step sizes have a substantial contribution to the quality of the learned neural network model. This is in stark contrast to convex optimization problems, where optimizer hyperparameters have no effect, and solutions can be obtained in a very robust, efficient and reproducible manner.
\subsection{A geometric description of the Neural Spectrahedron for the special case of nonnegative output layer weights}
Here we describe a simpler case with the restriction $\alpha_j\ge 0\, \forall j\in[m]$ in the Neural Cone $\mathcal{C}_2^m$ and we will suppose that $m\ge d+1$. In this special case, let us define the one-sided positive Neural Spectrahedron as
\begin{align}
     ~^+\!{\mathcal{S}}^m_2(t) := \conv \left\{ \Big(\sum_{j=1}^m u_ju_j^T \alpha_j, \sum_{j=1}^m u_j \alpha_j, \sum_{j=1}^m \alpha_j  \Big) : \|u_j\|_2 = 1, \alpha_j \in \mathbb{R}_+,\forall j=1,\dots,m,  \sum_{j=1}^m \alpha_j \le t\right\}.
\end{align}
We observe that $~^+\!{\mathcal{S}}^m_2(t)$ is identical to the set $\big(~^+\!\M_{11},~^+\!\M_{12},~^+\M_{22}\big)\subseteq\mathbb{R}^{d^2+d+1}$ where
\begin{align} \label{eq:neural_spectrahedron_def}
    ~^+\!\M(t) : & = t \conv \left\{ \sum_{j=1}^m  \append{u_j}\append{u_j}^T\alpha_j\,: u_j\in\mathbb{R}^d, \|u_j\|_2 = 1, \alpha_j\in\mathbb{R}_+,\,\sum_{j=1}^m \alpha_j \le 1 \right\},
\end{align}
which is partitioned as $ ~^+\!\M(t)=\left[\begin{array}{cc} ~^+\!\M_{11} & ~^+\!\M_{12}\\ ~^+\!\M_{12}^T & ~^+\!\M_{22} \end{array}\right]$ where $~^+\!\M_{11}\subseteq\mathbb{S}^{d\times d}, ~^+\!\M_{12}\subseteq \mathbb{R}^{d \times 1}$ and $~^+\!\M_{22}\subseteq\mathbb{R}_+$.

Next, we note that as soon as the network width\footnote{This assumption is not required in our later analysis.} satisfies $m\ge d+1$, we have
%
\begin{align} \label{eq:neural_spectrahedron_1}
    ~^+\!\M(t) : & = t\conv \left\{\left\{   \append{u}\append{u}^T:\,\|u\|_2 = 1 \right\} \cup \mathbf{0} \right\},
\end{align}
where $\mathbf{0}$ is the zero matrix, since $\sum_{j=1}^m \append{u_j}\append{u_j}^T\alpha_j\in\mathbb{S}^{(d+1)\times (d+1)}$ is a positive semidefinite matrix, and hence can be factorized\footnote{We describe the details of this factorization in Section \ref{sec:neural_decomp}.} as a convex combination of at most $d+1$ rank-one matrices of the form $\append{u}\append{u}^T$. Note that the zero matrix is included to account for the inequality $\sum_{j=1}^m \alpha_j \leq 1$ in \eqref{eq:neural_spectrahedron_def}. This important observation enables us to represent the convex hull of the non-convex Neural Cone (an example is shown in Figure \ref{fig:neuralcone}), via the simple convex body $~^+\!\M(t)$ given in \eqref{eq:neural_spectrahedron_1}.

Most importantly, the positive Neural Spectrahedron set $~^+\!\M(t)$ provides a representation of the non-convex Neural Cone $\mathcal{C}_2^m$ via its extreme points. Furthermore, $~^+\!\M(t)$  has a simple description as a linear matrix inequality provided in the following lemma (the proof can be found in the appendix).
\begin{lemma} \label{lem:spectra_LMI}
For $m\ge d+1$, it holds that
\begin{align} \label{eq:neural_spectrahedron_2}
    ~^+\!\M(t) =\left\{Z:\, Z=\left[ \begin{array}{c c} Z_1 & Z_2 \\ Z_2^T & Z_4 \end{array}\right]\succeq 0,\,\tr(Z_1)=Z_4\leq t \right\}.
\end{align}
Therefore the positive Neural Spectrahedron can be represented as the intersection of the positive semidefinite cone and linear inequalities. Moreover, every element of $~^+\!\M(t)$ can be factorized as $\sum_{j=1}^m \Big[ \begin{array}{cc} u_ju_j^T \alpha_j & u_j \alpha_j \\ u_j^T\alpha_j &  \alpha_j\end{array}\Big]$ for some $\|u_j\|_2=1,\alpha_j\ge 0,\,\forall j\in[m],\,\sum_{j=1}^m \alpha_j\leq t$, which can be identified as an element of the non-convex Neural Cone $\mathcal{C}_2^m$ and a neural network in the lifted parameter space as shown in \eqref{eq:liftednetwork}.
\end{lemma}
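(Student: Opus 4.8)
The plan is to prove the set identity \eqref{eq:neural_spectrahedron_2} by establishing both inclusions between $~^+\!\M(t)$ and the spectrahedral set on the right-hand side, and then observe that the factorization claim falls out of the argument for the ``$\supseteq$'' direction. Write $\mathcal{P}(t) := \{Z = \big[\begin{smallmatrix} Z_1 & Z_2 \\ Z_2^T & Z_4\end{smallmatrix}\big] \succeq 0 : \tr(Z_1) = Z_4 \le t\}$ for the candidate LMI description. For the inclusion $~^+\!\M(t)\subseteq\mathcal{P}(t)$, it suffices to check that every generator of the convex hull in \eqref{eq:neural_spectrahedron_1} lies in $\mathcal{P}(t)$ and that $\mathcal{P}(t)$ is convex; the latter is immediate since $\mathcal{P}(t)$ is the intersection of the PSD cone with the affine constraint $\tr(Z_1)=Z_4$ and the half-space $Z_4\le t$. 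For a generator $t\append{u}\append{u}^T$ with $\|u\|_2=1$: it is PSD (rank-one, scaled by $t\ge 0$), its top-left block is $t\,uu^T$ with trace $t\|u\|_2^2 = t$, its bottom-right entry is $t$, so $\tr(Z_1)=Z_4=t\le t$; the generator $\mathbf{0}$ trivially satisfies all constraints. Hence every convex combination lands in $\mathcal{P}(t)$.

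The substantive direction is $\mathcal{P}(t)\subseteq {~^+\!\M(t)}$, and this is where I expect the main work. Take any $Z\in\mathcal{P}(t)$. Since $Z\succeq 0$ and $Z\in\mathbb{S}^{(d+1)\times(d+1)}$, by a spectral (or more economically, a rank-revealing Cholesky-type) decomposition we may write $Z = \sum_{j=1}^{r} v_j v_j^T$ with $r = \rank(Z) \le d+1$ and $v_j\in\mathbb{R}^{d+1}$; since $m\ge d+1\ge r$, pad with zero vectors so the sum runs to $m$. Partition each $v_j = \big(\tilde u_j, s_j\big)$ with $\tilde u_j\in\mathbb{R}^d$, $s_j\in\mathbb{R}$. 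The trace constraint gives $\sum_j \|\tilde u_j\|_2^2 = \tr(Z_1) = Z_4 = \sum_j s_j^2$; I want to rewrite each nonzero $v_jv_j^T$ as $\alpha_j \append{u_j}\append{u_j}^T$ with $\|u_j\|_2 = 1$, $\alpha_j \ge 0$. The obstruction is that this requires $\|\tilde u_j\|_2 = |s_j|$ termwise, which need not hold for an arbitrary decomposition; the trace identity only gives it in aggregate. The fix is to choose the decomposition of $Z$ more carefully so that it is compatible with the block structure — concretely, exploit that $Z$ and $Z^\prime := \big[\begin{smallmatrix} Z_1 & Z_2 \\ Z_2^T & \tr(Z_1)\end{smallmatrix}\big]$ coincide, i.e., the $(d{+}1,d{+}1)$ entry is pinned to the trace of the leading block, and argue that this pinning forces the existence of a decomposition into rank-one terms each of which individually has $\|\tilde u_j\|_2^2 = s_j^2$. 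One clean route: this is precisely the statement that $Z/t$ lies in the convex hull of $\{\append{u}\append{u}^T : \|u\|_2=1\}\cup\{\mathbf{0}\}$, which can be shown by noting that the latter set spans the same cone and its ``trace-of-leading-block equals bottom-right-entry'' slice is extreme; alternatively, invoke a dimension count — the affine hull of $\mathcal{P}(t)$ restricted to $Z_4 = t$ has dimension $\binom{d+1}{2}+d$, so any point is a convex combination of at most that many $+1$ extreme points, and the extreme points of $\{Z\succeq 0, \tr(Z_1)=Z_4=t\}$ are exactly the rank-one matrices $t\append{u}\append{u}^T$ (a rank-one PSD matrix $vv^T$ with $(d{+}1,d{+}1)$ entry $\|\tilde u\|^2$... wait, with bottom entry $s^2$ equal to $\|\tilde u\|^2$ forces $s^2 = \|\tilde u\|^2$ after rescaling). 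Working out that the extreme rays of the cone $\{Z\succeq 0 : \tr(Z_1) = Z_4\}$ are precisely $\{\append{u}\append{u}^T : \|u\|_2 = 1\}$ is the crux: given a rank-one $vv^T$ in the cone, $v = (\tilde u, s)$ forces $\|\tilde u\|^2 = s^2$, so $s = \pm\|\tilde u\|$, and after absorbing signs (note $\append{u}$ has a fixed $+1$ in the last slot, so we need $s > 0$; the case $s<0$ gives $\append{-u}$ up to the sign of the last coordinate — this sign bookkeeping needs care, and is helped by the restriction $\alpha_j\ge 0$) we get the claimed form, while higher-rank elements of the cone are not extreme because the PSD cone's only extreme rays are rank-one and the linear constraint $\tr(Z_1) = Z_4$ is a single equality that a rank-one matrix either satisfies or not.

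Once the extreme-ray characterization is in hand, Carathéodory's theorem (applied within the $\binom{d+2}{2}-1$-dimensional ambient space of $\mathbb{S}^{(d+1)\times(d+1)}$, or the tighter bound from the affine hull) yields that any $Z\in\mathcal{P}(t)$ is a convex combination of at most $d+1$ rank-one extreme points $\append{u_j}\append{u_j}^T$ together with possibly the zero matrix to absorb the slack $Z_4 < t$; rescaling the coefficients to $\alpha_j\ge 0$ with $\sum_j\alpha_j = Z_4 \le t$ exhibits $Z$ as an element of the hull in \eqref{eq:neural_spectrahedron_1}, hence of $~^+\!\M(t)$. This simultaneously produces the explicit factorization $Z = \sum_{j=1}^m \big[\begin{smallmatrix} u_ju_j^T\alpha_j & u_j\alpha_j \\ u_j^T\alpha_j & \alpha_j\end{smallmatrix}\big]$ asserted at the end of the lemma (with $m - (d+1)$ of the terms vanishing), and the identification with an element of $\mathcal{C}_2^m$ and with a network in the lifted parameter space via \eqref{eq:liftednetwork} is then just unwinding the definitions. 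The one genuinely delicate point to watch throughout is the sign of $s_j$ versus the hard-coded $+1$ in $\append{u_j}$: this is exactly why the hypothesis $\alpha_j\ge 0$ (and the matching $\sum\alpha_j\le t$ rather than $\sum|\alpha_j|\le t$) is needed here, and why the two-sided Neural Spectrahedron in \eqref{eq:neuralset} requires the difference $Z - Z^\prime$ of two such PSD blocks rather than a single one.
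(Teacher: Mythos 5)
Your easy inclusion ($~^+\!\M(t)\subseteq\mathcal{P}(t)$ by checking generators plus convexity of the LMI set) is exactly the paper's argument. The hard inclusion is where your proposal has genuine gaps, even though you correctly isolate the crux (termwise $\|\tilde u_j\|_2^2=s_j^2$ versus only in aggregate). First, your justification that every extreme point of the slice $\{Z\succeq 0,\ \tr(Z_1)=Z_4=t\}$ is rank one --- ``the PSD cone's only extreme rays are rank-one and the linear constraint is a single equality that a rank-one matrix either satisfies or not'' --- is not a proof: intersecting a cone with a hyperplane through its apex can in general create extreme rays that were not extreme rays of the original cone, so the higher-rank case must actually be ruled out, e.g.\ via Pataki's rank bound for extreme points of a spectrahedron with two equality constraints ($r(r+1)/2\le 2$ forces $r\le1$), or via a facial-dimension count (a rank-$r$ point lies in the relative interior of a face of the sliced cone of dimension at least $r(r+1)/2-1\ge 2$ when $r\ge2$, hence is not extreme). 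You flag this as ``the crux'' and leave precisely that step unfinished.

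Second, and more consequentially, Carath\'eodory does not yield ``at most $d+1$ rank-one extreme points'': the relevant ambient (or affine-hull) dimension is of order $d^2$, so Carath\'eodory only bounds the number of terms by $O(d^2)$. That is enough for the set identity \eqref{eq:neural_spectrahedron_2} (the outer convex hull in \eqref{eq:neural_spectrahedron_def} tolerates any finite number of terms), but it does not prove the ``Moreover'' clause, which asserts a factorization into at most $m$ terms with $m\ge d+1$. The paper closes exactly this gap with the Neural Decomposition procedure of Section \ref{sec:neural_decomp}, applied with $G=\diag(I_d,-1)$: for any $Z\succeq0$ with $\tr(Z_1)=Z_4$ it constructs $Z=\sum_{j=1}^{r}p_jp_j^T$ with $r=\rank(Z)\le d+1$ and $p_j^TGp_j=0$ for every $j$, i.e.\ the termwise pinning you need together with the correct term count; generic convex-hull machinery cannot substitute for this when the number of terms matters. (Your sign bookkeeping is fine --- replacing $p_j$ by $-p_j$ fixes the last coordinate to be nonnegative, which is where $\alpha_j\ge0$ enters --- and your handling of the slack $Z_4<t$ via the zero matrix matches the paper.) So the skeleton is right, but the decisive decomposition step is missing and the stated $d+1$ bound is not justified by the tools you invoke.
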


%
The assumption $m\ge d+1$ is not required and only used here to illustrate this simpler special case.
In the more general case of arbitrary output layer weights $\alpha_j\in\mathbb{R},\,\forall j\in[m]$, we have the more general linear matrix inequality representation in \eqref{eq:neuralset}, which is in terms of two positive semidefinite cones and three linear inequalities.  In general, such a restriction on the number of neurons $m$ in terms of the dimension $d$ is not necessary. In the next sections, we only require $m\ge m^*$, where $m^*$ can be determined via a convex program. Furthermore, the regularization parameter directly controls the number of neurons $m^*$. We illustrate the effect of the regularization parameter on $m^*$ in the numerical experiments section, and show that $m^*$ can be made arbitrarily small.

\section{Convex Duality for Polynomial Activation Networks} \label{sec:convex_duality}

We consider the non-convex training of a two-layer fully connected neural network with polynomial activation and derive a convex dual optimization problem.  The input-output relation for this architecture is
\begin{align}
    f(x) = \sum_{j=1}^m \sigma(x^Tu_j)\alpha_j \,,
\end{align}
where $\sigma$ is the degree two polynomial $\sigma(u)=au^2+bu+c$. This neural network has $m$ neurons with the first layer weights $u_j \in \mathbb{R}^d$ and second layer weights $\alpha_j \in \mathbb{R}$. We refer to this case where $f: \mathbb{R}^d \rightarrow \mathbb{R}$ as the scalar output case. Section \ref{sec:vector_outputs} extends the results to the vector output case.

It is relatively easy to obtain a weak dual that provides a lower-bound via Lagrangian duality. However, in non-convex problems, a duality gap may exist since strong duality does not hold in general.  Remarkably, we show that strong duality holds as soon as the network width exceeds a critical threshold which can be easily determined.

We will assume $\ell_1$ norm regularization on the second layer weights as regularization and include constraints that the first layer weights are unit norm. We note that $\ell_1$ norm regularization on the second layer weights results in a special dual problem and hence is crucial in the derivations. We show in Section \ref{sec:quadratic_act} that this formulation is equivalent to cubic regularization when the activation is quadratic. For the standard $\ell_2^2$, i.e., weight decay regularization, we will in fact show that the problem is NP-hard (see Section \ref{sec:np_hardness}). The training of a network under this setting requires solving the non-convex optimization problem given by
\begin{align} \label{eq:poly_act_non-convex}
    p^* = &\min_{\{\alpha_j,\, u_j\}_{j=1}^m, \, \mbox{s.t.}\, \|u_j\|_2=1,\,\forall j} \ell \left( \sum_{j=1}^m \sigma(Xu_j) \alpha_j \,,\, y \right) + \beta \sum_{j=1}^m \vert\alpha_j\vert \,.
\end{align}

Theorem \ref{thm:poly_act_thm} states the main result for polynomial activation neural networks that the non-convex optimization problem in \eqref{eq:poly_act_non-convex} can be solved globally optimally via a convex problem. Before we state Theorem \ref{thm:poly_act_thm}, we briefly describe the numerical examples shown in Figure \ref{fig:converge_to_localminima} and \ref{fig:uci_multiclass_classification_1} which compare the solution of the non-convex problem via backpropagation and the solution of the corresponding convex program via a convex solver (see Section \ref{sec:numerical_results} for details on the solver). Figure \ref{fig:converge_to_localminima} shows the training and test costs on a regression task with randomly generated data for the two-layer quadratic activation neural network. We observe that convex SDP takes a much shorter time to optimize and obtains a globally optimal solution while the SGD algorithm converges to local minima in some of the trials where the initialization is different. Furthermore, Figure \ref{fig:uci_multiclass_classification_1} compares the classification accuracies for the two-layer vector output polynomial activation network on a multiclass classification problem with real data. The exact statement of the vector output extension of the main result is provided in Section \ref{sec:vector_outputs}. In Section \ref{sec:numerical_results}, we present additional numerical results verifying all of the theoretical results on various datasets.

\begin{figure} 
\begin{minipage}[b]{0.48\linewidth}
  \centering
  \centerline{\includegraphics[width=\columnwidth]{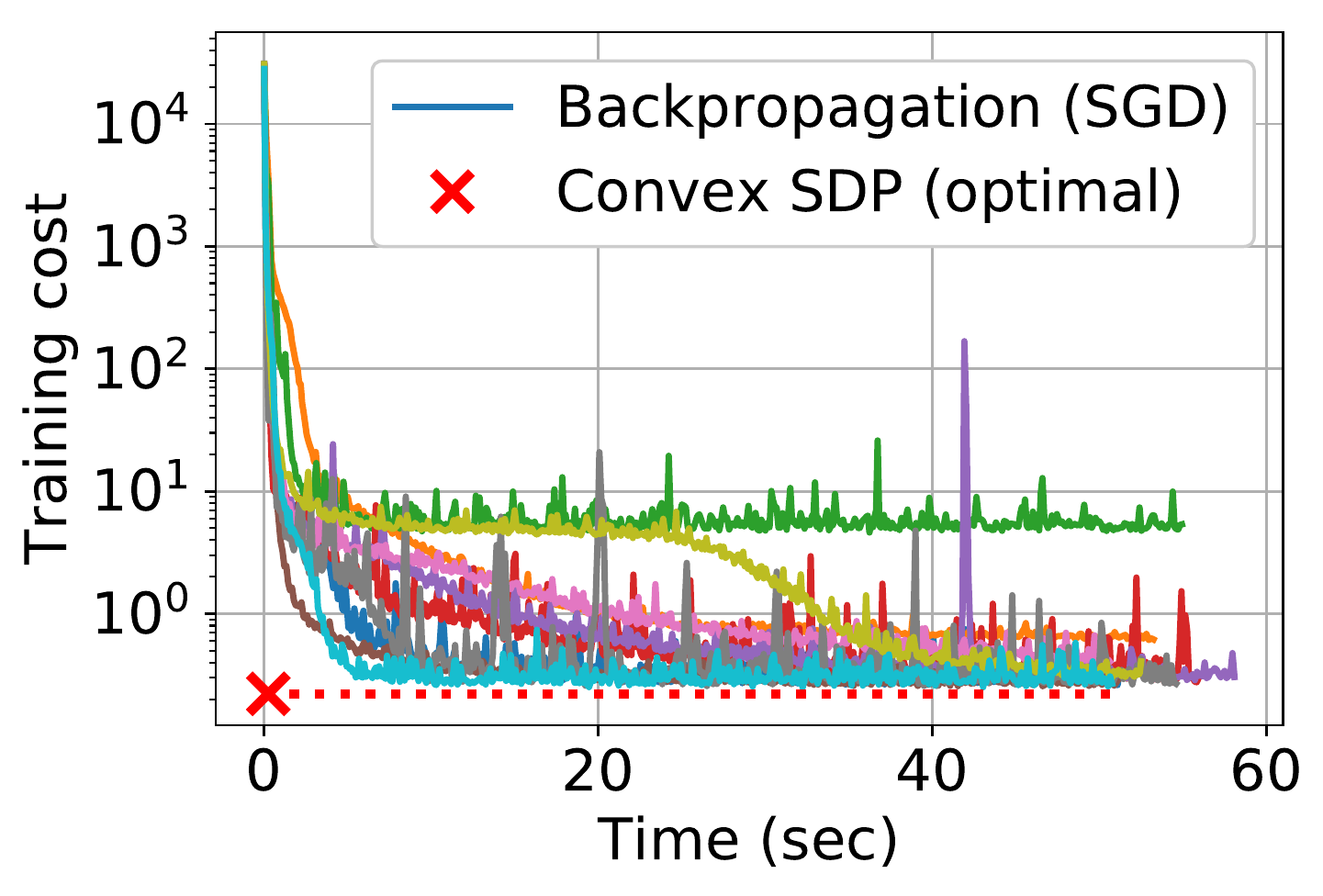}}
\end{minipage}
\hfill
\begin{minipage}[b]{0.48\linewidth}
  \centering
  \centerline{\includegraphics[width=\columnwidth]{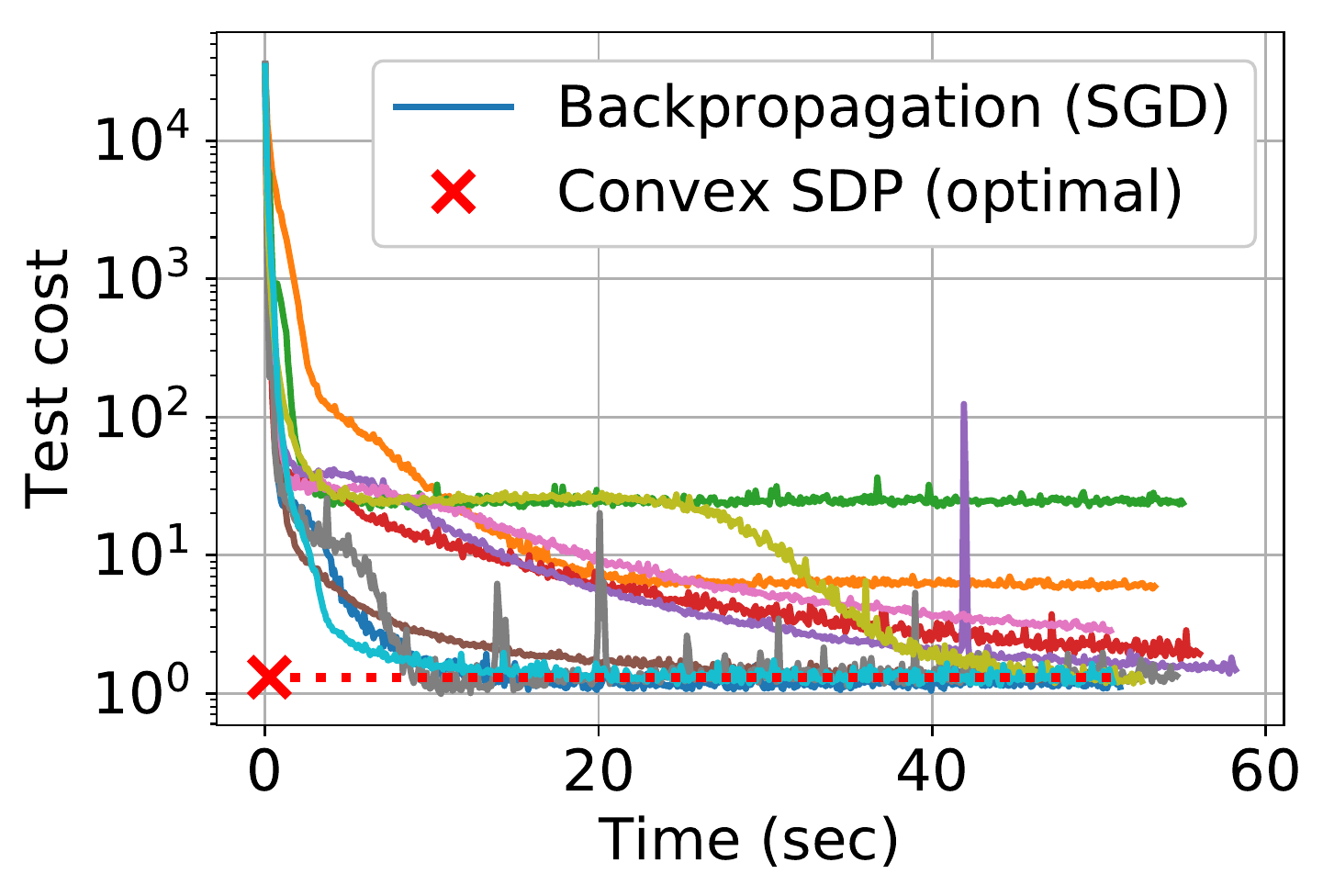}}
\end{minipage}
\caption{Cost against wall-clock time on the training (left) and test (right) sets for stochastic gradient descent (SGD) and the convex SDP for quadratic activation networks. The solid lines show the training curve of the non-convex model with SGD (with learning rate tuned offline) and each line corresponds to an independent trial. The dotted horizontal line shows the cost for the convex SDP and the cross indicates the time that it takes to solve the convex SDP. The dataset $X$ is synthetically generated by sampling from the i.i.d. Gaussian distribution and has dimensions $n=100,d=10$. Labels $y$ are generated by a teacher network with $10$ planted neurons. The regularization coefficient is $\beta=10^{-6}$ and the batch size for SGD is $10$.}
\label{fig:converge_to_localminima}
\end{figure}

\begin{figure}
\begin{minipage}[b]{0.48\linewidth}
  \centering
  \centerline{\includegraphics[width=\columnwidth]{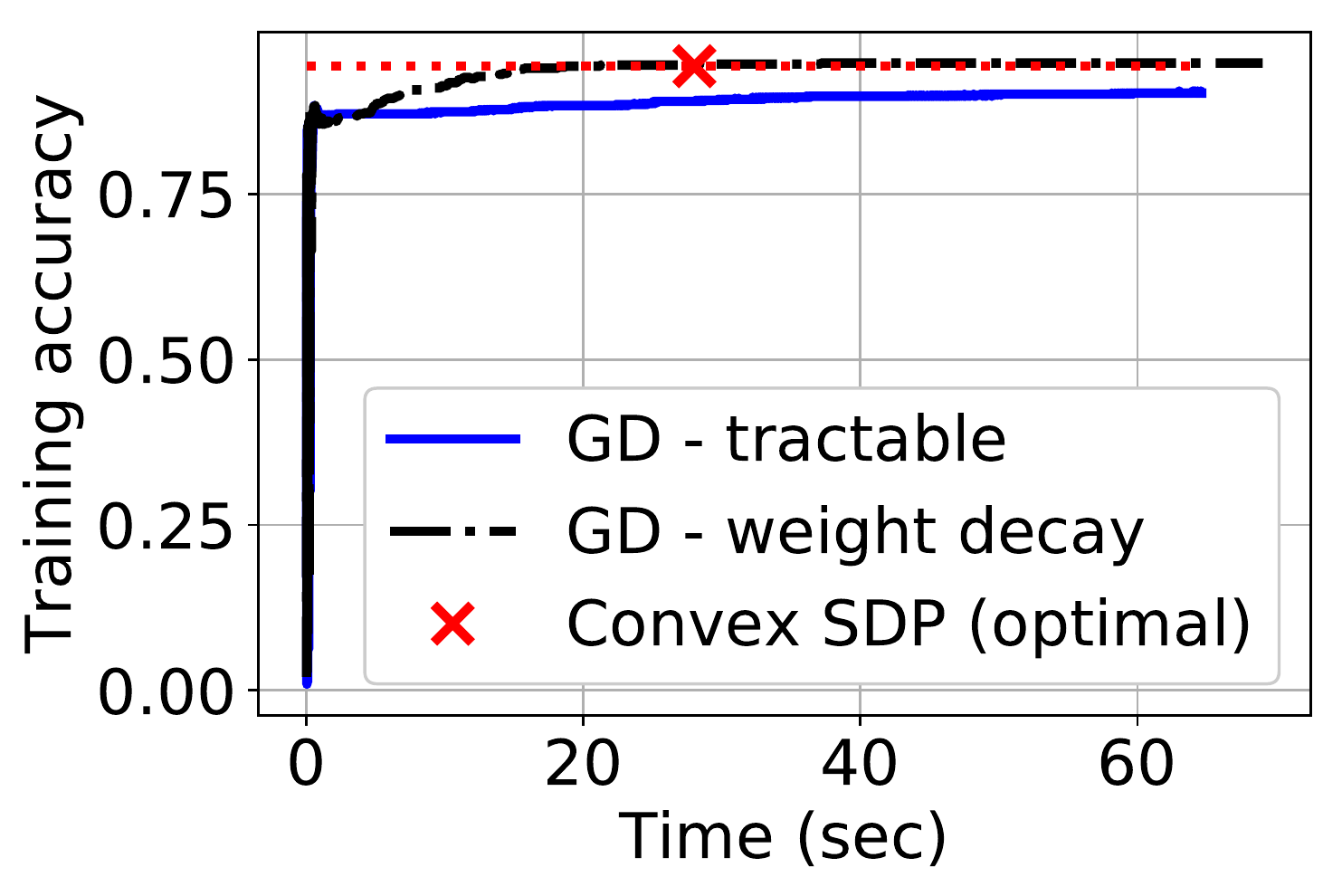}}
\end{minipage}
\hfill
\begin{minipage}[b]{0.48\linewidth}
  \centering
  \centerline{\includegraphics[width=\columnwidth]{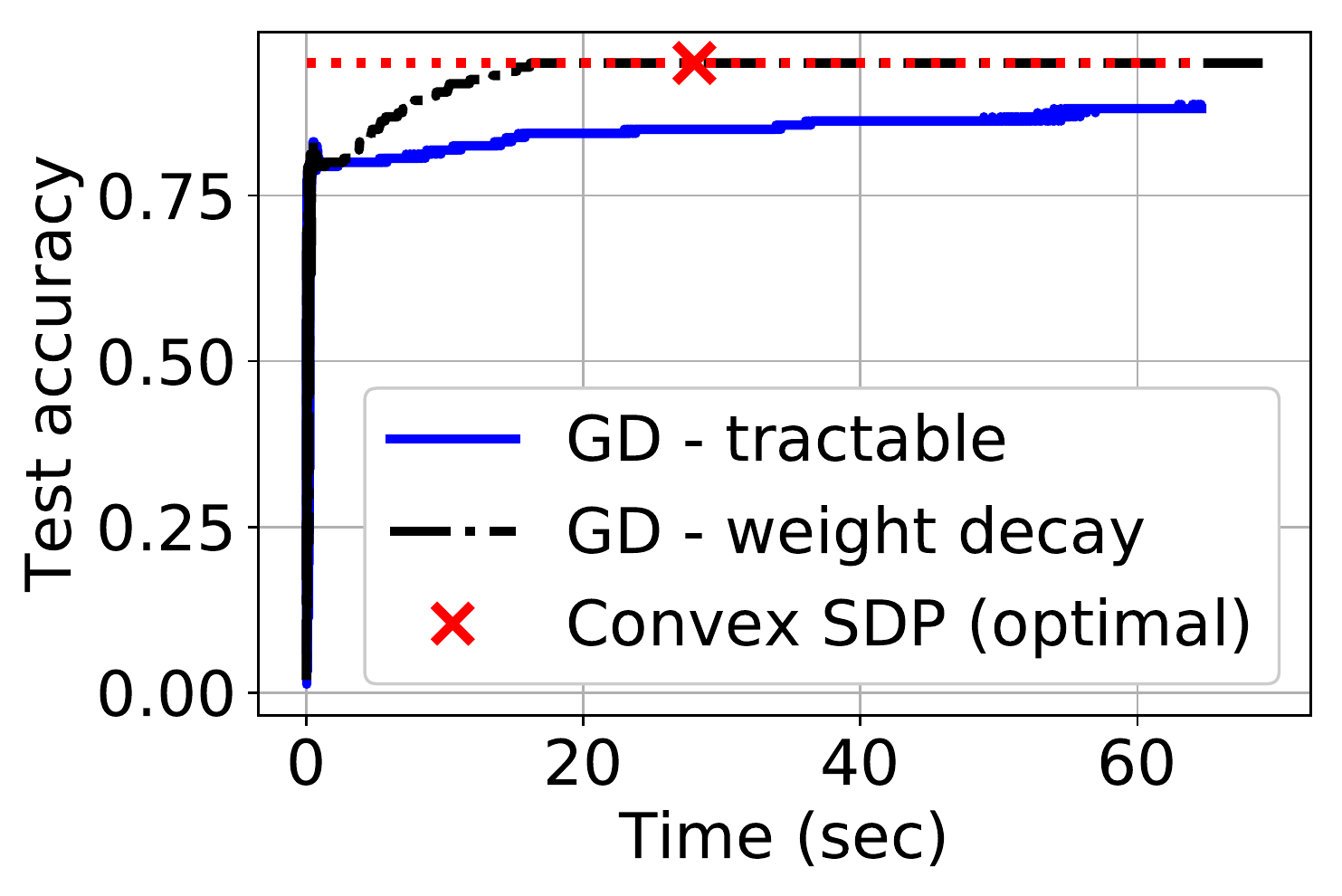}}
\end{minipage}
\caption{Classification accuracy results on the UCI dataset "annealing" ($n=638,d=31$) for polynomial activation networks. This is a multiclass classification dataset with $C=5$ classes. Both training (left) and test (right) set accuracies are shown for the gradient descent (GD) and the convex SDP methods. Legend labels are as follows. \textit{GD - tractable}: The non-convex problem in \eqref{eq:convolutional_nonconvex_avgpool} solved via gradient descent, \textit{GD - weight decay}: Non-convex problem with quadratic regularization on all weights solved via gradient descent, \textit{Convex SDP (optimal)}: The convex problem in \eqref{eq:vector_output_convex_program}. Degree two polynomial activation with coefficients $a=0.09$, $b=0.5$, $c=0.47$ is used. The regularization coefficient is $\beta=1$. The learning rate for GD is optimized offline and only the best performing learning rate is shown. The resulting number of neurons from the convex program is $172$. 
}
\label{fig:uci_multiclass_classification_1}
\end{figure}

Figure \ref{fig:optimizer_and_lr} compares the accuracy of the non-convex polynomial activation model when it is trained with different optimizers (SGD and Adam) for a range of step sizes. Figure \ref{fig:optimizer_and_lr} shows that the convex formulations outperform the non-convex solution via SGD and Adam. The extension of the main result to convolutional neural networks is discussed in Section \ref{sec:convolutional} and \ref{sec:avg_pooling}.

\begin{figure} 
\begin{minipage}[b]{0.42\linewidth}
  \centering
  \centerline{\includegraphics[width=\columnwidth]{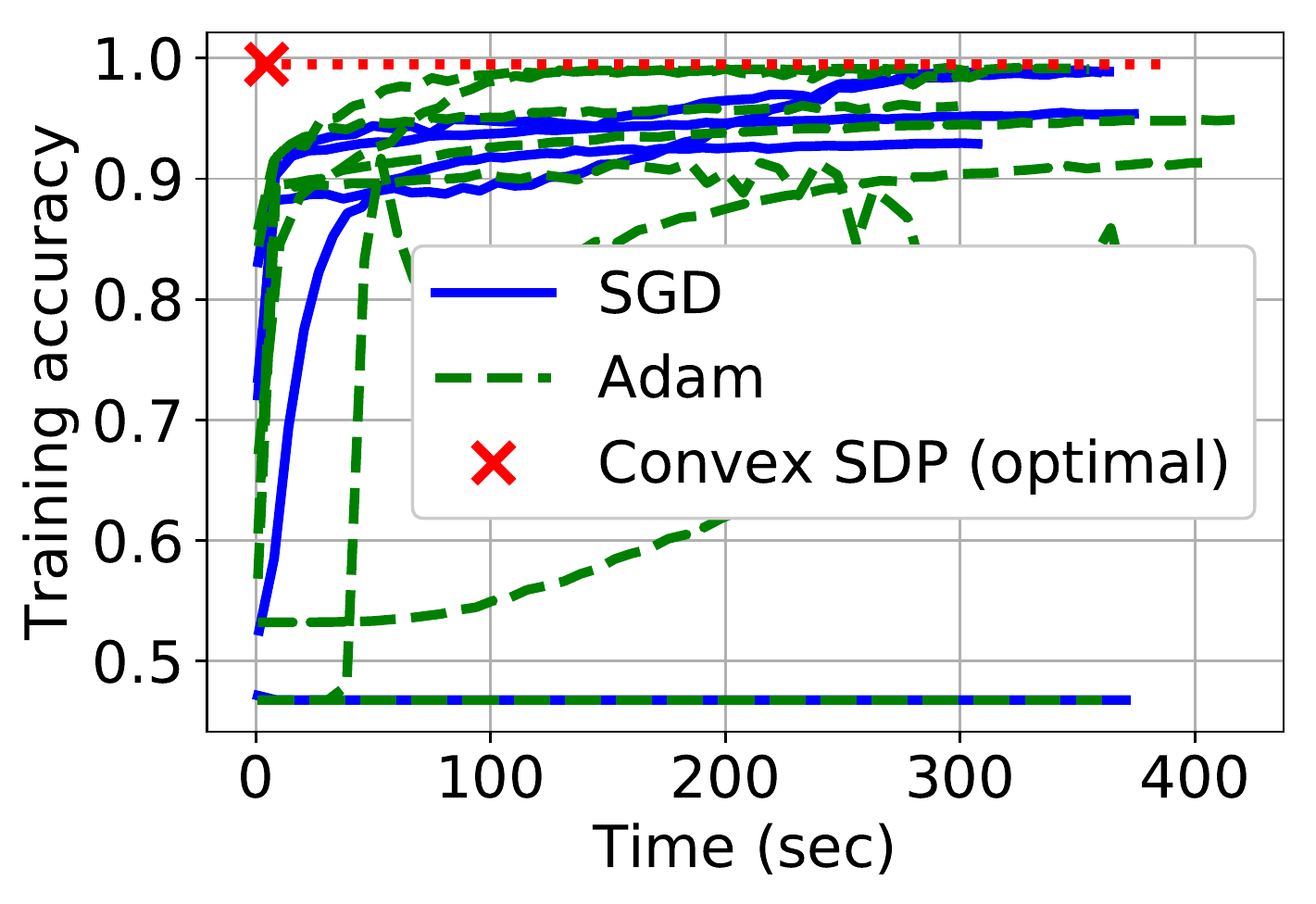}}
  \centerline{(a) CNN, MNIST, training accuracy}\medskip
\end{minipage}
\hfill
\begin{minipage}[b]{0.42\linewidth}
  \centering
  \centerline{\includegraphics[width=\columnwidth]{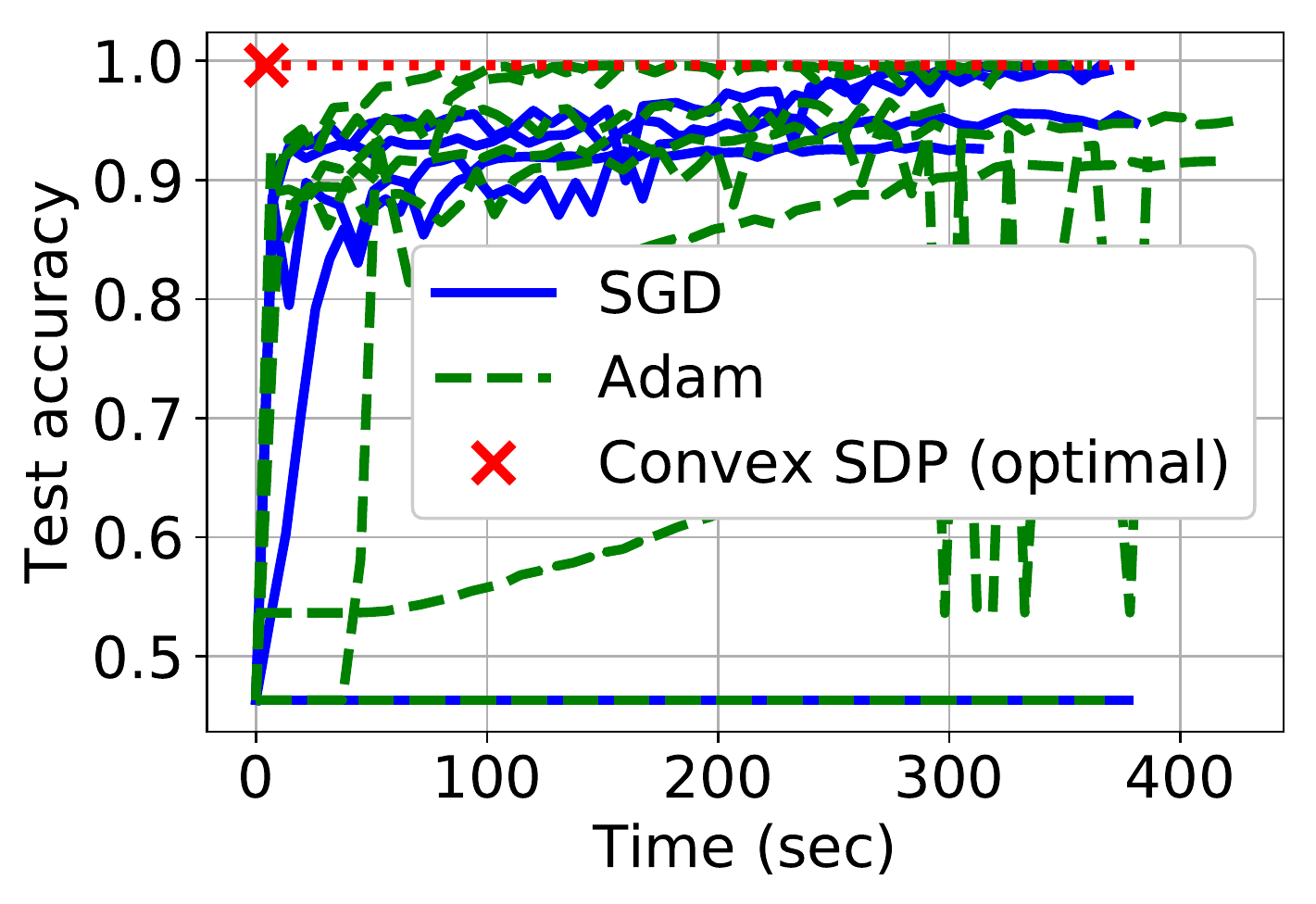}}
  \centerline{(b) CNN, MNIST, test accuracy}\medskip
\end{minipage}

\begin{minipage}[b]{0.42\linewidth}
  \centering
  \centerline{\includegraphics[width=\columnwidth]{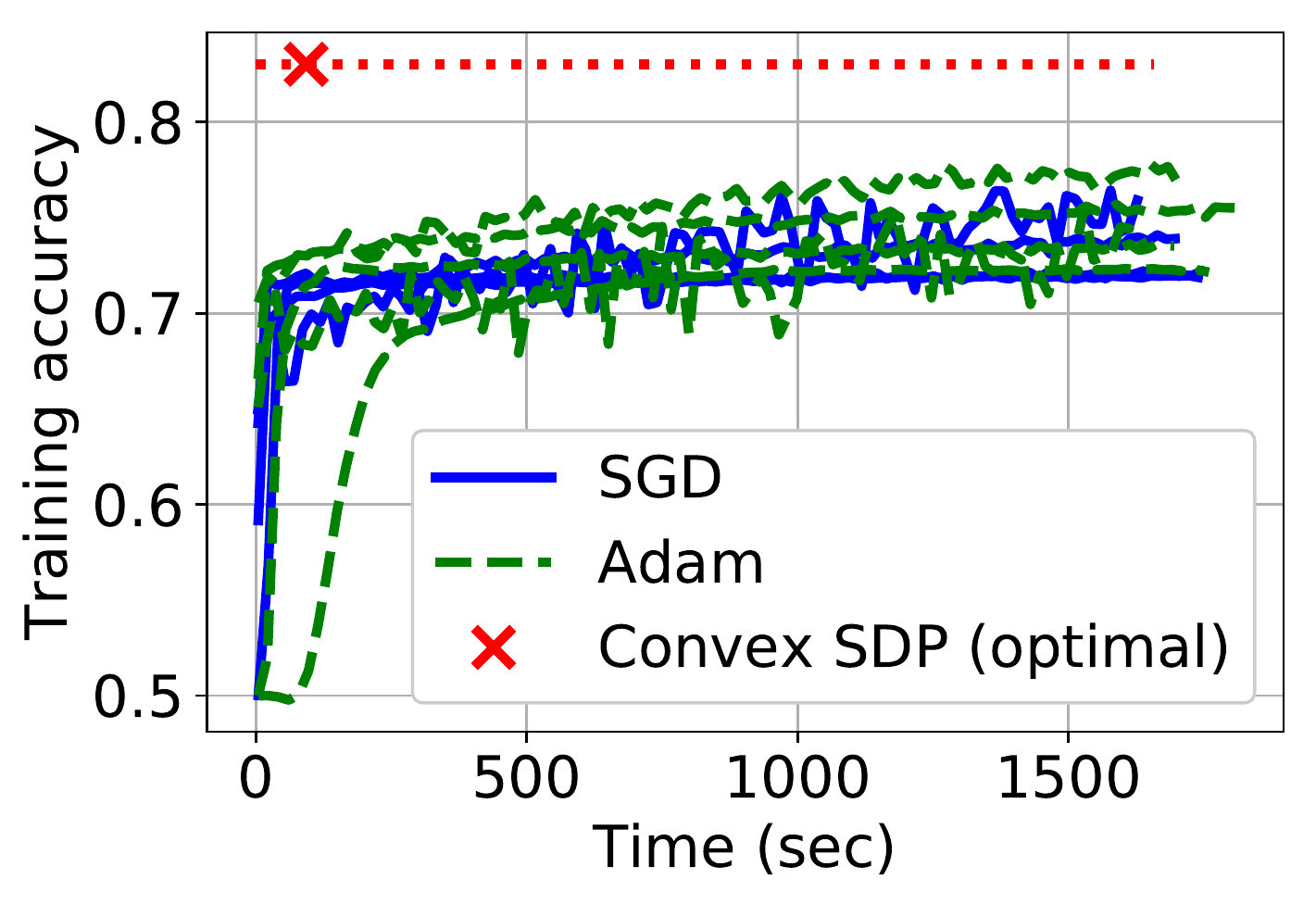}}
  \centerline{(c) CNN, CIFAR, training accuracy}\medskip
\end{minipage}
\hfill
\begin{minipage}[b]{0.42\linewidth}
  \centering
  \centerline{\includegraphics[width=\columnwidth]{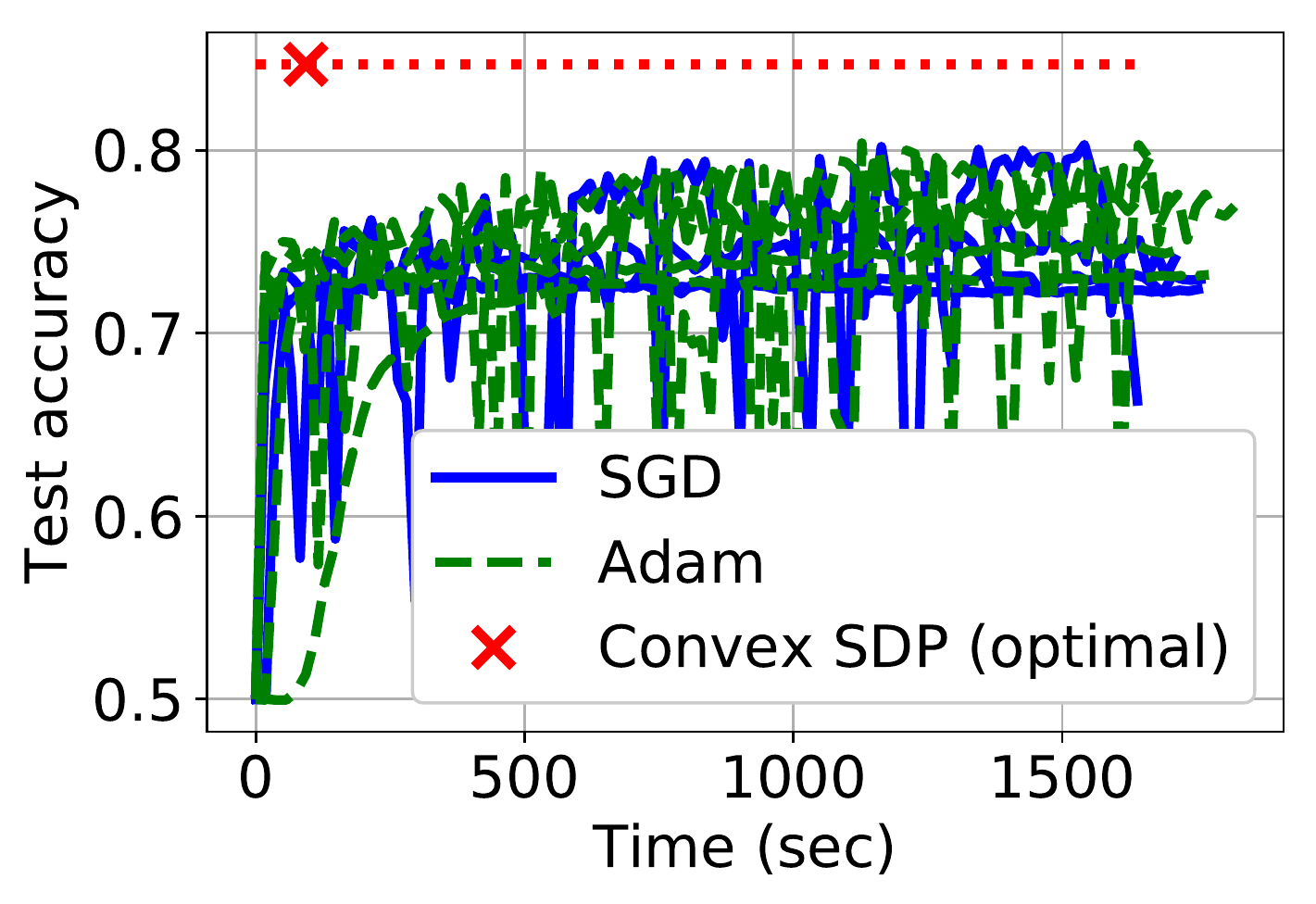}}
  \centerline{(d) CNN, CIFAR, test accuracy}\medskip
\end{minipage}

\begin{minipage}[b]{0.42\linewidth}
  \centering
  \centerline{\includegraphics[width=\columnwidth]{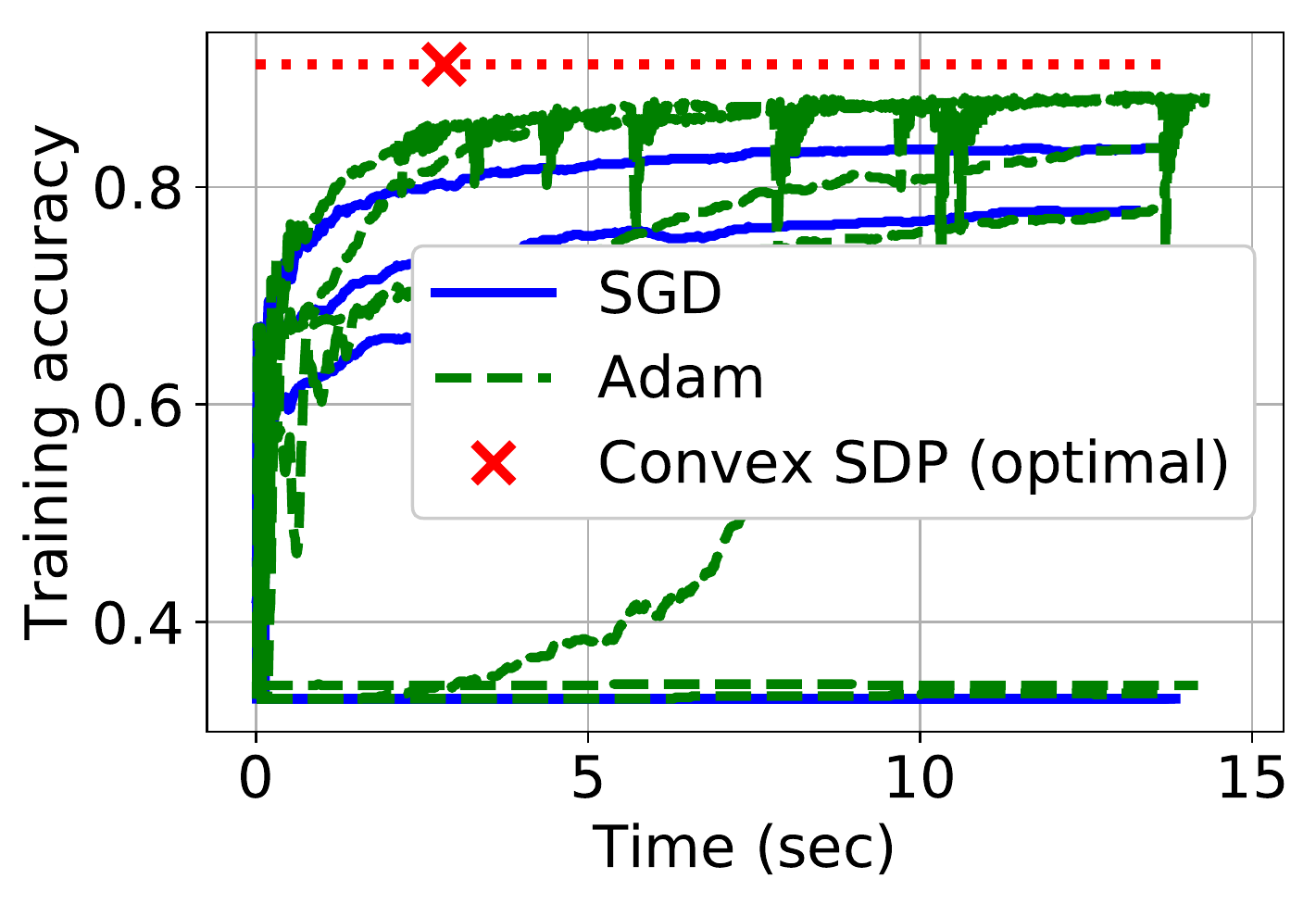}}
  \centerline{(e) Fully connected, oocytes, training accuracy}\medskip
\end{minipage}
\hfill
\begin{minipage}[b]{0.42\linewidth}
  \centering
  \centerline{\includegraphics[width=\columnwidth]{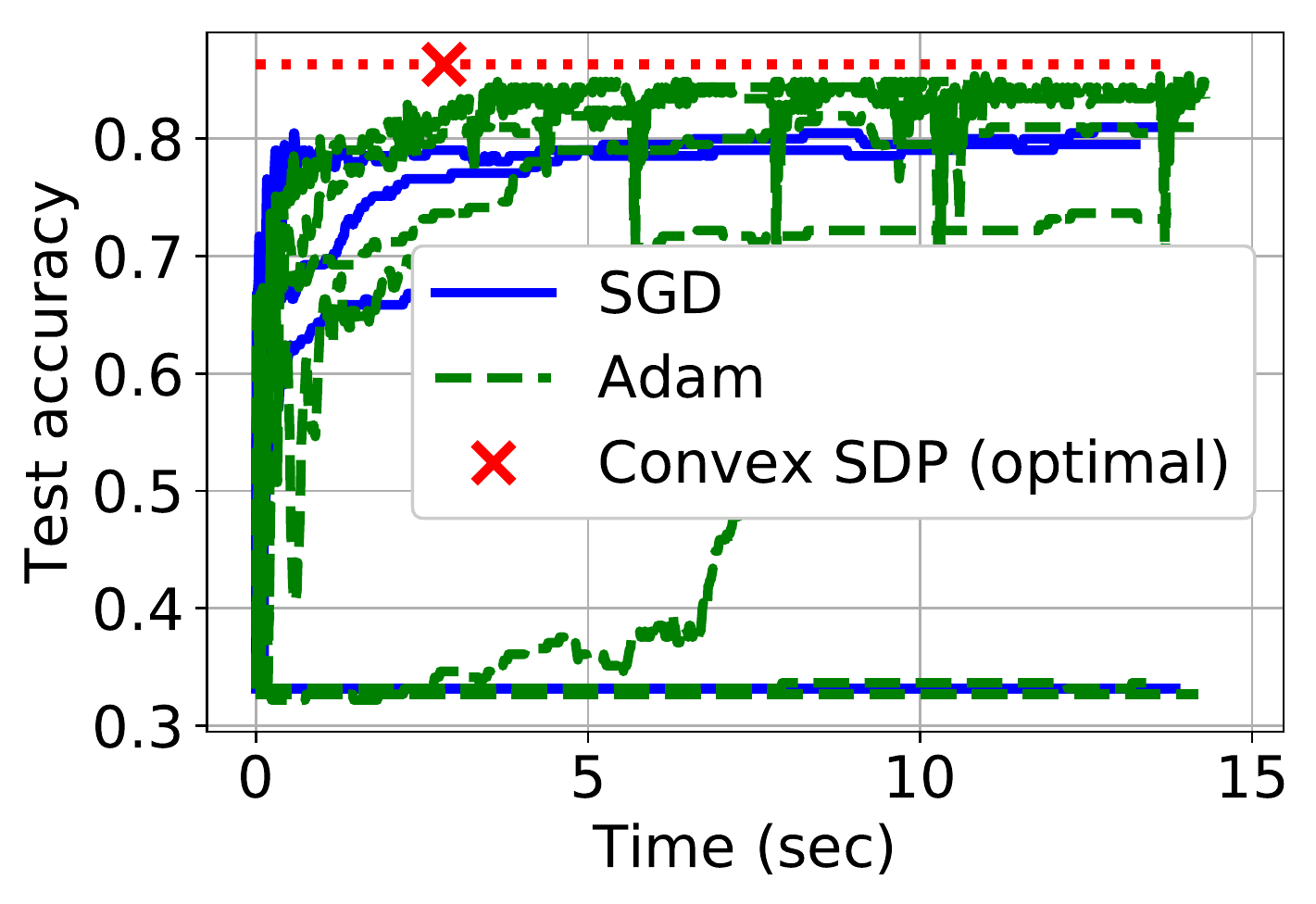}}
  \centerline{(f) Fully connected, oocytes, test accuracy}\medskip
\end{minipage}
\caption{Classification accuracy for various learning rates and optimizers are plotted on the same figure. SGD and Adam are used in solving the non-convex optimization problem. The solid blue lines each correspond to a different learning rate for SGD and each dashed green line corresponds to a different learning rate for the Adam algorithm. Plots a, b: CNN with degree two polynomial activations and global average pooling for binary classification on the first two classes of the MNIST dataset. Plots c, d: The same architecture as plots a, b and the dataset is the first two classes of the CIFAR-10 dataset. Plots e, f: Fully connected architecture for binary classification on the dataset oocytes-merluccius-nucleus-4d.}
\label{fig:optimizer_and_lr}
\end{figure}

\begin{theorem} [Globally optimal convex program for polynomial activation networks] \label{thm:poly_act_thm}
The solution of the convex problem
\begin{align}\label{eq:polyact_convex_program_final}
    \min_{Z=Z^T, Z^\prime={Z^\prime}^T} \, &\ell(\hat{y}, \, y) + \beta (Z_4+Z_4^\prime) \nonumber \\
    \mbox{s.t.} \quad &\hat{y}_i = a x_i^T(Z_1-Z_1^\prime)x_i + b x_i^T(Z_2-Z_2^\prime) + c(Z_4-Z_4^\prime), \quad i \in [n] \nonumber \\ 
    &\tr(Z_1) = Z_4, \, \tr(Z_1^\prime) = Z_4^\prime \nonumber \\
    &Z \succeq 0, \, Z^\prime \succeq 0
\end{align}
provides a global optimal solution for the non-convex problem in \eqref{eq:poly_act_non-convex} when the number of neurons satisfies $m \geq m^*$ where 
\begin{align}
    m^* = \rank(Z^*) + \rank({Z^\prime}^*).
\end{align}
Here $Z^*$ and ${Z^\prime}^*$ denote the solution of \eqref{eq:polyact_convex_program_final}. The variables $Z \in \mathbb{S}^{(d+1)\times (d+1)}$ and $Z^\prime \in \mathbb{S}^{(d+1)\times (d+1)}$ are defined in \eqref{eq:defn_of_Z}.
It follows that the optimal number of neurons is upper bounded by $m^* \leq 2(d+1)$.
\end{theorem}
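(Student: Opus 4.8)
The plan is to show that the optimal value $d^\star$ of the convex program \eqref{eq:polyact_convex_program_final} equals $p^\star$ from \eqref{eq:poly_act_non-convex}, by establishing the two inequalities $d^\star\le p^\star$ and $p^\star\le d^\star$, and producing along the way an explicit optimal network with $\rank(Z^\star)+\rank({Z^\prime}^\star)$ neurons. For the first inequality one could instead go through Lagrangian duality: form the weak dual of \eqref{eq:poly_act_non-convex}, whose only nonconvexity is the inner maximization $\max_{\|u\|_2=1}\lambda^T\sigma(Xu)$, a single quadratically constrained quadratic program, apply the S-procedure to rewrite it exactly as a linear matrix inequality, and recognize \eqref{eq:polyact_convex_program_final} as the resulting bidual. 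I will instead use the more elementary substitution argument below, which also makes the neuron count transparent.

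\emph{Step 1: $d^\star\le p^\star$.} Take any $\{u_j,\alpha_j\}_{j=1}^m$ feasible for \eqref{eq:poly_act_non-convex}. Split the neurons by the sign of $\alpha_j$ and set $Z:=\sum_{j:\,\alpha_j\ge 0}\alpha_j\append{u_j}\append{u_j}^T$ and $Z^\prime:=\sum_{j:\,\alpha_j<0}(-\alpha_j)\append{u_j}\append{u_j}^T$, both positive semidefinite. Since $\|u_j\|_2=1$, comparing the trace of the top-left block with the bottom-right entry gives $\tr(Z_1)=\sum_{j:\alpha_j\ge0}\alpha_j=Z_4$ and likewise $\tr(Z_1^\prime)=Z_4^\prime$, while $Z_4+Z_4^\prime=\sum_{j=1}^m|\alpha_j|$. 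By the lifted identity \eqref{eq:liftednetwork}, $\sum_j\sigma(x_i^Tu_j)\alpha_j=ax_i^T(Z_1-Z_1^\prime)x_i+bx_i^T(Z_2-Z_2^\prime)+c(Z_4-Z_4^\prime)$, so $(Z,Z^\prime)$ is feasible for \eqref{eq:polyact_convex_program_final} with the same objective value; minimizing over the network shows $d^\star\le p^\star$, and this holds for every $m$.

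\emph{Step 2: $p^\star\le d^\star$ via Neural Decomposition.} Let $(Z^\star,{Z^\prime}^\star)$ attain $d^\star$, with $r=\rank(Z^\star)$ and $r^\prime=\rank({Z^\prime}^\star)$. Factor $Z^\star=BB^T$ with $B=\left[\begin{array}{c}A\\ b^T\end{array}\right]$, $A\in\mathbb{R}^{d\times r}$, $b\in\mathbb{R}^r$; then $Z_1^\star=AA^T$, $Z_2^\star=Ab$, $Z_4^\star=\|b\|_2^2$, and the equality constraint $\tr(Z_1^\star)=Z_4^\star$ is exactly $\tr(A^TA-bb^T)=0$. Because a real symmetric matrix of zero trace is orthogonally similar to one with vanishing diagonal (an elementary special case of the Schur--Horn theorem, obtainable by successive $2\times2$ rotations), choose $Q\in O(r)$ so that $\diag\big(Q^T(A^TA-bb^T)Q\big)=0$. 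Writing the $j$-th column of $BQ$ as $(\tilde u_j,\gamma_j)$ with $\tilde u_j\in\mathbb{R}^d$, $\gamma_j\in\mathbb{R}$, this says $\|\tilde u_j\|_2^2=\gamma_j^2$ for all $j$; for $\gamma_j\neq0$ put $u_j:=\tilde u_j/\gamma_j$ (so $\|u_j\|_2=1$) and $\alpha_j:=\gamma_j^2\ge 0$, yielding the rank-one identity $(\tilde u_j,\gamma_j)(\tilde u_j,\gamma_j)^T=\alpha_j\append{u_j}\append{u_j}^T$, while columns with $\gamma_j=0$ are zero and may be dropped. Hence $Z^\star=\sum_{j=1}^r\alpha_j\append{u_j}\append{u_j}^T$ with $\|u_j\|_2=1,\alpha_j\ge0$, and symmetrically ${Z^\prime}^\star=\sum_{j=1}^{r^\prime}\alpha_j^\prime\append{u_j^\prime}\append{u_j^\prime}^T$ (this is the Neural Decomposition, refining Lemma \ref{lem:spectra_LMI} to use exactly $\rank$-many terms). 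Now let the network have the $r$ neurons $(u_j,\alpha_j)$ together with the $r^\prime$ neurons $(u_j^\prime,-\alpha_j^\prime)$, padding with zero-weight unit-norm neurons if $m>r+r^\prime$. It is feasible for \eqref{eq:poly_act_non-convex}; its second-layer $\ell_1$ norm is $\sum_j\alpha_j+\sum_j\alpha_j^\prime=Z_4^\star+{Z_4^\prime}^\star$; and by \eqref{eq:liftednetwork} its outputs equal the optimal $\hat y^\star$ of \eqref{eq:polyact_convex_program_final}. Thus its objective in \eqref{eq:poly_act_non-convex} equals $d^\star$, giving $p^\star\le d^\star$ for every $m\ge m^\star:=r+r^\prime$. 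Combined with Step 1, $p^\star=d^\star$, the constructed network is globally optimal, and since $Z^\star,{Z^\prime}^\star\in\mathbb{S}^{(d+1)\times(d+1)}$ we get $m^\star\le2(d+1)$.

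\emph{Anticipated obstacle.} Step 1 is essentially bookkeeping; the heart of the argument is the Neural Decomposition in Step 2, in particular the equidistributing rotation $Q$ that turns an arbitrary positive semidefinite factorization of $Z^\star$ into the structured rank-one pieces $\alpha_j\append{u_j}\append{u_j}^T$ with unit $u_j$, using only $\rank(Z^\star)$ of them. Two technical points also need attention: existence of an SDP optimum (clear for $\beta>0$ with $\ell$ closed convex and bounded below; the $\beta\to0$ assertion requires a separate limiting argument), and the degenerate columns with $\gamma_j=0$, which contribute nothing and only serve to keep the neuron count at $\rank(Z^\star)+\rank({Z^\prime}^\star)$.
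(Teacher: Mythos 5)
Your proposal is correct, and the two halves compare to the paper as follows. Step 2 is in substance the paper's Neural Decomposition: the paper constructs the factorization $Z^\star=\sum_j p_jp_j^T$ with $p_j^TGp_j=0$ (where $G=\mathrm{diag}(I_d,-1)$) by an explicit iterative procedure that pairs components with opposite signs of $p^TGp$ and mixes them through a Givens-type combination $(p_1+\alpha p_j)/\sqrt{1+\alpha^2}$ — which is exactly the successive $2\times 2$ rotation proof of the zero-trace-implies-zero-diagonal fact you invoke — and it likewise obtains unit-norm $u_j$ with nonnegative outputs $d_j^2$ and exactly $\rank(Z^\star)+\rank({Z^\prime}^\star)$ neurons, so this half matches the paper with only a difference in packaging (classical fact cited vs. algorithm spelled out). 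Step 1 is genuinely different: the paper proves the lower bound by dualizing the inner minimization over the $\alpha_j$, exchanging min and max, converting the constraints $|v^T\sigma(Xu_j)|\le\beta$ with $\|u_j\|_2=1$ into linear matrix inequalities via the S-procedure with equality (Corollary \ref{l:S_lemma_w_equality}), and then taking the Lagrangian dual of that maximization to arrive at \eqref{eq:polyact_convex_program_final}; your direct lifting of any feasible network into $(Z,Z^\prime)$ by splitting neurons according to the sign of $\alpha_j$ is more elementary, gives $d^\star\le p^\star$ for every $m$, and quietly sidesteps the strong-duality (Slater) step that the paper's chain implicitly needs, since the Lagrangian dual of the max problem a priori only upper-bounds it. What the paper's duality route buys is the derivation itself — it explains where the SDP and the neural-spectrahedron description come from — whereas your substitution argument presupposes the SDP and verifies it; the combination of your Step 1 with the decomposition is a complete and arguably tighter proof. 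Your caveats (attainment of the SDP optimum, the $\beta\to 0$ limit, zero columns) are minor and are likewise left implicit in the paper.
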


The proof of Theorem \ref{thm:poly_act_thm} is established in this section and the next. In this section we show that the solution of the convex program \eqref{eq:polyact_convex_program_final} provides a lower bound for the solution of the non-convex problem \eqref{eq:poly_act_non-convex}. In the next section, we prove, via the method of neural decomposition, that the solution of the convex problem provides also an upper bound, which concludes the proof of Theorem \ref{thm:poly_act_thm}.

In proving the lower bound, we leverage duality. Minimizing over first $\alpha_j$'s and then $u_j$'s, we can restate the problem in \eqref{eq:poly_act_non-convex} as
\begin{align}
    p^* = &\min_{\{u_j \}_{j=1}^m \, \mbox{s.t.}\, \|u_j\|_2=1,\,\forall j} \, \min_{\{\alpha_j \}_{j=1}^m, \hat{y}} \ell \left( \hat{y},\, y \right) + \beta \sum_{j=1}^m \vert\alpha_j\vert \quad \mbox{s.t.} \quad \hat{y} = \sum_{j=1}^m \sigma(Xu_j) \alpha_j \,.
\end{align}
The dual problem for the inner minimization problem is given by
\begin{align}
    \max_{v} -\ell^*(-v) \quad \mbox{s.t.} \quad |v^T \sigma(Xu_j) | \leq \beta, \, \forall j \,.
\end{align}
%
Next, let us call the optimal solution of the following problem $d^*$
\begin{align}
    d^* = \min_{\{u_j \}_{j=1}^m \, \mbox{s.t.}\, \|u_j\|_2=1,\,\forall j} \, \max_{|v^T \sigma(Xu_j)| \leq \beta \,, \forall j} -\ell^*(-v).
\end{align}
By changing the order of the minimization and maximization operations, we obtain the following bound
\begin{align} \label{eq:max_problem_initial}
    d^* \geq \max_{|v^T \sigma(Xu_j)| \leq \beta \,, \|u_j\|_2=1,\,\forall j} -\ell^*(-v).
\end{align}
We note that the constraints $|v^T \sigma(Xu_j)| \leq \beta$ can equivalently be written as two quadratic (in $u_j$) inequalities for each $j=1,\dots,m$,
\begin{align}
    u_j^T \left(a \sum_{i=1}^n x_ix_i^Tv_i \right) u_j  + bv^T Xu_j + cv^T \ones \leq \beta, \, \, -u_j^T \left(a \sum_{i=1}^n x_ix_i^Tv_i \right) u_j - bv^T Xu_j - cv^T \ones \leq \beta.
\end{align}
Next, we use the S-procedure given in Corollary \ref{l:S_lemma_w_equality} to reformulate the quadratic inequality constraints as linear matrix inequality constraints. Corollary \ref{l:S_lemma_w_equality} is based on Lemma \ref{l:S_lemma_w_equality_orig} which characterizes the solvability of a quadratic system. The proof of Corollary \ref{l:S_lemma_w_equality} is given in the appendix.

\begin{lemma} [Proposition 3.1 from \cite{polik07slemma}] \label{l:S_lemma_w_equality_orig}
Let $f_1$ and $f_2$ be quadratic functions where $f_2$ is strictly concave (or strictly convex) and assume that $f_2$ takes both positive and negative values. Then, the following two statements are equivalent:
\begin{enumerate}
    \item $f_1(u) < 0, f_2(u)=0$ is not solvable.
    \item There exists $\lambda \in \mathbb{R}$ such that $f_1(u) + \lambda f_2(u) \geq 0$, $\forall u$.
\end{enumerate}
\end{lemma}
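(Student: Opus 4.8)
The plan is to treat the two implications separately. The direction $(2)\Rightarrow(1)$ is immediate and I would dispatch it first: if $f_1(u)+\lambda f_2(u)\ge 0$ for all $u$, then at any $u$ with $f_2(u)=0$ we get $f_1(u)\ge 0$, so the system $f_1(u)<0,\ f_2(u)=0$ is unsolvable. All the work is in $(1)\Rightarrow(2)$, and my approach would be to \emph{homogenize} and reduce to the classical S-lemma with an equality constraint for a single indefinite quadratic \emph{form} (Finsler's theorem), using the strict-definiteness hypothesis on $f_2$ precisely to make the reduction lossless.

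Concretely, write $f_i(u)=u^\top A_i u+2b_i^\top u+c_i$ with $A_i=A_i^\top$. The hypothesis that $f_2$ is strictly concave (or strictly convex) means $A_2\prec 0$ (or $A_2\succ 0$), and the hypothesis that $f_2$ takes values of both signs gives, by continuity of $f_2$ on the connected set $\mathbb{R}^n$, that $\{u:f_2(u)=0\}$ is nonempty. I would introduce an auxiliary scalar $t$ and pass to the homogeneous forms on $\mathbb{R}^{n+1}$
\[
g_i(u,t):=u^\top A_i u+2t\,b_i^\top u+c_i t^2,\qquad i=1,2,
\]
so that $g_i(u,1)=f_i(u)$ and $g_i(\mu u,\mu t)=\mu^2 g_i(u,t)$.

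The key lemma I would establish is that $(1)$ forces $g_1(u,t)\ge 0$ on the whole zero set $\{g_2(u,t)=0\}$. For $t\neq 0$ this is just $(1)$ applied to $u/t$ after multiplying by $t^2>0$. For $t=0$ we have $g_2(u,0)=u^\top A_2 u$, which (since $A_2$ is definite) vanishes only at $u=0$, where $g_1(0,0)=0\ge 0$; this single step is where strict concavity/convexity of $f_2$ is used, collapsing the troublesome ``slice at infinity'' to one harmless point. In the same way, because $f_2$ attains values of both signs, $g_2$ is an \emph{indefinite} quadratic form on $\mathbb{R}^{n+1}$ (positive at some $(u,1)$, negative on $\{t=0\}\setminus\{0\}$ in the concave case, and symmetrically in the convex case). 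With these in hand I would invoke the homogeneous S-lemma with equality: since $g_2$ is an indefinite quadratic form and $g_1\ge 0$ on $\{g_2=0\}$, there is $\lambda\in\mathbb{R}$ with $g_1(u,t)+\lambda g_2(u,t)\ge 0$ for all $(u,t)$, and restricting to $t=1$ yields $f_1(u)+\lambda f_2(u)\ge 0$ for all $u$, which is $(2)$.

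The hard part will be the homogeneous statement itself; this is the only non-routine ingredient. It rests on Dines' theorem that the joint image $\{(g_1(w),g_2(w)):w\in\mathbb{R}^{n+1}\}$ is a convex cone in $\mathbb{R}^2$; granting that, I would properly separate this cone from the disjoint open ray $\{(s,0):s<0\}$ to obtain $(\mu,\nu)\neq(0,0)$ with $\mu\ge 0$ and $\mu\,g_1(w)+\nu\,g_2(w)\ge 0$ for all $w$, and then use indefiniteness of $g_2$ to rule out $\mu=0$ (otherwise $g_2$ would be semidefinite), so that $\lambda=\nu/\mu$ works. Since the lemma is quoted from \cite{polik07slemma}, one may instead simply cite Dines' convexity theorem or Finsler's theorem in place of reproving it; the one point I would be careful about is that the argument genuinely needs $g_2$ \emph{indefinite} and not merely non-definite, which is exactly why the hypothesis demands that $f_2$ take both a positive and a negative value.
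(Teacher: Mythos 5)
Your proof is correct, but note that the paper itself does not prove this lemma: it is quoted as Proposition 3.1 of the P\'olik--Terlaky survey and used as a black box (only the Corollary built on it is proved in the appendix), so there is no in-paper argument to compare against. What you propose is essentially the standard route in the S-lemma literature: the $(2)\Rightarrow(1)$ direction is immediate; for $(1)\Rightarrow(2)$ you homogenize, use strict concavity/convexity of $f_2$ (equivalently $A_2\prec 0$ or $A_2\succ 0$) to collapse the slice at infinity $\{t=0\}$ of $\{g_2=0\}$ to the single point $(0,0)$ where $g_1$ vanishes, use the two-signs hypothesis to make $g_2$ indefinite, and then invoke the homogeneous equality-constrained S-lemma, itself derived from Dines' theorem on the convexity of the joint range $\{(g_1(w),g_2(w))\}$ plus a separating-hyperplane argument; restricting the resulting inequality to $t=1$ gives $(2)$. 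This is sound, and you correctly identify where each hypothesis enters. Two small points if you write it out in full: (i) you do not need a \emph{proper} separation --- ordinary separation of the disjoint nonempty convex sets $D=\{(g_1(w),g_2(w)):w\}$ and $\{(s,0):s<0\}$ suffices, since the cone property of $D$ (which contains the origin) upgrades $\mu g_1+\nu g_2\ge\alpha$ to $\mu g_1+\nu g_2\ge 0$ with $\mu\ge 0$, and the degenerate cases ($\mu=0$, or both sets lying in the separating line) are exactly what indefiniteness of $g_2$ rules out; (ii) the observation that $\{u:f_2(u)=0\}$ is nonempty is never actually used. With those remarks, the argument is complete and matches the classical proof one would find behind the cited reference.
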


\begin{corollary} [S-procedure with equality] \label{l:S_lemma_w_equality}
$\max_{\|u\|_2 = 1} u^TQu + b^Tu \leq \beta$ if and only if there exists $\lambda \in \mathbb{R}$ such that
\begin{align*}
    \begin{bmatrix}
\lambda I - Q & -\frac{1}{2} b \\ -\frac{1}{2} b^T & \beta - \lambda
\end{bmatrix} \succeq 0.
\end{align*}
\end{corollary}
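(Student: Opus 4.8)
The plan is to prove Corollary~\ref{l:S_lemma_w_equality} by applying Lemma~\ref{l:S_lemma_w_equality_orig} with the correct choice of $f_1$ and $f_2$ and then translating the resulting inequality $f_1 + \lambda f_2 \ge 0$ into the claimed linear matrix inequality via the standard homogenization identity for quadratic forms. First I would observe that the condition $\max_{\|u\|_2=1} u^TQu + b^Tu \le \beta$ is equivalent to the statement that the system $f_1(u) := \beta - u^TQu - b^Tu < 0$ together with $f_2(u) := \|u\|_2^2 - 1 = 0$ is \emph{not} solvable. I would then check the hypotheses of Lemma~\ref{l:S_lemma_w_equality_orig}: $f_2(u) = u^Tu - 1$ is strictly convex, and it attains both positive values (for $\|u\|$ large) and negative values (e.g.\ at $u=0$, $f_2 = -1$), so the lemma applies. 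Hence the system is unsolvable if and only if there exists $\lambda \in \mathbb{R}$ with $f_1(u) + \lambda f_2(u) \ge 0$ for all $u$, i.e.
\begin{align}
    \beta - u^TQu - b^Tu + \lambda(u^Tu - 1) \ge 0 \quad \forall u \in \mathbb{R}^d.
\end{align}

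Next I would rewrite this inequality in the homogenized variable $\tilde u = (u^T, 1)^T \in \mathbb{R}^{d+1}$. Using $u^T(\lambda I - Q)u - b^Tu + (\beta - \lambda) = \tilde u^T \begin{bmatrix} \lambda I - Q & -\tfrac12 b \\ -\tfrac12 b^T & \beta - \lambda \end{bmatrix} \tilde u$, the universally-quantified inequality becomes $\tilde u^T M \tilde u \ge 0$ for all $\tilde u$ whose last coordinate equals $1$, where $M$ denotes the block matrix in the statement. Since $\tilde u^T M \tilde u$ is homogeneous of degree two in $\tilde u$, nonnegativity over the affine slice $\{\tilde u : \tilde u_{d+1} = 1\}$ is equivalent to nonnegativity over all $\tilde u$ with $\tilde u_{d+1} \ne 0$, and by continuity (closure) over all $\tilde u \in \mathbb{R}^{d+1}$; this is precisely $M \succeq 0$. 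Chaining these equivalences gives the corollary.

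The main obstacle I anticipate is not any single deep step but rather being careful about the direction of the strictness and the sign conventions when invoking Lemma~\ref{l:S_lemma_w_equality_orig}: the lemma is stated for ``$f_1(u) < 0$, $f_2(u) = 0$ is not solvable,'' so I must make sure the supremum-bound condition is phrased as unsolvability of a \emph{strict} inequality. A minor subtlety is that $\max_{\|u\|_2=1}(\cdots) \le \beta$ should first be read as ``there is no $u$ with $\|u\|_2 = 1$ and $u^TQu + b^Tu > \beta$,'' and I should note that the unit sphere is compact so the maximum is attained and the reformulation via a strict inequality $f_1 < 0$ is exact (there is no gap between $\le$ and the negation of $>$). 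The homogenization step and the passage from the affine slice to the full LMI are routine but worth spelling out to avoid an off-by-a-factor-of-$\tfrac12$ error in the off-diagonal block.
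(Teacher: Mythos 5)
Your proposal is correct and follows essentially the same route as the paper's proof: the same choice of $f_1(u)=\beta-u^TQu-b^Tu$ and $f_2(u)=\|u\|_2^2-1$, the same invocation of Lemma~\ref{l:S_lemma_w_equality_orig}, and the same homogenization to the block matrix. The only difference is cosmetic: you extend nonnegativity from the slice $\tilde u_{d+1}\neq 0$ to all of $\mathbb{R}^{d+1}$ by density/continuity, whereas the paper does the equivalent step via an explicit limit $\|u\|_2\to\infty$ to cover the $c=0$ case.
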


Corollary \ref{l:S_lemma_w_equality} allows us to write the maximization problem in \eqref{eq:max_problem_initial} as the equivalent problem given by
\begin{align} \label{eq:max_problem_lmi}
    \max & -\ell^*(-v) \nonumber \\
    \text{s.t. } 
    & \begin{bmatrix} \rho_1 I - a \sum_{i=1}^n x_ix_i^Tv_i & -\frac{1}{2}bX^Tv \\ -\frac{1}{2}bv^TX & \beta - c\ones^Tv - \rho_1 \end{bmatrix} \succeq 0 \nonumber \\
    &\begin{bmatrix} \rho_2 I + a \sum_{i=1}^n x_ix_i^Tv_i & \frac{1}{2}bX^Tv \\ \frac{1}{2}bv^TX & \beta + c\ones^Tv - \rho_2 \end{bmatrix} \succeq 0 \,,
\end{align}
where we note the two additional variables $\rho_1, \rho_2 \in \mathbb{R}$ are introduced.
Next, we will find the dual of the problem in \eqref{eq:max_problem_lmi}. Let us first define the following Lagrange multipliers
\begin{align} \label{eq:defn_of_Z}
    Z = \begin{bmatrix} Z_1 & Z_2 \\ Z_3 & Z_4  \end{bmatrix}, \quad Z^\prime = \begin{bmatrix} Z_1^\prime & Z_2^\prime \\ Z_3^\prime & Z_4^\prime \end{bmatrix},
\end{align}
where $Z, Z^\prime \in \mathbb{S}^{(d+1)\times (d+1)}$ are symmetric matrices, and the dimensions for each block matrix are $Z_1,Z_1^\prime \in \mathbb{S}^{d\times d}$, $Z_2,Z_2^\prime \in \mathbb{R}^{d\times 1}$, $Z_3,Z_3^\prime \in \mathbb{R}^{1\times d}$, $Z_4,Z_4^\prime \in \mathbb{R}^{1\times 1}$. We note that because of the symmetry of $Z$ and $Z^\prime$, we have $Z_2^T=Z_3$ and $Z_2{^\prime}^T = Z_3^\prime$. 
The Lagrangian for the problem in \eqref{eq:max_problem_lmi} is
\begin{align}
    L(v, \rho_1, \rho_2, Z, Z^\prime) &= -\ell^*(-v) + \rho_1 \tr(Z_1) +\rho_2 \tr(Z_1^\prime) - a\sum_{i=1}^n v_ix_i^T(Z_1-Z_1^\prime)x_i - bv^TX(Z_2-Z_2^\prime) + \nonumber \\
    &+ (\beta-\rho_1) Z_4 + (\beta-\rho_2) Z_4^\prime - c \sum_{i=1}^n v_i (Z_4-Z_4^\prime).
\end{align}
Maximizing the Lagrangian with respect to $v$, $\rho_1$, $\rho_2$, we obtain the problem in \eqref{eq:polyact_convex_program_final}, which concludes the lower bound part of the proof.
In the next section, we introduce a method for decomposing the solution of this convex program (i.e. $Z^*$ and ${Z^\prime}^*$) into feasible neural network weights to prove the upper bound.

\section{Neural Decomposition} \label{sec:neural_decomp}

We have shown that a lower bound on the optimal value of the non-convex problem in \eqref{eq:poly_act_non-convex} is obtained via the solution of the convex program in \eqref{eq:polyact_convex_program_final} that we have derived using Lagrangian duality. Now we show that this lower bound is in fact identical to the optimal value of the non-convex problem, thus proving strong duality. Our approach is based on proving an upper bound by constructing neural network weights from the solution of the convex problem such that the convex objective achieves the same objective as the non-convex objective. Suppose that $(Z^*, {Z^\prime}^*)$ is a solution to \eqref{eq:polyact_convex_program_final}. Let us denote the rank of $Z^*$ by $r$ and the rank of ${Z^\prime}^*$ by $r^\prime$. We will discuss the decomposition for $Z^*$ and then complete the picture by considering the same decomposition for ${Z^\prime}^*$. We begin by noting that $Z^*$ satisfies the constraints of \eqref{eq:polyact_convex_program_final}, i.e.,
\begin{align}
     \quad Z^* \succeq 0 \,\, \mbox{and} \,\, \tr(Z_1^*) = Z_4^*, \,\, \mbox{or equivalently} \,\, \tr \Bigl(Z^* \underbrace{\begin{bmatrix} I_d & 0 \\ 0 & -1 \end{bmatrix}}_{G} \Bigr) = 0.
\end{align}
%
Suppose that we have a decomposition of $Z^*$ as a sum of rank-1 matrices such that $Z^*=\sum_{j=1}^r p_j p_j^T$ where $p_j \in \mathbb{R}^{d+1}$ and $\tr(p_jp_j^TG) = p_j^TGp_j = 0$ for $j=1,\dots,r$. We show how this can always be done in subsection \ref{s:decomp_proc} by introducing a new matrix decomposition method, dubbed the \emph{neural decomposition} procedure.
%

Letting $p_j := \begin{bmatrix} c_j^T & d_j \end{bmatrix}^T$ with $c_j \in \mathbb{R}^d$ and $d_j \in \mathbb{R}$, we note that $p_j^T G p_j = 0$ implies $\|c_j\|_2^2 = d_j^2$. We may assume $p_j\neq 0,\,\forall j$ in the decomposition (otherwise we can simply remove zero components), implying $\|c_j\|_2^2>0,\,\forall j$. Furthermore, this expression for $p_j$'s allows us to establish that
\begin{align}
    \sum_{j=1}^r p_j p_j^T &= \sum_{j=1}^r \begin{bmatrix} c_j \\ d_j \end{bmatrix} \begin{bmatrix} c_j^T & d_j \end{bmatrix} = \sum_{j=1}^r \begin{bmatrix} c_jc_j^T & c_jd_j \\ d_j c_j^T & d_j^2 \end{bmatrix} = \begin{bmatrix} Z_1^* & Z_2^* \\ Z_3^* & Z_4^* \end{bmatrix}.
\end{align}
As a result, we have the following decompositions:
\begin{align}
    Z_1^* &= \sum_{j=1}^r c_jc_j^T = \sum_{j=1}^r u_j u_j^T \|c_j\|_2^2 = \sum_{j=1}^r u_j u_j^T d_j^2 \\
    Z_2^* &= \sum_{j=1}^r c_jd_j = \sum_{j=1}^r u_j d_j \|c_j\|_2 = \sum_{j=1}^r u_j d_j |d_j| \\
    Z_4^* &= \sum_{j=1}^r d_j^2 \,,
\end{align}
where we have introduced the normalized weights $u_j = \frac{c_j}{\|c_j\|_2}$, $j=1,\dots,r$.
If $d_j \leq 0$ for some $j$, we redefine the corresponding $p_j$ as $p_j\leftarrow -p_j$, which does not modify the decomposition $\sum_j p_j p_j^T$ and the equality $p_j^TGp_j=0$. Hence, without loss of generality, we can assume that $d_j \geq 0$ for all $j=1,\dots,r$, which leads to
\begin{align} \label{eq:Z_star_decomp}
    Z_1^* = \sum_{j=1}^r u_j u_j^T d_j^2, \quad Z_2^* = \sum_{j=1}^r u_j d_j^2, \quad Z_4^* &= \sum_{j=1}^r d_j^2.
\end{align}
Similarly for ${Z^\prime}^*$, we will form the following decompositions:
\begin{align} \label{eq:Z_prime_star_decomp}
    {Z_1^\prime}^* = \sum_{j=1}^{r^\prime} u_j^\prime  {u_j^\prime}^T {d_j^\prime}^2, \quad {Z_2^\prime}^* = \sum_{j=1}^{r^\prime} u_j^\prime {d_j^\prime}^2, \quad {Z_4^\prime}^* &= \sum_{j=1}^{r^\prime} {d_j^\prime}^2.
\end{align}
Considering the decompositions for both $Z^*$ and ${Z^\prime}^*$, finally we obtain a neural network with first layer weights as $\{u_1,\dots,u_r,u_1^\prime,\dots,u_{r^\prime}^\prime\}$, and second layer weights as $\{d_1^2,\dots,d_r^2,-{d_1^\prime}^2,\dots, -{d_{r^\prime}^\prime}^2\}$. We note that this corresponds to a neural network with $r+r^\prime$ neurons. If both $Z^*$ and ${Z^\prime}^*$ are full rank, then we will have $2(d+1)$ neurons, which is the maximum.

To see why we can use the decompositions of $Z^*$ and ${Z^\prime}^*$ to construct neural network weights, we plug-in the expressions \eqref{eq:Z_star_decomp} and \eqref{eq:Z_prime_star_decomp} in the objective of the convex program in \eqref{eq:polyact_convex_program_final}:
\begin{align}
    \ell(\hat{y}, y) + \beta \biggl(\sum_{j=1}^r |d_j^2| + \sum_{j=1}^{r^\prime} |-{d_j^\prime}^2| \biggr), \quad \mbox{where} \quad 
    \hat{y}_i = ax_i^T \biggl(\sum_{j=1}^r u_ju_j^Td_j^2 + \sum_{j=1}^{r^\prime} u_j^\prime {u_j^\prime}^T(-{d_j^\prime}^2) \biggr) x_i + \nonumber \\
    + bx_i^T\biggl(\sum_{j=1}^r u_j d_j^2 + \sum_{j=1}^{r^\prime} u_j^\prime (-{d_j^\prime}^2) \biggr) + c\biggl(\sum_{j=1}^rd_j^2 + \sum_{j=1}^{r^\prime} (-{d_j^\prime}^2) \biggr), \quad i=1,\dots,n \,.
\end{align}
%
We note that this expression exactly matches the optimal value of the non-convex objective in \eqref{eq:poly_act_non-convex} for a neural network with $r+r^\prime$ neurons. Also, the unit norm constraints on the first layer weights are satisfied (hence feasible) since $u_j$'s and $u_j^\prime$'s are normalized. This establishes that the neural network weights obtained from the solution of the convex program provide an upper bound for the minimum value of the original non-convex problem. Consequently, we have shown that the optimal solution of the convex problem \eqref{eq:polyact_convex_program_final} provides a global optimal solution to the non-convex problem \eqref{eq:poly_act_non-convex} and this concludes the proof of Theorem \ref{thm:poly_act_thm}.

\subsection{Neural Decomposition Procedure} \label{s:decomp_proc}
Here we describe the procedure for computing the decomposition $Z^*=\sum_{j=1}^r p_j p_j^T \succeq 0$ such that $p_j^TGp_j = 0$, $j=1,\dots,r$.
This algorithm is inspired by the constructive proof of the S-procedure given in Lemma 2.4 of \cite{polik07slemma} with modifications to account for the equalities $p_j^TGp_j = 0$.

\fbox{\parbox{0.95\textwidth}{
\begin{enumerate}
    \setcounter{enumi}{-1}
    \item[] \textbf{Neural Decomposition for Symmetric Matrices:}
    \item \textbf{Compute a rank-1 decomposition $Z^*=\sum_{j=1}^r p_jp_j^T$.} 
    
    This can be done with the eigenvalue decomposition $Z^*=\sum_{j=1}^r q_jq_j^T\lambda_j$. Since $Z^* \succeq 0$, we have $\lambda_j > 0$, for $j=1,\dots,r$. Then we can obtain the desired rank-1 decomposition $Z^*=\sum_{j=1}^r p_jp_j^T$ by defining $p_j=\sqrt{\lambda_j} q_j$, $j=1,\dots,r$.

    \item \textbf{If $p_1^T G p_1 = 0$, return $y=p_1$. If not, find a $j \in \{2,\dots,r\}$ such that $(p_1^T G p_1) (p_j^T G p_j) < 0$.} 
    
    We know such $j$ exists since $\tr(Z^*G)=\sum_{j=1}^r p_j^TGp_j=0$ (this is true since it is one of the constraints of the convex program), and $p_1^T G p_1 \neq 0$. Hence, for at least one $j \in \{2,\dots,r\}$, $p_j^T G p_j$ must have the opposite sign as $p_1^T G p_1$.

    \item \textbf{Return $y=\frac{p_1+\alpha p_j}{\sqrt{1+\alpha^2}}$ where $\alpha \in \mathbb{R}$ satisfies $(p_1+\alpha p_j)^TG(p_1+\alpha p_j)=0$.}
    
    We know that such $\alpha$ exists since the quadratic equation
    \begin{align}
        (p_1+\alpha p_j)^TG(p_1+\alpha p_j) = \alpha^2 p_j^TGp_j + 2\alpha p_1^T p_j + p_1^TGp_1 = 0
    \end{align}
    has real solutions since the discriminant $4(p_1^T p_j)^2 - 4 (p_1^TGp_1)(p_j^TGp_j)$ is positive due to step 1 where we picked $j$ such that $(p_1^TGp_1)(p_j^TGp_j) < 0$. To find $\alpha$, we simply solve the quadratic equation for $\alpha$.

    \item \textbf{Update $r \leftarrow r-1$, and then the vectors $p_1,\dots,p_r$ as follows:}
    
    Remove $p_1$ and $p_j$ and insert $u=\frac{p_j - \alpha p_1}{\sqrt{1+\alpha^2}}$. Consequently, we will be dealing with the updated matrix $Z^* \leftarrow Z^*-yy^T$ in the next iteration, which is of rank $r-1$:
    \begin{align}
        Z^* - yy^T = uu^T + \sum_{i = 2, i\neq j}^r p_ip_i^T.
    \end{align}
\end{enumerate}
}}

\,\\
\noindent Note that Step 0 is carried out only once and then steps 1 through 3 are repeated $r-1$ times. At the end of $r-1$ iterations, we are left with the rank-1 matrix $p_1p_1^T$ which satisfies $p_1^TGp_1=0$ since initial $Z^*$ satisfies $\tr(Z^*G)=0$ and the following $r-1$ updates are of the form $yy^T$ which satisfies $y^TGy=0$.
If we denote the returned $y$ vectors as $y_i$ for the iteration $i$ and $y_r$ is the last one we are left with, then $y_i$'s satisfy the desired decomposition that $Z^* = \sum_{i=1}^r y_iy_i^T$ and $y_i^TGy_i=0$, $i=1,\dots,r$.

\begin{figure} 
\centering
\begin{minipage}[b]{0.8\linewidth}
  \centering
  \centerline{\includegraphics[width=0.8\columnwidth]{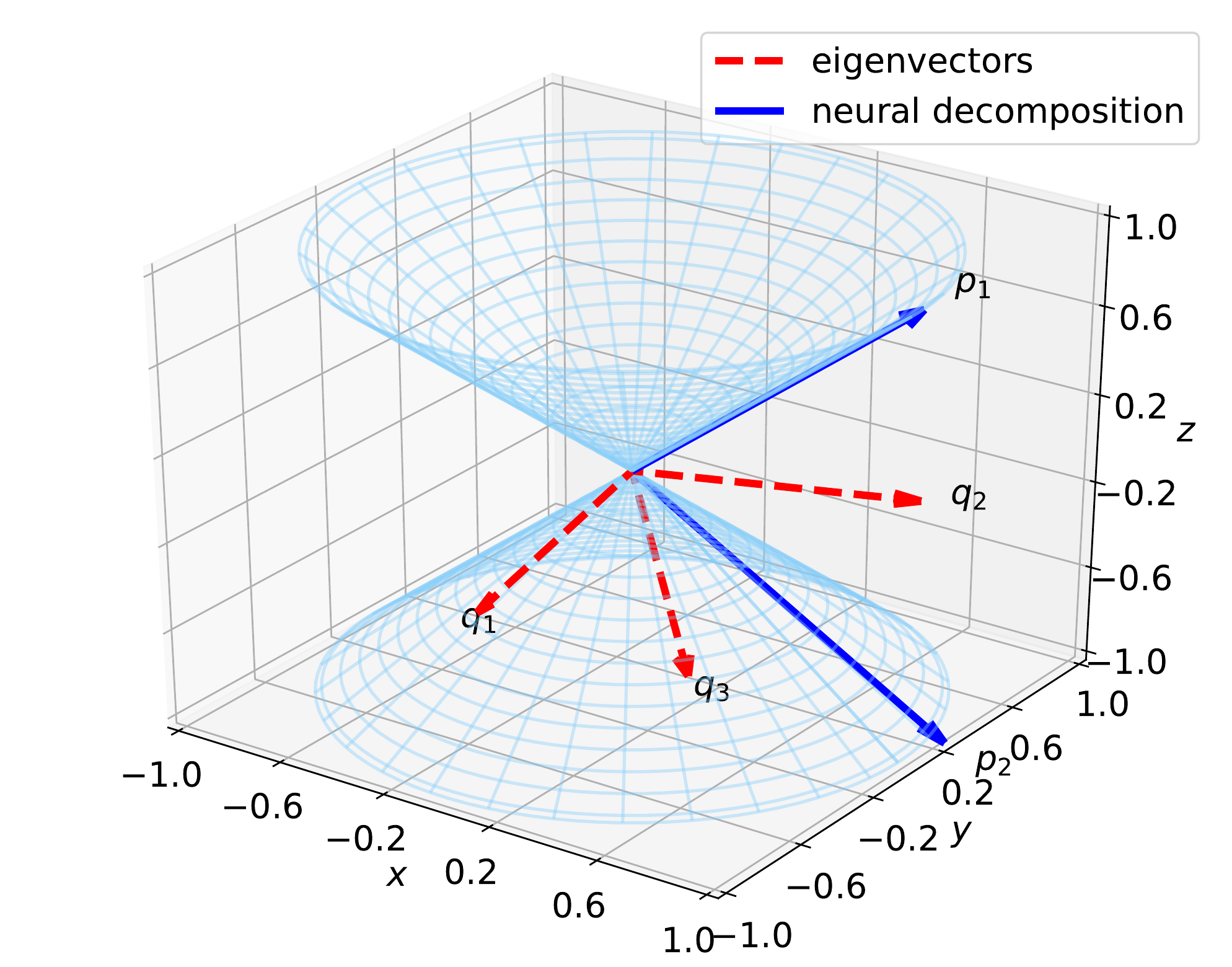}}
\end{minipage}
\caption{Illustration of the neural decomposition procedure for $d=2$ (i.e. $Z^* \in \mathbb{R}^{3\times 3})$. The dashed red arrows correspond to the eigenvectors of $Z^*$ ($q_1,q_2,q_3$) and the solid blue arrows show the decomposed vectors $p_1$ and $p_2$. In this example, the rank of $Z^*$ is $2$ where $q_1$ and $q_2$ are its two principal eigenvectors. The eigenvalue corresponding to the eigenvector $q_1$ is zero. The light blue colored surface shows the Lorentz cones $z=\sqrt{x^2+y^2}$ and $z=-\sqrt{x^2+y^2}$. We observe that the decomposed vectors $p_1$ and $p_2$ lie on the boundary of Lorentz cones. 
}
\label{fig:neural_decomp_procedure}
\end{figure}


Figure \ref{fig:neural_decomp_procedure} is an illustration of the neural decomposition procedure for a toy example with $d=2$ where the eigenvectors of $Z^*$ and the vectors $p_j$ are plotted together. Due to the constraints $p_j^TGp_j=0$, $j=1,2$, the vectors $p_j$ have to lie on the boundary of Lorentz cones\footnote{In special relativity, Lorentz cones describe the path that a flash of light, emanating from a single event traveling in all directions takes through spacetime (see Figure 1.3.1 in \cite{naber2012geometry}).} $z=\sqrt{x^2+y^2}$ and $z=-\sqrt{x^2+y^2}$. Decomposing the solution of the convex problem $Z^*$ and ${Z^\prime}^*$ onto these cones, i.e., neural decomposition, enables the construction of neural network weights from $Z^*$ and ${Z^\prime}^*$.


\section{Quadratic Activation Networks} \label{sec:quadratic_act}

In this section, we derive the corresponding convex program when the activation function is quadratic, i.e., $\sigma(u)=u^2$. 
The resulting convex problem takes a simpler form than the polynomial activation case. We start by noting that the bound in \eqref{eq:max_problem_initial} holds for any activation function. The inequalities $|v^T\sigma(Xu_j)| \leq \beta$ however lead to different constraints than the polynomial activation case. Note that $|v^T(Xu_j)^2| \leq \beta$ is equivalent to the inequalities
%
%
%
\begin{align}
      u_j^T \left(\sum_{i=1}^n x_i x_i^Tv_i \right) u_j \le \beta \quad\mbox{and} \quad  u_j^T \left(-\sum_{i=1}^n x_i x_i^Tv_i \right) u_j \le \beta\,.
\end{align}
The constraint $\max_{u_j:\|u_j\|_2=1}|v^T(Xu_j)^2| \le \beta$ can be expressed as largest eigenvalue inequalities
\begin{align}
      \lambda_{\max} \left(\sum_{i=1}^n x_i x_i^Tv_i \right) \le \beta \quad\mbox{and}\quad   \lambda_{\max}  \left(-\sum_{i=1}^n x_i x_i^Tv_i \right) \le \beta\,,
\end{align}
where $\lambda_{\max}$ denotes the maximum eigenvalue. Next, representing the largest eigenvalue constraints as linear matrix inequality constraints, we arrive at the following maximization problem
\begin{align} \label{eq:quadact_dual_LMI}
    \max_v \quad & -\ell^*(-v) \nonumber \\ 
    \text{s.t.} \quad &\sum_{i=1}^n x_ix_i^T v_i - \beta I_d \preceq 0, \quad  -\sum_{i=1}^n x_ix_i^T v_i - \beta I_d \preceq 0.
\end{align}

Writing the Lagrangian for \eqref{eq:quadact_dual_LMI} as
$
    L(v, Z_1, Z_2) = -\ell^*(-v) - \sum_{i=1}^n v_i x_i^T (Z_1 - Z_2)x_i + \beta \tr(Z_1 + Z_2)
$
with $Z_1,Z_2 \in \mathbb{S}^{d\times d}$
and maximizing with respect to $v$, we obtain the following convex problem
\begin{align}
    \min_{Z_1, Z_2 \succeq 0} \ell \left(\begin{bmatrix} x_1^T (Z_1 - Z_2)x_1 & \hdots & x_n^T (Z_1 - Z_2)x_n \end{bmatrix}^T, \, y \right) + \beta \tr(Z_1 + Z_2) \,.
\end{align}
Replacing $Z=Z_1-Z_2$, where $Z_1\succeq 0, Z_2\succeq 0$, we recall that any matrix $Z$ can be uniquely decomposed in this form thanks to the Moreau decomposition onto the cone of positive definite matrices and its polar dual, which is the set of negative semidefinite matrices. In particular, suppose that the eigenvalue decomposition of $Z$ is $Z=\sum_j \lambda_j z_j z_j^T$. Then, $Z_1$ and $Z_2$ are uniquely determined by $Z_1=\sum_{j:\lambda_j>0} \lambda_j z_jz_j^T$ and $Z_2=-\sum_{j:\lambda_j<0} \lambda_j z_jz_j^T$. Note that $\tr (Z_1 + Z_2) = \sum_{j:\lambda_j>0}\lambda_j+\sum_{j:\lambda_j<0}(-\lambda_j)=\sum_j |\lambda_j|=\|Z\|_*$ is the sum of the absolute values of the eigenvalues of $Z$, which is equivalent to the nuclear norm for symmetric matrices.
Consequently, this leads to the following simplified problem with nuclear norm regularization:
\begin{align} \label{eq:convex_problem_quad_act}
    \min_{Z=Z^T} \quad &\ell(\hat{y}, y) + \beta \|Z\|_* \nonumber \\
    \mbox{s.t.} \quad & \hat{y}_i = x_i^TZx_i, \quad i=1,\dots,n \,.
\end{align}

Theorem \ref{thm:quad_act_thm} states the main result for the global optimization of quadratic activation neural networks. The rest of this section is devoted to the proof and interpretation of Theorem \ref{thm:quad_act_thm}.

\begin{theorem} [Globally optimal convex program for quadratic activation cubic regularization networks] \label{thm:quad_act_thm}
The solution of the convex problem in \eqref{eq:convex_problem_quad_act} provides a global optimal solution to the non-convex problem for quadratic activation and cubic regularization given in \eqref{eq:cubic_reg_quad_act_problem} when the number of neurons satisfies $m \geq m^*$ where 
\begin{align}
    m^* = \rank(Z^*).
\end{align}
The optimal neural network weights are determined from the solution of the convex problem via eigenvalue decomposition of $Z^*$ and the rescaling given in \eqref{eq:rescaling_quad_act}. The optimal number of neurons is upper bounded by $m^* \leq d$ since $\rank(Z^*) \leq d$.
\end{theorem}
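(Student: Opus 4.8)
The plan is to establish the two matching inequalities between the optimal value of \eqref{eq:convex_problem_quad_act} and that of the non-convex cubic-regularized problem \eqref{eq:cubic_reg_quad_act_problem}, exactly as in the proof of Theorem \ref{thm:poly_act_thm}: the ``$\le$'' direction is the Lagrangian-bidual lower bound already derived above (combined with the equivalence between the cubic and the unit-norm $\ell_1$ formulations), and the ``$\ge$'' direction is a construction of a network from the convex optimum. The construction is simpler than the general Neural Decomposition of Section \ref{sec:neural_decomp}, since the $G$-orthogonality constraint is absent here; it reduces to an eigenvalue decomposition together with a rescaling of the weights.

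First I would record the quadratic-activation identity $[\sigma(Xu_j)\alpha_j]_i = \alpha_j(x_i^Tu_j)^2 = x_i^T(\alpha_j u_ju_j^T)x_i$, so that any network $\sum_{j=1}^m\sigma(Xu_j)\alpha_j$ produces predictions $\hat y_i = x_i^TZx_i$ with the symmetric matrix $Z=\sum_{j=1}^m\alpha_j u_ju_j^T$, which is feasible for \eqref{eq:convex_problem_quad_act} with the same loss term $\ell(\hat y,y)$. For the lower bound, I would combine the triangle inequality and homogeneity of the nuclear norm, $\|Z\|_*\le\sum_j|\alpha_j|\,\|u_ju_j^T\|_*=\sum_j|\alpha_j|\,\|u_j\|_2^2$, with the elementary inequality $a^3+b^3\ge c\,ab^2$ for $a,b\ge 0$, where $c=3\cdot 2^{-2/3}=\min_{t>0}(t^2+t^{-1})$ (weighted AM--GM, with equality iff $a/b=2^{-1/3}$). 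This yields $\beta\|Z\|_*\le\frac{\beta}{c}\sum_j(|\alpha_j|^3+\|u_j\|_2^3)$, so the convex objective at $Z$ is at most the cubic objective at $\{u_j,\alpha_j\}$; taking the infimum over networks gives $\textrm{(convex opt)}\le\textrm{(non-convex opt)}$. The same inequality, applied after normalizing $u_j$ and absorbing $\|u_j\|_2^2$ into $\alpha_j$, together with the reverse rescaling $u_j\mapsto t_ju_j,\ \alpha_j\mapsto\alpha_j t_j^{-2}$, shows that the cubic-regularized problem \eqref{eq:cubic_reg_quad_act_problem} and the unit-norm $\ell_1$-regularized problem have equal optimal value, so this lower bound is also consistent with the derivation of \eqref{eq:convex_problem_quad_act}.

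For the matching upper bound, let $Z^*$ be an optimal solution of \eqref{eq:convex_problem_quad_act} (attained, e.g., when $\beta>0$ or when $\ell(\cdot,y)$ is closed with bounded sublevel sets) and take its spectral decomposition $Z^*=\sum_{j=1}^r\lambda_j z_jz_j^T$ with $\|z_j\|_2=1$, $\lambda_j\ne 0$, $r=\rank(Z^*)$. I would set $u_j=t_jz_j$ and $\alpha_j=\lambda_j t_j^{-2}$ with $t_j=(2^{1/3}|\lambda_j|)^{1/3}$, chosen so that $|\alpha_j|/\|u_j\|_2=2^{-1/3}$ achieves equality in the inequality $a^3+b^3\ge c\,ab^2$ above; this is the rescaling of \eqref{eq:rescaling_quad_act}. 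Then $\sum_j\alpha_j u_ju_j^T=\sum_j\lambda_j z_jz_j^T=Z^*$, so $\hat y_i=x_i^TZ^*x_i$ and the loss terms agree, while $\frac{\beta}{c}\sum_j(|\alpha_j|^3+\|u_j\|_2^3)=\beta\sum_j|\alpha_j|\,\|u_j\|_2^2=\beta\sum_j|\lambda_j|=\beta\|Z^*\|_*$. Padding with zero neurons when $m>r$ leaves objective and feasibility unchanged, so for every $m\ge m^*=\rank(Z^*)$ we obtain a feasible $m$-neuron network whose cubic objective equals the convex optimum; with the lower bound this forces equality, and the constructed network is globally optimal. Since $Z^*\in\mathbb{S}^{d\times d}$, $\rank(Z^*)\le d$, which gives $m^*\le d$.

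The main obstacle I anticipate is making the scaling argument tight in both directions simultaneously: the cubic penalty is not scale-invariant whereas the induced matrix $Z=\sum_j\alpha_j u_ju_j^T$ and the predictions are, so one must argue both that every network can be rescaled to the balanced point $|\alpha_j|=2^{-1/3}\|u_j\|_2$ without changing the represented function, and dually that the eigen-factorization of $Z^*$ can be rescaled onto exactly that point. A secondary point is justifying attainment of the minimum in \eqref{eq:convex_problem_quad_act} and handling the limit $\beta\to0$ (choosing a minimal-rank minimizer), and being careful that the construction uses the equality $\|\sum_j\lambda_jz_jz_j^T\|_*=\sum_j|\lambda_j|$ valid for orthonormal $z_j$, while the lower-bound step only needs the weaker triangle-inequality form valid for arbitrary $u_j$.
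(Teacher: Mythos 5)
Your proof is correct, and its two halves relate to the paper's proof differently. The upper bound is essentially the paper's: an eigendecomposition of $Z^*$ followed by exactly the rescaling \eqref{eq:rescaling_quad_act} (your choice $t_j=(2^{1/3}|\lambda_j|)^{1/3}$ reproduces the paper's $u_j\sqrt{t_j^\prime}$, $\alpha_j/t_j^\prime$ with $t_j^\prime=2^{2/9}|\lambda_j|^{2/3}$), merged into a single balanced construction. Your lower bound, however, takes a genuinely different route: the paper obtains \eqref{eq:convex_problem_quad_act} by dualizing the inner $\ell_1$ problem, rewriting $\max_{\|u\|_2=1}|v^T(Xu)^2|\le\beta$ as eigenvalue/LMI constraints, and dualizing again (so its lower bound implicitly relies on strong duality for \eqref{eq:quadact_dual_LMI}, valid by Slater for $\beta>0$), and it reaches the cubic-regularized problem only through the separate rescaling equivalence with the unit-norm $\ell_1$ form \eqref{eq:l1_reg_form_quad_act}; you instead argue purely primally, mapping any network to the feasible $Z=\sum_j\alpha_j u_ju_j^T$ with identical predictions and bounding $\beta\|Z\|_*\le\beta\sum_j|\alpha_j|\|u_j\|_2^2\le\tfrac{\beta}{c}\sum_j\big(|\alpha_j|^3+\|u_j\|_2^3\big)$ via the nuclear-norm triangle inequality and weighted AM--GM, whose sharp constant $3\cdot 2^{-2/3}$ indeed equals the paper's $c=2^{1/3}+2^{-2/3}$. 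This bypasses duality and the intermediate $\ell_1$ reformulation altogether, treats the cubic problem directly, and makes exactness transparent as tightness of two elementary inequalities (attained at the balanced scaling and at orthonormal factors); what it gives up is the constructive derivation of the SDP itself and the dual/S-procedure machinery that the paper reuses for general degree-two polynomial activations. Your extra care about attainment of the convex minimum addresses a point the paper silently assumes and is a harmless strengthening.
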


\subsection{Strong Duality for Quadratic Activation}
We have shown that a lower bound on the non-convex problem for quadratic activation is given by the nuclear norm regularized convex objective. Now we show that this lower bound is in fact identical to the non-convex problem. Suppose that $Z^*$ is a solution to \eqref{eq:convex_problem_quad_act}. Let us decompose $Z^*$ via eigenvalue decomposition as $Z^* = \sum_j \lambda_j z_j z_j^T$. We can generate an upper bound on the non-convex problem by constructing neural network parameters as $\alpha_j=\lambda_j$, and $u_j=z_j$ with objective value $\ell\left(\sum_j(Xz_j)^2\lambda_j, y\right) + \beta \sum_j |\lambda_j|$. Noting that this value exactly matches the optimal value of the convex objective in \eqref{eq:convex_problem_quad_act}, we conclude that the optimal solution of \eqref{eq:convex_problem_quad_act} provides a global optimal solution to the non-convex problem.

\subsection{Equivalent Non-convex Problem: Quadratic Activation with Cubic Regularization}
We now show that the non-convex problem with unit norm first layer weights and the $\ell_1$ norm regularized second layer weights is in fact equivalent to the non-convex problem with cubic regularization on all the weights. Let us consider the \emph{unconstrained} problem with cubic regularization:
\begin{align} \label{eq:cubic_reg_quad_act_problem}
    p^* := &\min_{\{\alpha_j, u_j\}_{j=1}^m} \ell\left(\sum_{j=1}^m(Xu_j)^2\alpha_j,\, y\right) + \frac{\beta}{c} \sum_{j=1}^m (|\alpha_j|^3 + \|u_j\|_2^3)\,,
\end{align}
where $c = 2^\frac{1}{3} + 2^{-\frac{2}{3}} \approx 1.88988$. 
Rescaling the variables $u_j\leftarrow u_j t_j^{1/2}$ and $\alpha_j\leftarrow \alpha_j/t_j\,,\forall j$ for $t_j>0$, $j=1,\dots,m$ yields
\begin{align}
    p^* = &\min_{\{\alpha_j, u_j\}_{j=1}^m} \ell\left(\sum_{j=1}^m(Xu_j)^2\alpha_j,\, y\right) + \frac{\beta}{c} \sum_{j=1}^m (|\alpha_j|^3/{t_j^{3}} + \|u_j\|_2^3 t_j^{3/2}) \,.
\end{align}
Noting the regularization term is convex in $t_j$ for $t_j > 0$ and optimizing it with respect to $t_j$, we obtain $t_j = 2^{2/9} \left( \frac{|\alpha_j|}{\|u_j\|_2} \right)^{2/3} $. Plugging the expression for $t_j$ in yields 
\begin{align}
    p^* = &\min_{\{\alpha_j, u_j\}_{j=1}^m} \ell\left(\sum_{j=1}^m(Xu_j)^2\alpha_j,\, y\right) + \beta \sum_{j=1}^m \vert\alpha_j\vert \|u_j\|_2^2\,.
\end{align}
Now we define the scaled second layer weights $\alpha_j^\prime = \alpha_j \|u_j\|_2^2$. Noting that $(Xu_j)^2\alpha_j=(X\frac{u_j}{\|u_j\|_2})^2\alpha_j^\prime$ and defining $u_j^\prime = u_j/\|u_j\|_2$, we obtain the equivalent problem with the $\ell_1$ norm of the second layer weights as the regularization term
\begin{align} \label{eq:l1_reg_form_quad_act}
    p^* = &\min_{\{\alpha_j^\prime,\, u_j^\prime\}_{j=1}^m,\, \mbox{s.t.} \|u_j^\prime\|_2=1,\,\forall j} \ell\left(\sum_{j=1}^m(Xu_j^\prime)^2\alpha_j^\prime,\, y\right) + \beta \sum_{j=1}^m \vert\alpha_j^\prime\vert \,.
\end{align}

\subsubsection{Rescaling}
We note that the weights $\alpha_j$ and $u_j$ that the eigenvalue decomposition of the solution of \eqref{eq:convex_problem_quad_act} gives are scaled versions of the weights of the problem with cubic regularization in \eqref{eq:cubic_reg_quad_act_problem}. The solution to the problem in \eqref{eq:cubic_reg_quad_act_problem} can be constructed by rescaling the weights as
\begin{align} \label{eq:rescaling_quad_act}
    u_j \leftarrow u_j \sqrt{t_j^\prime}, \quad \alpha_j \leftarrow \frac{\alpha_j}{t_j^\prime}  \quad, \text{where } \quad t_j^\prime = 2^{2/9} |\alpha_j |^{2/3} \quad j=1,\dots,m.
\end{align}
This concludes the proof of Theorem \ref{thm:quad_act_thm}.

\subsection{Comparison with Polynomial Activation Networks}
In this subsection, we list the important differences between the results for quadratic activation and polynomial activation neural networks. The convex program for the quadratic activation network does not have the equality constraints that appear in the convex program for the polynomial activation. In addition, for the quadratic activation, the upper bound on the critical width $m^*$ is $d$ while it is $2(d+1)$ for the polynomial activation case.

We note that in the case of quadratic activation, the optimal neural network weights are determined from eigenvalue decomposition of $Z^*$. This results in the first layer weights to be orthonormal because they can be chosen as the eigenvectors of the real and symmetric matrix $Z^*$. In contrast, we do not have this property for polynomial activations as the associated optimal weights are determined via neural decomposition. In this case, the resulting hidden neurons are not necessarily orthogonal, which shows that the Neural Decomposition is a type of non-orthogonal matrix decomposition. This can also be seen in Figure \ref{fig:neural_decomp_procedure}.

\subsection{Constructing Multiple Globally Optimal Solutions in the Neural Network Parameter Space} \label{sec:construct_opt}
%
%
Once we find an optimal $Z^*$ using the SDP in \eqref{eq:convex_problem_quad_act}, we can transform it to the neural network parameter space with at most $d$ neurons using the eigenvalue decomposition of $Z^*$ as $Z^*=\sum_{j=1}^d u_ju_j^T \alpha_j$. However, we can also generate a neural network with an arbitrary number of neurons, which is also optimal. We now describe this construction below for an arbitrary number of neurons $m\ge 2d$.
Let us pick an arbitrary $m/2\times d$ matrix $H$ with orthonormal columns, i.e.,
\begin{align}
    I_d=H^TH = \sum_{j=1}^{m/2} h_j h_j^T \,,
\end{align}
where $h_1,\dots,h_{m/2}$ are the rows of $H$ and we assume $m/2\ge d$. One can generate such matrices using randomized Haar ensemble, or partial Hadamard matrices. 
Then, we can represent $Z^*$ using
\begin{align*}
    Z^* &= Z^* H^T H\\
    &= \sum_{j=1}^{m/2} Z^* h_j h_j^T\,.
\end{align*}%
Since $Z^*$ is a symmetric matrix, $\sum_{j=1}^{m/2} Z^* h_j h_j^T$ is also symmetric, and we can write
\begin{align*}
    Z^* = \frac{1}{2} \sum_{j=1}^{m/2} (Z^* h_j h_j^T + h_j h_j^TZ^*) \,.
\end{align*}
Finally, for each term in the above summation, we employ the symmetrization identity
\begin{align*}
    xy^T + yx^T = \frac{1}{2}\left( (x+y)(x+y)^T - (x-y)(x-y)^T \right)\,,
\end{align*}
valid for any $x,y\in\mathbb{R}^d$. We arrive at the representation
\begin{align}
    Z^* &= \frac{1}{4} \sum_{j=1}^{m/2} ((Z^* h_j+ h_j)(Z^* h_j+ h_j)^T - (Z^* h_j- h_j)(Z^* h_j- h_j)^T) \\
    &= \sum_{j=1}^m u_ju_j^T \alpha_j\,,
\end{align}
where $u_j=Z^* h_j +h_j$, $\alpha_j=1/4$ for $j=1,\dots,m/2$ and $u_j=Z^* h_j -h_j$, $\alpha_j=-1/4$ for $j=m/2+1,\dots,m$. 

Since the matrix $H$ is arbitrary, one can map an optimal $Z^*$ matrix from the convex semidefinite program to infinitely many optimal solutions in the neural network parameterization space.
\section{Standard Weight Decay Formulation is NP-Hard} \label{sec:np_hardness}

In Section \ref{sec:quadratic_act}, we have studied two-layer neural networks with quadratic activation and cubic regularization and derive a convex program whose solution globally optimizes the non-convex problem. In this section, we show that if, instead of cubic regularization, we have quadratic regularization (i.e. weight decay), the resulting optimization problem is an NP-hard problem. 

\begin{theorem} \label{thm:nphard_general}
The two-layer neural network optimization problem with quadratic activation and standard $\ell_2$-squared regularization, i.e., weight decay, in \eqref{eq:noncvx_quadact_quadreg} is NP-hard for $\beta \rightarrow 0$. 
\end{theorem}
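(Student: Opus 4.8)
The plan is to reduce from the NP-hard subset sum problem: given positive integers $a_1,\dots,a_d$ and a target $b$, decide whether some subset sums to $b$. I would encode this instance into a quadratic activation, weight-decay-regularized two-layer network training problem of the form
\begin{align}
    \min_{\{\alpha_j,u_j\}_{j=1}^m} \ell\Big(\sum_{j=1}^m (Xu_j)^2\alpha_j,\, y\Big) + \beta\sum_{j=1}^m\big(\|u_j\|_2^2 + \alpha_j^2\big)
    \label{eq:noncvx_quadact_quadreg_plan}
\end{align}
by a careful choice of data matrix $X$ and labels $y$ built from the $a_i$ and $b$. The first step is to recall, exactly as in Section \ref{sec:quadratic_act}, that the network output depends on the parameters only through the matrix $Z = \sum_j u_ju_j^T\alpha_j$, so that $\hat y_i = x_i^T Z x_i$. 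The weight decay penalty, however, does \emph{not} collapse to a clean function of $Z$: rescaling $u_j\leftarrow t_j^{1/2}u_j,\ \alpha_j\leftarrow \alpha_j/t_j$ and optimizing over $t_j>0$ turns $\|u_j\|_2^2+\alpha_j^2$ into $2|\alpha_j|\,\|u_j\|_2^2$, so that \eqref{eq:noncvx_quadact_quadreg_plan} becomes $\min \ell(\hat y,y) + 2\beta\sum_j|\alpha_j|$ subject to $\|u_j\|_2=1$ — i.e. an $\ell_1$-type penalty but with the \emph{wrong exponent}, since after this normalization one is effectively penalizing $\sum_j|\alpha_j|$ while the natural convex surrogate (nuclear norm of $Z$) corresponds to $\ell_1^{2/3}$ scaling as indicated in Table \ref{table:summary_results}. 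The key structural fact I would exploit is that weight decay on both layers forces the penalty, expressed in $Z$-space, to be $\sum_j |\lambda_j(Z)|^{2/3}$-like (a concave, rank-encouraging penalty that in the $\beta\to 0$ limit degenerates into a \emph{rank / support} constraint), which is precisely the combinatorial object one needs.

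The core of the construction: take $\beta\to 0$, so the regularizer only acts as a tie-breaker selecting, among all $Z$ achieving zero loss, one that is realizable by a network whose penalty is minimized — and crucially the $\beta\to 0$ limit of the problem forces $Z$ to lie on a low-dimensional face determined by the realizability constraints. I would choose $X$ so that the equations $x_i^TZx_i = y_i$, together with a diagonal/structural restriction on $Z$ coming from the data geometry (e.g. $X$ chosen so that only $\diag(Z)$ matters, making $x_i^T Z x_i = \sum_k X_{ik}^2 Z_{kk}$ a \emph{linear} system in the diagonal entries $z_k := Z_{kk}\ge 0$), reduce to: find $z\in\mathbb{R}^d$ with $\sum_k a_k z_k = b$ and $z_k\in\{0,1\}$. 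The integrality $z_k\in\{0,1\}$ is where the \emph{quadratic} (rather than cubic) regularization does the work: I would add a second block of data points / a second output coordinate enforcing $\sum_k z_k^2 = \sum_k z_k$ (equivalently $z_k(z_k-1)=0$ coordinatewise via a PSD argument), and show that among PSD matrices $Z$ realizable by unit-norm-first-layer networks with the weight-decay penalty, the minimum-penalty completion is attained exactly at a $0$-$1$ diagonal — because weight decay's $|\alpha_j|^{2/3}$-scaling makes the penalty \emph{strictly concave} in the $\alpha_j$, so minimizers are extreme points, i.e. integral. Thus a zero-loss solution exists iff the subset sum instance is a YES instance.

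The main obstacle I anticipate is the $\beta\to 0$ limit and making the reduction genuinely tight in both directions. One direction (YES instance $\Rightarrow$ feasible network with loss $0$) is a direct construction. The hard direction is: if the trained network achieves loss $0$ (or below the gap threshold), then the subset sum instance is YES — here I must rule out "cheating" solutions where the network uses many neurons with non-integral or negative $\alpha_j$ to satisfy $\sum_k a_k z_k = b$ approximately while violating integrality; this is exactly where the concavity of the effective weight-decay penalty in $Z$-space (versus the convexity of the nuclear norm) must be invoked, together with a perturbation argument showing that as $\beta\to 0$ the optimizer is pinned to a vertex of the relevant spectrahedral face. I would also need to verify that the numbers $a_k, b$ — which are given in binary and hence can be exponentially large — only enter polynomially in the size of the constructed training instance, which is standard for subset sum reductions but should be checked. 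I expect the write-up to mirror the NP-hardness reduction already sketched for the fully-connected $\ell_2^2$ case in the first contribution bullet, specialized and adapted to the quadratic-activation setting via the $Z$-space reformulation above.
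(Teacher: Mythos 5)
Your high-level strategy agrees with the paper's: reduce from subset sum, after first showing that in the $\beta\rightarrow 0$ limit the quadratic-activation weight-decay problem is governed by a concave $\ell_{2/3}$-type penalty on the second layer. But two concrete points break your argument. First, your rescaling computation is wrong: with the quadratic activation the invariant scaling $u_j\leftarrow t_j^{1/2}u_j$, $\alpha_j\leftarrow\alpha_j/t_j$ turns $|\alpha_j|^2/t_j^2+t_j\|u_j\|_2^2$ into $\propto|\alpha_j|^{2/3}\|u_j\|_2^{4/3}$, i.e.\ after normalizing $\|u_j\|_2=1$ the effective penalty is $\sum_j|\alpha_j|^{2/3}$ as in \eqref{eq:quadratic_regul_equiv_problem} --- not $2|\alpha_j|\|u_j\|_2^2$, i.e.\ not an $\ell_1$ penalty; the $\ell_1$ form is what \emph{cubic} regularization produces and is exactly the tractable case, so your first displayed conclusion would contradict the theorem you are trying to prove. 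You later assert the correct $2/3$ exponent, but the write-up is internally inconsistent on the single fact the whole reduction hinges on.

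Second, and more seriously, your mechanism for encoding the combinatorial choice does not work. The measurements available to you are $\hat y_i=x_i^TZx_i$, which are \emph{linear} in $Z$; they cannot enforce the quadratic relation $\sum_k z_k^2=\sum_k z_k$, and no ``PSD argument'' is available since $Z=\sum_j u_ju_j^T\alpha_j$ need not be PSD when the $\alpha_j$ may be negative. Moreover, concavity of $\sum_j|\alpha_j|^{2/3}$ promotes \emph{sparsity in the number of neurons}, not integrality of $\diag(Z)$: a single unit-norm neuron $u_1u_1^T\alpha_1$ already realizes arbitrary nonnegative diagonals summing to $\alpha_1$, so ``minimizers are extreme points, hence integral'' conflates extreme points in $\alpha$-space with $0$--$1$ diagonals of $Z$. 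The paper's proof supplies precisely the missing gadget: taking the first $d$ rows of $X$ to be $e_1^T,\dots,e_d^T$ with labels $1/d$ forces $\sum_j u_{jk}^2\alpha_j=1/d$ for all $k$, hence the simplex constraint $\sum_j\alpha_j=1$; Lemma \ref{lem:lpmin} then shows the $\ell_{2/3}$ objective is strictly minimized by a $1$-sparse $\alpha$ whenever a single-neuron feasible point exists, collapsing the problem to the feasibility system \eqref{eq:phase_retrieval_pr} with $u_{1k}^2=1/d$. The subset is encoded in the \emph{sign pattern} of that single neuron and read out by one extra measurement $(\tilde x_{d+1}^Tu_1)^2$ with $\tilde x_{d+1}=\sqrt{d}(a_1,\dots,a_d)^T$, following the NP-hardness of phase retrieval (Lemma \ref{lem:decision_pr_nphard}). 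Without this sign-pattern/phase-retrieval device (or an equivalent working one), your hard direction --- ruling out non-integral ``cheating'' solutions --- has no proof, so the proposal as written has a genuine gap.
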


The remainder of this section breaks down the proof of Theorem \ref{thm:nphard_general}. At the core of the proof is the polynomial-time reduction of the problem to the NP-hard problem of phase retrieval.

\subsection{Reduction to an Equivalent Problem}
The optimization problem for training a two-layer fully connected neural network with quadratic activation and quadratic regularization can be stated as
\begin{align} \label{eq:noncvx_quadact_quadreg}
    p^* := &\min_{\{\alpha_j, u_j\}_{j=1}^m} \ell \left( \sum_{j=1}^m (Xu_j)^2 \alpha_j,\, y \right) + \frac{\beta}{c} \sum_{j=1}^m (|\alpha_j|^2 + \|u_j\|_2^2)\,,
\end{align}
where the scaling factor $c$ is the same as before (i.e. $c=2^{\frac{1}{3}}+2^{-\frac{2}{3}}\approx1.88988$).
Rescaling $u_j \leftarrow u_j t_j^{1/2}$ and $\alpha_j \leftarrow \alpha_j / t_j$ for $t_j > 0$, $j=1,\dots,m$, we obtain the following equivalent optimization problem
\begin{align}
    p^* = &\min_{\{\alpha_j, u_j\}_{j=1}^m} \ell \left( \sum_{j=1}^m (Xu_j)^2 \alpha_j,\, y \right) + \frac{\beta}{c} \sum_{j=1}^m (|\alpha_j|^2 / t_j^2 + \|u_j\|_2^2 t_j )\,.
\end{align}
Note that the regularization term is convex in $t_j$ for $t_j > 0$. Optimizing the regularization term with respect to $t_j$ leads to $t_j=2^{1/3} \left(\frac{|\alpha_j|}{\|u_j\|_2}\right)^{2/3}$ and plugging this in yields
\begin{align}
    p^* = &\min_{\{\alpha_j, u_j\}_{j=1}^m} \ell \left( \sum_{j=1}^m (Xu_j)^2 \alpha_j,\, y \right) + \beta \sum_{j=1}^m |\alpha_j|^{2/3} \|u_j\|_2^{4/3} \,.
\end{align}
Defining scaled weights $\alpha_j^\prime = \alpha_j \|u_j\|_2^2$ and $u_j^\prime = u_j / \|u_j\|_2$, we obtain the equivalent problem 
\begin{align} \label{eq:quadratic_regul_equiv_problem}
    p^* = &\min_{\{\alpha_j^\prime, u_j^\prime\}_{j=1}^m\, \mbox{s.t.} \|u_j^\prime\|_2=1,\,\forall j} \ell \left( \sum_{j=1}^m (Xu_j^\prime)^2 \alpha_j^\prime,\, y \right) + \beta \sum_{j=1}^m |\alpha_j^\prime|^{2/3} \,.
\end{align}

This shows that solving the standard weight decay formulation is equivalent to solving a $2/3$-norm penalized problem with unit norm first layer weights.
%

\subsection{Hardness Result}



We design a data matrix such that the solution coincides with solving the phase retrieval problem which is NP-hard (see \cite{fickus2013phase}).
We consider the equality constrained version of \eqref{eq:quadratic_regul_equiv_problem}, i.e., $\beta \rightarrow 0$, which is given by
\begin{align}
    \min_{\{\alpha_j,u_j\}_{j=1}^m \mbox{ s.t. } \|u_j\|_2=1, \forall j} &\sum_{j=1}^m |\alpha_j|^{2/3} \nonumber \\
    \mbox{s.t.}\quad  &\sum_{j=1}^m (Xu_j)^2 \alpha_j = y  \,.
\end{align}

\subsubsection{Addition of a Simplex Constraint}
Let the first $d$ rows of the data matrix $X$ be $e_1^T,\dots,e_d^T$ and let the first $d$ entries of $y$ be $1/d$. Then, the constraint $\sum_{j=1}^m (Xu_j)^2 = y$ implies
\begin{align}
    \sum_{j=1}^m u_{jk}^2\alpha_j=1/d\,\mbox{ for } k=1,\dots,d\,.
\end{align}
Summing the above for all $k=1,\dots,d$, and noting that $\sum_{k=1}^d u_{jk}^2=1$ lead to the constraint $\sum_{j=1}^m \alpha_j=1$.

\subsubsection{Reduction to the NP-Hard Phase Retrieval and Subset Sum Problem}

We let $X=[I; \,\tilde X]$ and $y=[\frac{1}{d}\ones; \,\tilde y]$ to obtain the simplex constraint $\sum_{j=1}^m \alpha_j = 1$ as shown in the previous subsection. In this case, the optimization problem reduces to
\begin{align} \label{eq:problem_before_phase_rt}
    \min_{\{\alpha_j,u_j\}_{j=1}^m \mbox{ s.t. } \|u_j\|_2=1, \forall j} &\sum_{j=1}^m |\alpha_j|^{2/3} \nonumber \\
    \mbox{s.t.} \quad &\sum_{j=1}^m (\tilde{X}u_j)^2 \alpha_j = \tilde{y} \nonumber \\
    &\sum_{j=1}^m u_{jk}^2\alpha_j=1/d, \quad k=1,\dots,d \nonumber \\
    &\sum_{j=1}^m \alpha_j = 1\,.
\end{align}
Suppose that there exists a feasible solution $\{ \alpha_j^*, u_j^* \}_{j=1}^m$, which satisfies $\|\alpha^*\|_0=1$, where $\alpha^*_1=1$ and ${u_1^*}^Tu_1^*=1$ with only one nonzero neuron. Then, it follows from Lemma \ref{lem:lpmin} that this solution is strictly optimal. Consequently, the problem in \eqref{eq:problem_before_phase_rt} is equivalent to
\begin{align} \label{eq:phase_retrieval_pr}
    \find \quad &u_1 \nonumber \\
    \mbox{s.t.} \quad &(\tilde{x}_i^T u_1)^2 = \tilde{y}_i, \quad i=1,\dots,(n-d) \nonumber \\
    &u_{1k}^2 = 1/d, \quad k=1,\dots,d\,.
\end{align}

\begin{lemma}[$\ell_p$ minimization recovers 1-sparse solutions when $0<p<1$]
\label{lem:lpmin}
\,\\Consider the optimization problem
\begin{align}
    \min_{\alpha_1,\dots,\alpha_m} &\sum_{i=1}^m |\alpha_i|^p \nonumber \\
    \mbox{s.t.} \quad &\sum_{i=1}^m \alpha_i = 1,\, \alpha \in \mathcal{C}\,,
\end{align}
where $\mathcal{C}$ is a convex set and $p\in (0,1)$. Suppose that there exists a feasible solution $\alpha^*\in\mathcal{C}$ and $\sum_i \alpha_i^*=1$ such that $\|\alpha^*\|_0=1$. Then, $\alpha^*$ is strictly optimal with objective value $1$. More precisely, any solution with cardinality strictly greater than 1 has objective value strictly larger than $1$.
\end{lemma}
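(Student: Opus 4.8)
The plan is to prove that the objective $\sum_{i=1}^m |\alpha_i|^p$ is bounded below by $1$ on the entire feasible set, and that this bound is attained precisely at the $1$-sparse feasible points. The key elementary fact is the strict subadditivity of $t\mapsto t^p$ on the nonnegative reals for $p\in(0,1)$: for $t_1,\dots,t_m\ge 0$ one has $\big(\sum_i t_i\big)^p \le \sum_i t_i^p$, with equality if and only if at most one $t_i$ is nonzero. I would first verify the two-term version $(a+b)^p<a^p+b^p$ for $a,b>0$ by checking that $g(x):=(1+x)^p-1-x^p$ satisfies $g(0)=0$ and $g'(x)=p\big((1+x)^{p-1}-x^{p-1}\big)<0$ for $x>0$ (since $p-1<0$), and then extend to $m$ terms by induction, tracking that the inequality is strict as soon as two summands are positive.

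Granting this, the lower bound follows in one line: by subadditivity applied to $t_i=|\alpha_i|$, then the triangle inequality, then monotonicity of $t\mapsto t^p$,
\[
\sum_{i=1}^m |\alpha_i|^p \;\ge\; \Big(\sum_{i=1}^m |\alpha_i|\Big)^p \;\ge\; \Big|\sum_{i=1}^m \alpha_i\Big|^p \;=\; 1 .
\]
The hypothesized feasible $\alpha^*$ has a single nonzero coordinate, which is forced to equal $1$ by the constraint $\sum_i\alpha_i^*=1$; hence its objective value is exactly $1$, so it attains the bound and is a global minimizer.

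For the strictness claim, suppose $\alpha$ is feasible with $\|\alpha\|_0\ge 2$. Then at least two of the numbers $|\alpha_i|$ are strictly positive, so the first inequality in the display above is strict, giving $\sum_i|\alpha_i|^p>1$. Thus every feasible point of cardinality at least two has objective value strictly larger than $1$, which is exactly the asserted strict optimality of the $1$-sparse solution. I expect the only place requiring genuine care to be the equality-case/strictness analysis of subadditivity; everything else is immediate. Note in particular that convexity of $\mathcal{C}$ is never used — only feasibility of $\alpha^*$ and the affine constraint $\sum_i\alpha_i=1$ enter the argument.
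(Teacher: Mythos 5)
Your proof is correct and takes essentially the same route as the paper: both rest on the chain $\sum_i|\alpha_i|^p \ge \big(\sum_i|\alpha_i|\big)^p \ge \big|\sum_i\alpha_i\big|^p = 1$, with strictness in the first inequality as soon as two coordinates are nonzero, and the $1$-sparse feasible point attaining the value $1$. The only difference is in how the strict subadditivity of $t\mapsto t^p$ is verified — you use a derivative argument plus induction, while the paper uses the normalization $s_i=|\alpha_i|/\sum_j|\alpha_j|$ and the inequality $s^p\ge s$ on $[0,1]$ with equality iff $s\in\{0,1\}$ — and your remark that convexity of $\mathcal{C}$ is never needed is consistent with the paper's argument, which likewise never uses it.
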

%

%
%

\subsubsection{NP-hardness Proof}
Subset sum problem given in Definition \ref{def:subsetsum} is a decision problem known to be NP-complete (e.g. \cite{fickus2013phase}). The decision version of the problem in \eqref{eq:phase_retrieval_pr} can be stated as follows: Does there exist a feasible $u_1$? We show that this decision problem is NP-hard via a polynomial-time reduction to the subset sum problem.

\begin{definition}[Subset sum problem] \label{def:subsetsum} Given a set of integers $\mathcal{A}$, does there exist a subset $\mathcal{A}_S$ whose elements sum to $z$?
\end{definition}

Lemma \ref{lem:decision_pr_nphard} establishes the reduction of the decision version of \eqref{eq:phase_retrieval_pr} to the subset sum problem. The proof is provided in the appendix and follows the same approach used in the proof for the NP-hardness of phase retrieval in \cite{fickus2013phase}, with the main difference being the additional constraints $u_{1k}^2=1/d$, $k=1,\dots,d$ in \eqref{eq:phase_retrieval_pr}. Finally, Lemma \ref{lem:decision_pr_nphard} concludes the proof of Theorem \ref{thm:nphard_general}.


\begin{lemma} \label{lem:decision_pr_nphard}
Consider the problem in \eqref{eq:phase_retrieval_pr}. Let the first $d$ samples of $\tilde{X} \in \mathbb{R}^{(d+1) \times d}$, denoted $\tilde{X}_D \in \mathbb{R}^{d \times d}$, be any diagonal matrix with $-1$'s and $+1$'s on its diagonal, and let the $(d+1)$'st sample be $\tilde{x}_{d+1}=\sqrt{d} \begin{bmatrix} a_1 & \dots & a_d \end{bmatrix}^T$.
Then, the decision version of the resulting problem returns 'yes' if and only if the answer for the subset sum problem with $\mathcal{A}=\{a_1,\dots,a_d\}$ is 'yes'.
\end{lemma}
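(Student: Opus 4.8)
The plan is to give a polynomial-time reduction in both directions between a subset sum instance $(\mathcal{A},z)$ with $\mathcal{A}=\{a_1,\dots,a_d\}$ (and $A:=\sum_{k=1}^d a_k$) and the decision version of \eqref{eq:phase_retrieval_pr} for the stated data matrix, where the right-hand side is taken to be $\tilde y_i=1/d$ for $i\le d$ and $\tilde y_{d+1}=(2z-A)^2$. First I would normalize the feasible set: the constraints $u_{1k}^2=1/d$ force $u_1=\tfrac{1}{\sqrt d}(s_1,\dots,s_d)^T$ for some sign pattern $s\in\{-1,+1\}^d$, and conversely every sign pattern gives a point obeying these $d$ ``box'' constraints. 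Next, since each of the first $d$ rows of $\tilde X$ equals $\pm e_i^T$, we have $(\tilde x_i^T u_1)^2=u_{1i}^2$, so the first $d$ phase-retrieval constraints coincide with (and are subsumed by) the box constraints once $\tilde y_i=1/d$; in particular the diagonal signs of $\tilde X_D$ play no role.

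The crux is the single remaining constraint. With $u_1=\tfrac{1}{\sqrt d}s$ and $\tilde x_{d+1}=\sqrt d\,(a_1,\dots,a_d)^T$, one has $\tilde x_{d+1}^T u_1=\sum_{k=1}^d a_k s_k$. Writing $S:=\{k:s_k=+1\}$ and $\sigma_S:=\sum_{k\in S}a_k$, this equals $2\sigma_S-A$, so the constraint $(\tilde x_{d+1}^T u_1)^2=(2z-A)^2$ is equivalent to $2\sigma_S-A=\pm(2z-A)$, i.e. $\sigma_S=z$ or $\sigma_S=A-z$. Because $\sigma_{S^c}=A-\sigma_S$, the map $S\mapsto S^c$ shows that a subset summing to $A-z$ exists if and only if a subset summing to $z$ does; hence \eqref{eq:phase_retrieval_pr} with this data is feasible precisely when some subset of $\mathcal{A}$ sums to $z$. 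Both implications then follow directly: given $S$ with $\sigma_S=z$, set $s_k=+1$ on $S$ and $s_k=-1$ elsewhere and verify that $u_1=\tfrac1{\sqrt d}s$ satisfies every constraint; conversely, from a feasible $u_1$ extract $s$, and then $S$ or $S^c$ witnesses the subset sum. A parity remark ($2z-A\equiv A\equiv 2\sigma_S-A\pmod 2$) confirms there is no arithmetic obstruction to realizing either case. The construction writes down $O(d^2)$ rationals, each of bit-length polynomial in the input size (notably $(2z-A)^2$), so the reduction runs in polynomial time; combined with the NP-completeness of subset sum (Definition \ref{def:subsetsum}) and the preceding reductions of this section (which use Lemma \ref{lem:lpmin}), this yields Theorem \ref{thm:nphard_general}.

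The only point that genuinely needs care — the ``main obstacle'', such as it is — is that the phase constraint $(\tilde x_{d+1}^T u_1)^2=\tilde y_{d+1}$ pins down only $|\tilde x_{d+1}^T u_1|$, which introduces the spurious alternative $\sigma_S=A-z$ alongside the intended $\sigma_S=z$. This is precisely what is dispatched by the complementation symmetry of subset sums, so once that observation is in place the argument goes through with no further complications; the rest is bookkeeping about the box constraints and a routine check that the instance has polynomial size.
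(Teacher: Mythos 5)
Your proof is correct and follows essentially the same reduction as the paper: the same choice of right-hand side ($\tilde y_i=1/d$ for $i\le d$, which is what the paper's $(a_i/w_i)^2$ evaluates to, and $\tilde y_{d+1}=(2z-\sum_k a_k)^2$), the same identification of feasible points with sign patterns $u_1=\tfrac{1}{\sqrt d}s$, and the same resolution of the $\pm(2z-A)$ ambiguity via the complement subset. The only difference is presentational—you argue directly on $u_1$ instead of passing through the paper's change of variables $v=\tilde X_D u_1$, $w=\tilde X_D^{-T}\tilde x_{d+1}$—so no further comment is needed.
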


\begin{remark}
It follows from Theorem \ref{thm:nphard_general} that the two-layer neural network training problem with polynomial activation and unit norm first layer weights and $\sum_j |\alpha_j|^p$ as the regularization term with $p<1$ is also NP-hard for $\beta \rightarrow 0$ since it reduces to the quadratic activation case for the polynomial coefficients $a=1,b=0,c=0$.
\end{remark}
\section{Vector Output Networks} \label{sec:vector_outputs}

The derivations until this point have been for neural network architectures with scalar outputs, i.e., $y_i \in \mathbb{R}$. In this section, we turn to the vector output case $y_i \in \mathbb{R}^C$ where $C$ is the output dimension, and derive a convex problem that has the same optimal value as the non-convex neural network optimization problem. We exploit the same techniques described in the scalar output case except for the part for constructing the \textit{vector} second layer weights from the solution of the convex program. In the scalar output case, the convex problem is over the symmetric matrices $Z, Z^\prime$ and in the vector output case, the optimization is over $C$ such matrix pairs $Z_k, Z_k^\prime$, $k=1,\dots,C$. 

We begin our treatment of the vector output case by considering the neural network defined by
\begin{align} \label{eq:vector_output_forward}
    f(x) = \sum_{j=1}^m \sigma(x^Tu_j) \alpha_j^T \,,
\end{align}
where $\alpha_j \in \mathbb{R}^C$, $j=1,\dots,m$ are the \textit{vector} second layer weights. Note that in the scalar output case, the second layer weights $\alpha_j$ were scalars.
Taking the regularization to be the $\ell_1$ norm of the second layer weights, the neural network training requires solving the following non-convex optimization problem
\begin{align} \label{eq:vector_output_non-convex}
    p^* = &\min_{\{u_j,\, \alpha_j\}_{j=1}^m, \, \mbox{s.t.}\, \|u_j\|_2=1,\,\forall j} \ell \left( \sum_{j=1}^m \sigma(Xu_j) \alpha_j^T \,,\, Y \right) + \beta \sum_{j=1}^m \|\alpha_j\|_1 \,,
\end{align}
where $Y \in \mathbb{R}^{n\times C}$ is the output matrix.
Equivalently,
\begin{align} \label{eq:vector_output_non-convex_2}
    p^* = &\min_{\{u_j \}_{j=1}^m \, \mbox{s.t.}\, \|u_j\|_2=1,\,\forall j} \, \min_{\{\alpha_j \}_{j=1}^m, \hat{Y}} \ell \left( \hat{Y},\, Y \right) + \beta \sum_{j=1}^m \|\alpha_j\|_1 \quad \mbox{s.t.} \quad \hat{Y} = \sum_{j=1}^m \sigma(Xu_j) \alpha_j^T \,.
\end{align}
The dual problem for the inner minimization problem is given by
\begin{align} \label{eq:vector_output_dual_pr}
    \max_{v} -\ell^*(-v) \quad \mbox{s.t.} \quad |v_k^T \sigma(Xu_j)| \leq \beta \,, \forall j,k \,,
\end{align}
where $v \in \mathbb{R}^{n\times C}$ is the dual variable and $v_k \in \mathbb{R}^{n}$ is the $k$'th column of $v$. 

Theorem \ref{thm:vector_output_thm} gives the main result of this section.

\begin{theorem} [Globally optimal convex program for polynomial activation vector output networks] \label{thm:vector_output_thm}
The solution of the convex problem in \eqref{eq:vector_output_convex_program} provides a global optimal solution for the vector output non-convex problem in \eqref{eq:vector_output_non-convex} when the number of neurons satisfies $m \geq m^*$ where 
\begin{align}
    m^* = \sum_{k=1}^C (\rank(Z_k^*) + \rank({Z_k^\prime}^*)).
\end{align}
The optimal neural network weights are determined from the solution of the convex problem via the neural decomposition procedure for each $Z_k^*$ and ${Z_k^\prime}^*$ and the construction given in \eqref{eq:vector_output_weight_construction}. The optimal number of neurons is upper bounded by $m^* \leq 2(d+1)C$.
\end{theorem}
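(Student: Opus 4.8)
The plan is to follow the scalar-output derivation of Sections~\ref{sec:convex_duality} and~\ref{sec:neural_decomp} almost verbatim, carrying out the convex-duality and S-procedure steps once per output coordinate $k\in[C]$, and then producing the vector second-layer weights by running the Neural Decomposition procedure of Section~\ref{s:decomp_proc} on each of the $2C$ matrices $Z_1^*,\dots,Z_C^*,{Z_1^\prime}^*,\dots,{Z_C^\prime}^*$. As in the scalar case there are two halves: the lower bound (the convex optimum does not exceed $p^*$) and the matching upper bound (a feasible network realizing that value), and the neuron count drops out of the rank of the optimal SDP matrices.

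For the lower bound, start from~\eqref{eq:vector_output_non-convex_2}: the inner minimization over $(\{\alpha_j\},\hat Y)$ is convex with affine constraints, so strong duality gives the dual~\eqref{eq:vector_output_dual_pr}, whence $p^*=\min_{\|u_j\|_2=1\,\forall j}\ \max_{v:\,|v_k^T\sigma(Xu_j)|\le\beta\,\forall j,k}\ -\ell^*(-v)$. Requiring the constraint to hold for \emph{all} unit-norm vectors, not just the $u_j$ at hand, only shrinks the feasible set of $v$, so $p^*\ge \max_v\,-\ell^*(-v)$ subject to $\max_{\|u\|_2=1}|v_k^T\sigma(Xu)|\le\beta$ for $k=1,\dots,C$. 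For a degree-two $\sigma$ each such constraint is a pair of quadratic-over-the-sphere inequalities in $u$, and Corollary~\ref{l:S_lemma_w_equality} converts each into a linear matrix inequality carrying a scalar multiplier $\rho_{1,k}$ or $\rho_{2,k}$, exactly as in~\eqref{eq:max_problem_lmi}; the resulting SDP in $(v,\{\rho_{1,k},\rho_{2,k}\})$ has a constraint set that splits into $2C$ independent LMI blocks. Introducing a symmetric PSD Lagrange multiplier $Z_k\in\mathbb{S}^{(d+1)\times(d+1)}$ for the first block of coordinate $k$ and $Z_k^\prime$ for the second, forming the Lagrangian as in Section~\ref{sec:convex_duality}, and maximizing over $v$ and the $\rho$'s produces the convex program~\eqref{eq:vector_output_convex_program}: the linear-in-$v$ terms enforce the per-coordinate fitting constraints on $\hat Y_{i,k}$, and $\rho$-stationarity gives $\tr(Z_{1,k})=Z_{4,k}$ and $\tr(Z_{1,k}^\prime)=Z_{4,k}^\prime$. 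Weak duality then yields the lower bound.

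For the matching upper bound, let $(Z_k^*,{Z_k^\prime}^*)_{k=1}^C$ be optimal in~\eqref{eq:vector_output_convex_program}, with $r_k=\rank(Z_k^*)$ and $r_k^\prime=\rank({Z_k^\prime}^*)$. Apply Neural Decomposition separately to each $Z_k^*$ and each ${Z_k^\prime}^*$, obtaining rank-one factors $p_{j,k}p_{j,k}^T$ with $p_{j,k}^TGp_{j,k}=0$, and hence, exactly as in~\eqref{eq:Z_star_decomp}, normalized weights $u_{j,k}$ and scalars $d_{j,k}\ge0$ with $Z_{1,k}^*=\sum_j u_{j,k}u_{j,k}^Td_{j,k}^2$, $Z_{2,k}^*=\sum_j u_{j,k}d_{j,k}^2$, $Z_{4,k}^*=\sum_j d_{j,k}^2$, and likewise $u_{j,k}^\prime,d_{j,k}^\prime$ for the primed matrices. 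Assemble a network with $\sum_{k=1}^C(r_k+r_k^\prime)$ neurons in which the neuron from the $j$-th factor of $Z_k^*$ has first-layer weight $u_{j,k}$ and vector second-layer weight $d_{j,k}^2\,e_k$, with $e_k\in\mathbb{R}^C$ the $k$-th standard basis vector, and the neuron from the $j$-th factor of ${Z_k^\prime}^*$ has weight $u_{j,k}^\prime$ and $-{d_{j,k}^\prime}^2\,e_k$; this is the construction~\eqref{eq:vector_output_weight_construction}. Substituting into~\eqref{eq:vector_output_forward}, the $k$-th column of $\hat Y$ reproduces the fitting expression of~\eqref{eq:vector_output_convex_program} term-by-term just as in~\eqref{eq:polyact_convex_program_final}, while $\beta\sum_j\|\alpha_j\|_1=\beta\sum_{k=1}^C(Z_{4,k}^*+{Z_{4,k}^\prime}^*)$ matches the convex regularizer and the unit-norm constraints hold by normalization. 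Hence this network's objective equals the convex optimum, so $p^*$ does too; since $\rank(Z_k^*),\rank({Z_k^\prime}^*)\le d+1$ we also get $m^*\le 2(d+1)C$.

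The main obstacle is the upper-bound assembly rather than the duality: one must verify that decoupling the $C$ outputs --- giving every neuron an $\alpha_j$ supported on a single coordinate --- simultaneously reproduces all columns of $\hat Y$ and yields exactly $\sum_j\|\alpha_j\|_1=\sum_k(Z_{4,k}^*+{Z_{4,k}^\prime}^*)$, rather than some other coupling of the traces, and that no cross-coordinate cancellation spoils this. A minor point to check is that Corollary~\ref{l:S_lemma_w_equality} applies per coordinate without the vector-output structure disturbing the definiteness or feasibility hypotheses, and that, as in the scalar case, the whole construction remains valid in the limit $\beta\to0$.
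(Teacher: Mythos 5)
Your proposal is correct and follows essentially the same route as the paper: per-coordinate dualization and S-procedure yielding $2C$ LMI blocks with multipliers $Z_k, Z_k^\prime$ for the lower bound, and Neural Decomposition of each $Z_k^*, {Z_k^\prime}^*$ with second-layer weights supported on single coordinates ($d_{k,j}^2 e_k$ and $-{d_{k,j}^\prime}^2 e_k$) for the matching upper bound, including the regularizer identity $\|d^2 e_k\|_1 = d^2$ and the count $m^*=\sum_k(r_k+r_k^\prime)\le 2(d+1)C$. The "obstacles" you flag at the end are exactly the checks the paper carries out and they go through as you describe.
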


\begin{proof}[Proof of Theorem \ref{thm:vector_output_thm}]
Applying the S-procedure for the constraints in the dual problem \eqref{eq:vector_output_dual_pr}, we obtain the following maximization problem
\begin{align}
    \max & -\ell^*(-v) \nonumber \\
    \text{s.t. } 
    & \begin{bmatrix} \rho_{k,1} I - a \sum_{i=1}^n x_ix_i^T v_{i,k} & -\frac{1}{2}bX^Tv_k \\ -\frac{1}{2}bv_k^TX & \beta - c\ones^Tv_k - \rho_{k,1} \end{bmatrix} \succeq 0,  \quad k=1,\dots,C \nonumber \\
    &\begin{bmatrix} \rho_{k,2} I + a \sum_{i=1}^n x_ix_i^Tv_{i,k} & \frac{1}{2}bX^Tv_k \\ \frac{1}{2}bv_k^TX & \beta + c\ones^Tv_k - \rho_{k,2} \end{bmatrix} \succeq 0,  \quad k=1,\dots,C \,.
\end{align}
Next, let us introduce the following Lagrange multipliers
\begin{align}
    Z_k = \begin{bmatrix} Z_{k,1} & Z_{k,2} \\ Z_{k,3} & Z_{k,4}  \end{bmatrix} \in \mathbb{S}^{(d+1)\times (d+1)}, \quad Z_k^\prime = \begin{bmatrix} Z_{k,1}^\prime & Z_{k,2}^\prime \\ Z_{k,3}^\prime & Z_{k,4}^\prime \end{bmatrix} \in \mathbb{S}^{(d+1)\times (d+1)}, \quad k=1,\dots,C.
\end{align}
Then, the Lagrangian is
\begin{align}
    &L\left(v, \{\rho_{k,1}, \rho_{k,2}, Z_k, Z_k^\prime\}_{k=1}^C\right)= \nonumber \\
    &= -\ell^*(-v) + \sum_{k=1}^C \left(\rho_{k,1} \tr(Z_{k,1}) +\rho_{k,2} \tr(Z_{k,1}^\prime)\right) - a\sum_{k=1}^C  \sum_{i=1}^n v_{i,k}x_i^T(Z_{k,1}-Z_{k,1}^\prime)x_i - b\sum_{k=1}^Cv_k^TX(Z_{k,2}-Z_{k,2}^\prime) + \nonumber \\
    &+\sum_{k=1}^C \left((\beta-\rho_{k,1}) Z_{k,4} + (\beta-\rho_{k,2}) Z_{k,4}^\prime\right) - c\sum_{k=1}^C  \sum_{i=1}^n v_{k,i} (Z_{k,4}-Z_{k,4}^\prime) \,.
\end{align}
Finally maximizing the Lagrangian leads to the following convex SDP:
\begin{align} \label{eq:vector_output_convex_program}
    \min_{\{Z_k=Z_k^T, Z_k^\prime={Z_k^\prime}^T\}_{k=1}^C} & \ell (\hat{Y}, Y) + \beta \sum_{k=1}^C(Z_{k,4}+Z_{k,4}^\prime) \nonumber \\
    \mbox{s.t.} \quad &\hat{Y}_{ik} = a x_i^T(Z_{k,1}-Z^\prime_{k,1})x_i + b x_i^T(Z_{k,2}-Z^\prime_{k,2}) + c(Z_{k,4}-Z^\prime_{k,4}), \quad i\in [n],\,k\in [C] \nonumber
    \\
    &\tr(Z_{k,1}) = Z_{k,4}, \,\, \tr(Z_{k,1}^\prime) = Z_{k,4}^\prime, \quad k=1,\dots,C \nonumber \\
    & Z_k \succeq 0, \,\, Z_k^\prime \succeq 0,  \quad k=1,\dots,C \,.
\end{align}

We construct the neural network weights from the optimal solution of the convex program as follows.
We follow the neural decomposition procedure from Section \ref{sec:neural_decomp} for extracting neurons from each of the matrices $Z_k^*$ and ${Z_k^\prime}^*$, $k=1,\dots,C$. The decompositions for $Z_k^*$ will be of the form
\begin{align} \label{eq:vector_output_decomposition}
    Z_{k,1}^* = \sum_{j=1}^{r_k} u_{k,j} u_{k,j}^T d_{k,j}^2, \quad Z_{k,2}^* = \sum_{j=1}^{r_k} u_{k,j} d_{k,j}^2, \quad Z_{k,4}^* &= \sum_{j=1}^{r_k} d_{k,j}^2.
\end{align}
Then, the weights due to $Z_k^*$, $k=1,\dots,C$ are determined as follows:
\begin{align} \label{eq:vector_output_weights}
    \mbox{First layer weights: } \quad &\{u_{1,1}, u_{1,2},\dots, u_{1,r_1}\}, \dots, \{u_{C,1}, u_{C,2}, \dots, u_{C,r_C}\} \nonumber \\
    \mbox{Second layer weights: } \quad & \{d_{1,1}^2e_1^T, d_{1,2}^2e_1^T, \dots, d_{1,r_1}^2e_1^T\}, \dots, \{d_{C,1}^2e_C^T, d_{C,2}^2e_C^T, \dots, d_{C,r_C}^2e_C^T\} \,,
\end{align}
where $e_k$ denotes the $k$'th $C$-dimensional unit vector, and $r_k$ is the rank of the matrix $Z_k^*$. In short, the matrix $Z_k^*$ with rank $r_k$ leads to the first layer weights $\{u_{k,1}, u_{k,2}, \dots, u_{k,r_k} \}$ and the second layer weights $\{d_{k,1}^2e_k^T, d_{k,2}^2e_k^T, \dots, d_{k,r_k}^2e_k^T\}$.
The weights due to ${Z_k^\prime}^*$, $k=1,\dots,C$ are determined the same way.
Then, we reach the following neural network construction:
\begin{align} \label{eq:vector_output_weight_construction}
    f(X) = \sum_{k=1}^C \sum_{j=1}^{r_k} \sigma(Xu_{k,c}) d_{k,j}^2e_k^T + \sum_{k=1}^C \sum_{j=1}^{r_k^\prime} \sigma(Xu_{k,c}^\prime) {d_{k,j}^\prime}^2e_k^T.
\end{align}

Finally, the total number of neurons that the convex problem finds is $\sum_{k=1}^C (r_k + r_k^\prime)$. The maximum number of neurons occurs if all $Z_k^*$ and ${Z_k^\prime}^*$ are full rank, and this corresponds to a maximum total of $2(d+1)C$ neurons. 

We plug the decomposition expressions given in \eqref{eq:vector_output_decomposition} in the convex program in \eqref{eq:vector_output_convex_program} to conclude that the optimal value of the convex program is an upper bound for the non-convex optimization problem \eqref{eq:vector_output_non-convex}. The $k$'th entry of the estimate for the $i$'th training sample is
\begin{align}
    \hat{Y}_{ik} &= ax_i^T \left(\sum_{j=1}^{r_k} u_{k,j} u_{k,j}^T d_{k,j}^2 + \sum_{j=1}^{r_k^\prime} u_{k,j}^\prime {u_{k,j}^\prime}^T (-{d_{k,j}^\prime}^2) \right)x_i + bx_i^T\left(\sum_{j=1}^{r_k} u_{k,j} d_{k,j}^2 + \sum_{j=1}^{r_k^\prime} u_{k,j}^\prime (-{d_{k,j}^\prime}^2) \right) + \nonumber \\ 
    &+ c\left(\sum_{j=1}^{r_k} d_{k,j}^2 + \sum_{j=1}^{r_k^\prime} (-{d_{k,j}^\prime}^2) \right) \nonumber \\
    &= \sum_{j=1}^{r_k} \sigma(x_i^Tu_{k,j}) d_{k,j}^2 + \sum_{j=1}^{r_k^\prime} \sigma(x_i^Tu_{k,j}^\prime) (-{d_{k,j}^\prime}^2) \,.
\end{align}
It follows that the output vector for the $i$'th sample is
\begin{align}
    \hat{y}_i = \sum_{k=1}^C \sum_{j=1}^{r_k} \sigma(x_i^Tu_{k,j}) d_{k,j}^2 e_k^T + \sum_{k=1}^C \sum_{j=1}^{r_k^\prime} \sigma(x_i^Tu_{k,j}^\prime) (-{d_{k,j}^\prime}^2) e_k^T \,.
\end{align}
We note that this output is of the same form as the non-convex case \eqref{eq:vector_output_non-convex_2}. We also need to check that the regularization term is equivalent to the sum of $\ell_1$ norms of the second layer weights:
\begin{align}
    \beta \sum_{k=1}^C (Z_{k,4} + Z_{k,4}^\prime) &= \beta \sum_{k=1}^C \sum_{j=1}^{r_k} d_{k,j}^2 + \beta \sum_{k=1}^C \sum_{j=1}^{r_k^\prime} {d_{k,j}^\prime}^2  \nonumber \\
    &= \beta \sum_{k=1}^C \sum_{j=1}^{r_k} \| d_{k,j}^2 e_k^T \|_1 + \beta \sum_{k=1}^C \sum_{j=1}^{r_k^\prime} \| -{d_{k,j}^\prime}^2 e_k^T \|_1 \,,
\end{align}
which is of the form $\beta \sum_{j=1}^m\|\alpha_j\|_1$. Hence, the neural network weights that we obtain via the neural decomposition procedure lead to an upper bound for the original non-convex optimization problem.
This concludes the proof that the optimal solution of the convex problem \eqref{eq:vector_output_convex_program} provides a global optimal solution to the non-convex problem \eqref{eq:vector_output_non-convex}.
\end{proof}

\section{Convolutional Neural Networks} \label{sec:convolutional}

In this section, we consider two-layer convolutional networks with a convolutional first layer and a fully connected second layer. We will denote the filter size by $f$. Let us denote the patches of a data sample $x$ by $x_1,\dots,x_K$ where the patches have the same dimension as the filters, i.e., $x_k \in \mathbb{R}^f$. The stride and padding do not affect the below derivations as they can be readily handled when forming the patches. The output of this network is expressed as:
\begin{align} \label{eq:convolutional_noncvx}
    f(x) = \sum_{j=1}^m \sum_{k=1}^K \sigma(x_k^T u_j) \alpha_{jk} \,,
\end{align}
where $u_j \in \mathbb{R}^f$ denotes the  $j$'th filter. We will take the regularization to be the $\ell_1$ norm of the second layer weights $\alpha_j=\begin{bmatrix} \alpha_{j1} & \dots & \alpha_{jK} \end{bmatrix}^T \in \mathbb{R}^K$, $j=1,\dots,m$:
\begin{align} \label{eq:convolutional_non-convex}
    p^* = &\min_{\{u_j \}_{j=1}^m \, \mbox{s.t.}\, \|u_j\|_2=1,\,\forall j} \, \min_{\{\alpha_j \}_{j=1}^m, \hat{y}} \ell ( \hat{y},\, y ) + \beta \sum_{j=1}^m \|\alpha_j\|_1 \quad \mbox{s.t.} \quad \hat{y} = \sum_{j=1}^m \sum_{k=1}^K \sigma(X_k^T u_j) \alpha_{jk} \,
\end{align}
where we use $X_k \in \mathbb{R}^{n \times f}$ to denote the matrix with the $k$'th patch of all the data samples. 
The dual for the inner minimization problem is given by
\begin{align} \label{eq:convolutional_dual}
    \max_{v} -\ell^*(-v) \quad \mbox{s.t.} \quad |v^T \sigma(X_ku_j)| \leq \beta \,, \forall j,k \,.
\end{align}

We state the main result of this section in Theorem \ref{thm:convolutional_thm}.

\begin{theorem} [Globally optimal convex program for polynomial activation convolutional neural networks] \label{thm:convolutional_thm}
The solution of the convex problem in \eqref{eq:convolutional_convex_program} provides a global optimal solution for the non-convex convolutional neural network problem in \eqref{eq:convolutional_noncvx} when the number of filters is at least $(\rank(Z_k^*) + \rank({Z_k^\prime}^*))$ and equivalently, the number of neurons satisfies $m \geq m^*$ where 
\begin{align}
    m^* = K\sum_{k=1}^K (\rank(Z_k^*) + \rank({Z_k^\prime}^*)).
\end{align}
The optimal neural network weights are determined from the solution of the convex problem via the neural decomposition procedure for each $Z_k^*$ and ${Z_k^\prime}^*$. The optimal number of filters is upper bounded by $2(f+1)K$ and the optimal number of neurons is upper bounded by $m^* \leq 2(f+1)K^2$.
\end{theorem}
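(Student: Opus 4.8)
The plan is to follow the same three-phase template used for Theorem~\ref{thm:poly_act_thm} and Theorem~\ref{thm:vector_output_thm}, with the filter size $f$ playing the role of $d$ and an extra index $k$ ranging over the $K$ patches: (i) restate \eqref{eq:convolutional_non-convex} as a nested minimization, dualize the inner convex problem over $\{\alpha_j\}$ and $\hat y$, and exchange the outer $\min$ over filters with the $\max$ over the dual variable to obtain a lower bound; (ii) convert the infinitely many quadratic-in-$u$ constraints into finitely many linear matrix inequalities via the S-procedure (Corollary~\ref{l:S_lemma_w_equality}), take the Lagrangian dual of the resulting SDP, and identify it with \eqref{eq:convolutional_convex_program}; (iii) run Neural Decomposition on the optimal $Z_k^*,{Z_k^\prime}^*$ to exhibit a feasible convolutional network attaining the SDP value, which gives the matching upper bound.

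First I would use strong duality of the inner problem to write $p^* = \min_{\|u_j\|_2=1\,\forall j}\ \max_v\{-\ell^*(-v): |v^T\sigma(X_k u_j)|\le\beta\ \forall j,k\}$, the dual constraints being \eqref{eq:convolutional_dual}. Exchanging $\min$ and $\max$ enlarges the feasible set of the inner maximization to $\bigcap_{\{u_j\}:\|u_j\|_2=1}\{v: |v^T\sigma(X_k u_j)|\le\beta\ \forall j,k\}$, and since the $u_j$ are independent unit vectors this intersection equals $\{v: \max_{\|u\|_2=1}|v^T\sigma(X_k u)|\le\beta\ \forall k\in[K]\}$. Each of these $K$ constraints splits into the two inequalities $\max_{\|u\|_2=1}\bigl(\pm(u^TQ_k u + b_k^T u)\bigr)\le\beta\mp c_k$, with $Q_k=a\sum_{i=1}^n x_{i,k}x_{i,k}^T v_i$ (where $x_{i,k}^T$ is the $i$-th row of $X_k$), $b_k=bX_k^Tv$, $c_k=c\,\ones^Tv$, and Corollary~\ref{l:S_lemma_w_equality} turns each into an $(f+1)\times(f+1)$ LMI with a scalar slack $\rho_{k,1}$ or $\rho_{k,2}$. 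Attaching Lagrange multipliers $Z_k,Z_k^\prime\in\mathbb{S}^{(f+1)\times(f+1)}$ to these $2K$ LMIs, forming the Lagrangian, and maximizing over $v$ and the $\rho$'s --- which produces the equalities $\tr(Z_{k,1})=Z_{k,4}$, $\tr(Z_{k,1}^\prime)=Z_{k,4}^\prime$ and the output identity defining $\hat y$ --- yields exactly \eqref{eq:convolutional_convex_program}, so by SDP strong duality its optimal value is $\le p^*$.

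For the upper bound I would apply the Neural Decomposition procedure of Section~\ref{s:decomp_proc} to each $Z_k^*$ and ${Z_k^\prime}^*$ separately, using $G=\diag(I_f,-1)$, obtaining rank-one factors $p_{k,j}=\begin{bmatrix} c_{k,j}^T & d_{k,j}\end{bmatrix}^T$ with $\|c_{k,j}\|_2=|d_{k,j}|$. Setting $u_{k,j}=c_{k,j}/\|c_{k,j}\|_2$ and, without loss of generality, $d_{k,j}\ge 0$ gives $Z_{k,1}^*=\sum_j u_{k,j}u_{k,j}^T d_{k,j}^2$, $Z_{k,2}^*=\sum_j u_{k,j}d_{k,j}^2$, $Z_{k,4}^*=\sum_j d_{k,j}^2$, and likewise for ${Z_k^\prime}^*$ with primed quantities. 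I would then build a convolutional network whose filters are all the $u_{k,j}$ and $u_{k,j}^\prime$, assigning to the filter $u_{k,j}$ the vector of second-layer weights supported only at patch $k$ with value $d_{k,j}^2$ (and value $-{d_{k,j}^\prime}^2$ at patch $k$ for $u_{k,j}^\prime$). Substituting the decompositions into \eqref{eq:convolutional_convex_program} exactly as in the proof of Theorem~\ref{thm:vector_output_thm} shows this network reproduces $\hat y$ and that $\beta\sum_{k=1}^K(Z_{k,4}^* + {Z_{k,4}^\prime}^*)$ equals $\beta$ times the sum of $\ell_1$ norms of the constructed second-layer weight vectors, so the SDP value is also $\ge p^*$; all filters are unit-norm and hence feasible. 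Counting, $Z_k^*$ yields $\rank(Z_k^*)$ filters and ${Z_k^\prime}^*$ yields $\rank({Z_k^\prime}^*)$, each $\le f+1$, so the network uses $\sum_{k=1}^K(\rank(Z_k^*)+\rank({Z_k^\prime}^*))\le 2(f+1)K$ filters, and counting $(\text{filter},\text{patch})$ pairs gives the stated critical width $m^* = K\sum_{k=1}^K(\rank(Z_k^*)+\rank({Z_k^\prime}^*))\le 2(f+1)K^2$.

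The step needing the most care is the min--max exchange together with the reconstruction: one must argue cleanly that the intersection over all admissible filter collections of the dual-feasible sets collapses to precisely the $K$ per-patch worst-case constraints, and that the per-patch-extracted filters can be legitimately packed into the shared-filter architecture \eqref{eq:convolutional_noncvx} by zero-padding their output weights. It is exactly this zero-padding --- forced because a filter mined from $Z_k^*$ is applied only to patch $k$ --- that inflates the critical width by the extra factor of $K$ relative to the fully connected bound $2(f+1)$. Everything else is a routine transcription of the scalar-output argument.
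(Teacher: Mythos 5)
Your proposal matches the paper's own proof essentially step for step: dualize the inner problem, exchange min and max, apply Corollary \ref{l:S_lemma_w_equality} per patch to get the $2K$ LMIs with slacks $\rho_{k,1},\rho_{k,2}$, take the Lagrangian with multipliers $Z_k,Z_k^\prime\in\mathbb{S}^{(f+1)\times(f+1)}$ to obtain \eqref{eq:convolutional_convex_program}, and then run Neural Decomposition on each $Z_k^*,{Z_k^\prime}^*$ with zero-padded second-layer weights to get the matching upper bound and the filter/neuron counts. Your placement of the single nonzero second-layer weight at the patch index $k$ is in fact the correct reading (the paper's wording ``the $j$'th one'' is a slip), so no gap remains.
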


\begin{proof}[Proof of Theorem \ref{thm:convolutional_thm}]
We apply the S-procedure to replace the constraints of \eqref{eq:convolutional_dual} with equivalent LMI constraints and this yields
\begin{align}
    \max & -\ell^*(-v) \nonumber \\
    \text{s.t. } 
    & \begin{bmatrix} \rho_{k,1} I - a \sum_{i=1}^n x_{i,k}x_{i,k}^T v_i & -\frac{1}{2}bX_k^Tv \\ -\frac{1}{2}bv^TX_k & \beta - c\ones^Tv - \rho_{k,1} \end{bmatrix} \succeq 0,  \quad k=1,\dots,K \nonumber \\
    &\begin{bmatrix} \rho_{k,2} I + a \sum_{i=1}^n x_{i,k}x_{i,k}^Tv_i & \frac{1}{2}bX_k^Tv \\ \frac{1}{2}bv^TX_k & \beta + c\ones^Tv - \rho_{k,2} \end{bmatrix} \succeq 0,  \quad k=1,\dots,K \,,
\end{align}
where $x_{i,k} \in \mathbb{R}^f$ denotes the $k$'th patch of the $i$'th data sample.
The Lagrangian is as follows
\begin{align}
    &L\left(v, \{\rho_{k,1}, \rho_{k,2}, Z_k, Z_k^\prime\}_{k=1}^K\right)= \nonumber \\
    &= -\ell^*(-v) + \sum_{k=1}^K \left(\rho_{k,1} \tr(Z_{k,1}) +\rho_{k,2} \tr(Z_{k,1}^\prime)\right) - a\sum_{k=1}^K \sum_{i=1}^n v_i x_{i,k}^T(Z_{k,1}-Z_{k,1}^\prime)x_{i,k} - b\sum_{k=1}^Kv^TX_k(Z_{k,2}-Z_{k,2}^\prime) + \nonumber \\
    &+\sum_{k=1}^K \left((\beta-\rho_{k,1}) Z_{k,4} + (\beta-\rho_{k,2}) Z_{k,4}^\prime\right) - c\sum_{k=1}^K  \sum_{i=1}^n v_i (Z_{k,4}-Z_{k,4}^\prime) \,,
\end{align}
where $Z_k,Z_k^\prime$ are $(f+1)\times (f+1)$ dimensional symmetric matrices.
Maximizing the Lagrangian with respect to $v$, $\rho_{k,1}$, $\rho_{k,2}$, $k=1,\dots,K$ yields the convex SDP
\begin{align} \label{eq:convolutional_convex_program}
    \min_{\{Z_k=Z_k^T, Z_k^\prime={Z_k^\prime}^T\}_{k=1}^K} & \ell (\hat{y}, y) + \beta \sum_{k=1}^K(Z_{k,4}+Z_{k,4}^\prime) \nonumber \\
    \mbox{s.t.} \quad &\hat{y}_{i} = a \sum_{k=1}^K x_{i,k}^T(Z_{k,1}-Z^\prime_{k,1})x_{i,k} + b \sum_{k=1}^K x_{i,k}^T(Z_{k,2}-Z^\prime_{k,2}) + c \sum_{k=1}^K (Z_{k,4}-Z^\prime_{k,4}), \quad i\in [n] \nonumber
    \\
    &\tr(Z_{k,1}) = Z_{k,4}, \,\, \tr(Z_{k,1}^\prime) = Z_{k,4}^\prime, \quad k=1,\dots,K \nonumber \\
    & Z_k \succeq 0, \,\, Z_k^\prime \succeq 0,  \quad k=1,\dots,K \,.
\end{align}

We now show that the convex program in \eqref{eq:convolutional_convex_program} provides an upper bound for the non-convex problem via the same strategy that we have used for the vector output case in Section \ref{sec:vector_outputs}.
We construct the neural network weights from each of the matrices $Z_k^*$ and ${Z_k^\prime}^*$, $k=1,\dots,K$ via neural decomposition:
\begin{align} \label{eq:convolutional_decomposition}
    Z_{k,1}^* = \sum_{j=1}^{r_k} u_{k,j} u_{k,j}^T d_{k,j}^2, \quad Z_{k,2}^* = \sum_{j=1}^{r_k} u_{k,j} d_{k,j}^2, \quad Z_{k,4}^* &= \sum_{j=1}^{r_k} d_{k,j}^2 \,,
\end{align}
and the weights due to each $Z_k^*$ are
\begin{align} \label{eq:convolutional_weights_construction}
    \mbox{First layer filters: } \quad &u_{k,1}, u_{k,2}, \dots, u_{k,r_k} \nonumber \\
    \mbox{Second layer weights: } \quad & \{d_{k,1}^2, 0, 0, \dots, 0\}, \{0, d_{k,2}^2, 0, \dots, 0\}, \dots, \{0,0,0,\dots,d_{k,r_k}^2\} \,. 
\end{align}
To clarify, for each filter $u_{k,j}$, we have $K$ (scalar) weights in the second layer because we apply the same filter to $K$ different patches and the resulting $K$ numbers (after being input to the activation function) each are multiplied by a different second layer weight. The second layer weights associated with the filter $u_{k,j}$ will be these $K$ numbers: $\{0,\dots,0,d_{k,j}^2,0\dots,0\}$, where the only nonzero entry is the $j$'th one. Consequently, each $Z_k^*$ matrix produces $\rank(Z_k^*)$ filters and $K\rank(Z_k^*)$ neurons.
Including the weights due to ${Z_k^\prime}^*$ as well, we will have $\sum_{k=1}^K (r_k+r_k^\prime)$ filters and $K\sum_{k=1}^K (r_k+r_k^\prime)$ neurons in total. The optimal number of filters is upper bounded by $2(f+1)K$ and the optimal number of neurons is upper bounded by $2(f+1)K^2$.

We omit the details of plugging the weights into the convex objective to show that it becomes equivalent to the non-convex objective. The details are similar to the vector output case. 
\end{proof}

\section{Average Pooling} \label{sec:avg_pooling}

In this section we will consider convolutional neural networks with average pooling. We will denote the pool size by $P$. Let us consider a two-layer neural network where the first layer is a convolutional layer with filter size $f$. The convolutional layer is followed by the polynomial activation, average pooling, and a fully connected layer. We will denote the number of patches per sample by $K$. The output of this architecture can be expressed as
\begin{align}
    f(x) = \sum_{j=1}^m \sum_{k=1}^{K/P} \left(\frac{1}{P} \sum_{l=1}^P \sigma(x_{(k-1)P+l}^T u_j) \right) \alpha_{jk} \,.
\end{align}
We note that the number of parameters in the second layer (i.e. $\alpha_{jk}$'s) is equal to $m\frac{K}{P}$. The optimization problem for this architecture can be written as
\begin{align} \label{eq:convolutional_nonconvex_avgpool}
    p^* = &\min_{\{u_j \}_{j=1}^m \, \mbox{s.t.}\, \|u_j\|_2=1,\,\forall j} \, \min_{\{\alpha_j \}_{j=1}^m, \hat{y}} \ell ( \hat{y},\, y ) + \beta \sum_{j=1}^m \|\alpha_j\|_1 \quad \mbox{s.t.} \quad \hat{y} = \sum_{j=1}^m \sum_{k=1}^{K/P} \left(\frac{1}{P} \sum_{l=1}^P \sigma(X_{(k-1)P+l} u_j) \right) \alpha_{jk} \,,
\end{align}
where $\alpha_j=\begin{bmatrix} \alpha_{j1} & \dots & \alpha_{j,K/P} \end{bmatrix}^T$, $j=1,\dots,m$. 
The dual of the inner minimization problem is given by
\begin{align} \label{eq:pooling_dual}
    \max_{v} -\ell^*(-v) \quad \mbox{s.t.} \quad \left|v^T \left(\frac{1}{P} \sum_{l=1}^P \sigma(X_{(k-1)P+l} u_j) \right)\right| \leq \beta \,, \forall j,k \,.
\end{align}

Theorem \ref{thm:pooling_thm} states our result for CNN with average pooling.

\begin{theorem} [Globally optimal convex program for polynomial activation convolutional neural networks with average pooling] \label{thm:pooling_thm}
The solution of the convex problem in \eqref{eq:avgpooling_convex_program} provides a global optimal solution for the non-convex problem for the convolutional neural network with average pooling in \eqref{eq:convolutional_nonconvex_avgpool} when the number of neurons satisfies $m \geq m^*$ where 
\begin{align}
    m^* = \frac{K}{P}\sum_{k=1}^{K/P} (\rank(Z_k^*) + \rank({Z_k^\prime}^*)).
\end{align}
The optimal neural network weights are determined from the solution of the convex problem via the neural decomposition procedure for each $Z_k^*$ and ${Z_k^\prime}^*$. The optimal number of neurons is upper bounded by $m^* \leq 2(f+1)\frac{K^2}{P^2}$.
\end{theorem}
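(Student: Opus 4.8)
The plan is to mirror the two-part argument of Theorem~\ref{thm:convolutional_thm}: derive the SDP \eqref{eq:avgpooling_convex_program} as a Lagrangian dual of \eqref{eq:convolutional_nonconvex_avgpool}, so that its optimal value lower-bounds $p^*$, and then invoke the neural decomposition procedure of Section~\ref{sec:neural_decomp} to build an explicit feasible network attaining that value, which supplies the matching upper bound and hence strong duality.

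For the lower bound, I would begin from the dual \eqref{eq:pooling_dual} and expand $\sigma(u)=au^2+bu+c$ inside the pooled average. Writing $x_{i,k}\in\mathbb{R}^f$ for the $k$-th patch of the $i$-th sample, the $i$-th coordinate of $\frac{1}{P}\sum_{l=1}^P\sigma(X_{(k-1)P+l}u_j)$ equals $u_j^T\big(\frac{a}{P}\sum_{l=1}^P x_{i,(k-1)P+l}x_{i,(k-1)P+l}^T\big)u_j+\big(\frac{b}{P}\sum_{l=1}^P x_{i,(k-1)P+l}\big)^Tu_j+c$. Hence each constraint $|v^T(\frac{1}{P}\sum_l\sigma(X_{(k-1)P+l}u_j))|\le\beta$ with $\|u_j\|_2=1$ is, for $k=1,\dots,K/P$, a pair of quadratic inequalities in $u_j$ driven by the pooled matrices $\bar Q_k:=\frac{1}{P}\sum_{l=1}^P\sum_{i=1}^n v_i\,x_{i,(k-1)P+l}x_{i,(k-1)P+l}^T$ and $\bar X_k:=\frac{1}{P}\sum_{l=1}^P X_{(k-1)P+l}$. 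Applying Corollary~\ref{l:S_lemma_w_equality} to each such unit-norm-constrained quadratic introduces auxiliary scalars $\rho_{k,1},\rho_{k,2}$ and converts the dual into a maximization over $v$ subject to $2(K/P)$ LMIs of size $(f+1)\times(f+1)$. Introducing symmetric matrix multipliers $Z_k,Z_k'\in\mathbb{S}^{(f+1)\times(f+1)}$, forming the Lagrangian exactly as in the convolutional proof, and then maximizing over $v$ (using $\ell^{**}=\ell$) and over the $\rho$'s --- which forces $\tr(Z_{k,1})=Z_{k,4}$ and $\tr(Z_{k,1}')=Z_{k,4}'$ --- produces the convex program \eqref{eq:avgpooling_convex_program}, whose value is $\le p^*$.

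For the upper bound, I would run the neural decomposition of Section~\ref{s:decomp_proc} on each optimal $Z_k^*$ and ${Z_k'}^*$ to obtain $Z_{k,1}^*=\sum_{j=1}^{r_k}u_{k,j}u_{k,j}^T d_{k,j}^2$, $Z_{k,2}^*=\sum_{j=1}^{r_k}u_{k,j}d_{k,j}^2$, $Z_{k,4}^*=\sum_{j=1}^{r_k}d_{k,j}^2$ with $\|u_{k,j}\|_2=1$, and analogously for ${Z_k'}^*$ with second-layer sign $-{d_{k,j}'}^2$, where $r_k=\rank(Z_k^*)\le f+1$. I then assign to each extracted filter a second-layer weight vector supported only on the $k$-th pooling coordinate, exactly as in \eqref{eq:convolutional_weights_construction}; substituting these weights into the forward map of \eqref{eq:convolutional_nonconvex_avgpool} and collecting the pooled Gram matrices reproduces precisely the affine expression for $\hat y_i$ in \eqref{eq:avgpooling_convex_program}, while $\sum_j\|\alpha_j\|_1=\sum_{k=1}^{K/P}(Z_{k,4}^*+{Z_{k,4}'}^*)$ reproduces the regularizer and the unit-norm constraints hold by construction. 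Counting as in the convolutional case, each block $(Z_k^*,{Z_k'}^*)$ yields $r_k+r_k'$ filters, hence $\frac{K}{P}(r_k+r_k')$ neurons, so $m^*=\frac{K}{P}\sum_{k=1}^{K/P}(r_k+r_k')\le 2(f+1)\frac{K^2}{P^2}$.

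Because the S-procedure step, the Lagrangian bookkeeping, and the objective-matching computation are essentially verbatim copies of the convolutional argument, I expect the only genuinely new care required is in propagating the averaging operator correctly: one must check that the sole effect of pooling is to replace the per-patch quantities $\sum_i v_i x_{i,k}x_{i,k}^T$ and $X_k$ by the pooled averages $\bar Q_k,\bar X_k$ in the S-procedure, and that the same $\frac1P$ factor reappears consistently in the reconstructed forward map so that the decomposed weights stay feasible with the correct $\ell_1$ norm. This is bookkeeping rather than a conceptual hurdle, so no step poses a serious obstacle.
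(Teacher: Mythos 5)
Your proposal matches the paper's proof: the lower bound is obtained exactly as you describe, by expanding the pooled activation into quadratics in $u_j$, applying the S-procedure (Corollary \ref{l:S_lemma_w_equality}) with the pooled Gram and data matrices, and maximizing the Lagrangian over $v,\rho_{k,1},\rho_{k,2}$ to arrive at \eqref{eq:avgpooling_convex_program}; the upper bound via neural decomposition of each $Z_k^*,{Z_k^\prime}^*$ and the one-hot second-layer assignment is precisely the construction the paper invokes by reference to Theorem \ref{thm:convolutional_thm}, and your neuron count $\frac{K}{P}\sum_{k}(r_k+r_k^\prime)\le 2(f+1)\frac{K^2}{P^2}$ is correct. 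No gaps; in fact you spell out the objective-matching and $\frac{1}{P}$ bookkeeping that the paper omits.
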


\begin{proof}[Proof of Theorem \ref{thm:pooling_thm}]
We rewrite the constraints of the dual problem \eqref{eq:pooling_dual} as follows:
\begin{align}
    -\beta \leq \frac{1}{P}\sum_{l=1}^P \left(u_j^T \left(a \sum_{i=1}^n x_{i,(k-1)P+l}x_{i,(k-1)P+l}^Tv_i \right) u_j  + bv^T X_{(k-1)P+l}u_j + cv^T \ones \right) \leq \beta, \quad \forall j,k \,.
\end{align}
S-procedure allows us to write this problem equivalently as
\begin{align}
    \max & -\ell^*(-v) \nonumber \\
    \text{s.t. } 
    & \begin{bmatrix} \rho_{k,1} I - a\frac{1}{P}\sum_{l=1}^P \sum_{i=1}^n x_{i,(k-1)P+l}x_{i,(k-1)P+l}^T v_i & -\frac{1}{2P}b\sum_{l=1}^P X_{(k-1)P+l}^Tv \\ -\frac{1}{2P}b\sum_{l=1}^P v^TX_{(k-1)P+l} & \beta - c\ones^Tv - \rho_{k,1} \end{bmatrix} \succeq 0,  \quad k=1,\dots,K/P \nonumber \\
    &\begin{bmatrix} \rho_{k,2} I + a\frac{1}{P} \sum_{l=1}^P \sum_{i=1}^n x_{i,(k-1)P+l}x_{i,(k-1)P+l}^Tv_i & \frac{1}{2P}b\sum_{l=1}^PX_{(k-1)P+l}^Tv \\ \frac{1}{2P}b\sum_{l=1}^Pv^TX_{(k-1)P+l} & \beta + c\ones^Tv - \rho_{k,2} \end{bmatrix} \succeq 0,  \quad k=1,\dots,K/P \,.
\end{align}
The Lagrangian is as follows
\begin{align}
    &L\left(v, \{\rho_{k,1}, \rho_{k,2}, Z_k, Z_k^\prime\}_{k=1}^{K/P}\right)= \nonumber \\
    &= -\ell^*(-v) + \sum_{k=1}^{K/P} \left(\rho_{k,1} \tr(Z_{k,1}) +\rho_{k,2} \tr(Z_{k,1}^\prime)\right) - a\frac{1}{P}\sum_{k=1}^{K/P} \sum_{l=1}^P \sum_{i=1}^n v_i x_{i,(k-1)P+l}^T(Z_{k,1}-Z_{k,1}^\prime)x_{i,(k-1)P+l}  \nonumber \\  
    &- b\frac{1}{P}\sum_{k=1}^{K/P}\sum_{l=1}^Pv^TX_{(k-1)P+l}(Z_{k,2}-Z_{k,2}^\prime) + \sum_{k=1}^{K/P} \left((\beta-\rho_{k,1}) Z_{k,4} + (\beta-\rho_{k,2}) Z_{k,4}^\prime\right) - c\sum_{k=1}^{K/P}  \sum_{i=1}^n v_i (Z_{k,4}-Z_{k,4}^\prime) \,,
\end{align}
where $Z_k,Z_k^\prime$ are $(f+1)\times (f+1)$ dimensional symmetric matrices.
Maximizing the Lagrangian with respect to $v$, $\rho_{k,1}$, $\rho_{k,2}$, $k=1,\dots,K/P$ yields the following convex SDP:
\begin{align} \label{eq:avgpooling_convex_program}
    \min_{\{Z_k=Z_k^T, Z_k^\prime={Z_k^\prime}^T\}_{k=1}^{K/P}} & \ell (\hat{y}, y) + \beta \sum_{k=1}^{K/P}(Z_{k,4}+Z_{k,4}^\prime) \nonumber \\
    \mbox{s.t.} \quad &\hat{y}_{i} = a \frac{1}{P} \sum_{k=1}^{K/P} \sum_{l=1}^P x_{i,(k-1)P+l}^T(Z_{k,1}-Z^\prime_{k,1})x_{i,(k-1)P+l} + b \frac{1}{P}\sum_{k=1}^{K/P} \sum_{l=1}^P x_{i,(k-1)P+l}^T(Z_{k,2}-Z^\prime_{k,2}) + \nonumber \\ 
    &+ c \sum_{k=1}^{K/P} (Z_{k,4}-Z^\prime_{k,4}), \quad i\in [n] \nonumber
    \\
    &\tr(Z_{k,1}) = Z_{k,4}, \,\, \tr(Z_{k,1}^\prime) = Z_{k,4}^\prime, \quad k=1,\dots,K/P \nonumber \\
    & Z_k \succeq 0, \,\, Z_k^\prime \succeq 0,  \quad k=1,\dots,K/P \,.
\end{align}

We omit the details of constructing the neural network weights from the solution of the convex SDP $Z_k^*, {Z_k^\prime}^*$, $k=1,\dots,K/P$ which follows in a similar fashion as the proof of Theorem \ref{thm:convolutional_thm}.
\end{proof}

We note that when we pick the pool size as $P=1$, this is the same as not having average pooling, and the corresponding convex program is the same as \eqref{eq:convolutional_convex_program}, derived in Section \ref{sec:convolutional}. The other extreme for the pool size is when $P=K$ and this corresponds to what is known as \textit{global average pooling} in which case the convex SDP simplifies to
\begin{align} \label{eq:global_avgpooling_convex_program}
    \min_{Z=Z^T, Z^\prime={Z^\prime}^T} & \ell (\hat{y}, y) + \beta (Z_4+Z_4^\prime) \nonumber \\
    \mbox{s.t.} \quad &\hat{y}_{i} = a \frac{1}{K} \sum_{l=1}^K x_{i,l}^T(Z_1-Z^\prime_1)x_{i,l} + b \frac{1}{K}\sum_{l=1}^K x_{i,l}^T(Z_2-Z^\prime_2) + c(Z_4-Z^\prime_4), \quad i\in [n] \nonumber
    \\
    &\tr(Z_1) = Z_4, \,\, \tr(Z_1^\prime) = Z_4^\prime \nonumber \\
    & Z \succeq 0, \,\, Z^\prime \succeq 0.
\end{align}
We note that the problem \eqref{eq:global_avgpooling_convex_program} has only two variables $Z$ and $Z^\prime$. This should be contrasted with the convolutional architecture with no pooling \eqref{eq:convolutional_convex_program} which has $2K$ variables.

\section{Numerical Results} \label{sec:numerical_results}

In this section, we present numerical results that verify the presented theory of the convex formulations along with experiments comparing the test set performance of the derived formulations. 
All experiments have been run on a MacBook Pro with 16GB RAM.

\textbf{Solvers:} We have used CVXPY \cite{diamond2016cvxpy,agrawal2018rewriting} for solving the convex SDPs. In particular, we have used the open source solver SCS (splitting conic solver) \cite{scs2016paper,scs2016code} in CVXPY, which is a scalable first order solver for convex cone problems. 

Furthermore, we have solved the non-convex problems via backpropagation for which we have used PyTorch \cite{pytorch}. We have used the SGD algorithm for the non-convex models. For all the experiments involving SGD in this section, we show only the results corresponding to the best learning rate that we select via an offline hyperparameter search. The momentum parameter is $0.9$. In the plots, the non-convex models are either labeled as 'Backpropagation (GD)' or 'Backpropagation (SGD)'. The first one, short for gradient descent, means that the batch size is equal to the number of samples $n$, and the second one, short for stochastic gradient descent, means that the batch size is not $n$ and the exact batch size is explicitly stated in the figure captions.

\textbf{Polynomial approximation of activation functions:} To obtain the degree-2 polynomial approximation of a given activation function $\sigma(u)$ such as the ReLU activation, one way is to select the polynomial coefficients $a,b,c$ that minimize the $\ell_2$ norm objective $\|T\begin{bmatrix} a & b & c \end{bmatrix}^T - s\|_2$ with
\begin{align}
    T = \begin{bmatrix} t_1^2 & t_1 & 1 \\ & \vdots & \\ t_N^2 & t_N & 1 \end{bmatrix}, \quad \quad s = \begin{bmatrix} \sigma(t_1) \\ \vdots \\ \sigma(t_N) \end{bmatrix},
\end{align}
where $t_i$'s are linearly spaced in $[L,U]$. The lower and upper limits $L$ and $U$ specify the range in which we would like to approximate the given activation function. For instance, when $L=-5$, $U=5$, $N=1000$ and $\sigma(u)$ is the ReLU activation, the optimal polynomial coefficients are $a=0.09,b=0.5,c=0.47$. When we change the approximation range to a slightly narrower one with $L=-4,U=4$, the coefficients then become $a=0.12,b=0.5,c=0.38$. Note that the training data can be normalized appropriately to confine the range of the input to the neurons and control the approximation error.

\subsection{Results for Verifying the Theoretical Formulations}
The first set of numerical results in Figure \ref{fig:verifying_theoretical_claims} is for verifying that the derived convex problems have the same optimal value as their non-convex counterparts. The plots in Figure \ref{fig:verifying_theoretical_claims} show the non-convex cost against time when 1) the non-convex problem is solved in PyTorch and 2) the corresponding convex problem (see Table \ref{table:summary_results}) is solved using CVXPY. The number of neurons for the non-convex models in all of the plots in Figure \ref{fig:verifying_theoretical_claims} is set to the optimal number of neurons $m^*$ found by the convex problem.

Figure \ref{fig:verifying_theoretical_claims} demonstrates that solving the convex SDP takes less time than solving the associated non-convex problem using backpropagation for all of the neural network architectures. Figure \ref{fig:verifying_theoretical_claims} also shows that the training of the non-convex models via the backpropagation algorithm does not always yield the global optimal but instead may converge to local minima. In addition, we note that the plots do not reflect the time it takes to tune the learning rate for the non-convex models, which was performed offline.

\begin{figure} 
\begin{minipage}[b]{0.32\linewidth}
  \centering
  \centerline{\includegraphics[width=\columnwidth]{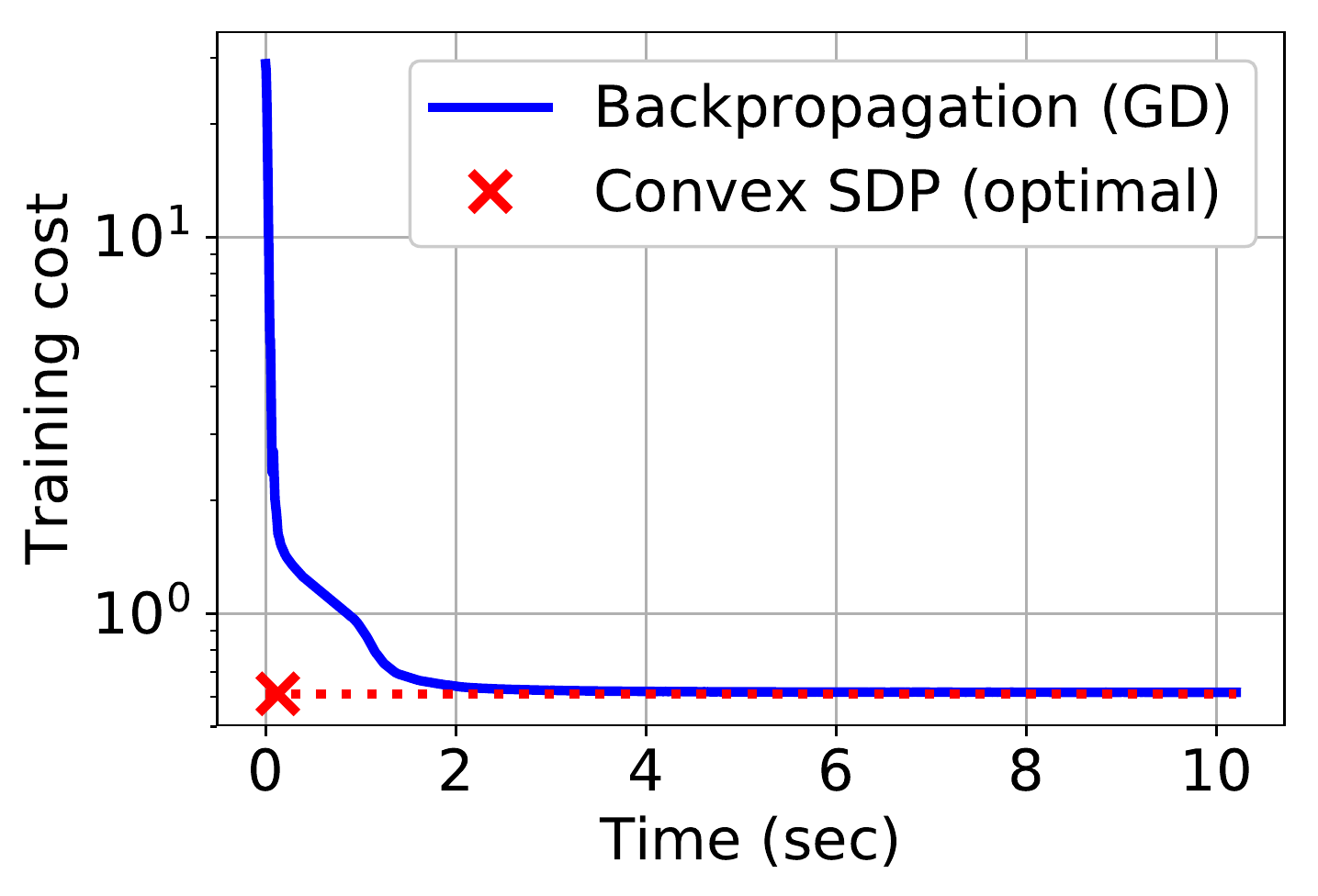}}
  \centerline{(a) Quad act ($10,20,9$)}\medskip
\end{minipage}
\hfill
\begin{minipage}[b]{0.32\linewidth}
  \centering
  \centerline{\includegraphics[width=\columnwidth]{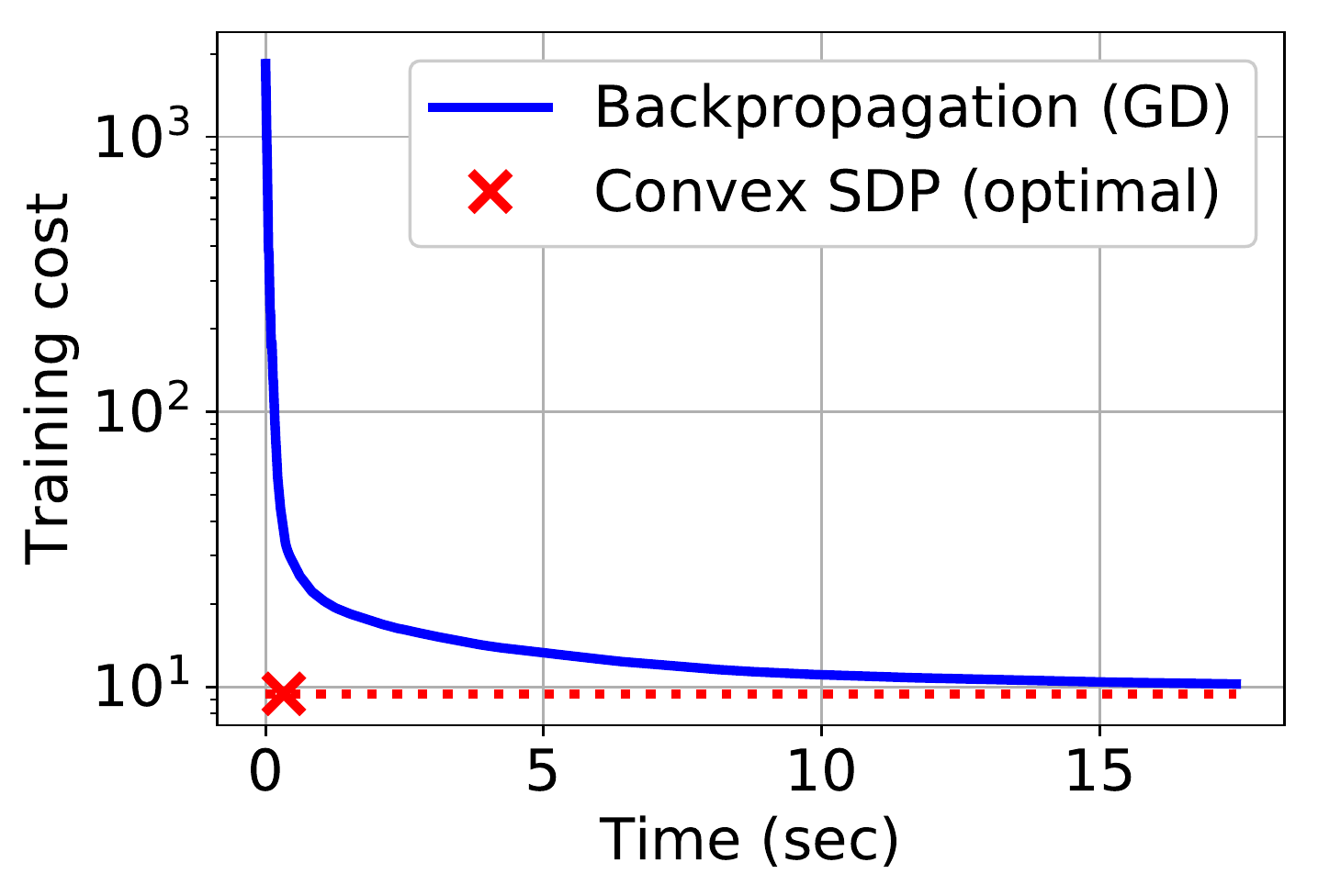}}
  \centerline{(b) Quad act ($100,20,20$)}\medskip
\end{minipage}
\hfill
\begin{minipage}[b]{0.32\linewidth}
  \centering
  \centerline{\includegraphics[width=\columnwidth]{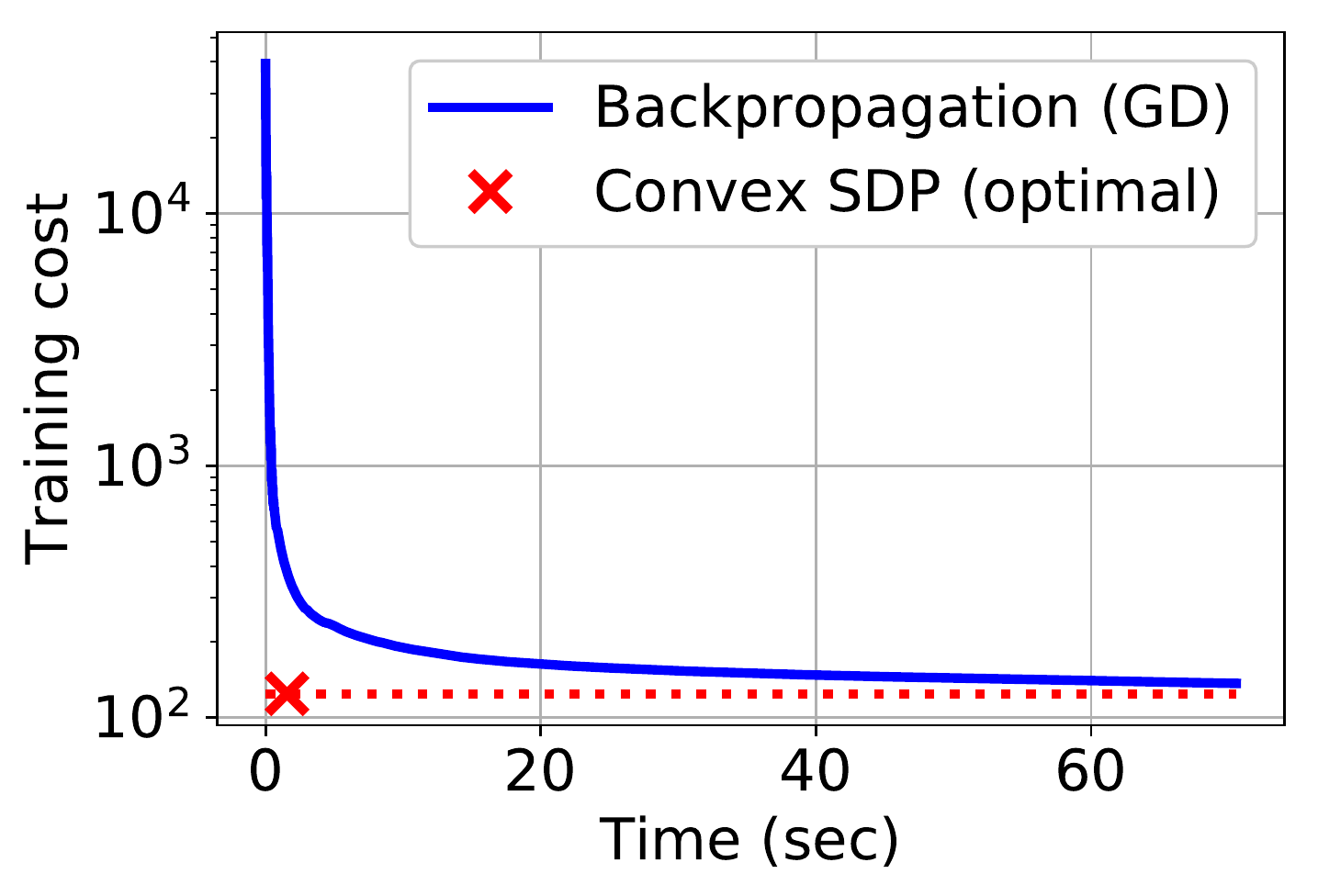}}
  \centerline{(c) Quad act ($500,20,20$)}\medskip
\end{minipage}

\begin{minipage}[b]{0.32\linewidth}
  \centering
  \centerline{\includegraphics[width=\columnwidth]{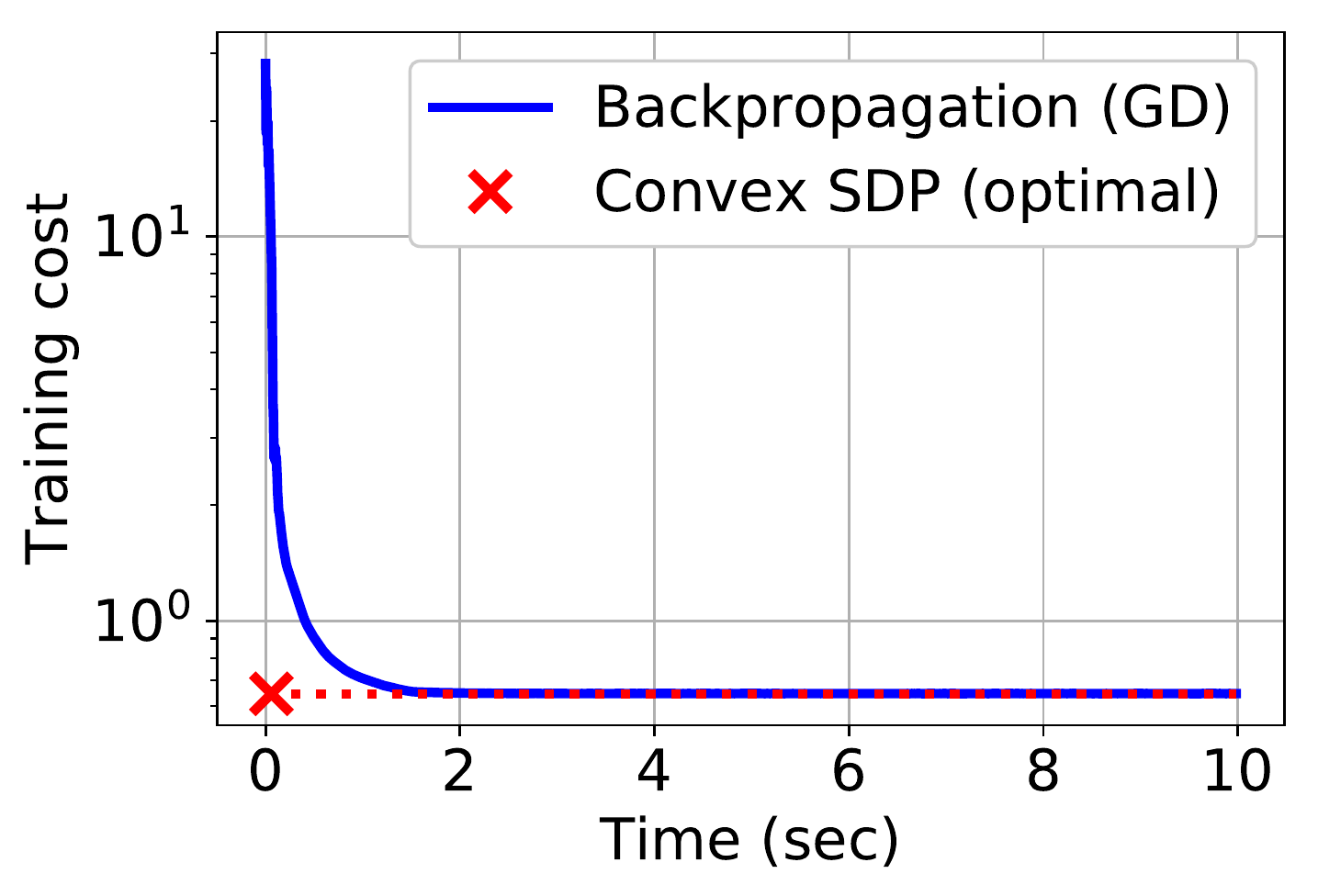}}
  \centerline{(d) Poly act ($10,20,10$)}\medskip
\end{minipage}
\hfill
\begin{minipage}[b]{0.32\linewidth}
  \centering
  \centerline{\includegraphics[width=\columnwidth]{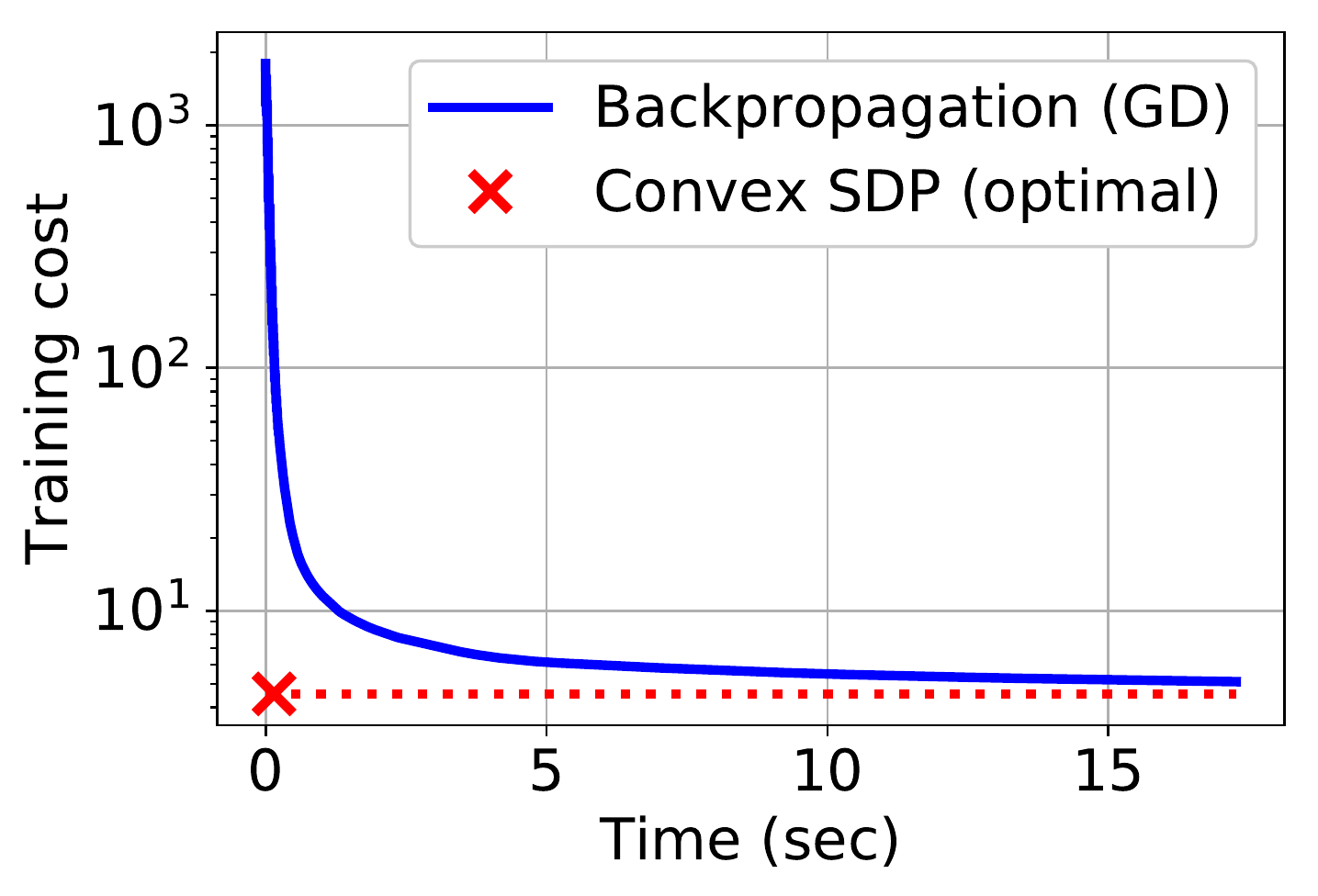}}
  \centerline{(e) Poly act ($100,20,23$)}\medskip
\end{minipage}
\hfill
\begin{minipage}[b]{0.32\linewidth}
  \centering
  \centerline{\includegraphics[width=\columnwidth]{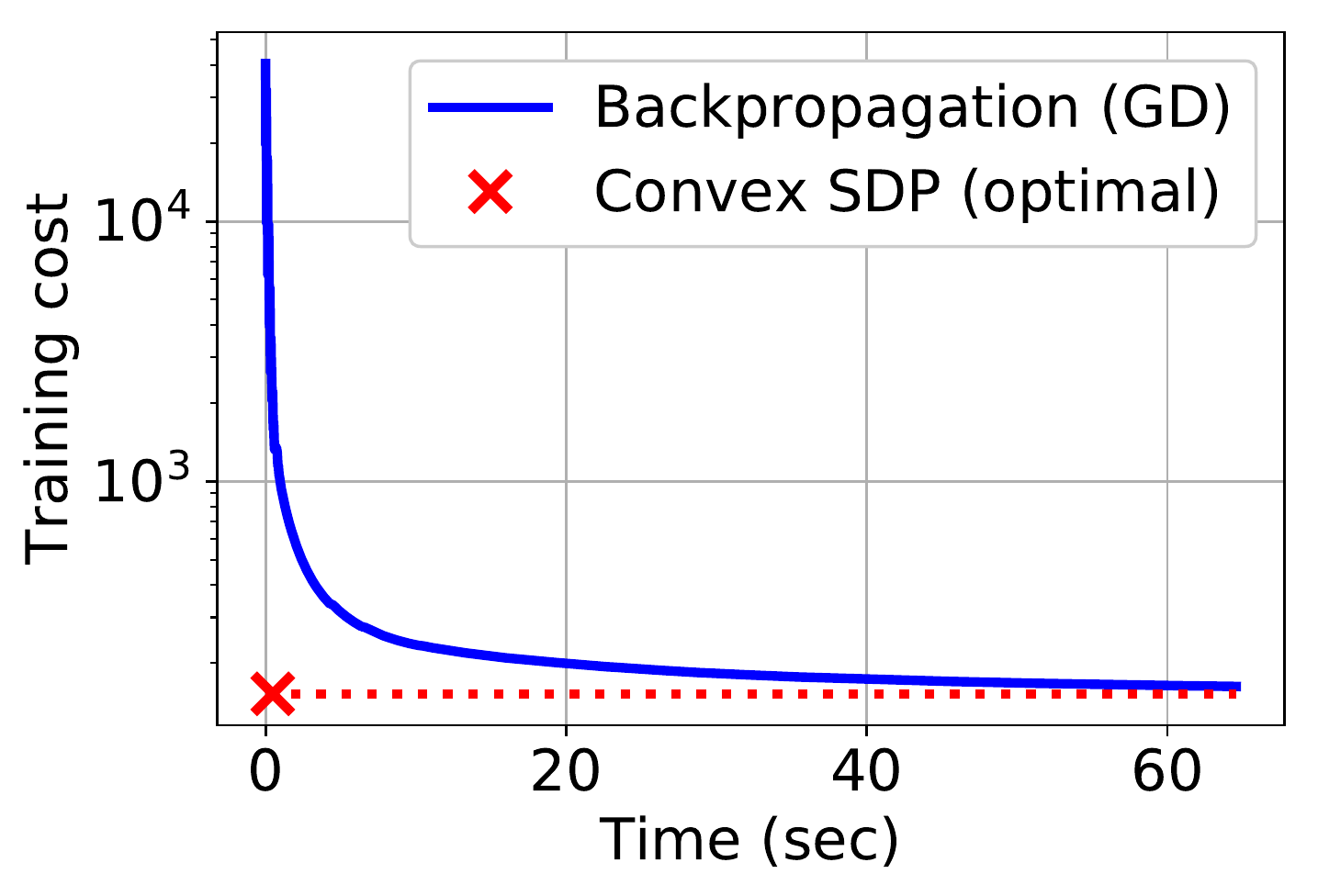}}
  \centerline{(f) Poly act ($500,20,35$)}\medskip
\end{minipage}

\begin{minipage}[b]{0.32\linewidth}
  \centering
  \centerline{\includegraphics[width=\columnwidth]{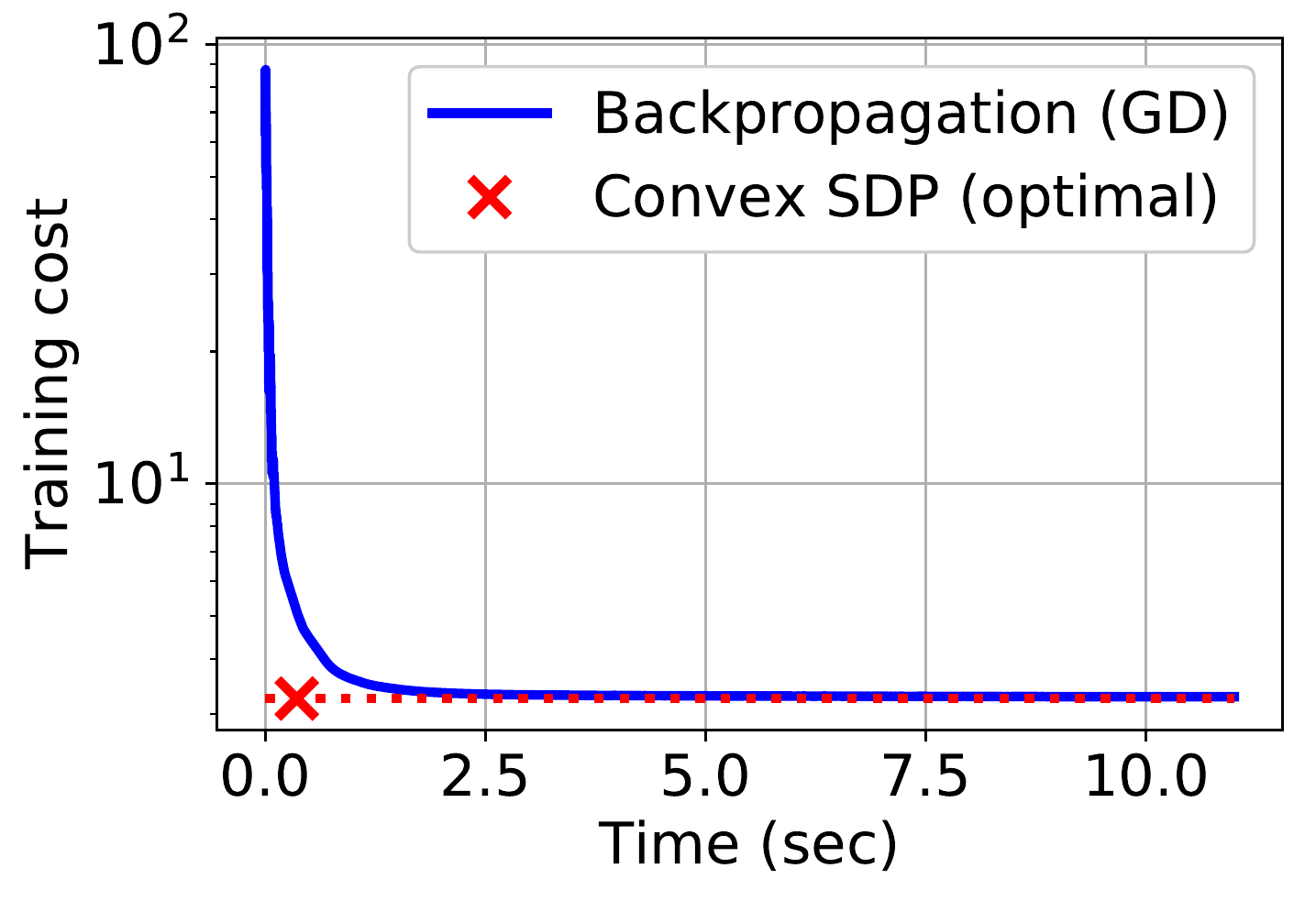}}
  \centerline{(g) Vect out $C=3$, ($10,20,61$)}\medskip
\end{minipage}
\hfill
\begin{minipage}[b]{0.32\linewidth}
  \centering
  \centerline{\includegraphics[width=\columnwidth]{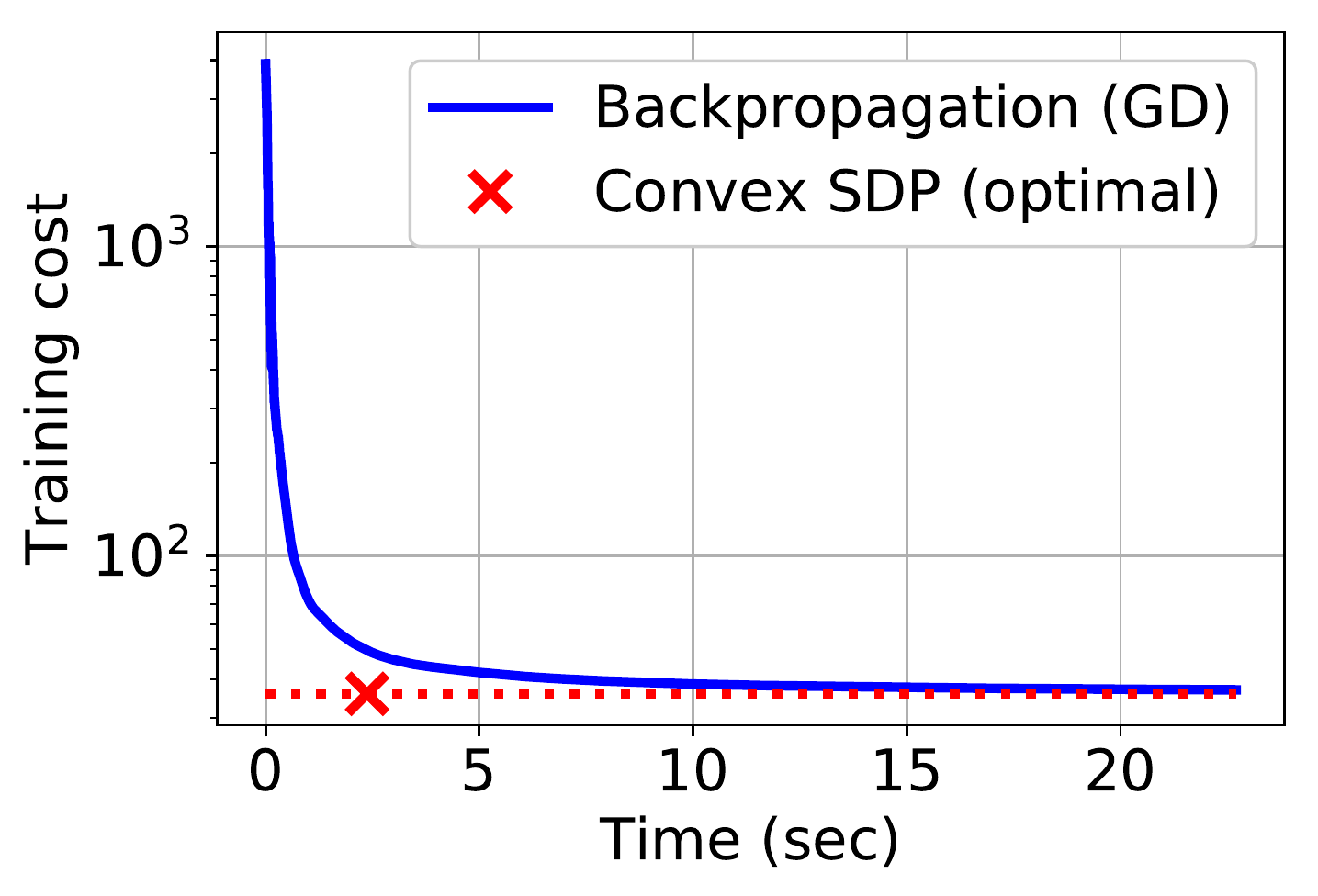}}
  \centerline{(h) Vect out $C=3$, ($100,20,86$)}\medskip
\end{minipage}
\hfill
\begin{minipage}[b]{0.32\linewidth}
  \centering
  \centerline{\includegraphics[width=\columnwidth]{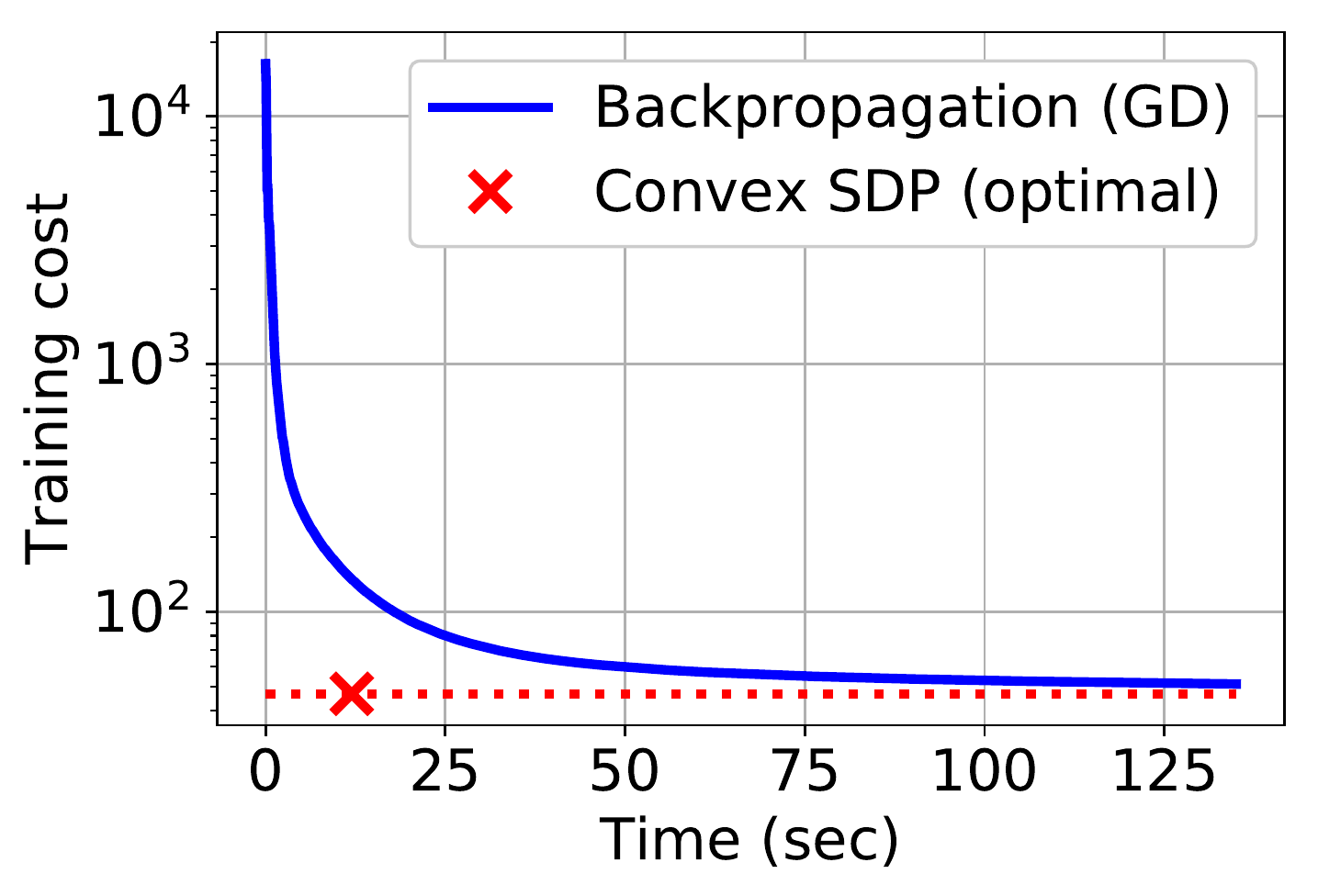}}
  \centerline{(i) Vect out $C=3$, ($500,20,75$)}\medskip
\end{minipage}

\begin{minipage}[b]{0.32\linewidth}
  \centering
  \centerline{\includegraphics[width=\columnwidth]{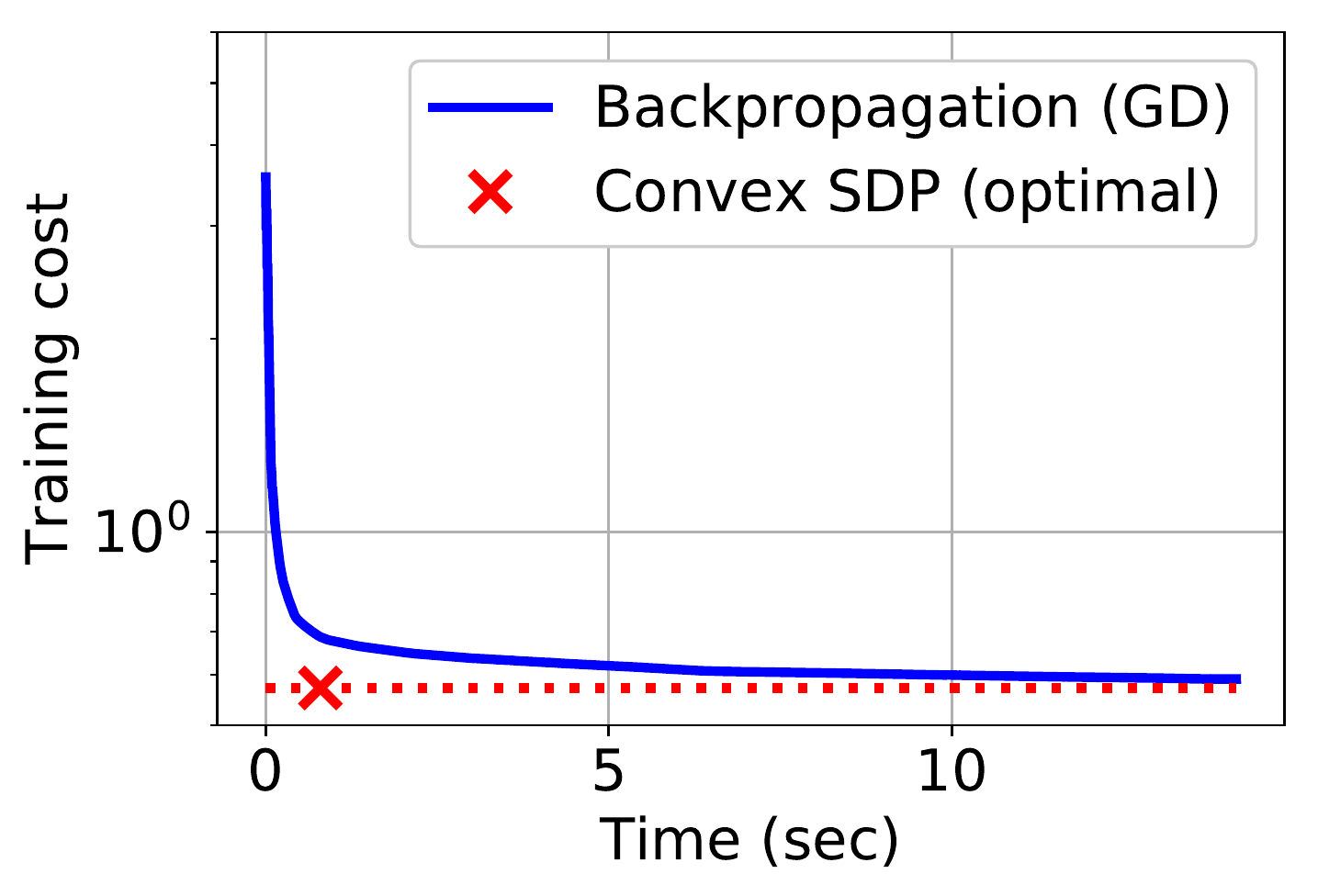}}
  \centerline{(j) Convol $f=3$, ($10,20,11$)}\medskip
\end{minipage}
\hfill
\begin{minipage}[b]{0.32\linewidth}
  \centering
  \centerline{\includegraphics[width=\columnwidth]{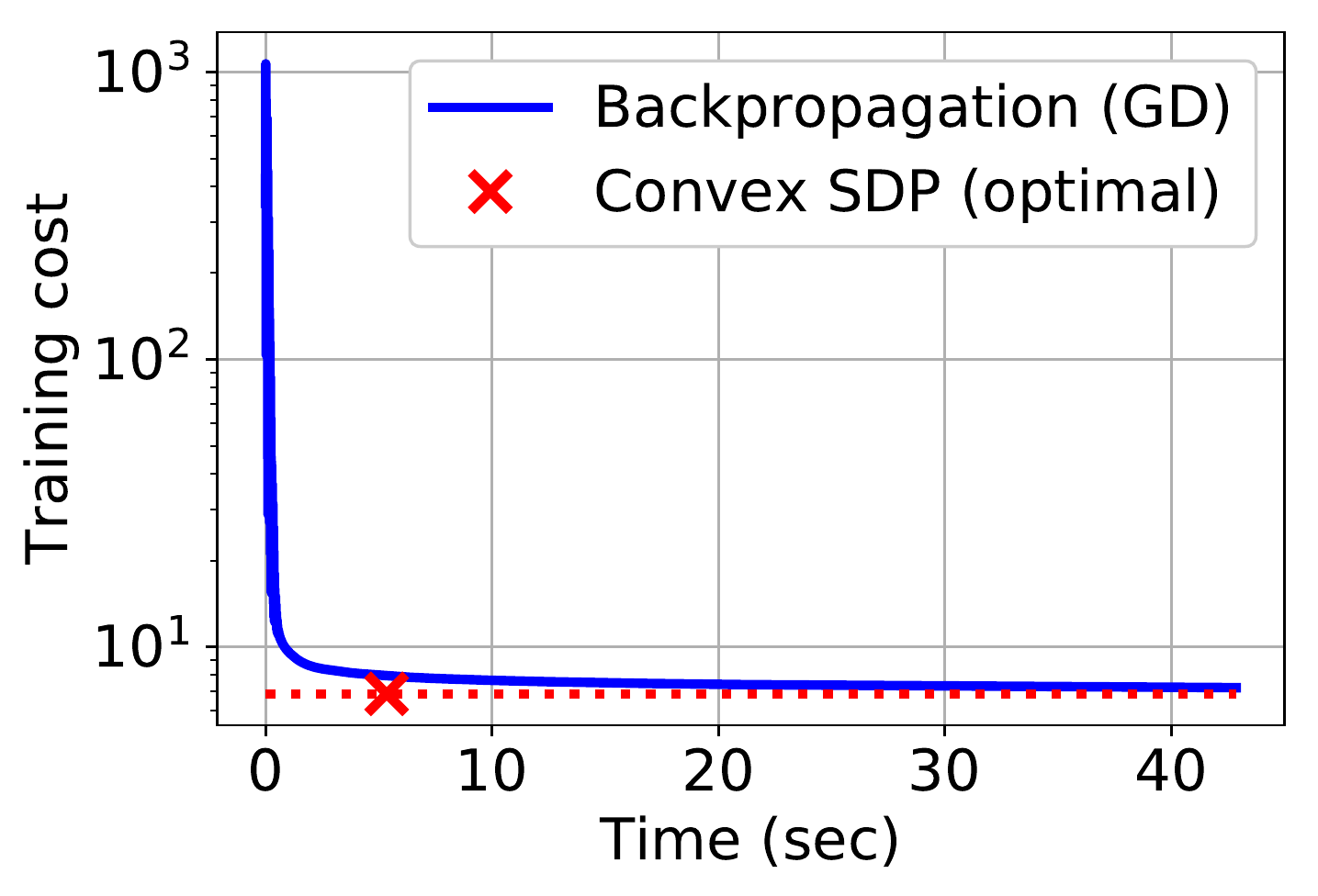}}
  \centerline{(k) Convol $f=3$, ($100,20,72$)}\medskip
\end{minipage}
\hfill
\begin{minipage}[b]{0.32\linewidth}
  \centering
  \centerline{\includegraphics[width=\columnwidth]{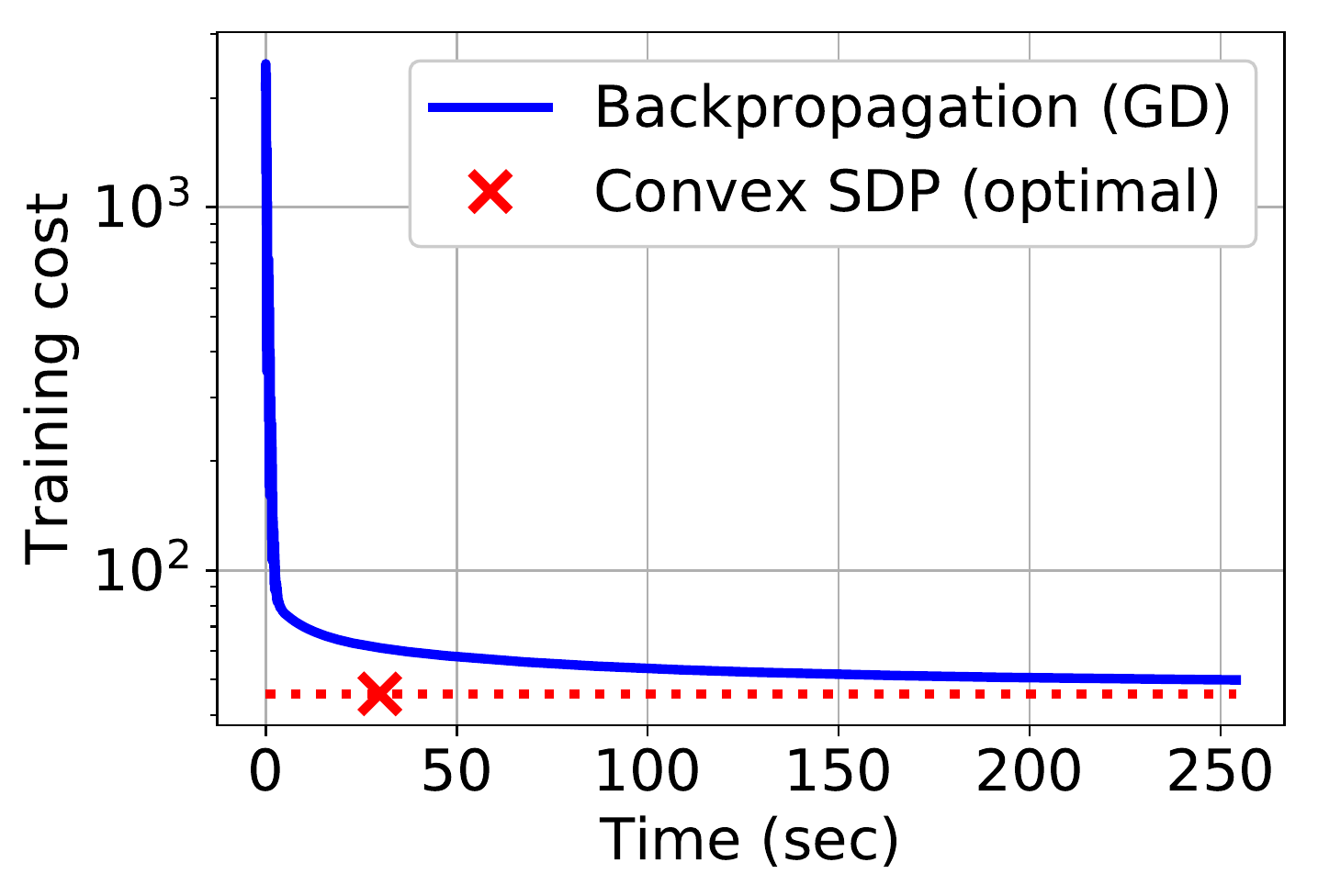}}
  \centerline{(l) Convol $f=3$, ($500,20,100$)}\medskip
\end{minipage}

\caption{The numbers in the sub-captions refer to the parameters ($n,d,m^*$). These figures show the training cost against time for backpropagation (blue solid curves) and the convex problem (red cross shows timing of the convex solver) for the following problems: a,b,c: Quadratic activation scalar output, d,e,f: Polynomial activation scalar output, g,h,i: Polynomial activation vector output, j,k,l: Polynomial activation convolutional. The data is artificially generated with $5$ planted neurons and the data matrix is the element-wise 4'th power of an i.i.d. Gaussian matrix. The regularization coefficient is $\beta=0.1$ in all of the experiments. The polynomial coefficients for the architectures with polynomial activation are $a=0.09$, $b=0.5$, $c=0.47$ (i.e. the ReLU approximation coefficients).}
\label{fig:verifying_theoretical_claims}
\end{figure}

\subsection{Experiments on UCI datasets}

We now show how the derived convex programs perform in the context of classification datasets. The datasets used in this subsection are from the UCI machine learning repository \cite{uci2019datasets}. The plots in Figure \ref{fig:uci_binary_classification} show the training and test set costs and classification accuracies for \textit{binary} classification datasets and the plots in Figure \ref{fig:uci_multiclass_classification} are for \textit{multiclass} classification datasets. The convex program used for solving the binary classification problem is the scalar output polynomial activation problem given in \eqref{eq:polyact_convex_program_final} and for the multiclass problem it is the vector output version given in \eqref{eq:vector_output_convex_program}. 

We note that the training cost plots of Figure \ref{fig:uci_binary_classification} and \ref{fig:uci_multiclass_classification} are consistent with the theoretical results. The accuracy plots show that the convex programs achieve the same final accuracy of the non-convex models or higher accuracies in shorter amounts of time.

\begin{figure} 
\begin{minipage}[b]{0.24\linewidth}
  \centering
  \centerline{\includegraphics[width=\columnwidth]{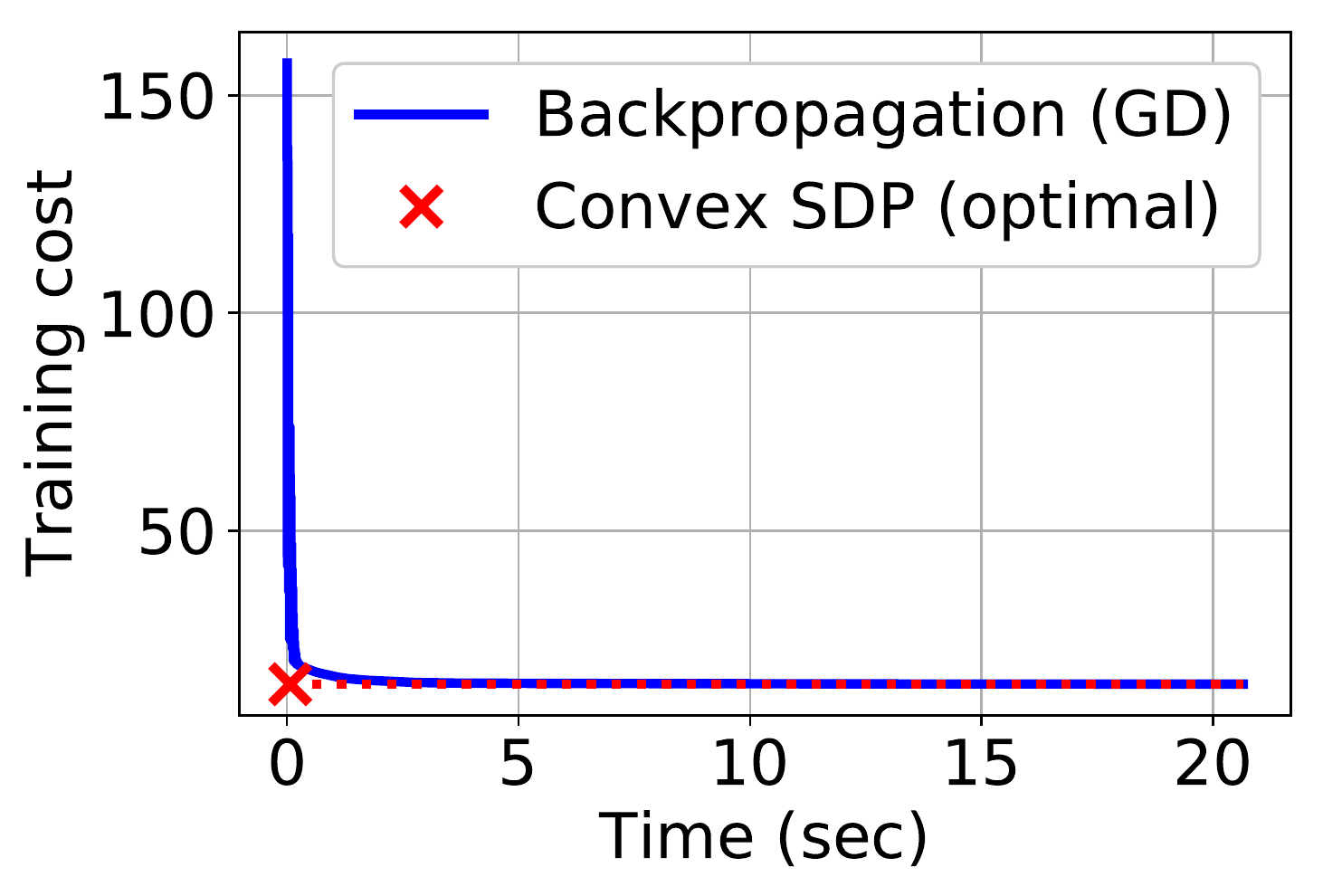}}
  \centerline{(a) DS1, training cost}\medskip
\end{minipage}
\hfill
\begin{minipage}[b]{0.24\linewidth}
  \centering
  \centerline{\includegraphics[width=\columnwidth]{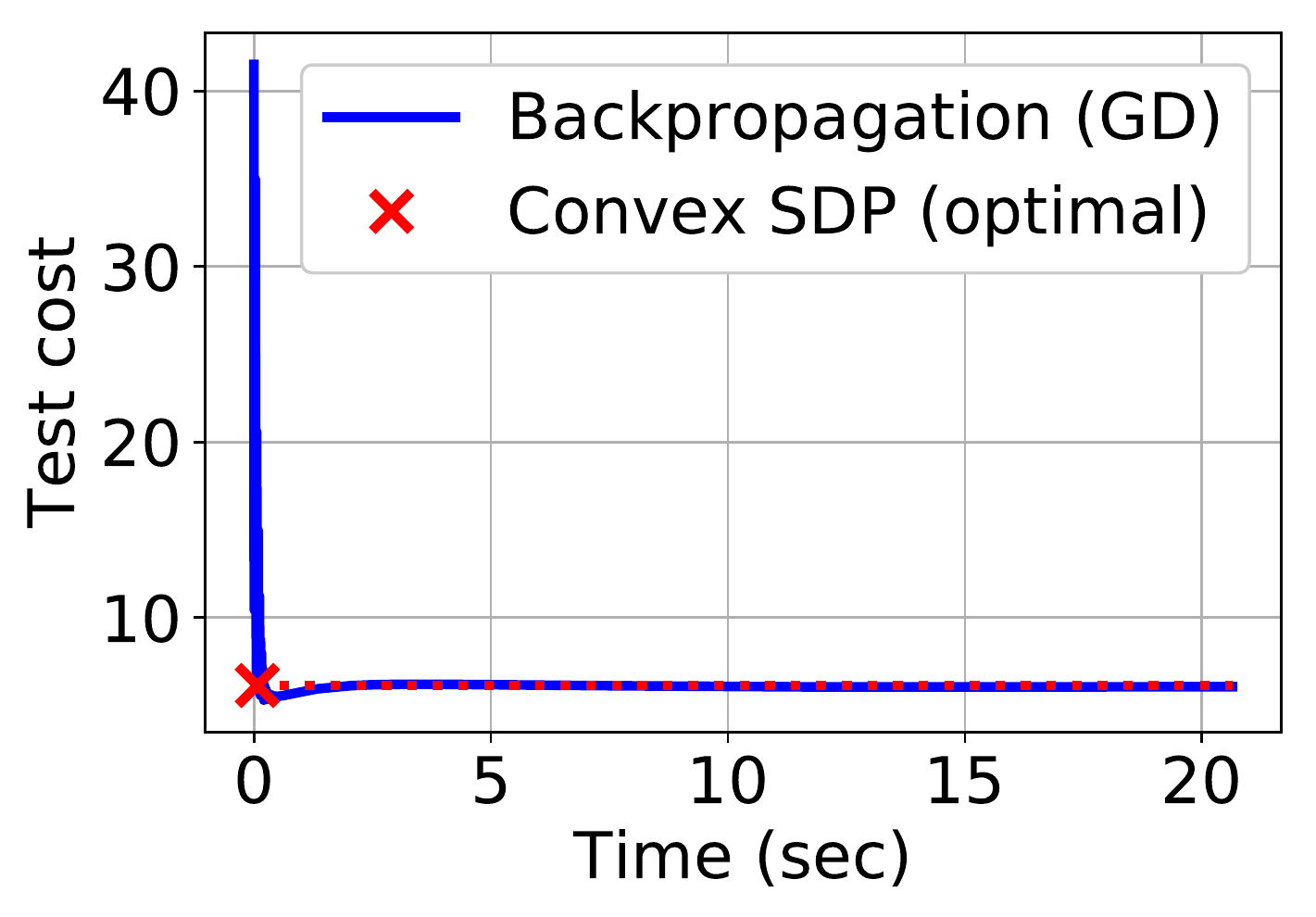}}
  \centerline{(b) DS1, test cost}\medskip
\end{minipage}
\hfill
\begin{minipage}[b]{0.24\linewidth}
  \centering
  \centerline{\includegraphics[width=\columnwidth]{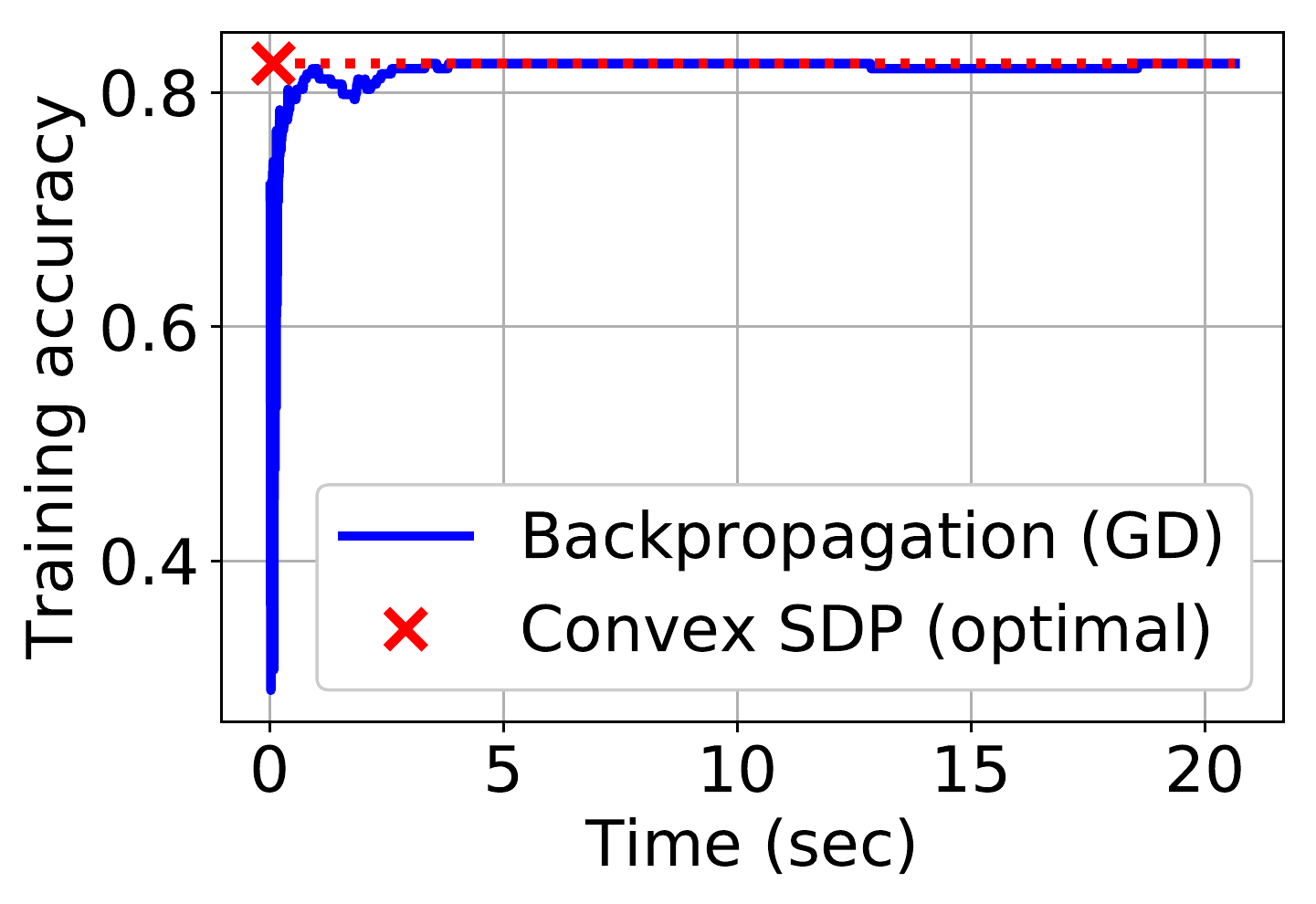}}
  \centerline{(c) DS1, training accuracy}\medskip
\end{minipage}
\hfill
\begin{minipage}[b]{0.24\linewidth}
  \centering
  \centerline{\includegraphics[width=\columnwidth]{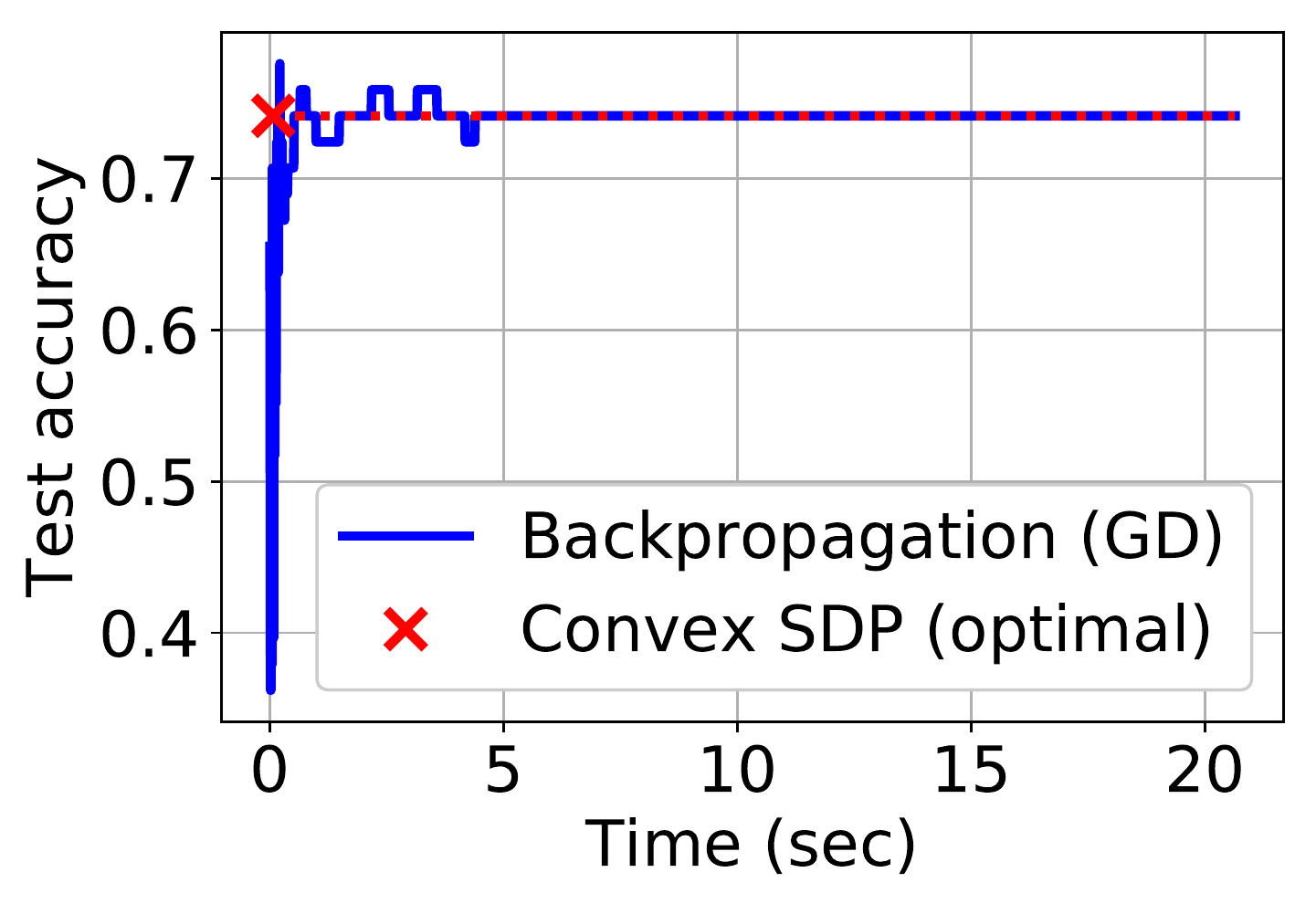}}
  \centerline{(d) DS1, test accuracy}\medskip
\end{minipage}

\begin{minipage}[b]{0.24\linewidth}
  \centering
  \centerline{\includegraphics[width=\columnwidth]{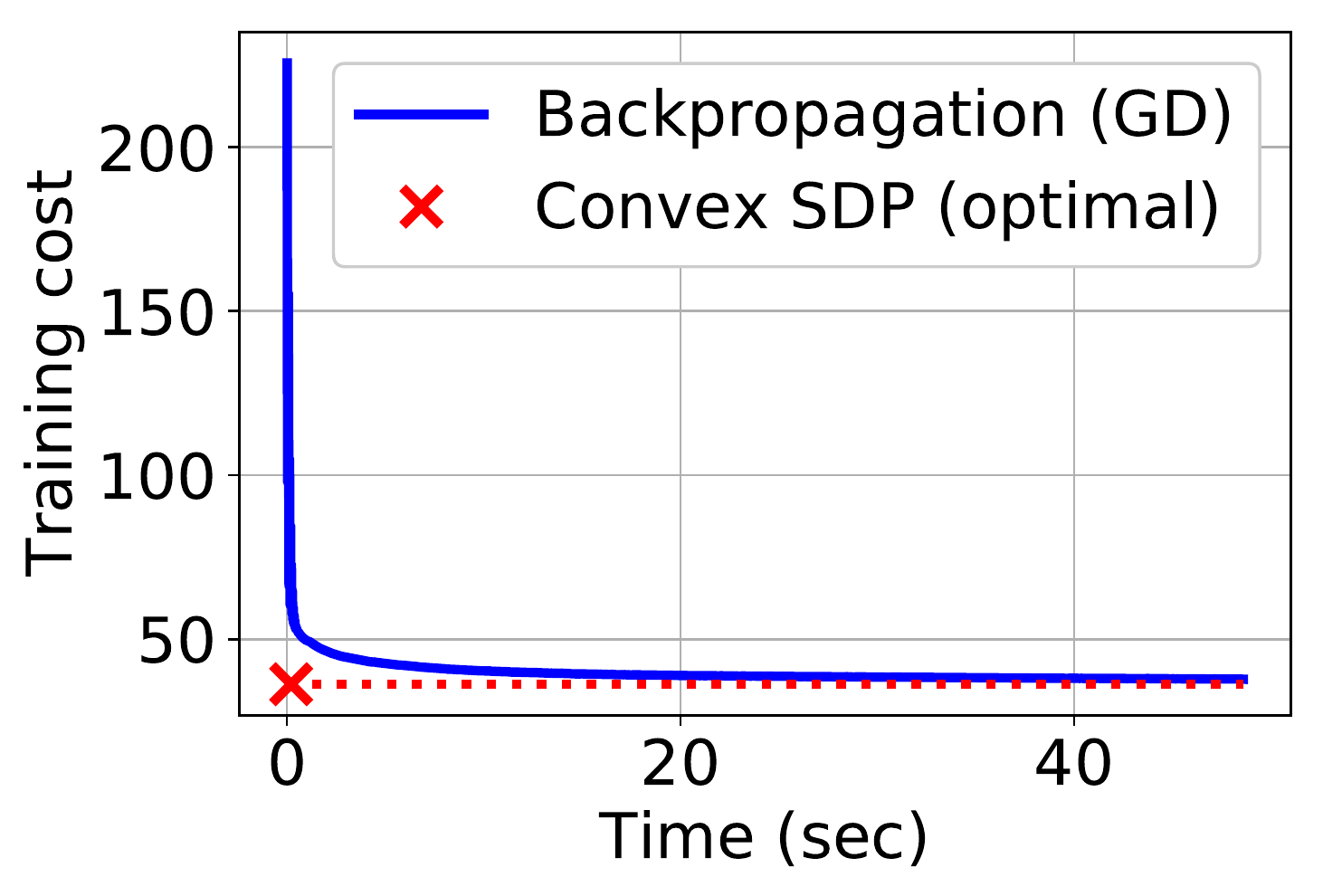}}
  \centerline{(e) DS2, training cost}\medskip
\end{minipage}
\hfill
\begin{minipage}[b]{0.24\linewidth}
  \centering
  \centerline{\includegraphics[width=\columnwidth]{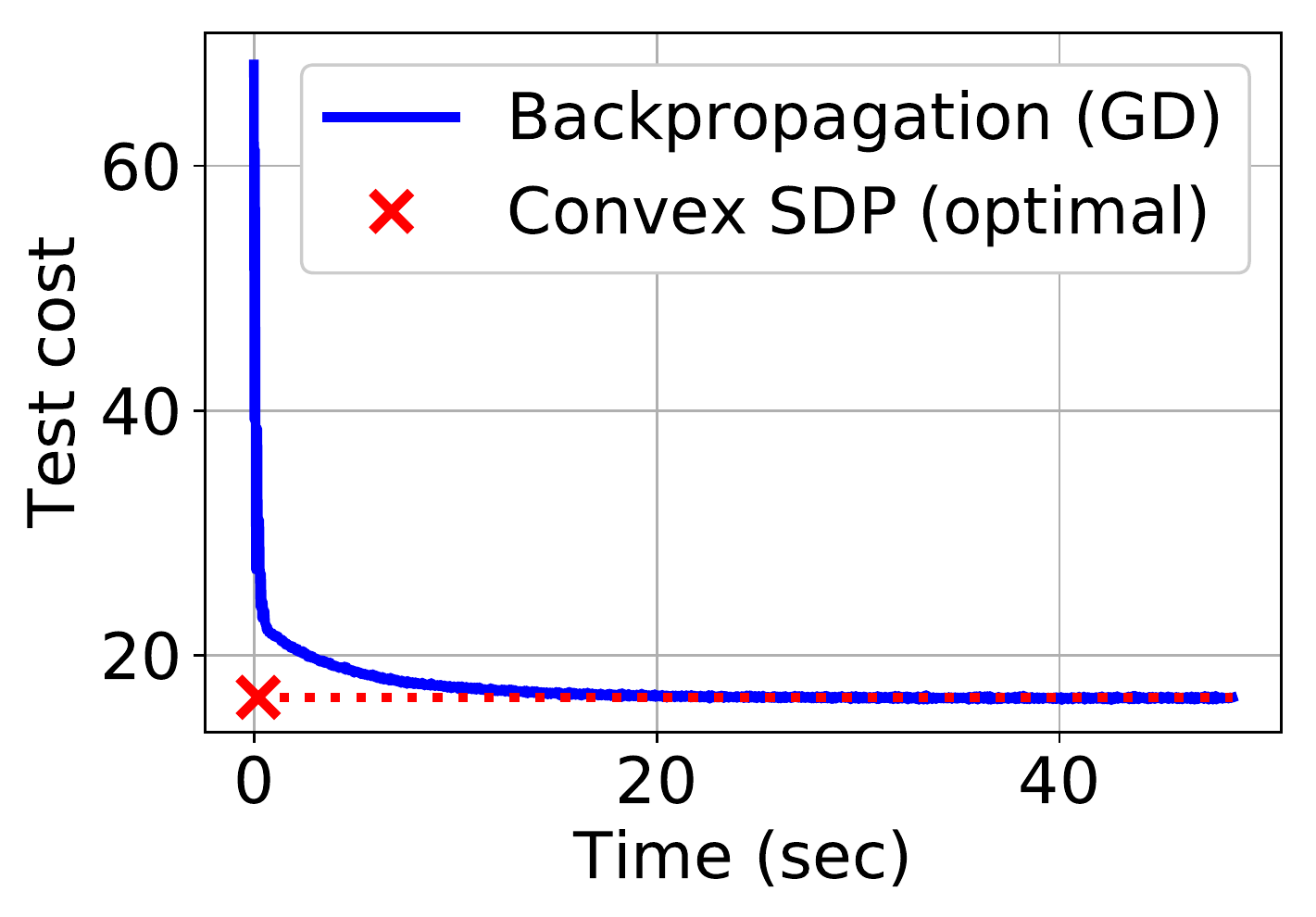}}
  \centerline{(f) DS2, test cost}\medskip
\end{minipage}
\hfill
\begin{minipage}[b]{0.24\linewidth}
  \centering
  \centerline{\includegraphics[width=\columnwidth]{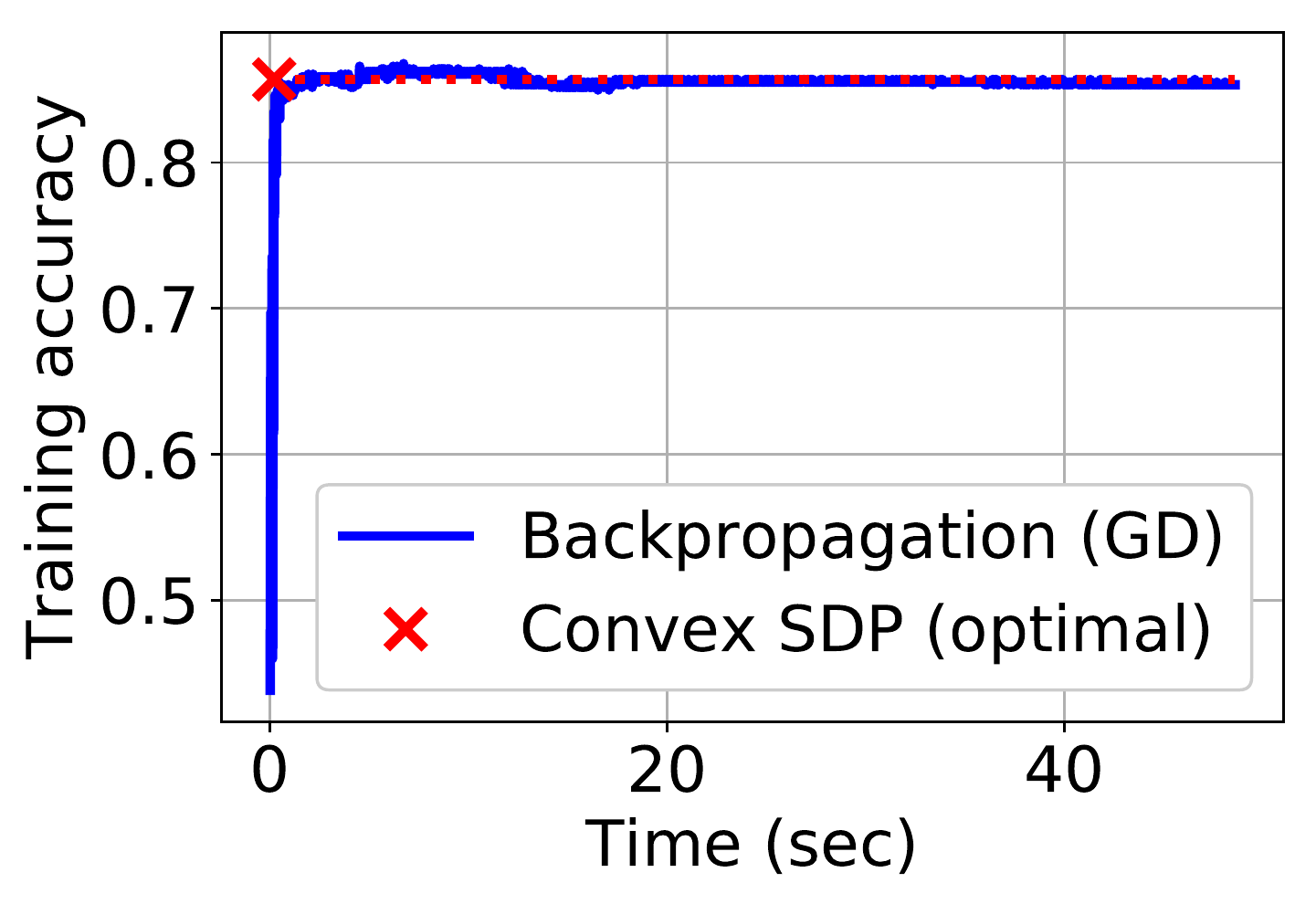}}
  \centerline{(g) DS2, training accuracy}\medskip
\end{minipage}
\hfill
\begin{minipage}[b]{0.24\linewidth}
  \centering
  \centerline{\includegraphics[width=\columnwidth]{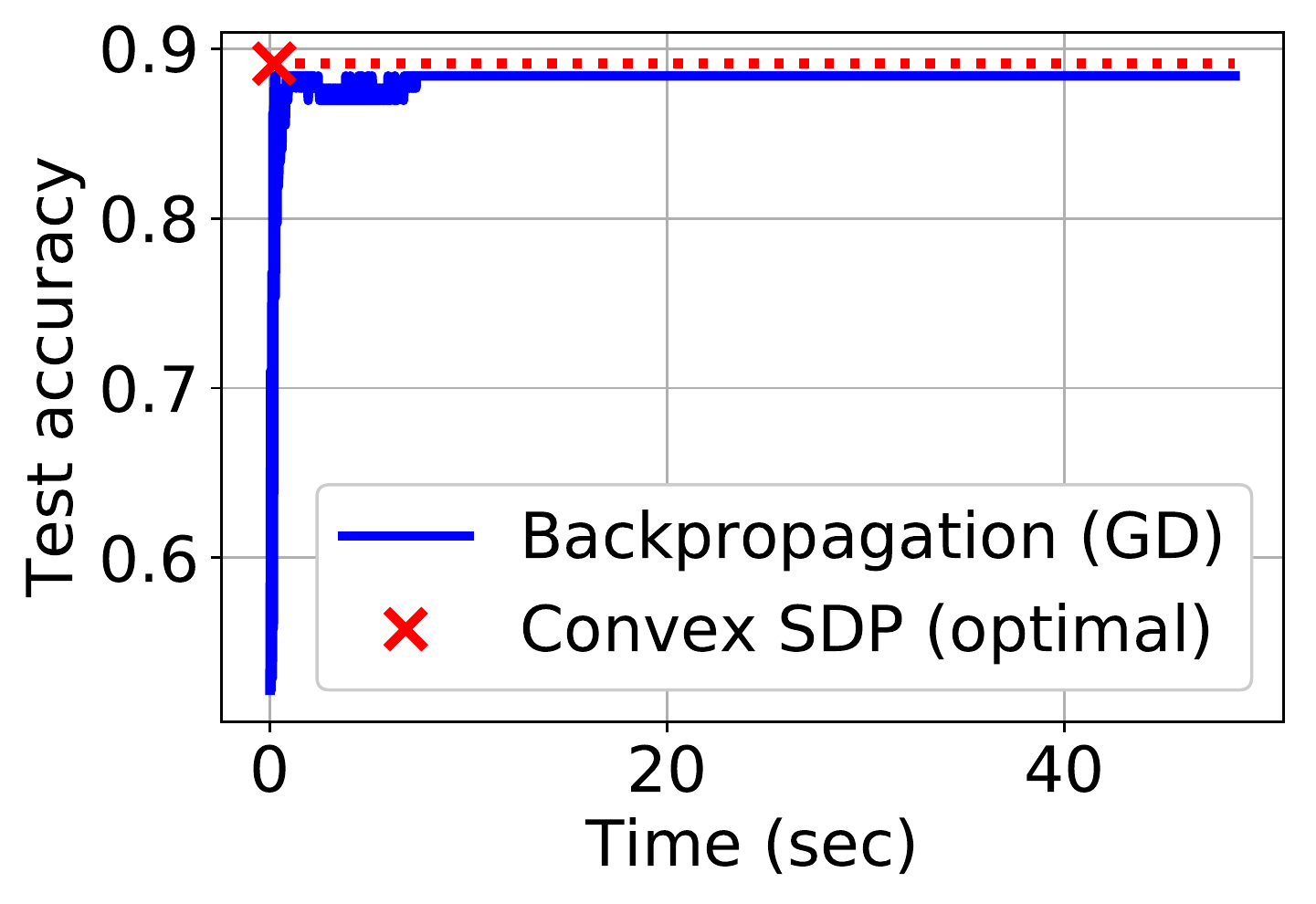}}
  \centerline{(h) DS2, test accuracy}\medskip
\end{minipage}
\caption{Results on UCI binary classification datasets. DS1: dataset 1 is the breast cancer dataset ($n=228,d=9$), DS2: dataset 2 is the credit approval dataset ($n=552,d=15$). Polynomial activation with $a=0.09$, $b=0.5$, $c=0.47$ is used. Number of neurons that the convex program found is $16$ and $18$ for DS1 and DS2, respectively. The regularization coefficient is $\beta=0.01$ and $\beta=10$ for DS1 and DS2, respectively.}
\label{fig:uci_binary_classification}
\end{figure}

\begin{figure} 
\begin{minipage}[b]{0.24\linewidth}
  \centering
  \centerline{\includegraphics[width=\columnwidth]{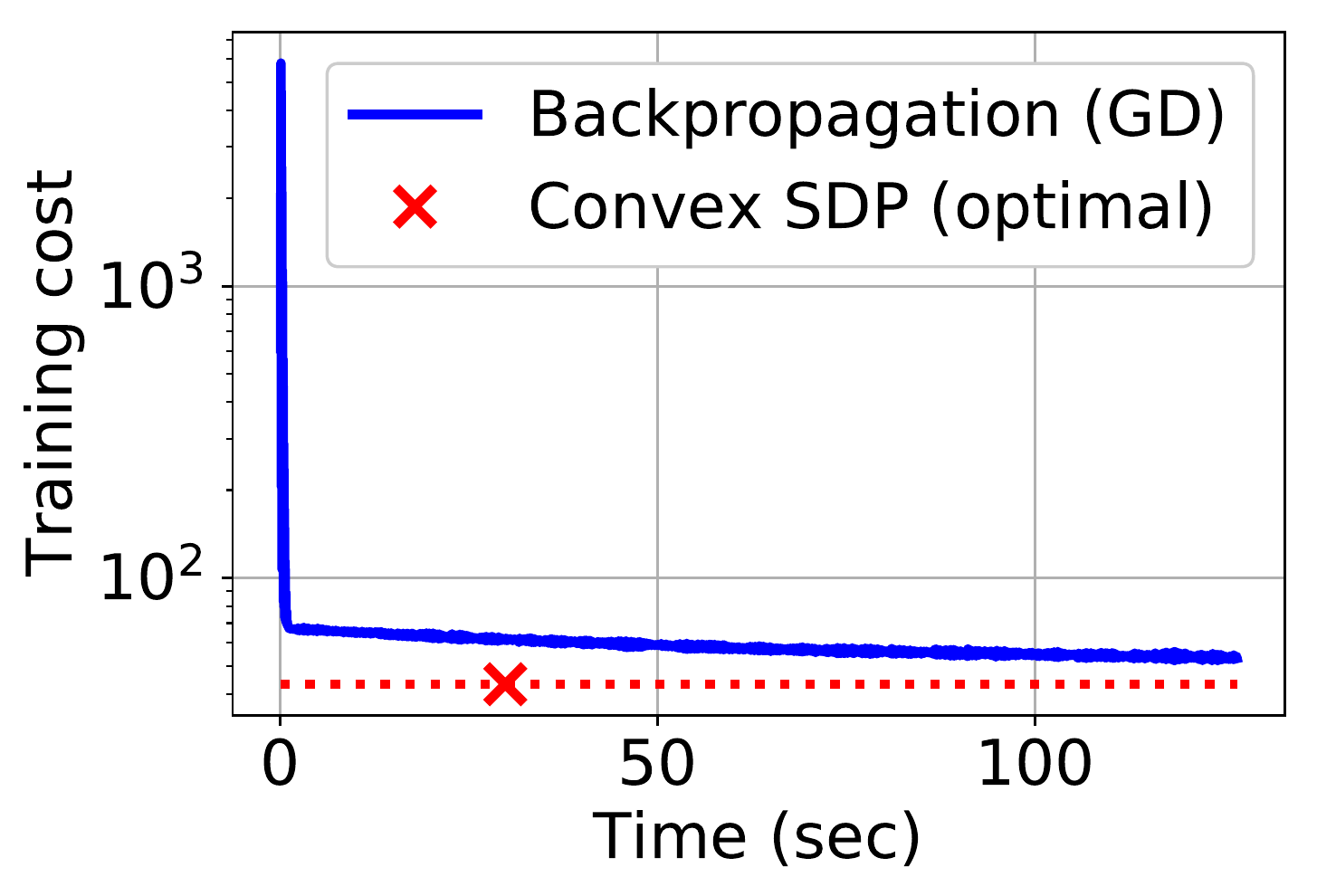}}
  \centerline{(a) DS3, training cost}\medskip
\end{minipage}
\hfill
\begin{minipage}[b]{0.24\linewidth}
  \centering
  \centerline{\includegraphics[width=\columnwidth]{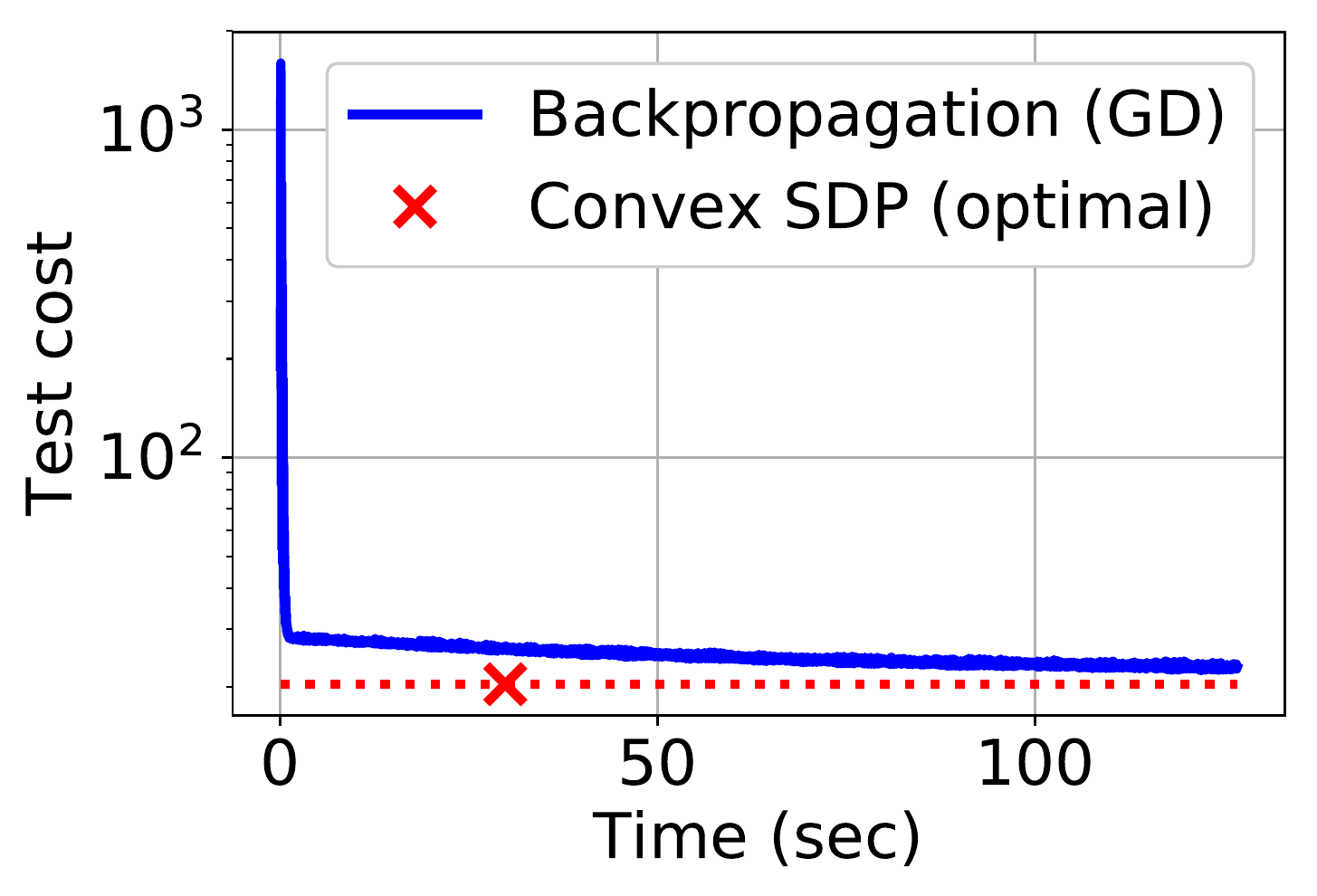}}
  \centerline{(b) DS3, test cost}\medskip
\end{minipage}
\hfill
\begin{minipage}[b]{0.24\linewidth}
  \centering
  \centerline{\includegraphics[width=\columnwidth]{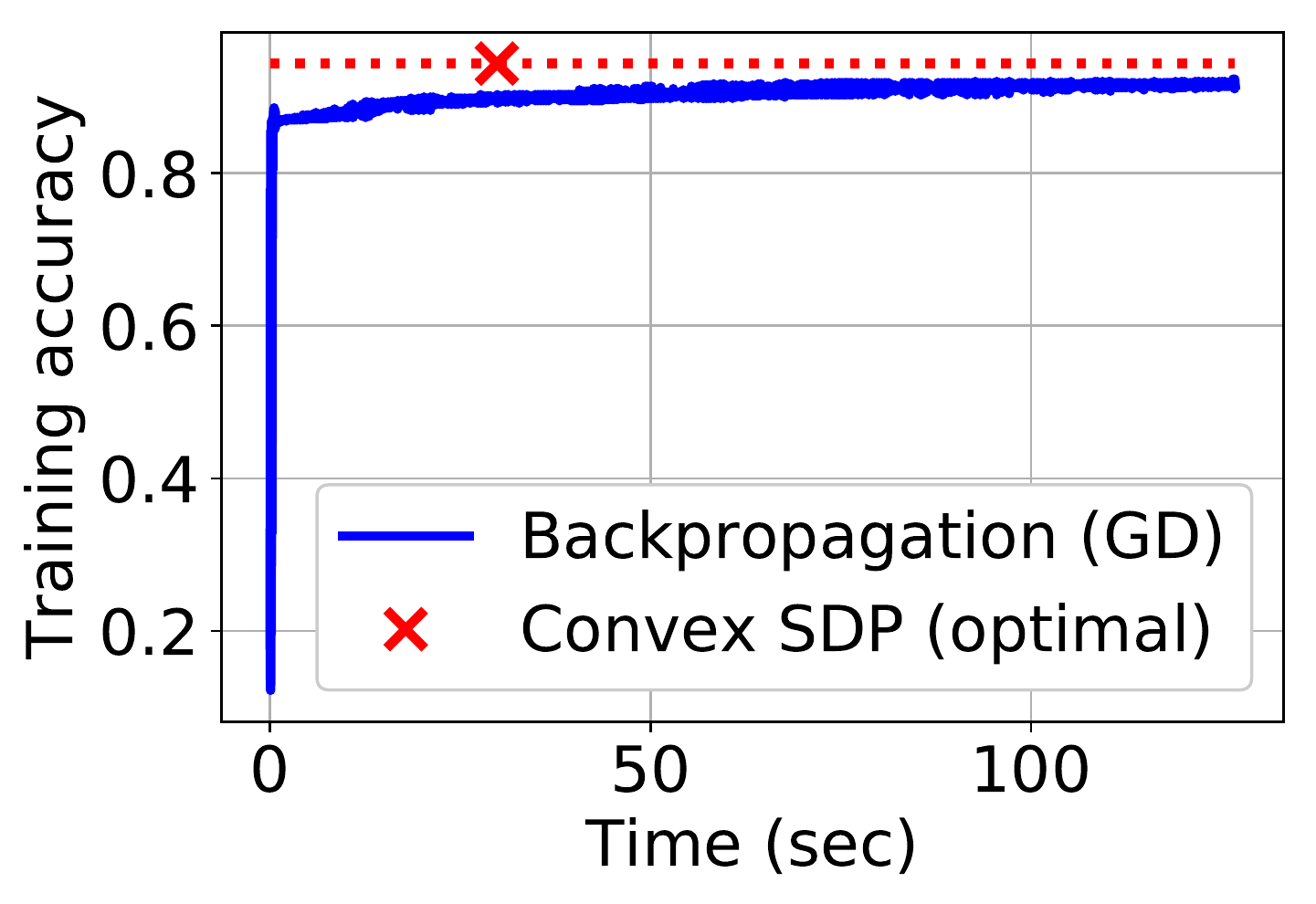}}
  \centerline{(c) DS3, training accuracy}\medskip
\end{minipage}
\hfill
\begin{minipage}[b]{0.24\linewidth}
  \centering
  \centerline{\includegraphics[width=\columnwidth]{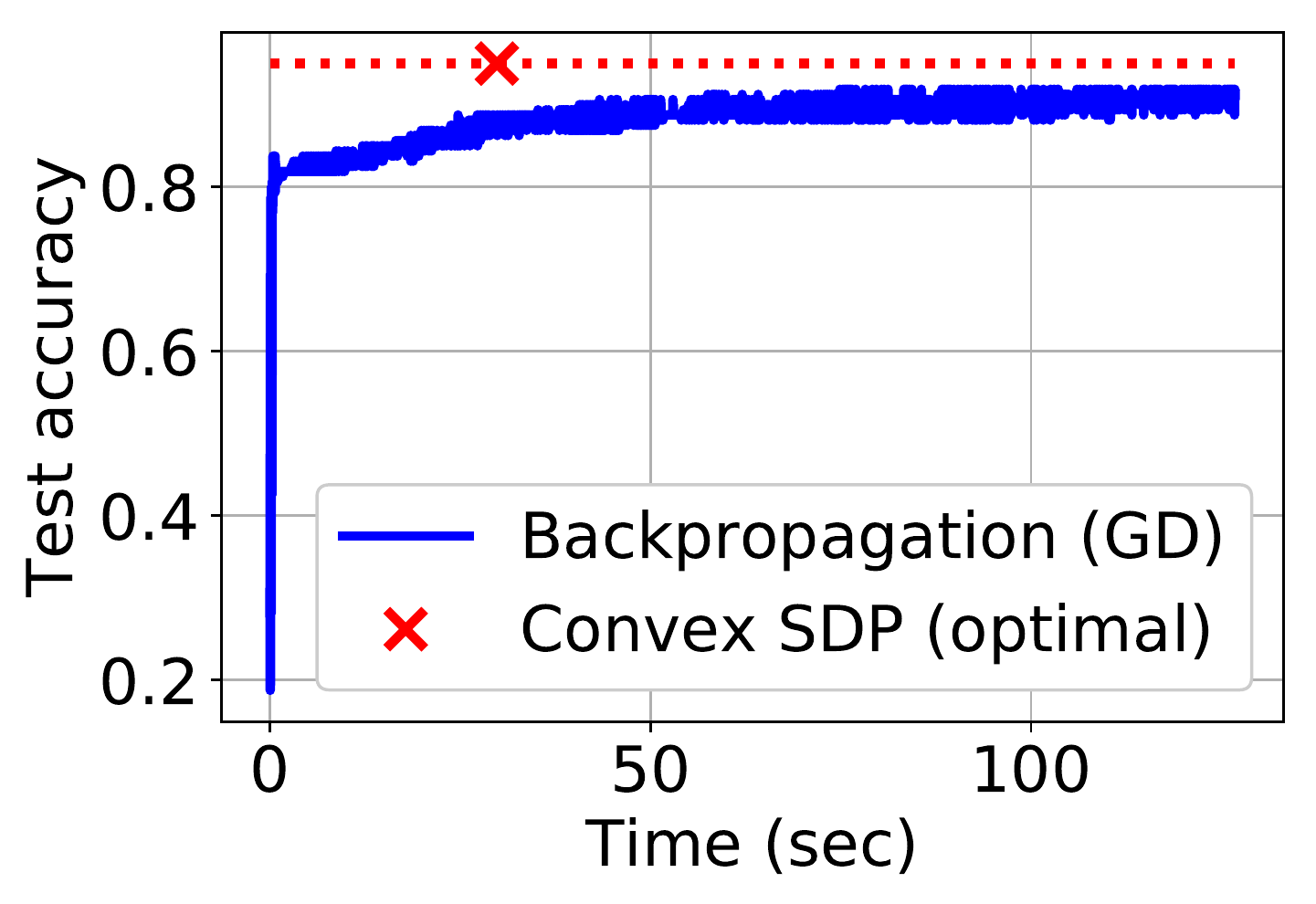}}
  \centerline{(d) DS3, test accuracy}\medskip
\end{minipage}

\begin{minipage}[b]{0.24\linewidth}
  \centering
  \centerline{\includegraphics[width=\columnwidth]{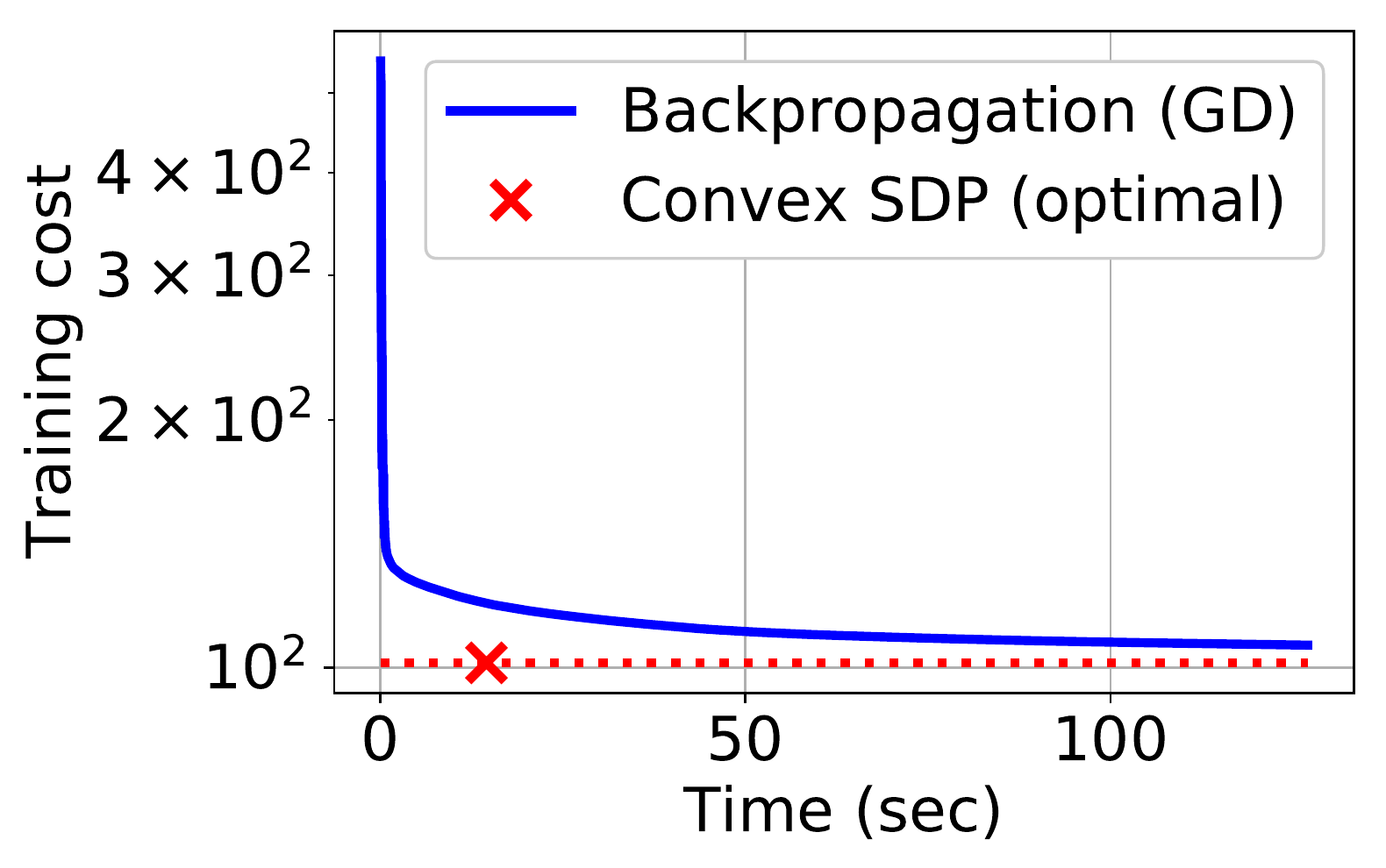}}
  \centerline{(e) DS4, training cost}\medskip
\end{minipage}
\hfill
\begin{minipage}[b]{0.24\linewidth}
  \centering
  \centerline{\includegraphics[width=\columnwidth]{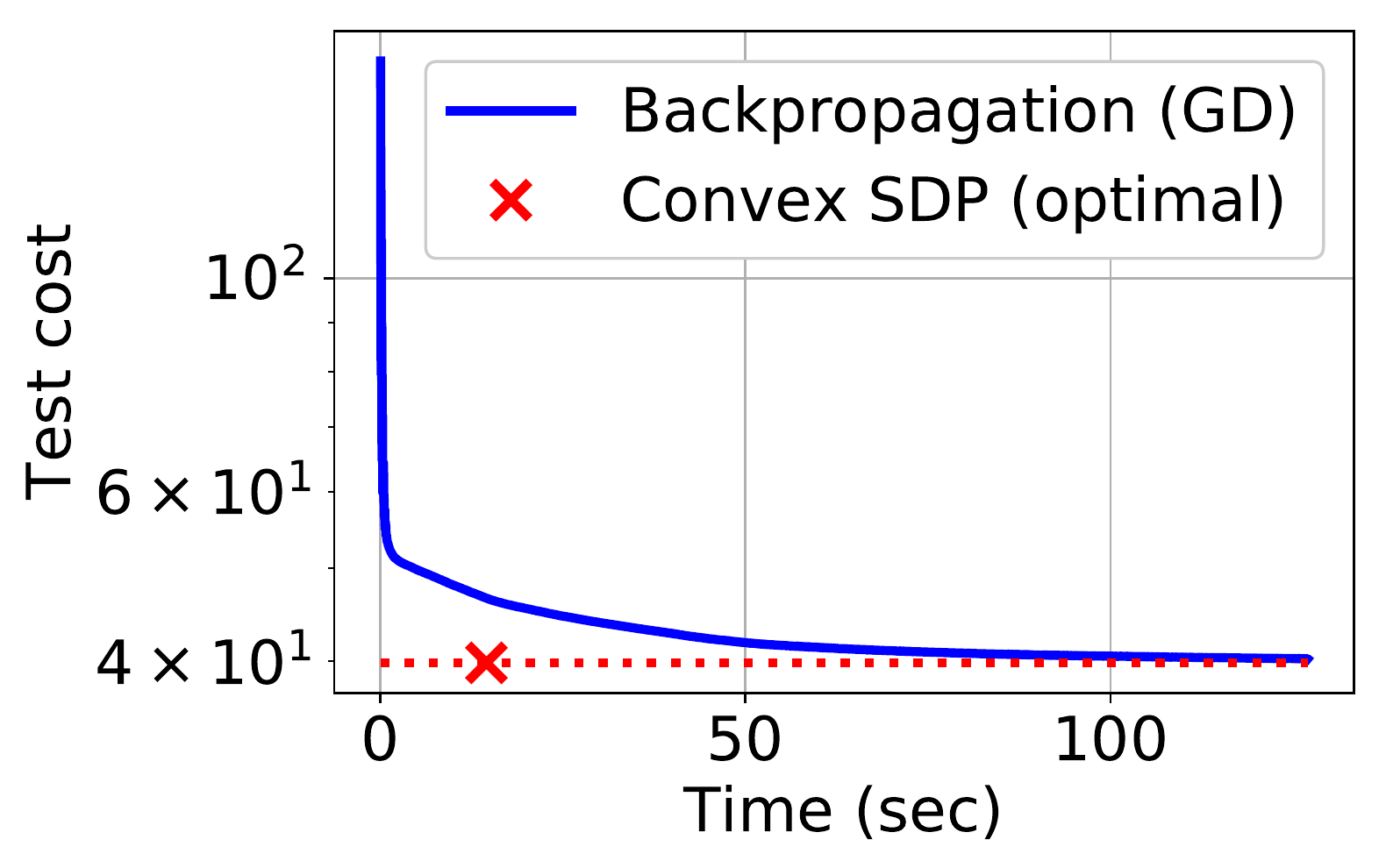}}
  \centerline{(f) DS4, test cost}\medskip
\end{minipage}
\hfill
\begin{minipage}[b]{0.24\linewidth}
  \centering
  \centerline{\includegraphics[width=\columnwidth]{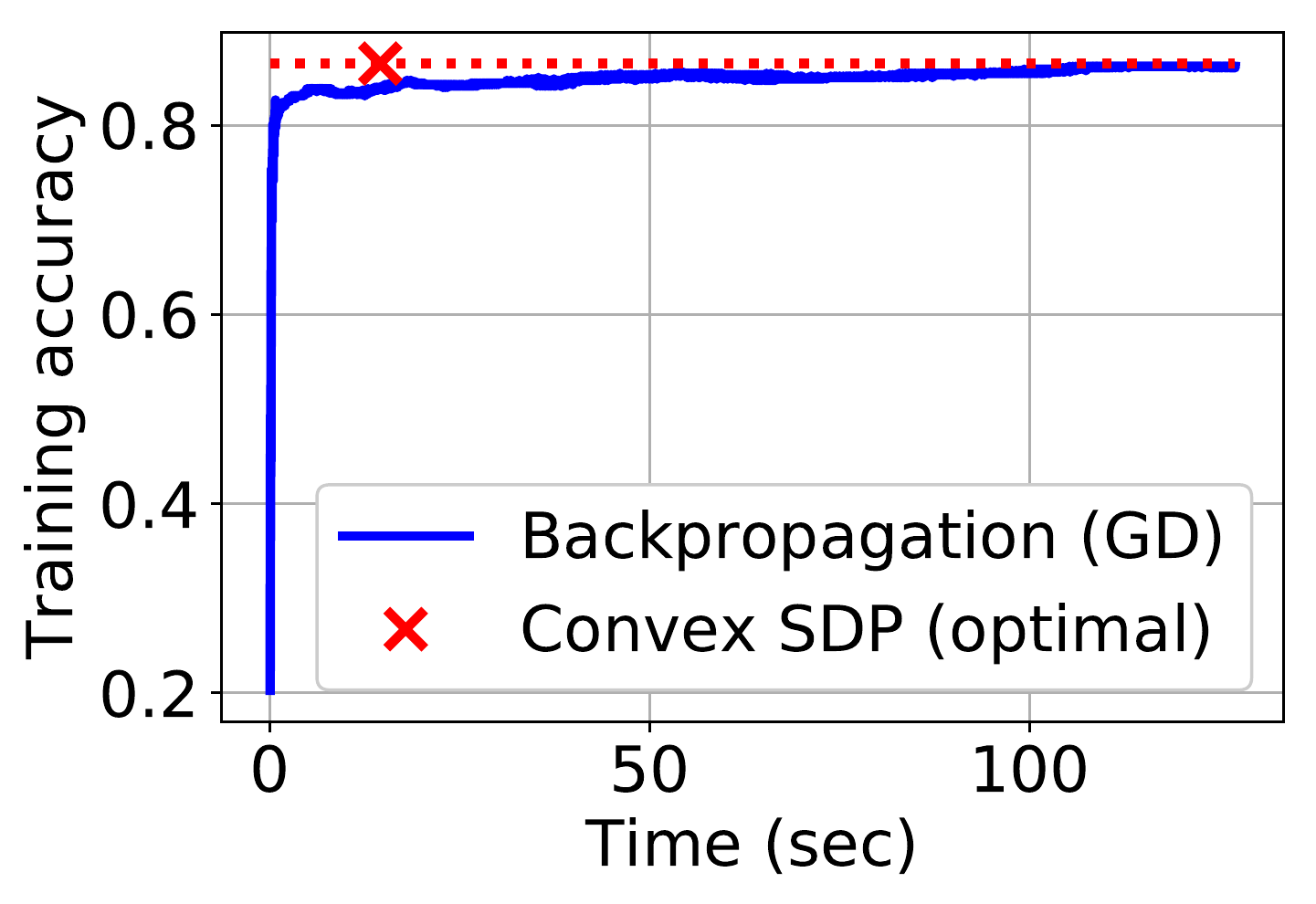}}
  \centerline{(g) DS4, training accuracy}\medskip
\end{minipage}
\hfill
\begin{minipage}[b]{0.24\linewidth}
  \centering
  \centerline{\includegraphics[width=\columnwidth]{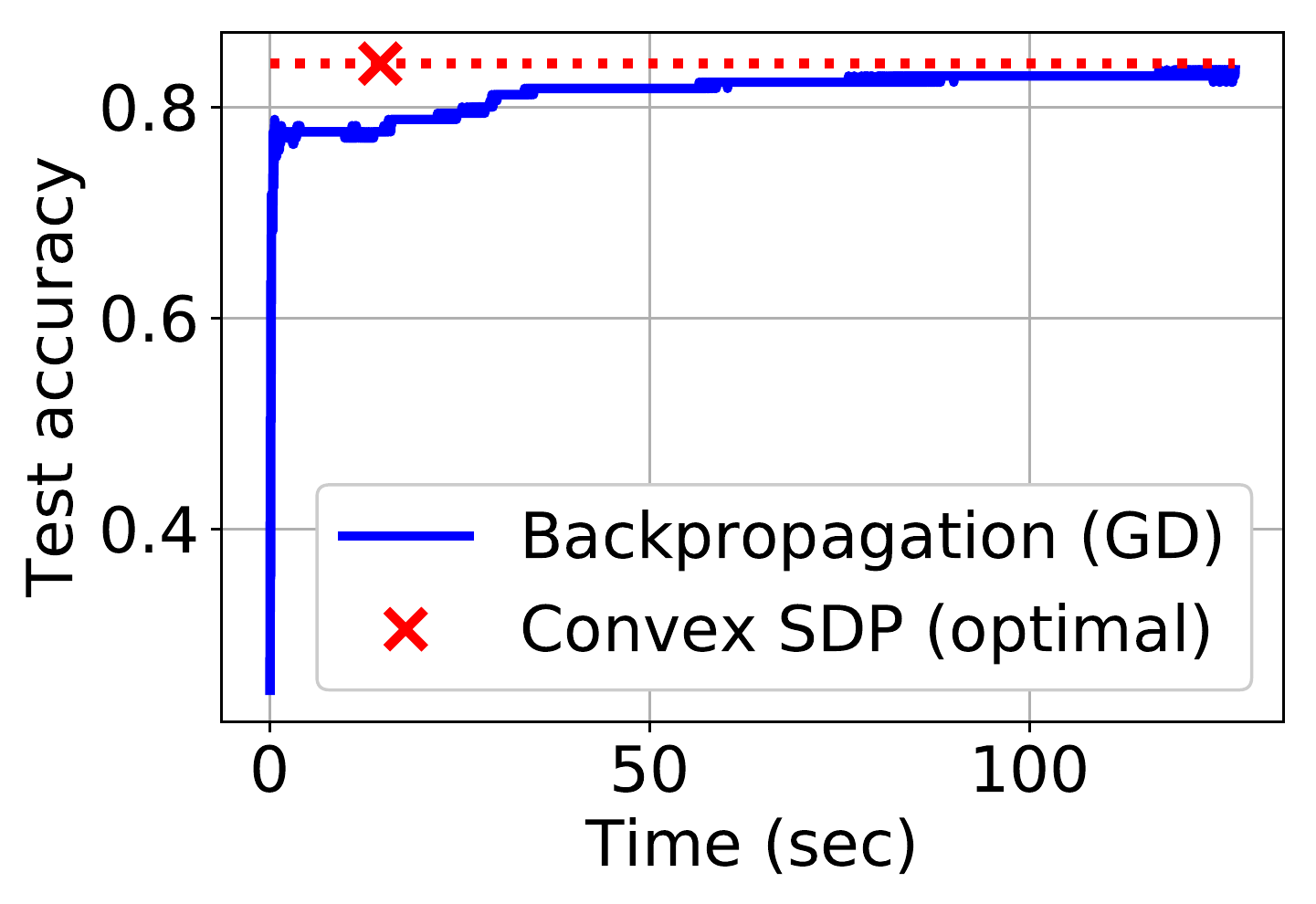}}
  \centerline{(h) DS4, test accuracy}\medskip
\end{minipage}
\caption{Results on UCI \textit{multiclass} classification datasets. DS3: dataset 3 is the annealing dataset ($n=638,d=31,C=5$), DS4: dataset 4 is the statlog vehicle dataset ($n=676,d=18,C=4$). Polynomial activation with $a=0.09$, $b=0.5$, $c=0.47$ is used. Number of neurons that the convex program found is $172$ and $107$ for DS3 and DS4, respectively. The regularization coefficient is $\beta=1$ both for DS3 and DS4.}
\label{fig:uci_multiclass_classification}
\end{figure}

Table \ref{tbl:uci_table_poly_act} shows the classification accuracies of various fully connected neural network architectures on binary classification UCI datasets. For each dataset, the training and validation partitions are as pre-processed in \cite{delgado2014ucipreprocessed}. The training and validation partitions are used to select the best hyperparameters. The hyperparameter search for the non-convex models includes searching for the best regularization coefficient $\beta$ and learning rate. Gradient descent has been used to optimize the non-convex models and the number of epochs is $1000$. After determining the best hyperparameters, we compute the $4$-fold cross validation accuracy and report it in this table. The partitions for the $4$-fold cross validation are also the same as those pre-processed by \cite{delgado2014ucipreprocessed}. Furthermore, for the results shown in Table \ref{tbl:uci_table_poly_act}, the number of neurons for all the non-convex models is set to $2(d+1)$, which is the maximum number of neurons that the polynomial activation convex SDP could output (see Theorem \ref{thm:poly_act_thm}). Table \ref{tbl:uci_table_poly_act} shows that the convex SDP achieves better or similar accuracy values compared to the non-convex models on most of the datasets.

\begin{table}
\scalebox{0.90}{
\begin{tabular}{lll|ll|lll|l}
\hline
 dataset                      & $n$   & $d$   & R-Q   & P-C   & Cvx 111 & Cvx r-app & Cvx s-app & max(Cvx) \\ \hline
 acute-inflammation           & 120 & 6   & 100.0 & 100.0 & 100.0   & 100.0     & 100.0     & 100.0    \\
 acute-nephritis              & 120 & 6   & 100.0 & 100.0 & 100.0   & 100.0     & 100.0     & 100.0    \\
 breast-cancer                & 286 & 9   & 69.37 & \textbf{73.59} & 73.59   & 72.89     & 72.89     & \textbf{73.59}    \\
 breast-cancer-wisc-diag      & 569 & 30  & 79.05 & 95.95 & 95.42   & 96.13     & 96.13     & \textbf{96.13}    \\
 breast-cancer-wisc-prog      & 198 & 33  & \textbf{80.1}  & 79.08 & 77.55   & 79.59     & 77.55     & 79.59    \\
 congressional-voting         & 435 & 16  & 61.47 & 61.47 & 61.47   & 61.7      & 61.47     & \textbf{61.7}     \\
 conn-bench-sonar-mines-rocks & 208 & 60  & 79.81 & 79.33 & 81.73   & 79.81     & 75.0      & \textbf{81.73}    \\
 cylinder-bands               & 512 & 35  & 75.59 & 75.2  & 75.59   & 76.95     & 76.37     & \textbf{76.95}    \\
 echocardiogram               & 131 & 10  & 84.09 & 83.33 & 85.61   & 85.61     & 84.09     & \textbf{85.61}    \\
 fertility                    & 100 & 9   & \textbf{89.0}  & 86.0  & 88.0    & 88.0      & 88.0      & 88.0     \\
 haberman-survival            & 306 & 3   & 73.03 & \textbf{73.68} & 71.38   & 73.36     & 72.04     & 73.36    \\
 heart-hungarian              & 294 & 12  & 83.56 & 83.9  & 83.22   & 84.25     & 84.25     & \textbf{84.25}    \\
 hepatitis                    & 155 & 19  & 80.13 & \textbf{89.1}  & 80.13   & 77.56     & 80.13     & 80.13    \\
 horse-colic                  & 368 & 25  & 81.67 & 81.0  & 81.67   & 80.33     & 84.0      & \textbf{84.0}     \\
 ilpd-indian-liver            & 583 & 9   & \textbf{73.63} & 72.95 & 71.92   & 73.12     & 72.95     & 73.12    \\
 molec-biol-promoter          & 106 & 57  & 77.88 & 78.85 & 72.12   & 82.69     & 78.85     & \textbf{82.69}    \\
 monks-1                      & 556 & 6   & \textbf{84.68} & 70.16 & 75.81   & 81.45     & 81.45     & 81.45    \\
 parkinsons                   & 195 & 22  & 90.82 & 87.24 & 88.27   & 86.73     & 91.33     & \textbf{91.33}    \\
 pittsburg-bridges-T-OR-D     & 102 & 7   & \textbf{88.0}  & 88.0  & 82.0    & 87.0      & 87.0      & 87.0     \\
 planning                     & 182 & 12  & 71.67 & 71.11 & 71.67   & 71.11     & 71.11     & 71.67    \\
 spect                        & 265 & 22  & 60.0  & \textbf{75.0}  & 71.25   & 60.0      & 58.75     & 71.25    \\
 spectf                       & 267 & 44  & 72.5  & 75.0  & 58.75   & 60.0      & 77.5      & \textbf{77.5}     \\
 statlog-heart                & 270 & 13  & 82.46 & \textbf{85.07} & 81.72   & 83.58     & 83.21     & 83.58    \\
 vertebral-column-2clases     & 310 & 6   & 87.01 & 85.71 & 82.79   & 87.01     & 84.42     & 87.01    \\
\hline
\end{tabular}
}
\caption{Classification accuracies on binary classification UCI datasets. The first 3 columns are the dataset name, the number of samples $n$ in the dataset, and the dimension $d$ of the samples. The remaining columns show the classification accuracies (percentage) for various models. The highest accuracies for each dataset are shown in bold font. Abbreviations used in the table are as follows: R-Q: Non-convex two-layer neural network model with ReLU activation and quadratic regularization (i.e. weight decay), P-C: Non-convex two-layer neural network model with polynomial activation with coefficients $a=0.09, b=0.5, c=0.47$ and normalized first layer weights and $\ell_1$ norm regularization on the second layer weights, Cvx 111: Convex SDP with polynomial coefficients $a=1, b=1, c=1$, Cvx r-app: Convex SDP with polynomial coefficients $a=0.09, b=0.5, c=0.47$ (approximating ReLU activation), Cvx s-app: Convex SDP with polynomial coefficients $a=0.1, b=0.5, c=0.24$ (approximating swish activation), max(Cvx): The highest accuracy among the convex SDPs.
}
\label{tbl:uci_table_poly_act}
\end{table}


\subsection{Comparison with ReLU Networks}

We compare the classification accuracies for polynomial activation and ReLU activation in Figure \ref{fig:relu_comparison} on three different binary classification UCI datasets. The regularization coefficient has been picked separately for polynomial activation and ReLU activation networks to maximize the accuracy. Figure \ref{fig:relu_comparison} demonstrates that the convex SDP shows competitive accuracy performance and faster run times compared to ReLU activation networks.

\begin{figure} 
\begin{minipage}[b]{0.42\linewidth}
  \centering
  \centerline{\includegraphics[width=\columnwidth]{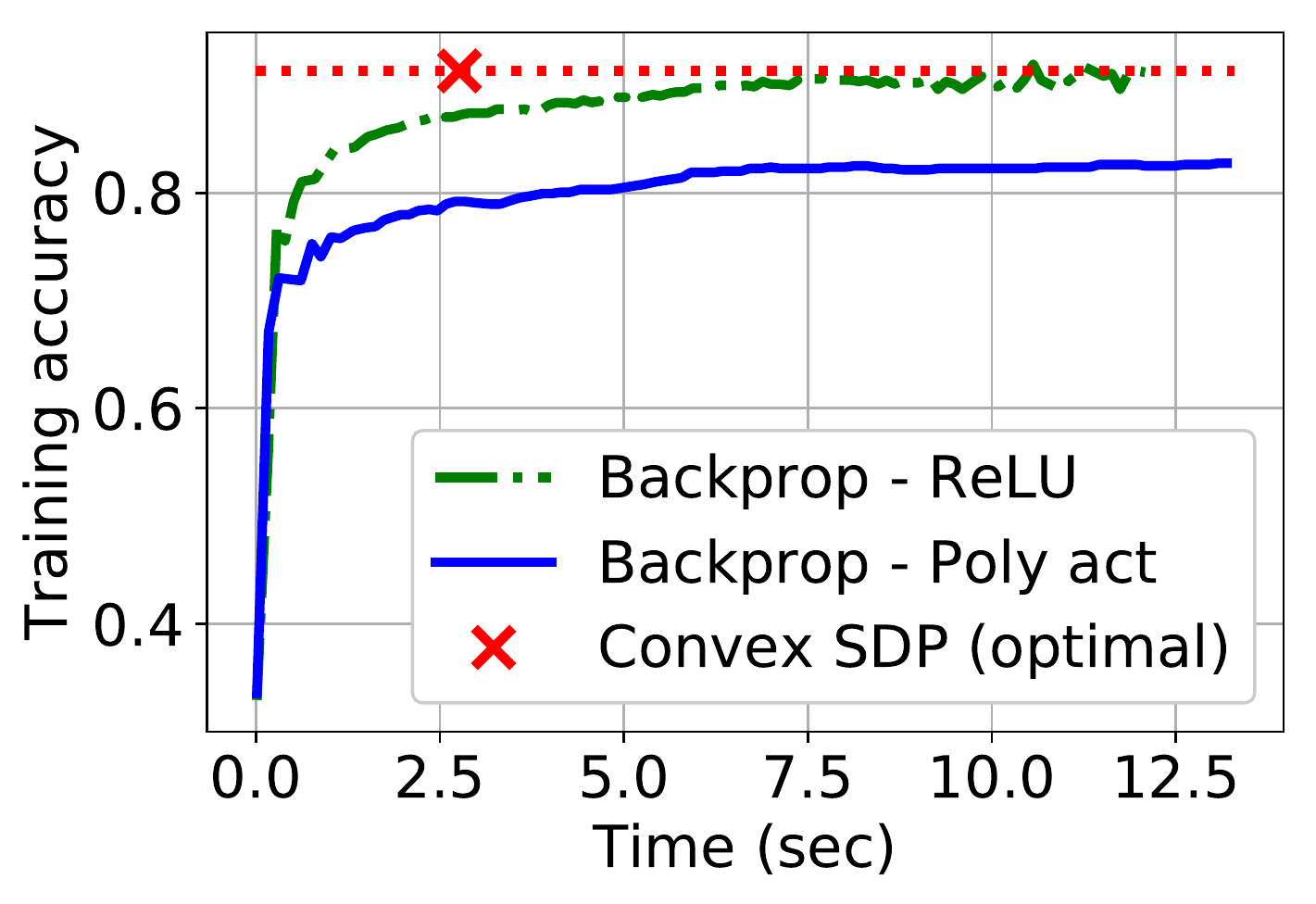}}
  \centerline{(a) DS1, training accuracy}\medskip
\end{minipage}
\hfill
\begin{minipage}[b]{0.42\linewidth}
  \centering
  \centerline{\includegraphics[width=\columnwidth]{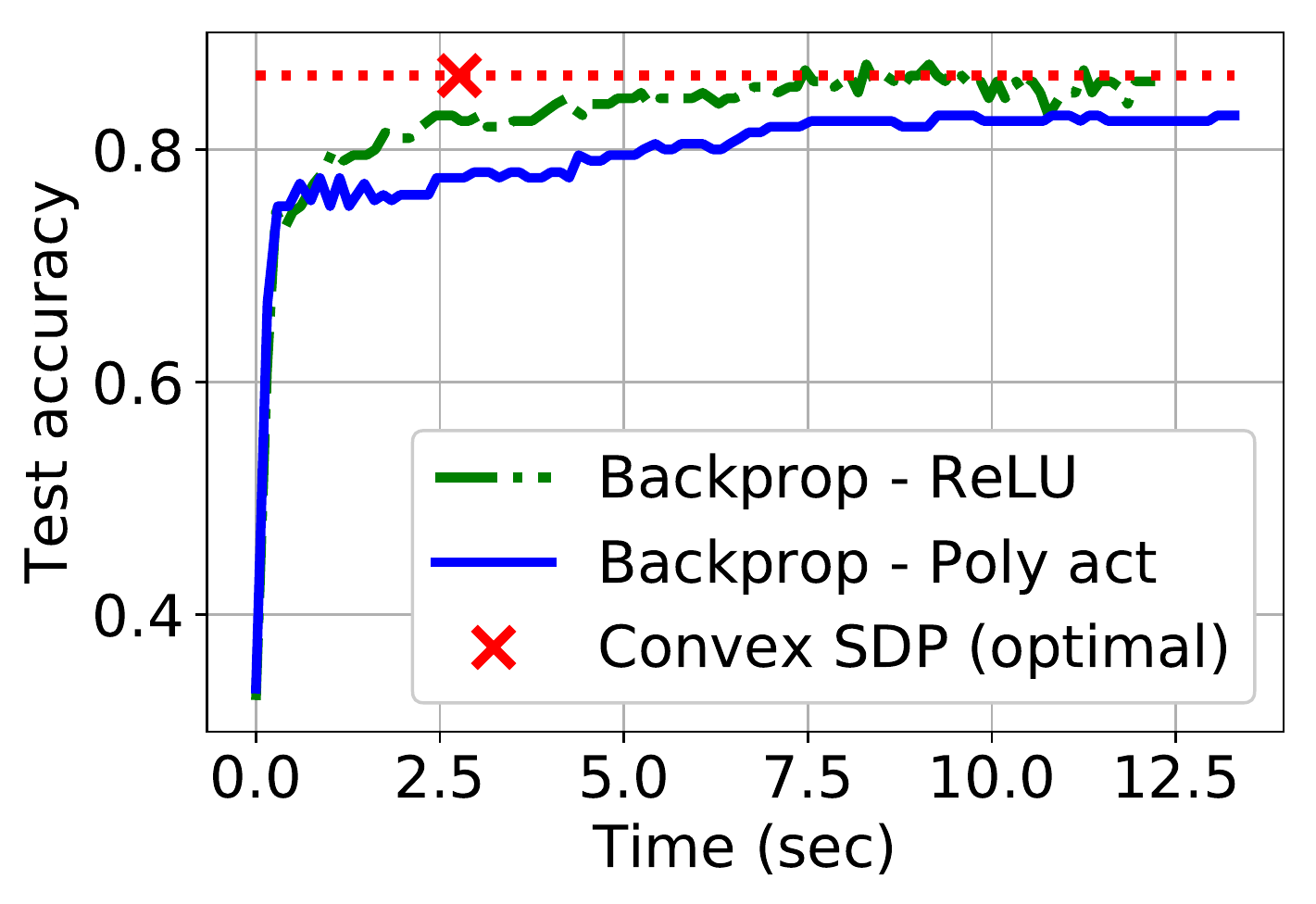}}
  \centerline{(b) DS1, test accuracy}\medskip
\end{minipage}
\begin{minipage}[b]{0.42\linewidth}
  \centering
  \centerline{\includegraphics[width=\columnwidth]{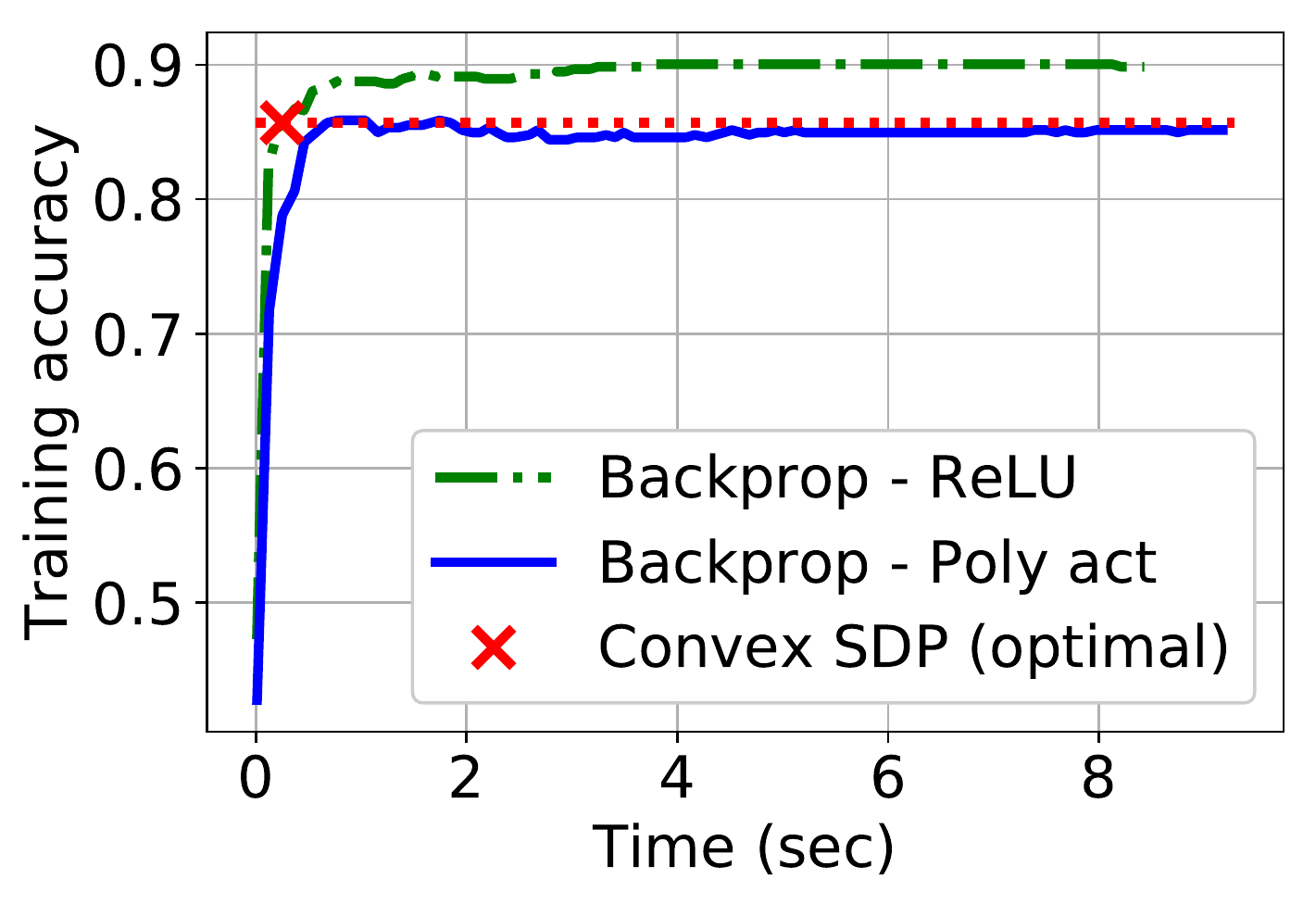}}
  \centerline{(c) DS2, training accuracy}\medskip
\end{minipage}
\hfill
\begin{minipage}[b]{0.42\linewidth}
  \centering
  \centerline{\includegraphics[width=\columnwidth]{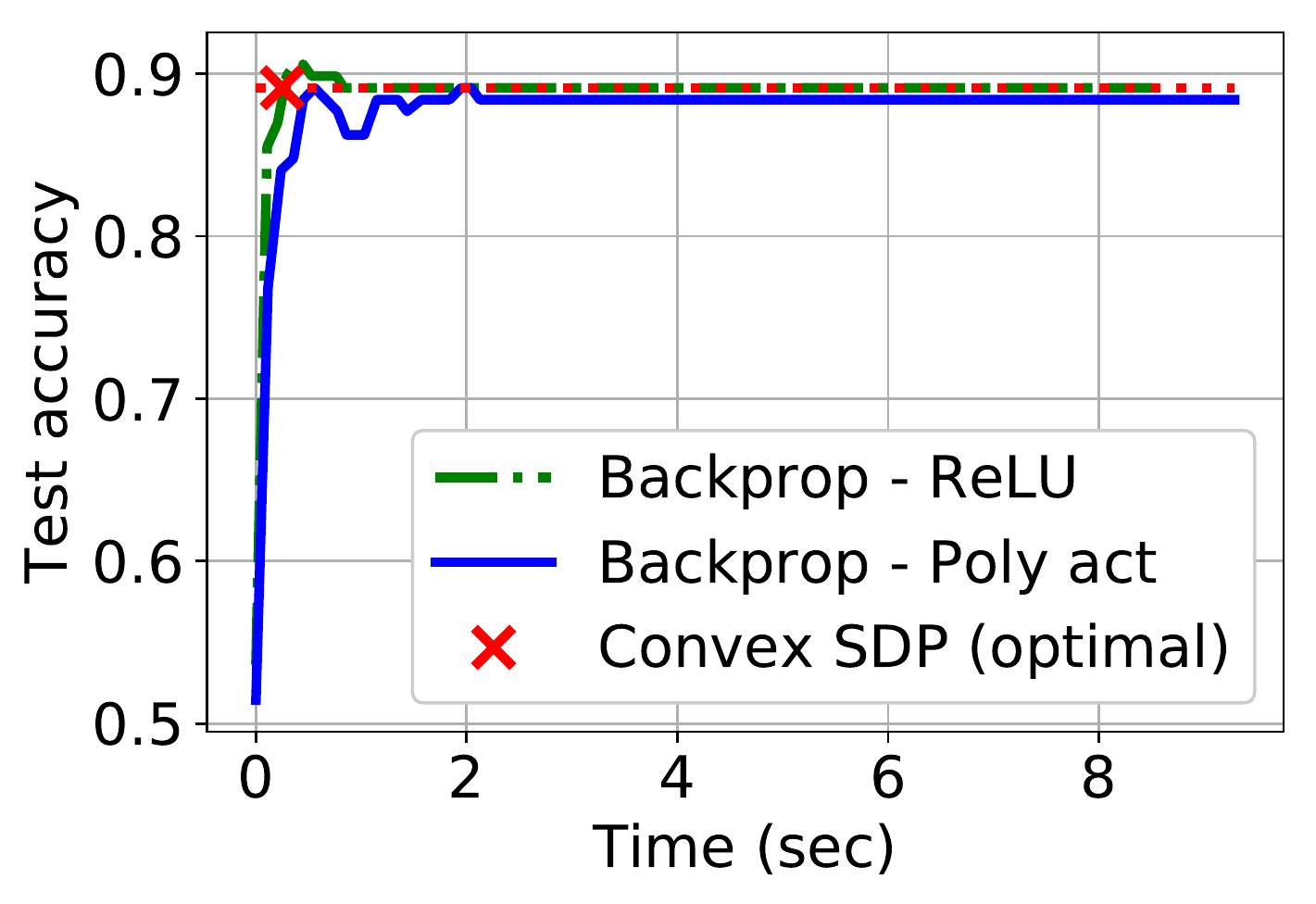}}
  \centerline{(d) DS2, test accuracy}\medskip
\end{minipage}
\begin{minipage}[b]{0.42\linewidth}
  \centering
  \centerline{\includegraphics[width=\columnwidth]{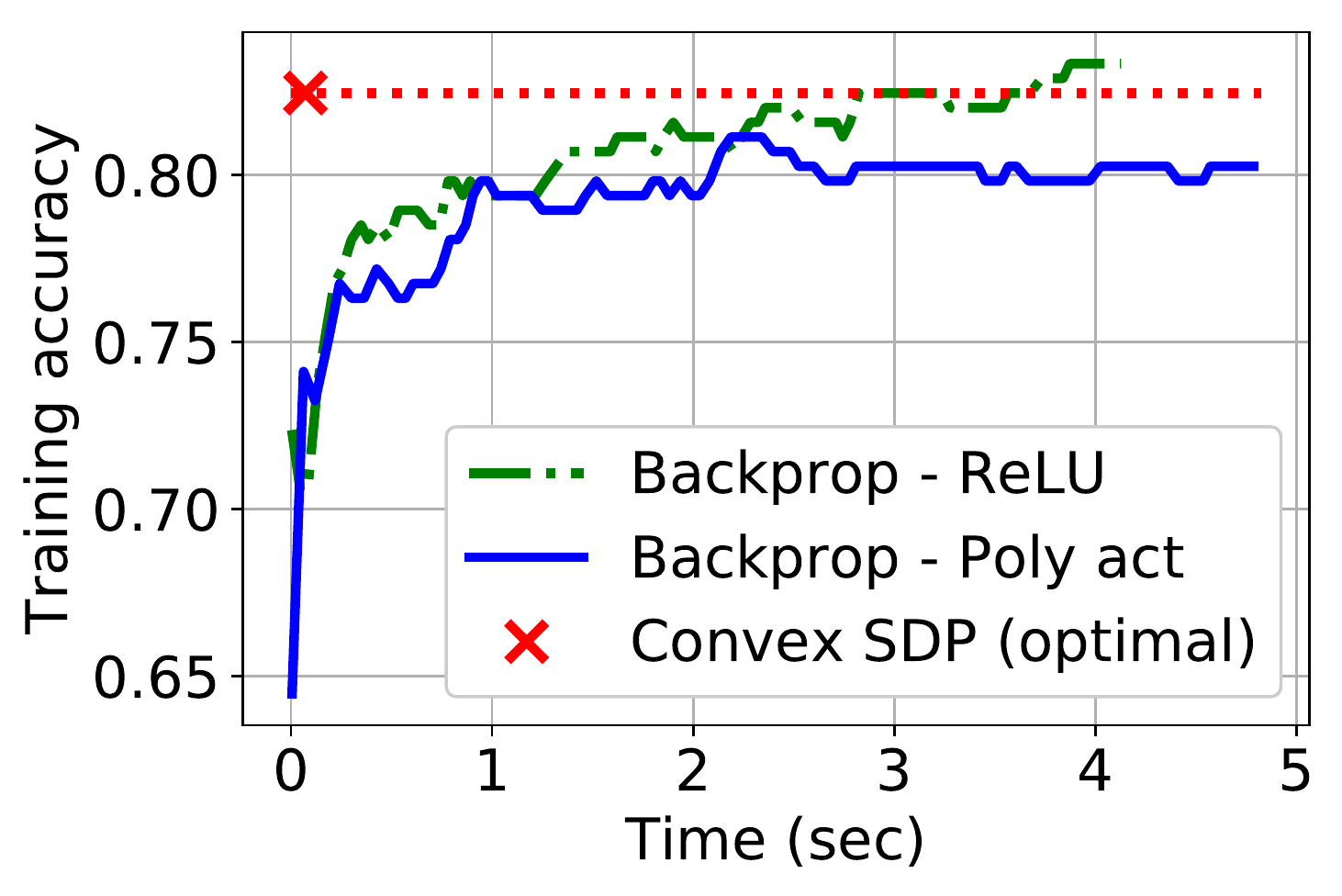}}
  \centerline{(e) DS3, training accuracy}\medskip
\end{minipage}
\hfill
\begin{minipage}[b]{0.42\linewidth}
  \centering
  \centerline{\includegraphics[width=\columnwidth]{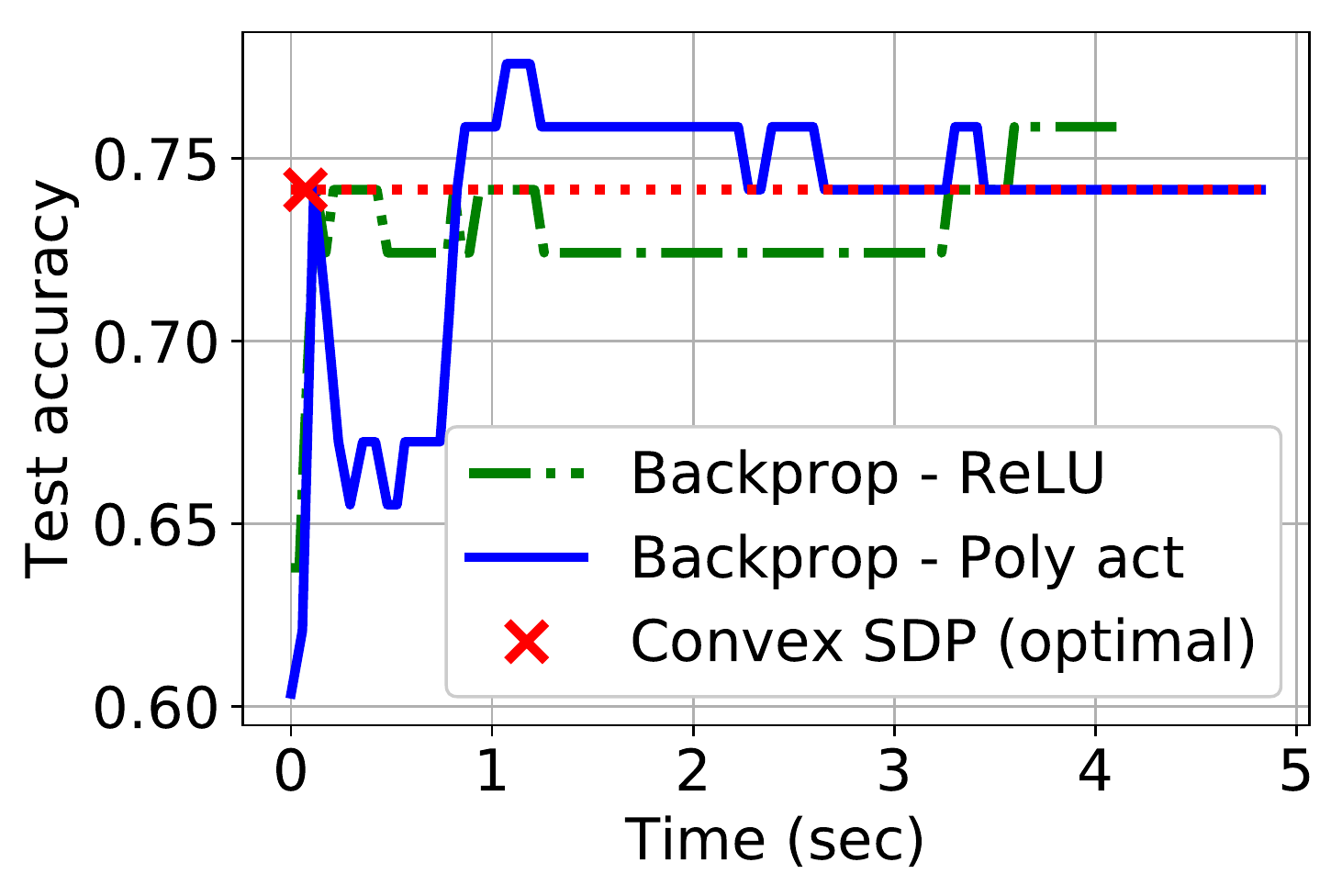}}
  \centerline{(f) DS3, test accuracy}\medskip
\end{minipage}

\caption{Comparison of classification accuracies for neural networks with ReLU activation, polynomial activation ($a=0.09,b=0.5,c=0.47$), and the convex SDP. DS1: dataset 1 is the oocytes-merluccius-nucleus-4d ($n=817,d=41$), DS2: dataset 2 is the credit approval dataset ($n=552,d=15$), DS3: dataset 3 is the breast cancer dataset ($n = 228, d = 9$). }
\label{fig:relu_comparison}
\end{figure}

\subsection{CNN Experiments}
Figure \ref{fig:mnist_cnn_global_avg_pooling} shows the binary classification accuracy performance of the CNN architecture with global average pooling on MNIST \cite{mnist_dataset}, Fashion MNIST \cite{fashionmnist_dataset}, and Cifar-10 \cite{cifar10_dataset} datasets. Figure \ref{fig:mnist_cnn_global_avg_pooling} compares the non-convex tractable problem, the corresponding convex formulation, and the non-convex weight decay formulation. By the weight decay formulation, we mean quadratic regularization on both the first layer filters and the second layer weights. 
We observe that the accuracy of the convex SDP is slightly better or the same as SGD while the run time for the convex SDP solution is consistently shorter than the time it takes for SGD to converge.

\begin{figure} 
\begin{minipage}[b]{0.42\linewidth}
  \centering
  \centerline{\includegraphics[width=\columnwidth]{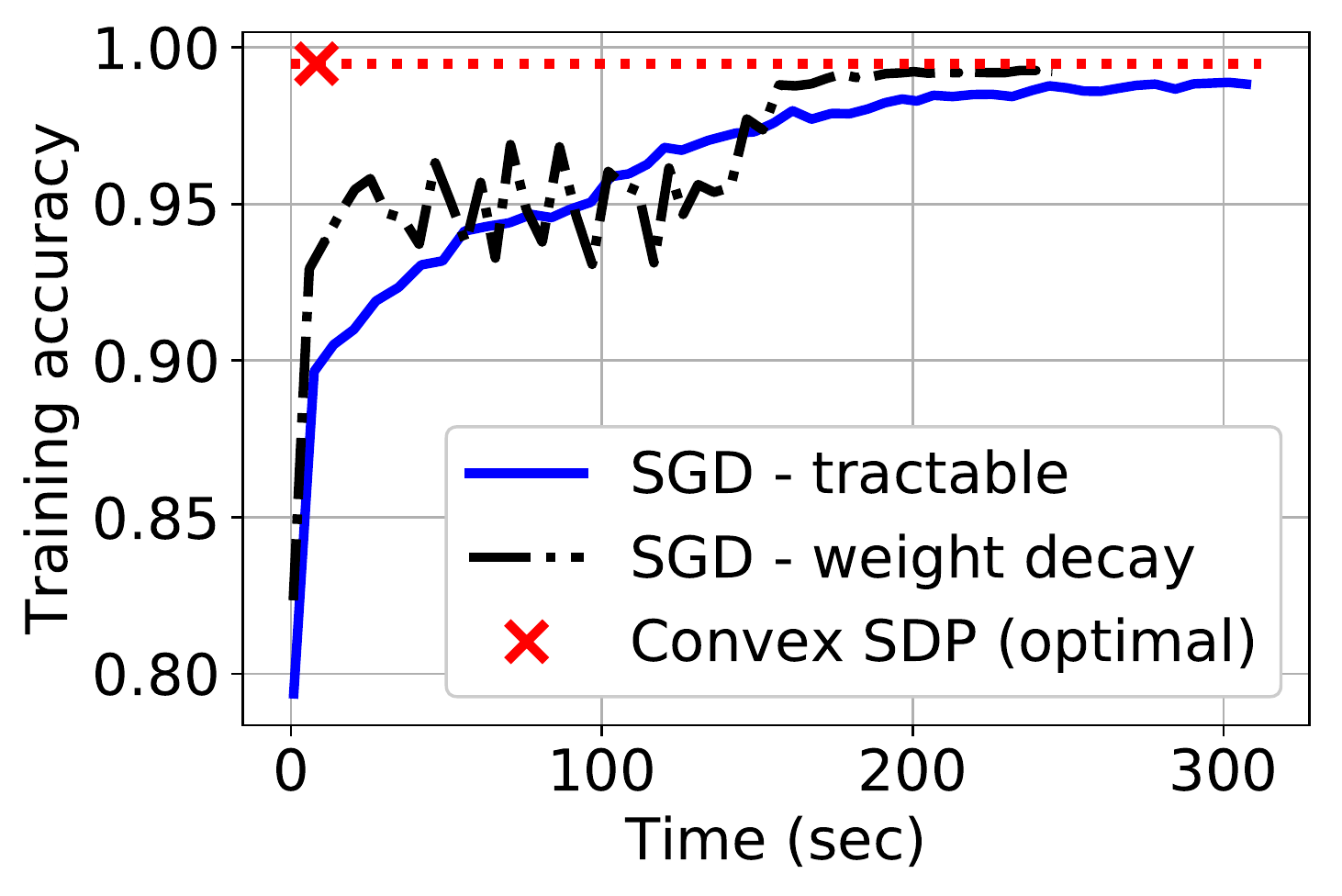}}
  \centerline{(a) MNIST, training accuracy}\medskip
\end{minipage}
\hfill
\begin{minipage}[b]{0.42\linewidth}
  \centering
  \centerline{\includegraphics[width=\columnwidth]{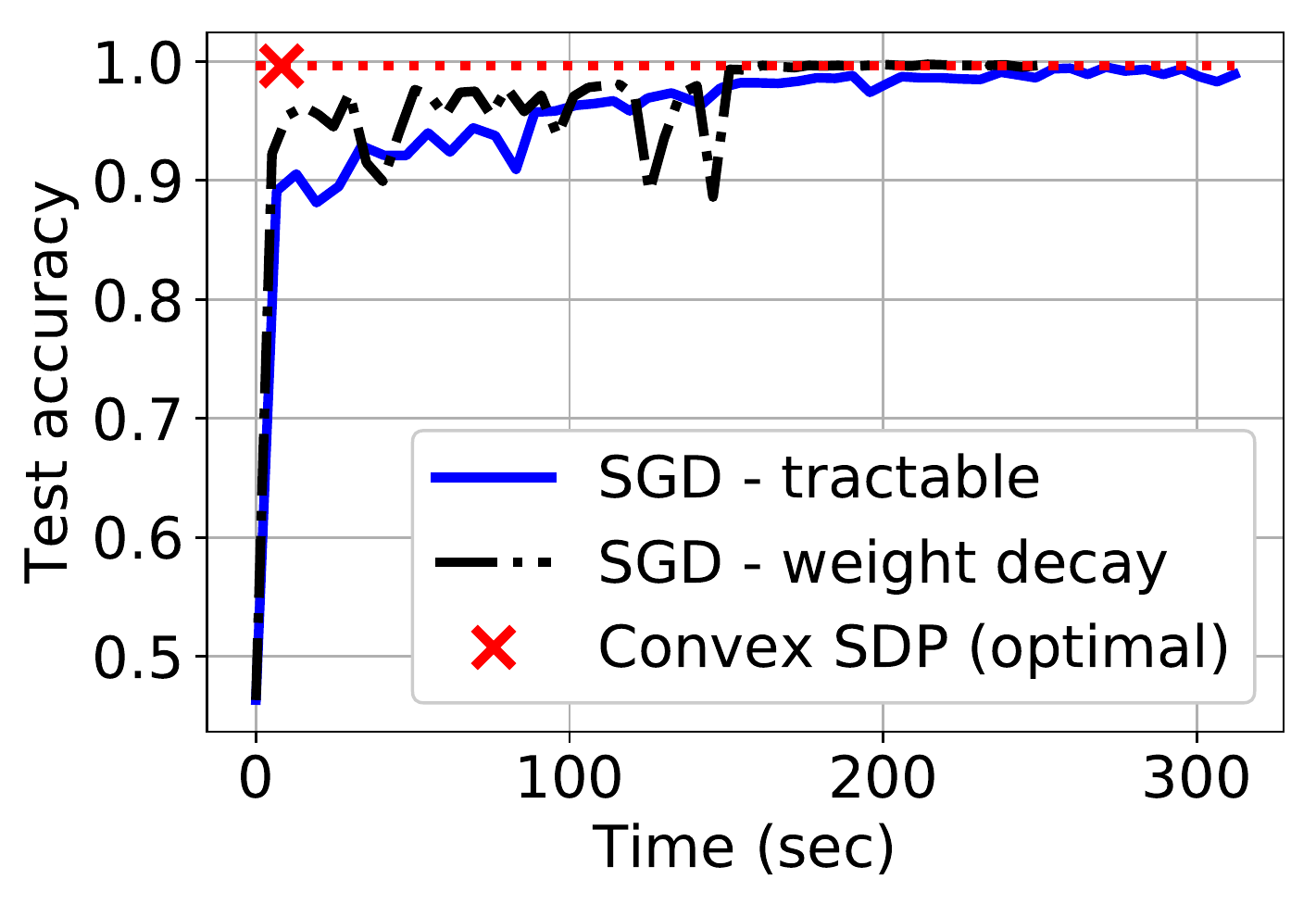}}
  \centerline{(b) MNIST, test accuracy}\medskip
\end{minipage}

\begin{minipage}[b]{0.42\linewidth}
  \centering
  \centerline{\includegraphics[width=\columnwidth]{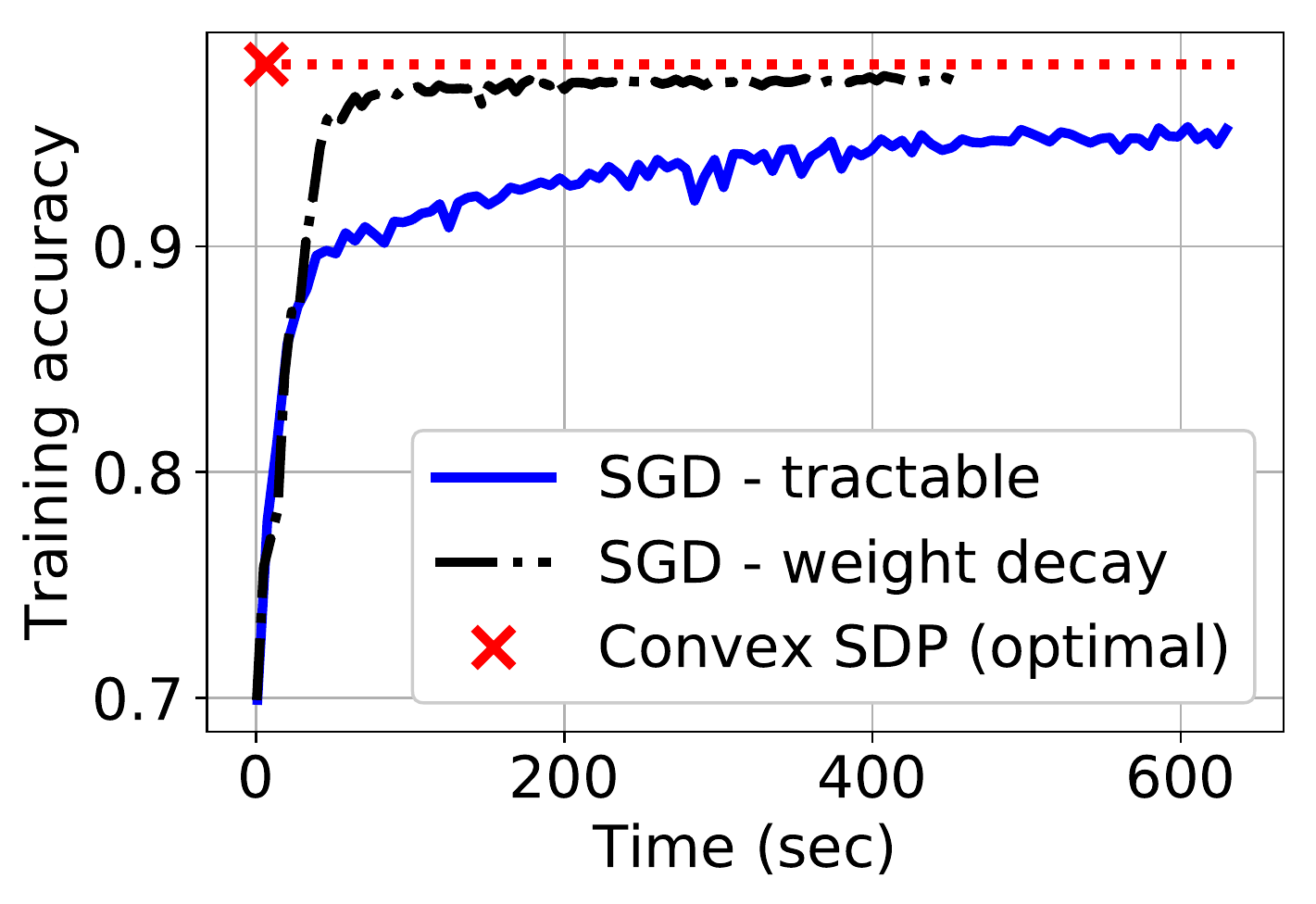}}
  \centerline{(c) Fashion-MNIST, training accuracy}\medskip
\end{minipage}
\hfill
\begin{minipage}[b]{0.42\linewidth}
  \centering
  \centerline{\includegraphics[width=\columnwidth]{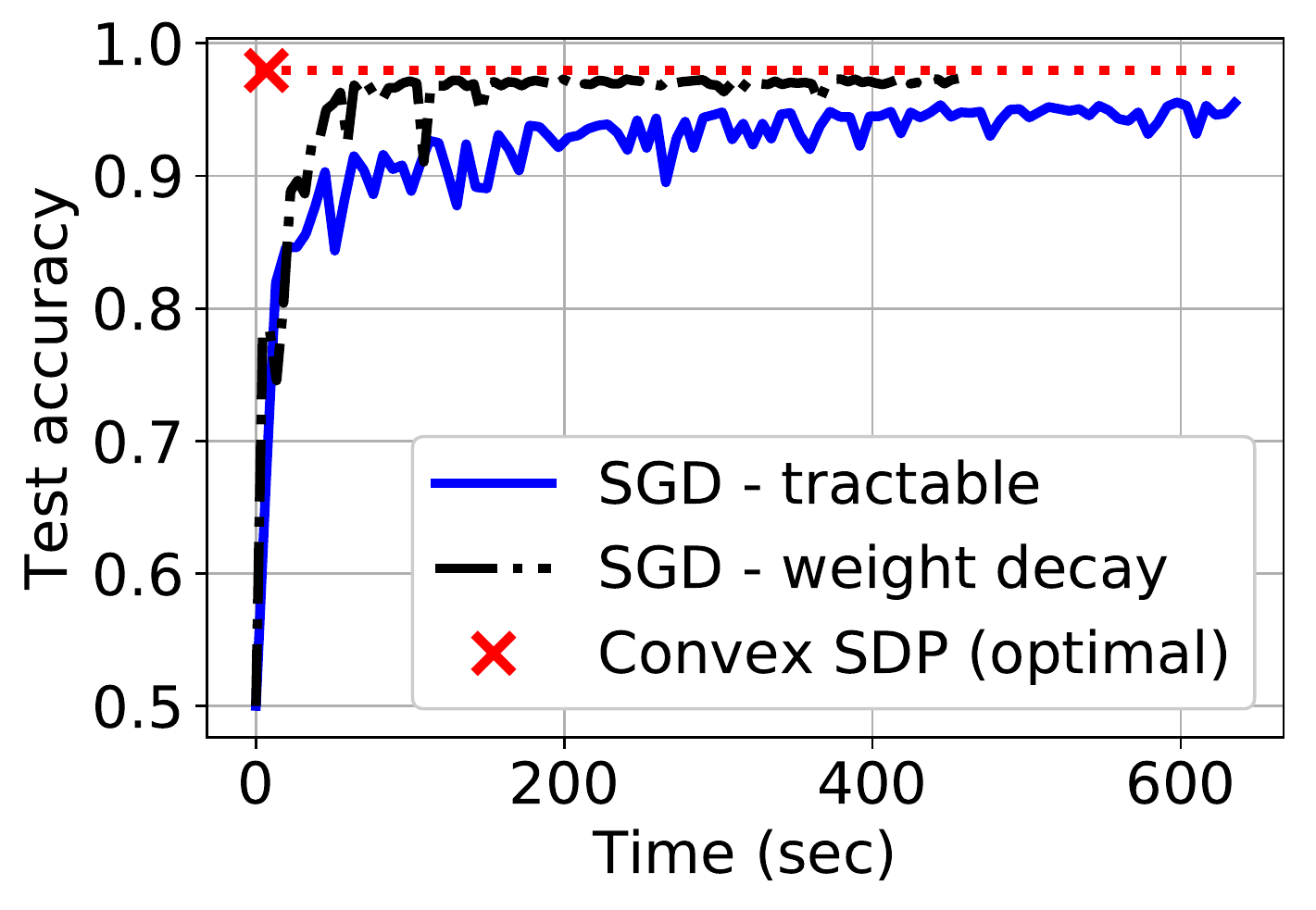}}
  \centerline{(d) Fashion-MNIST, test accuracy}\medskip
\end{minipage}

\begin{minipage}[b]{0.42\linewidth}
  \centering
  \centerline{\includegraphics[width=\columnwidth]{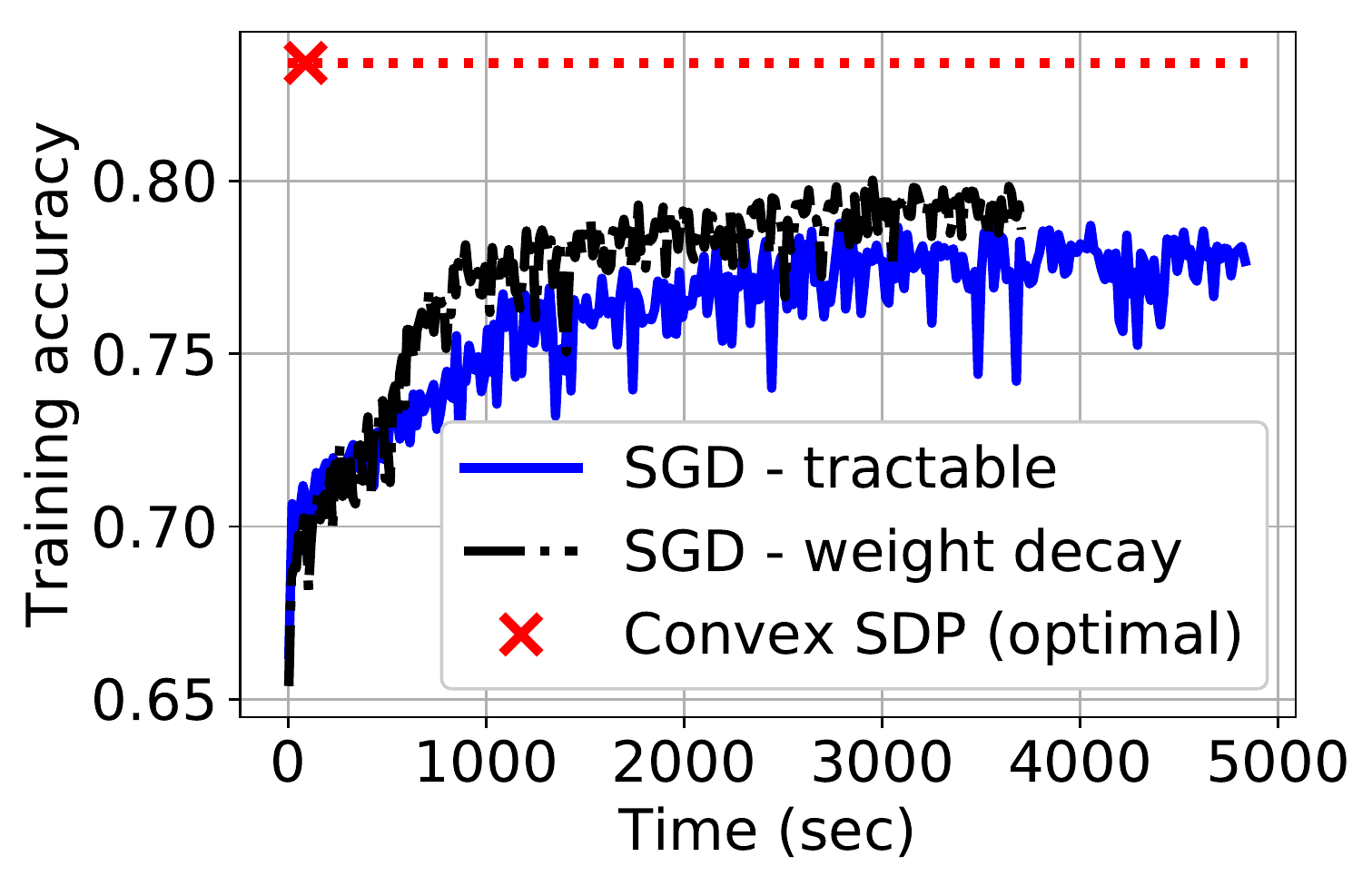}}
  \centerline{(e) Cifar, training accuracy}\medskip
\end{minipage}
\hfill
\begin{minipage}[b]{0.42\linewidth}
  \centering
  \centerline{\includegraphics[width=\columnwidth]{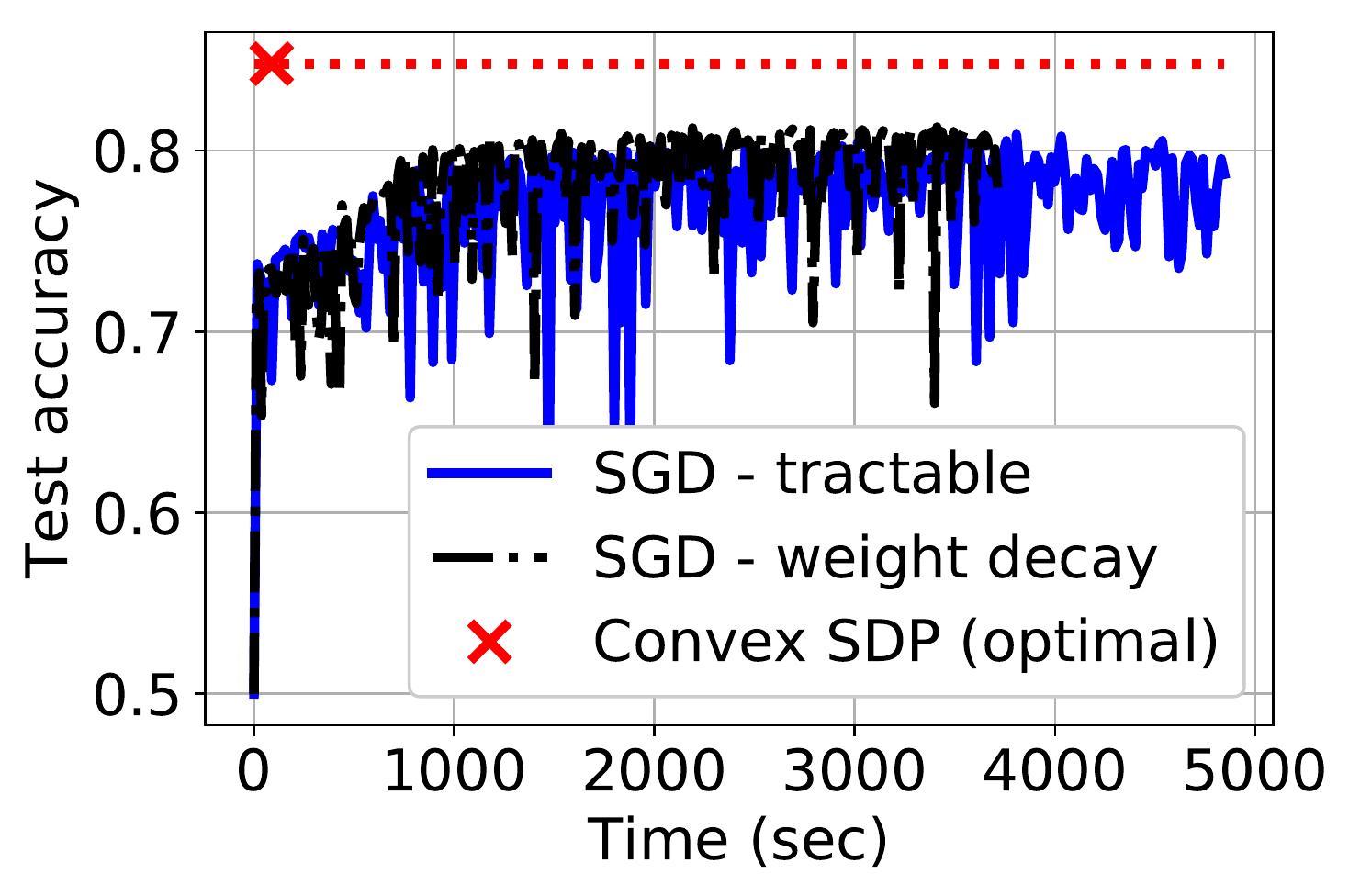}}
  \centerline{(f) Cifar, test accuracy}\medskip
\end{minipage}
\caption{Binary classification with polynomial activation convolutional neural network with pooling and the corresponding convex SDP. Legend labels are as follows. \textit{SGD - tractable}: The non-convex problem in \eqref{eq:convolutional_nonconvex_avgpool}, \textit{SGD - weight decay}: Non-convex problem with quadratic regularization on all weights, \textit{Convex SDP (optimal)}: The convex problem in \eqref{eq:avgpooling_convex_program}. Polynomial coefficients are $a=0.09,b=0.5,c=0.47$. Filter size is $f=3$, stride is $1$, and no padding is used. Batch size for SGD is $100$. The regularization coefficient is $\beta=10^{-6}$. 
Constrained least squares form of the convex program was used for speed (see section \ref{sec:constrained_ls_form}) and the pre-computation step, not shown in the plots, takes $2$ minutes.
Plots a, b show the binary classification accuracy on the first two classes of the MNIST dataset where the classes are the digits 0 and 1 and there are $12600$ gray-scale images of size $28\times 28$. Plots c, d show the binary classification accuracy on the first two classes of Fashion-MNIST dataset where the classes are 'T-shirt/top' and 'Trouser' and there are $12000$ gray-scale images of size $28\times 28$. For plots e, f, the dataset is Cifar-2 (the first two classes of the Cifar-10 dataset) and has $10000$ RGB images each of size $32\times 32\times 3$.
}
\label{fig:mnist_cnn_global_avg_pooling}
\end{figure}


\subsection{Regularization Parameter}

Figure \ref{fig:accuracy_vs_beta_plots} shows how the accuracy changes as a function of the regularization coefficient $\beta$ for the convex problem for two-layer polynomial activation networks. Figure \ref{fig:accuracy_vs_beta_plots} highlights that the choice of the regularization coefficient is critical in the accuracy performance. In plot a, we see that the value of $\beta$ that maximizes the test set accuracy is $\beta=10$ for which the optimal number of neurons $m^*$ is near $20$. We note that for the dataset in plot a, the optimal number of neurons is upper bounded by $m^*\leq 2(d+1)=32$. Similarly for plot b, the best choice for the regularization coefficient is $\beta=1$ and the optimal number of neurons for $\beta=1$ is near $40$. Furthermore, we observe that a higher value for $\beta$ tends to translate to a lower optimal number of neurons $m^*$ (plotted on the right vertical axis). Even though the convex optimization problem in \eqref{eq:polyact_convex_program_final} has a fixed number of variables (in this case, $2(d+1)^2$) for a given dataset, a low number of neurons is still preferable for many reasons such as inference speed. We observe that the number of neurons can be controlled via the regularization coefficient $\beta$.

\begin{figure} 
\begin{minipage}[b]{0.48\linewidth}
  \centering
  \centerline{\includegraphics[width=\columnwidth]{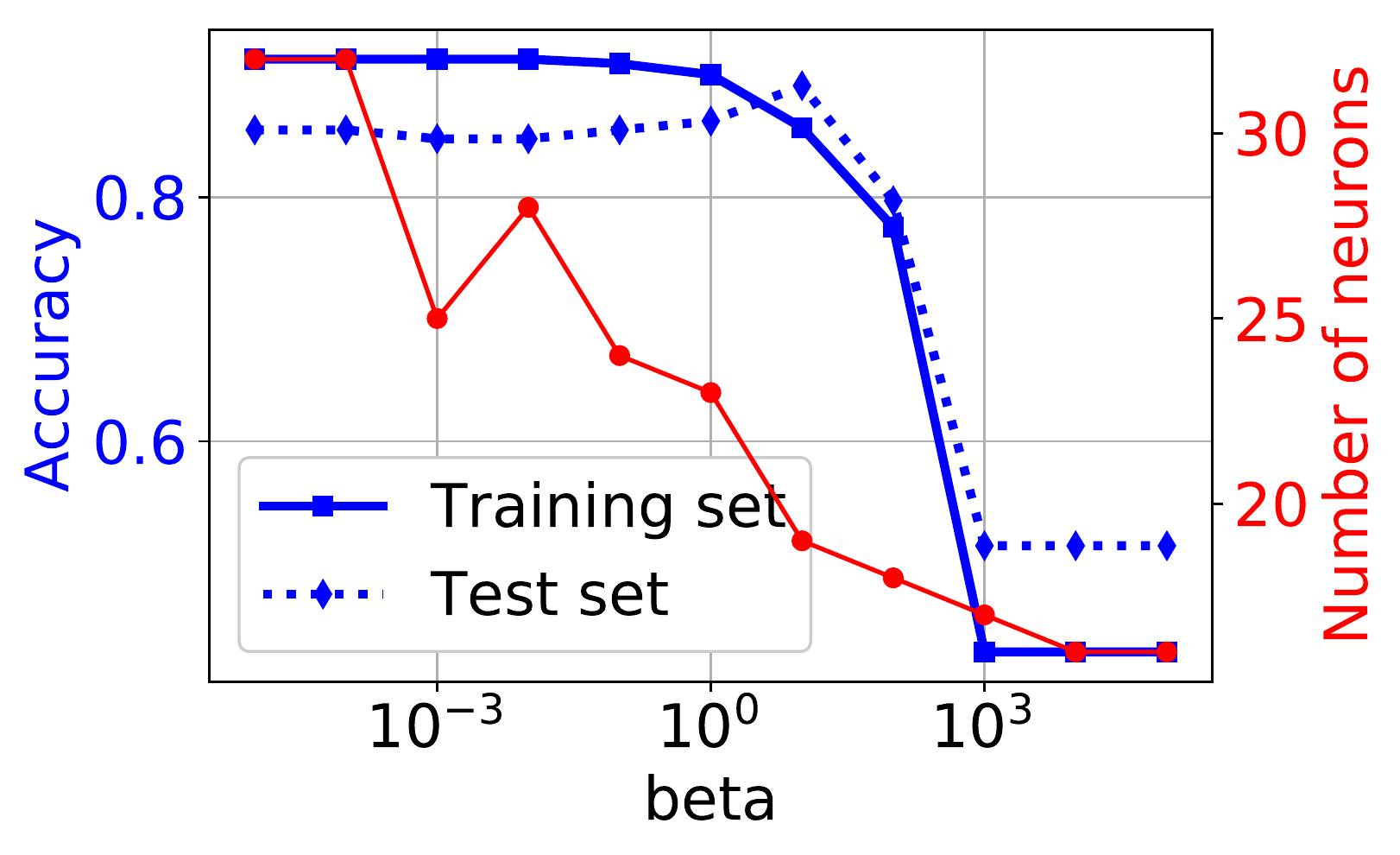}}
  \centerline{(a) credit approval ($n=552,d=15$)}\medskip
\end{minipage}
\hfill
\begin{minipage}[b]{0.48\linewidth}
  \centering
  \centerline{\includegraphics[width=\columnwidth]{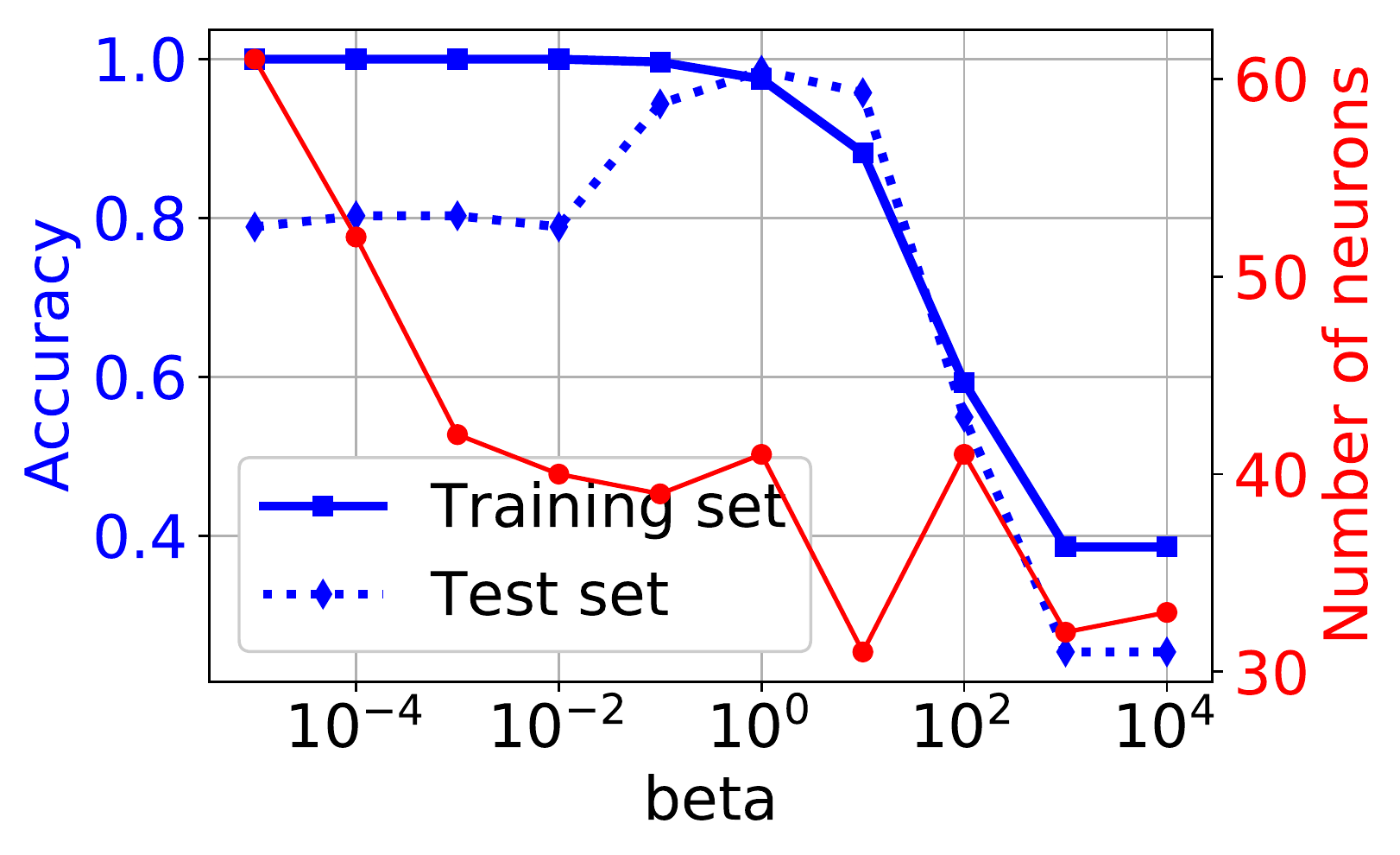}}
  \centerline{(b) ionosphere ($n=280,d=33$)}\medskip
\end{minipage}
\caption{Accuracy (left vertical axis) and optimal number of neurons (right vertical axis) against the regularization coefficient $\beta$ on binary classification datasets. These results have been obtained using the convex program in \eqref{eq:polyact_convex_program_final}.}
\label{fig:accuracy_vs_beta_plots}
\end{figure}

\subsection{Other Losses}
We have so far evaluated the performance of the derived convex programs for squared loss, i.e. $\ell(\hat{y}, y)=\|\hat{y}-y\|_2^2$. We reiterate that the derived convex programs are general in the sense the formulations hold for any convex loss function $\ell$. To verify this numerically, we now present results for additional loss functions such as Huber loss and $\ell_1$ norm loss in Figure \ref{fig:different_losses}. More concretely, Huber loss is defined as $\ell(\hat{y}, y) = \sum_{i=1}^n \text{Huber}(\hat{y}_i-y_i)$ where $\text{Huber}(x)=2|x|-1$ for $|x| > 1$ and $\text{Huber}(x)=x^2$ for $|x| \leq 1$. The $\ell_1$ norm loss is $\ell(\hat{y}, y)=\|\hat{y}-y\|_1$. We observe that in the case of $\ell_1$ norm loss, backpropagation takes longer to converge.

\begin{figure} 
\begin{minipage}[b]{0.48\linewidth}
  \centering
  \centerline{\includegraphics[width=\columnwidth]{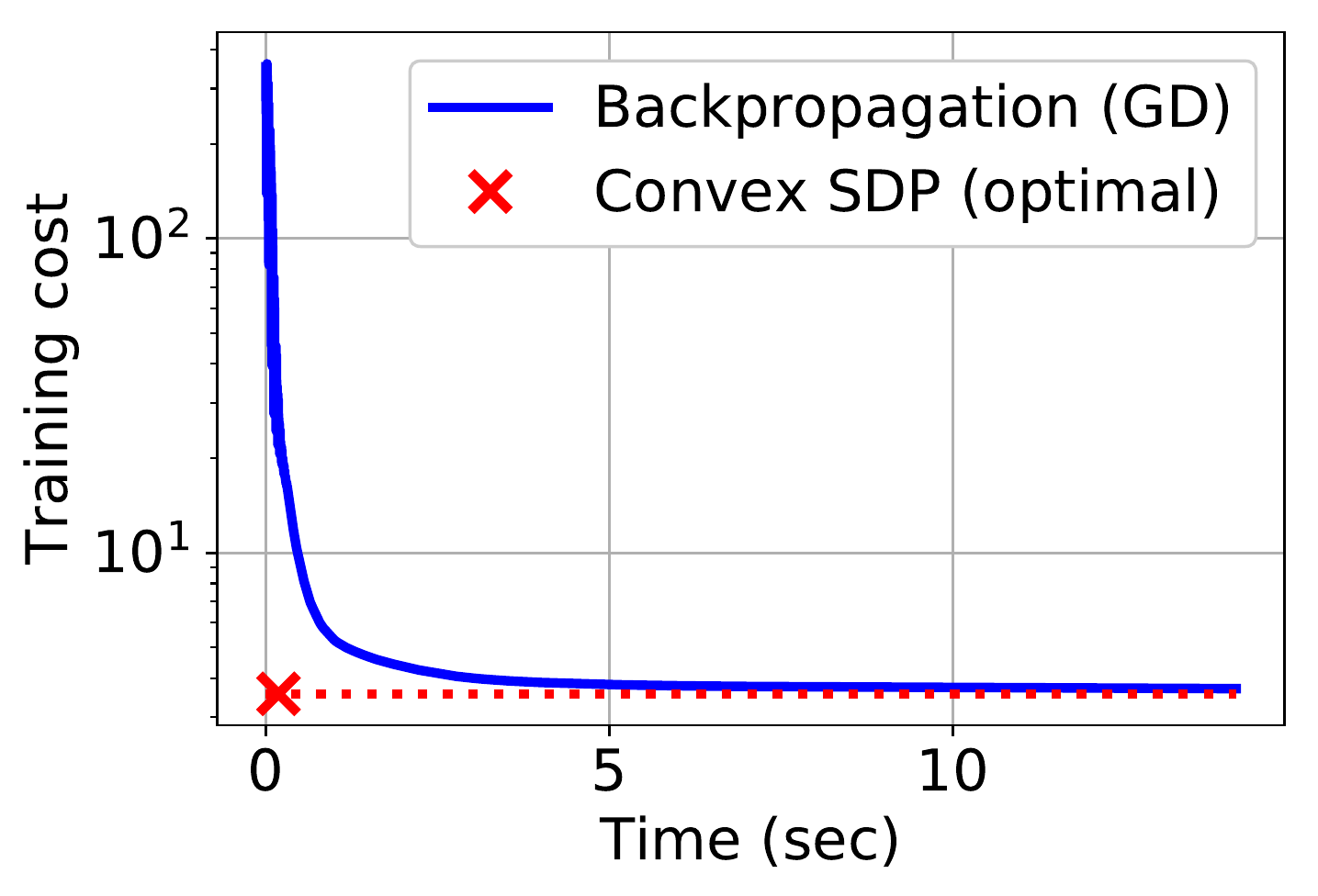}}
  \centerline{(a) Huber loss}\medskip
\end{minipage}
\hfill
\begin{minipage}[b]{0.48\linewidth}
  \centering
  \centerline{\includegraphics[width=\columnwidth]{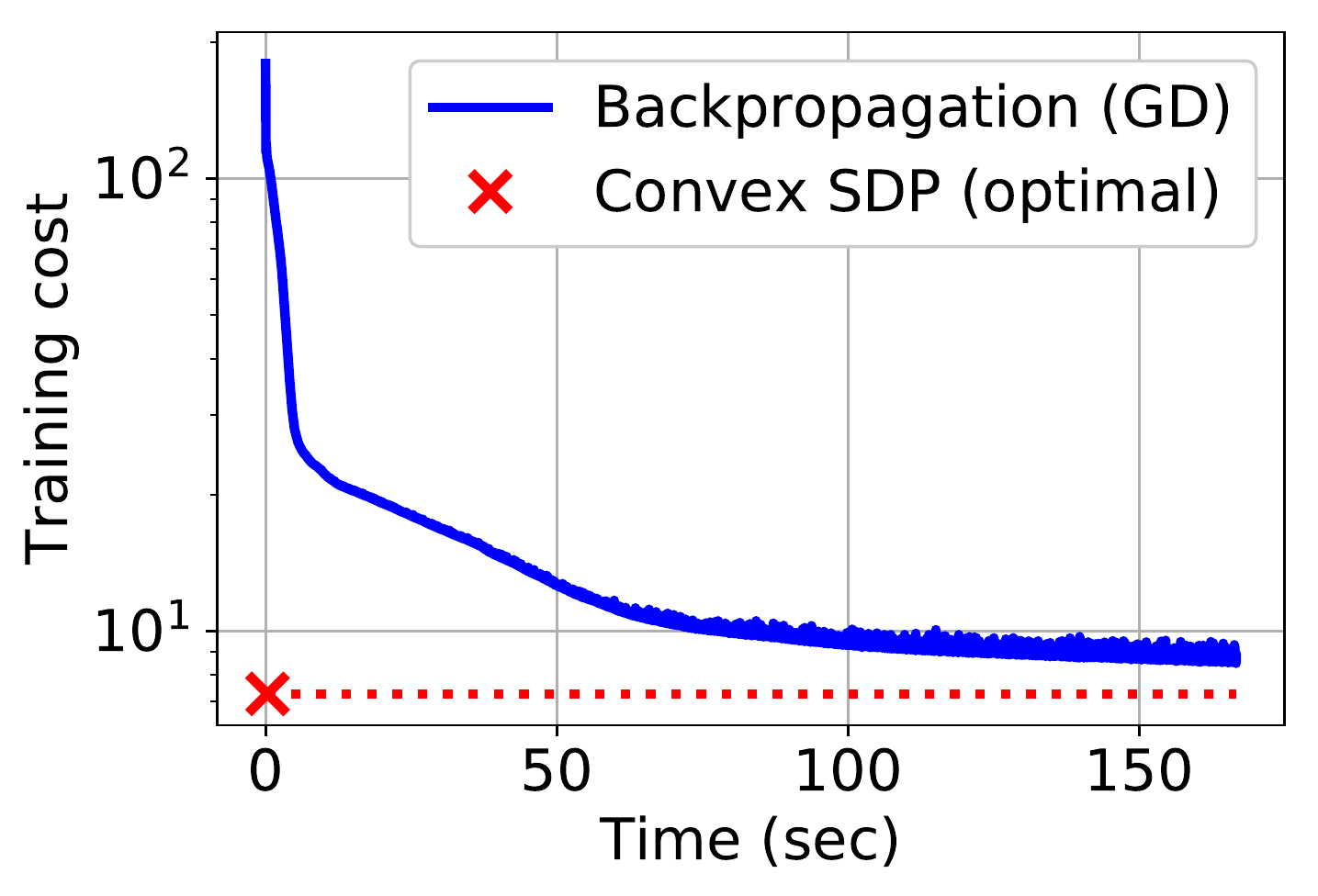}}
  \centerline{(b) $\ell_1$ norm loss}\medskip
\end{minipage}
\caption{Verifying the theoretical results for other convex loss functions: Huber and $\ell_1$ norm loss. An artificially generated dataset with dimensions $n=100,d=20$ is used. The regularization coefficient is $\beta=0.1$. The number of neurons $m^*$ is found to be $7$ and $9$ for plots a and b, respectively.}
\label{fig:different_losses}
\end{figure}

\subsection{The Effect of Polynomial Coefficients}
The plots in Figure \ref{fig:accuracy_vs_polycoeffs} show the classification accuracy against the polynomial coefficients $a,b,c$ for the polynomial activation convex problem. In each plot, we vary one of the coefficients and fix the other two coefficients as $1$. We observe that the coefficient of the quadratic term $a$ plays the most important role in the accuracy performance. The accuracy is not affected by the choice of the coefficient $c$.

\begin{figure} 
\begin{minipage}[b]{0.32\linewidth}
  \centering
  \centerline{\includegraphics[width=\columnwidth]{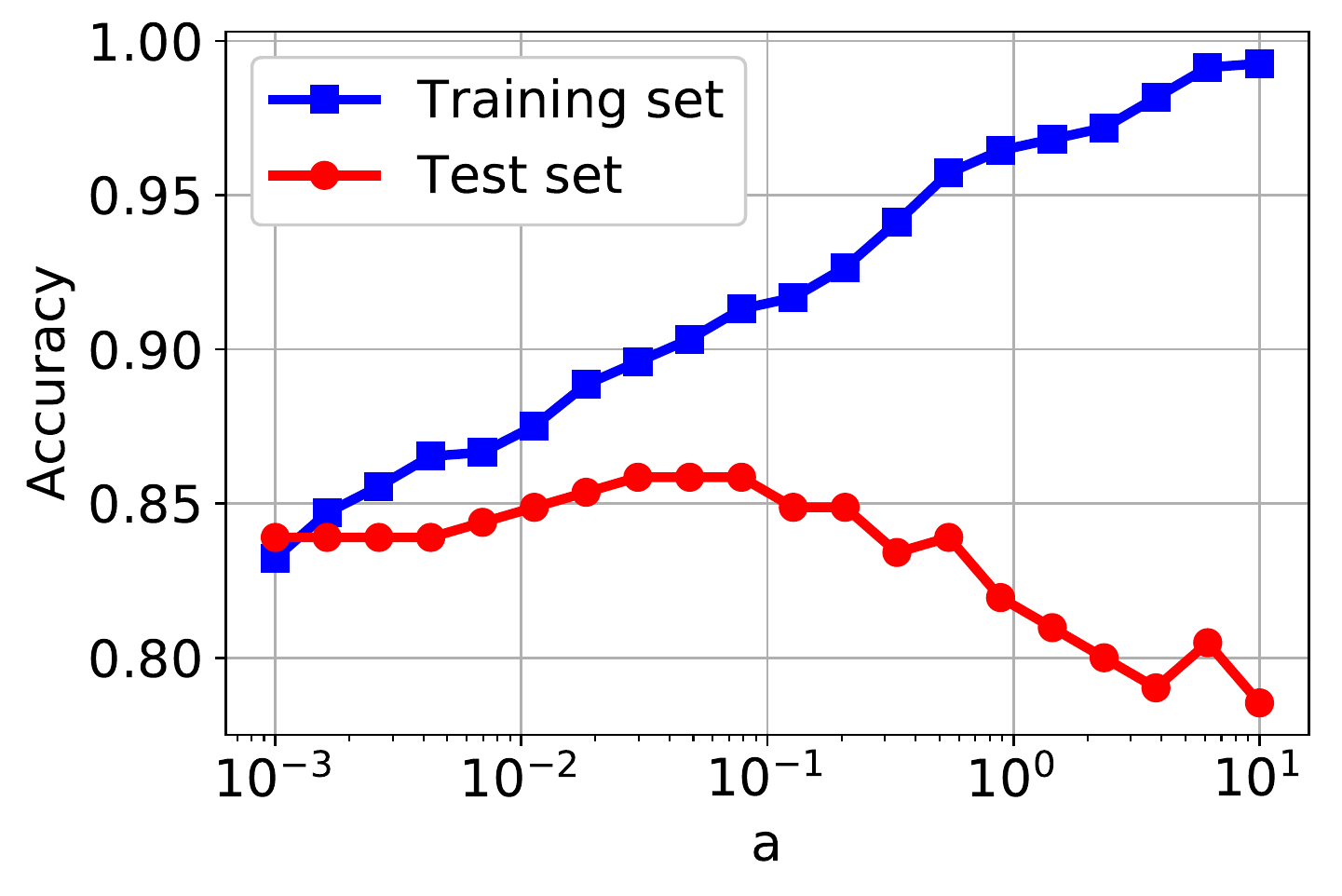}}
  \centerline{(a) $b=c=1$}\medskip
\end{minipage}
\hfill
\begin{minipage}[b]{0.32\linewidth}
  \centering
  \centerline{\includegraphics[width=\columnwidth]{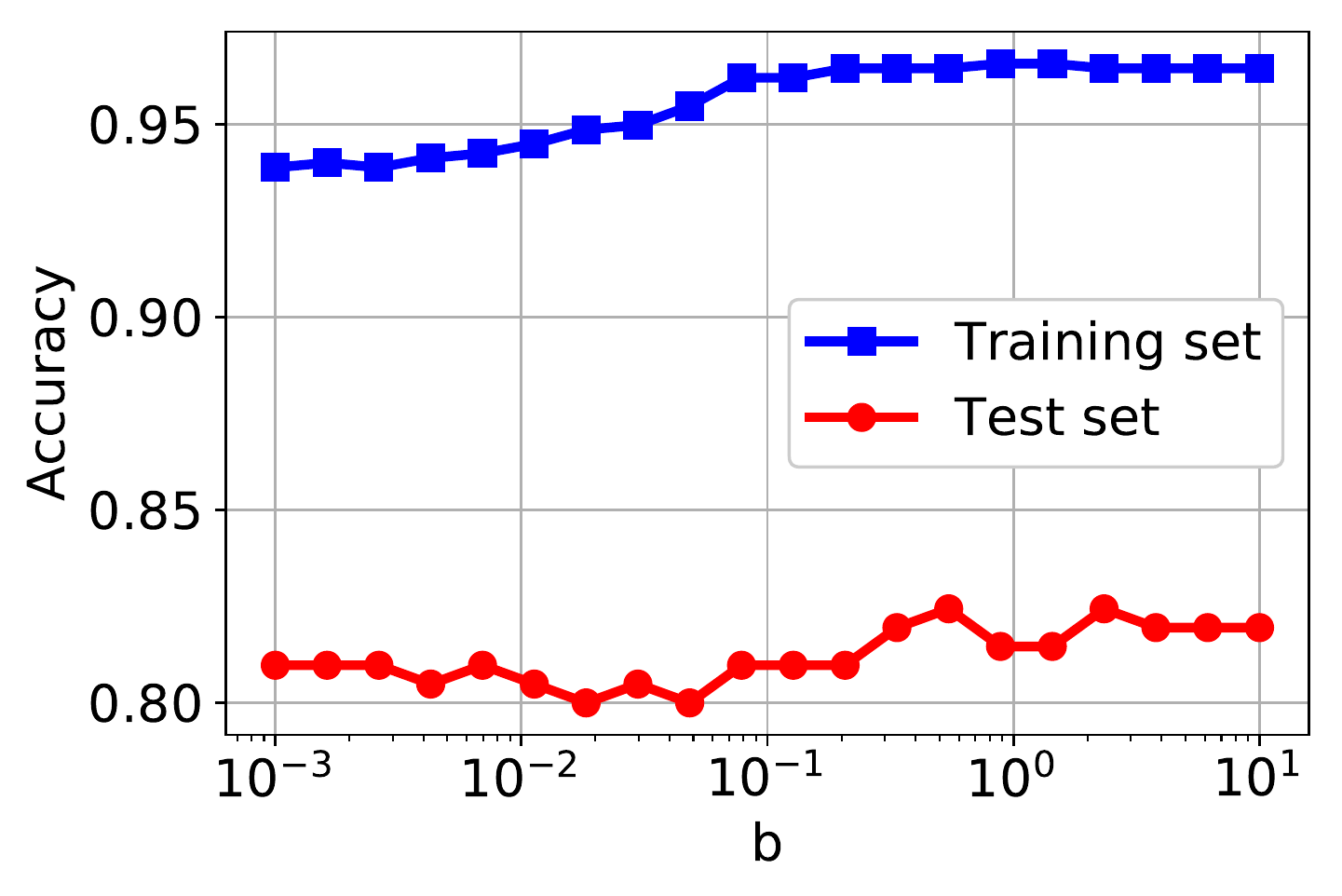}}
  \centerline{(b) $a=c=1$}\medskip
\end{minipage}
\hfill
\begin{minipage}[b]{0.32\linewidth}
  \centering
  \centerline{\includegraphics[width=\columnwidth]{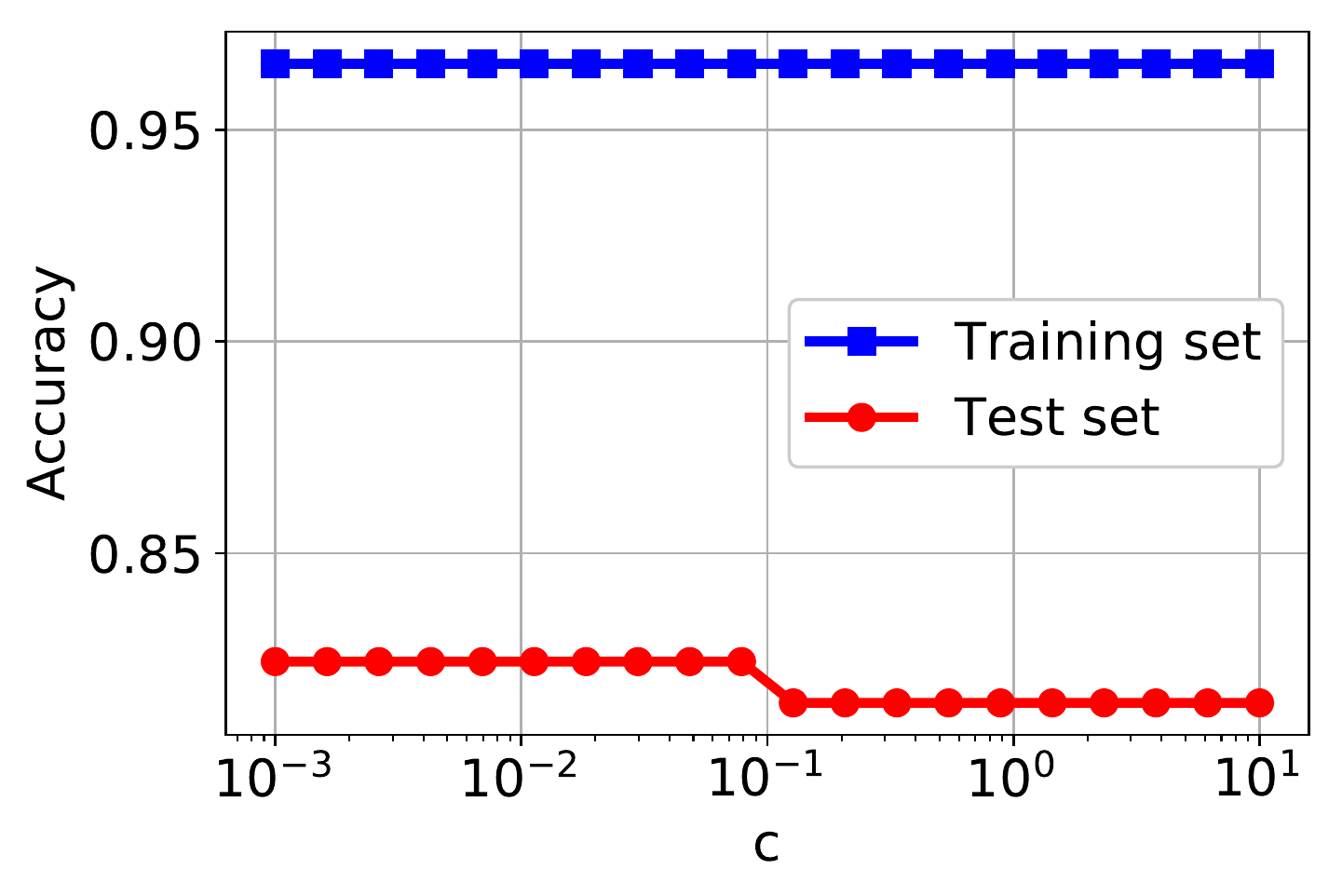}}
  \centerline{(c) $a=b=1$}\medskip
\end{minipage}
\caption{Training and test set classification accuracies against polynomial coefficients $a,b,c$. The regularization coefficient is $\beta=0.1$ and the dataset is oocytes-merluccius-nucleus-4d.}
\label{fig:accuracy_vs_polycoeffs}
\end{figure}

\section{Discussion} \label{sec:conclusion}

In this paper, we have studied the optimization of two-layer neural networks with degree two polynomial activations. We have shown that regularization plays an important role in the tractability of the problems associated with neural network training. We have developed convex programs for the cases where the regularization leads to tractable formulations. Convex formulations are useful since they have many well-known advantages over non-convex optimization such as having to optimize fewer hyperparameters and no risk of getting stuck at local minima.



The methods presented in this work optimize the neural network parameters in a higher dimensional space in which the problem becomes convex. For fully connected neural networks with quadratic activation, the standard non-convex problem requires optimizing $m$ neurons (i.e. a $d$-dimensional first layer weight and a $1$-dimensional second layer weight per neuron). The convex program for this neural network finds the optimal network parameters in the lifted space $\mathbb{S}^{d\times d}$. For polynomial activations, convex optimization takes place for $Z$ and $Z^\prime$ in $\mathbb{S}^{(d+1)\times (d+1)}$. We note that the dimensions of the convex programs are polynomial with respect to all problem dimensions. In contrast, the convex program of \cite{pilanci2020neural} has $2dP$ variables where $P$ grows exponentially with respect to the rank of the data matrix.

We have used the SCS solver with CVXPY for solving the convex problems in the numerical experiments. It is important to note that there is room for future work in terms of which solvers to use. Solvers specifically designed for the presented convex programs could enjoy faster run times.

The scope of this work is limited to two-layer neural networks. We note that it is a promising direction to consider the use of our convex programs for two-layer neural networks as building blocks in learning deep neural networks. Many recent works such as \cite{allenzhu2020backward} and \cite{belilovsky2018layerwise} investigate layerwise learning algorithms for deep neural networks. The training of individual layers in layerwise learning could be improved by the presented convex programs since the convex programs can be efficiently solved and eliminate much of the hyperparameter tuning involved in standard neural network training.

\section*{Acknowledgements}
This work was partially supported by the National Science Foundation under grants IIS-1838179, ECCS-2037304, Facebook Research, Adobe Research and Stanford SystemX Alliance.

\bibliography{refs}
\bibliographystyle{plain}

\newpage
\appendix
\section{Additional Discussion} \label{sec:discussion}


\subsection{Constrained Least Squares Form for the Squared Loss} \label{sec:constrained_ls_form}
Let us consider the polynomial activation scalar output case. In the case of squared loss $\ell(\hat{y}, y)=\|\hat{y} - y\|_2^2$, the convex program takes the following form:
\begin{align}
    \min_{Z=Z^T, Z^\prime={Z^\prime}^T} &\sum_{i=1}^n \left( ax_i^T(Z_1-Z_1^\prime)x_i + bx_i^T(Z_2-Z_2^\prime) + c(Z_4 - Z_4^\prime) - y_i \right)^2 + \beta( Z_4+Z_4^\prime) \nonumber \\
    \mbox{s.t.}\quad &\tr(Z_1) = Z_4, \, \tr(Z_1^\prime) = Z_4^\prime \nonumber\\
    &Z \succeq 0, \, Z^\prime \succeq 0 \,.
\end{align}
Noting that $ax_i^T(Z_1-Z_1^\prime)x_i = \vect(x_ix_i^T)^T \vect(Z_1-Z_1^\prime)$, we can write the squared loss term as
\begin{align*}
    &\sum_{i=1}^n \left( \begin{bmatrix} a\vect(x_ix_i^T)^T & bx_i^T & c \end{bmatrix} \begin{bmatrix}\vect(Z_1-Z_1^\prime) \\ Z_2-Z_2^\prime \\ Z_4 - Z_4^\prime \end{bmatrix} - y_i \right)^2 = \\
    &= \Big\| \begin{bmatrix} a\vect(x_1x_1^T)^T & bx_1^T & c \\ \vdots \\ a\vect(x_nx_n^T)^T & bx_n^T & c \end{bmatrix} \begin{bmatrix}\vect(Z_1-Z_1^\prime) \\ Z_2-Z_2^\prime \\ Z_4 - Z_4^\prime \end{bmatrix} - y \Big\|_2^2 = \| X_V z - y \|_2^2
\end{align*}
where we have defined $X_V \in \mathbb{R}^{n\times (d^2+d+1)}$ and $z \in \mathbb{R}^{(d^2+d+1)}$.
The squared loss term is equal to $z^TX_V^TX_Vz - 2y^TX_Vz + \|y\|_2^2$.

If we pre-compute $X_V^TX_V \in \mathbb{R}^{(d^2+d+1)\times (d^2+d+1)}$ and $X_V^Ty \in \mathbb{R}^{(d^2+d+1)}$, then the objective no longer has dependence on the number of samples $n$. We note that the pre-computation of $X_V^TX_V$ and $X_V^Ty$ is useful when one is performing hyperparameter tuning for the regularization coefficient $\beta$.

\section{Proofs} \label{sec:appendix_proofs}

\begin{proof}[Proof of Lemma \ref{lem:spectra_LMI}]
We will denote the set in \eqref{eq:neural_spectrahedron_1} as $\mathcal{S}_1$ and the set in \eqref{eq:neural_spectrahedron_2} as $\mathcal{S}_2$ to simplify the notation. We will prove $\mathcal{S}_1 = \mathcal{S}_2$ by showing $\mathcal{S}_1 \subseteq \mathcal{S}_2$ and $\mathcal{S}_2 \subseteq \mathcal{S}_1$.

\noindent We first show $\mathcal{S}_1 \subseteq \mathcal{S}_2$. Let us take a point $S \in \mathcal{S}_1$. This implies that $S$ is a matrix of the form
\begin{align}
    t\sum_{j=1}^m \append{u_j}\append{u_j}^T\alpha_j = t \sum_{j=1}^m \begin{bmatrix} u_ju_j^T \alpha_j & u_j \alpha_j \\ u_j^T\alpha_j &  \alpha_j \end{bmatrix} = \begin{bmatrix} t\sum_{j=1}^m u_ju_j^T \alpha_j & t\sum_{j=1}^m u_j \alpha_j \\ t\sum_{j=1}^m u_j^T\alpha_j & t\sum_{j=1}^m \alpha_j \end{bmatrix} 
\end{align}
with $\sum_j \alpha_j \leq 1$ and $\|u_j\|_2=1$ for all $j$. We note that $\tr(t\sum_{j=1}^m u_ju_j^T \alpha_j)=t\sum_{j=1}^m \tr(u_ju_j^T\alpha_j)=t\sum_{j=1}^m \tr(u_j^T u_j)\alpha_j = t\sum_{j=1}^m \alpha_j \leq t$. This shows that $S$ satisfies the equality condition in the definition \eqref{eq:neural_spectrahedron_2}. Now, we show that $S$ is a PSD matrix. Note that each of the rank-1 matrices $\append{u_j}\append{u_j}^T$ is a PSD matrix and since the coefficients $\alpha_j$'s and $t$ are nonnegative, it follows that $S$ is PSD. This proves that $S \in \mathcal{S}_2$.

\noindent We next show $\mathcal{S}_2 \subseteq \mathcal{S}_1$. Let us take a point $S \in \mathcal{S}_2$. This implies that $S$ is PSD and $\tr(S_1)=S_4 =t_0 \leq t$. We show in Section \ref{sec:neural_decomp} that it is possible to decompose $S$ via the neural decomposition procedure to obtain the expressions given in \eqref{eq:Z_star_decomp}. It follows that we can write $S$ in the following form
\begin{align}
    S = \frac{t_0}{\sum_{j=1}^m d_j^2} \begin{bmatrix} \sum_{j=1}^m u_ju_j^Td_j^2 & \sum_{j=1}^m u_jd_j^2 \\ \sum_{j=1}^m u_j^Td_j^2 & \sum_{j=1}^m d_j^2 \end{bmatrix} \,,
\end{align}
where the scaling factor $\frac{t_0}{\sum_{j=1}^m d_j^2}$ is to ensure that $\tr(S_1)=S_4 =t_0\leq t$. It is obvious to see that $S$ is in $\mathcal{S}_1$ when $t_0=t$ by the definition of $\mathcal{S}_1$ given in \eqref{eq:neural_spectrahedron_1}. When $t_0 < t$, we still have that $S$ is in $\mathcal{S}_1$ which can be seen by noting that $\mathcal{S}_1$ is defined as the convex hull of rank-1 matrices and the zero matrix. We can scale all the rank-1 matrices in the convex combination with $\frac{t_0}{t}$ and change the weight of the zero matrix accordingly.

\end{proof}


\begin{proof}[Proof of Lemma \ref{lem:lpmin}]
Let $\alpha_1,\dots,\alpha_m$ be any feasible point. 
First, note that for any $s\ge 0$ and $\alpha\in \mathbb{R},\,\alpha\neq 0$, we have 
\begin{align}
    \big( s\,  |\alpha| \big)^{p} \ge s\, |\alpha|^p\,,
\end{align}
where equality holds if and only if $s\in \{0,1\}$. The equality condition follows since $|\alpha|>0$ and $s^p=s$ implies $s\in\{0,1\}$ for $p\in(0,1)$. Then, define $s_i:=\frac{|\alpha_i|}{\sum_{j}|\alpha_j|}$, which satisfies $\sum_i s_i =1 $, and observe that
\begin{align*}
    \sum_i |\alpha_i|^p &= \sum_i \big|s_i \big(\sum_j |\alpha_j| \big)  \big|^p\\
    &\ge \Big(\sum_i s_i \Big) \Big(\sum_j |\alpha_j|\Big)^p\\
    &= \Big(\sum_i |\alpha_i|\Big)^p\\
    &\ge  \Big(\sum_i \alpha_i\Big)^p\\
    &=1\,,
\end{align*}
where the first inequality holds with equality if and only if $s_i\in\{0,1\},\,\forall i$. Hence, in order for the equality to hold, we necessarily have $\|\alpha\|_0\le1$. Since $\sum_i \alpha_i=1$, the all-zeros vector is infeasible. This implies that $\|\alpha\|_0=1$. Finally, note that all feasible vectors which are 1-sparse are of the form $(1,0,\dots,0)$, $(0,1,0,\dots,0),\dots,(0,\dots,1)$ and achieve an objective value $1$. We conclude that all feasible vectors with cardinality strictly greater than $1$ are suboptimal since they achieve objective value strictly larger than 1.
%
%

%
\end{proof}

\begin{proof}[Proof of Lemma \ref{lem:decision_pr_nphard}]
Let us define the set $\mathcal{A} = \{a_1,a_2,\dots,a_d\}$ where $a_i$ are integers. We need to show that the problem \eqref{eq:phase_retrieval_pr} finds a feasible solution $u_1$ if and only if there exists a subset $\mathcal{A}_S$ of the set $\mathcal{A}$ that satisfies $\sum_{a \in \mathcal{A}_S} a = z$. 

We assume $n=2d+1$ and hence $\tilde{X}$ is $(d+1)\times d$ and $\tilde{y}$ is $(d+1)$ dimensional. Let $\tilde{X}_D \in \mathbb{R}^{d \times d}$ denote the matrix with the first $d$ rows of $\tilde{X}$, and $\tilde{x}_{d+1}$ is the last sample in $\tilde{X}$.
Let us define $\tilde{y}_i$ as
\begin{align} \label{eq:setting_ytilde}
    \tilde{y}_i = \begin{cases} 
      (a_i / w_i)^2, & i = 1,\dots,d \\
      (2z - \sum_{j=1}^d a_j)^2, & i = d+1 \,,
   \end{cases}
\end{align}
where $w = \tilde{X}_D^{-T} \tilde{x}_{d+1} \in \mathbb{R}^d$.

\textbf{Direction 1:} Suppose there exists $u_1 \in \mathbb{R}^d$ such that $(\tilde{x}_i^Tu_1)^2=\tilde{y}_i$ for every $i=1,\dots,d+1$ and $u_{1k}^2=1/d$ for every $k=1,\dots,d$. Then there exists a subset $\mathcal{A}_S$ with $\sum_{a \in \mathcal{A}_S}a = z$.

\textbf{Proof of direction 1:} 
Assuming $\tilde{X}_D$ is invertible, it follows that $\tilde{X} \tilde{X}_D^{-1} = \begin{bmatrix}I_d & w \end{bmatrix}^T$ where $I_d$ is the $d\times d$ identity matrix.
Let us consider a feasible $u_1$. Then, $v=\tilde{X}_Du_1$ satisfies $((\tilde{X}_D^{-T} \tilde{x}_i)^T v)^2 = \tilde{y}_i$ for $i=1,\dots,d+1$. Consequently, we have $\tilde{X}_D^{-T} \tilde{x}_i = e_i$ for $i = 1,\dots, d$, and $\tilde{X}_D^{-T} \tilde{x}_{d+1} = w$. As a result, we obtain the following relation between $v$ and $\tilde{y}$:
\begin{align*}
    \tilde{y}_i = \begin{cases} v_i^2, & i=1,\dots,d \\ 
    (w^Tv)^2, & i=d+1 \,.
    \end{cases} 
\end{align*}
Next, because of \eqref{eq:setting_ytilde}, we have
\begin{align*}
    |v_i| = \left| \frac{a_i}{w_i}\right|, \, i=1,\dots,d, \quad \mbox{and} \quad |w^Tv| = \left|2z-\sum_{j=1}^d a_j\right| \,.
\end{align*}
Let us define $\varepsilon_i$ such that $v_i = \varepsilon_i a_i / w_i$ for $i=1,\dots,d$. Note that $\varepsilon_i \in \{-1,1\}$. Then,
\begin{align*}
    \left|2z-\sum_{j=1}^d a_j\right| &= |w^Tv| = \left| \sum_{i=1}^d \varepsilon_i a_i \right| = \left| \sum_{i:\varepsilon_i=1} a_i - \sum_{i:\varepsilon_i=-1} a_i\right| = \left| \sum_{i:\varepsilon_i=1} a_i - \sum_{i=1}^d a_i + \sum_{i: \varepsilon_i=1} a_i \right| \\
    &= \left| 2\sum_{i:\varepsilon_i=1} a_i - \sum_{i=1}^d a_i \right| \,.
\end{align*}
This means we either have $z = \sum_{i:\varepsilon_i=1} a_i$ or $z = -\sum_{i:\varepsilon_i=1} a_i + \sum_{i=1}^d a_i = \sum_{i:\varepsilon_i=-1} a_i$. This shows that the sum of the elements of $\mathcal{A}_S$ is equal to $z$ when $\mathcal{A}_S$ is either equal to $\{a_i | \varepsilon_i=1 \}$ or $\{a_i | \varepsilon_i=-1 \}$.

In proving direction 2, it is straightforward to show the existence of $u_1$ that satisfies the constraint $(\tilde{x}_i^T u_1)^2 = \tilde{y}_i$. To show that there is a $u_1$ that satisfies the constraint $u_{1k}^2 = \frac{1}{d}$, we pick $\tilde{X}$ in a certain way that we discuss now: To prove direction 2, we will need to make sure $|\tilde{X}_D^{-1}v| = \ones\frac{1}{\sqrt{d}}$ is satisfied, i.e.,  
\begin{align*}
    \left| \sum_{j=1}^d (\tilde{X}_{D}^{-1})_{i,j} \varepsilon_j \frac{a_j}{w_j} \right| = \frac{1}{\sqrt{d}} \quad \mbox{for} \quad i=1,\dots,d \,.
\end{align*}


We pick $\tilde X_D$ to be any diagonal matrix with arbitrary $-1$'s and $+1$'s on the diagonal and pick $\tilde{x}_{d+1}=\sqrt{d} \begin{bmatrix} a_1 & \dots & a_d \end{bmatrix}^T$. Since $w = \tilde{X}_D^{-T} \tilde{x}_{d+1}$, we will have $|w_i|=|\tilde{x}_{d+1,i}|=\sqrt{d}|a_i|$ for $i=1,\dots,d$. This choice for $\tilde X_D$ and $\tilde{x}_{d+1}$ ensures that $|\tilde{X}_D^{-1}v| = \ones\frac{1}{\sqrt{d}}$.

\textbf{Direction 2:} Suppose there is a subset $\mathcal{A}_S$ with $\sum_{a \in \mathcal{A}_S}a = z$. Then there exists a feasible $u_1 \in \mathbb{R}^d$.

\textbf{Proof of direction 2:} Define $\varepsilon_i$ such that for $a_i$ in $\mathcal{A}_S$, it is equal to $1$, and otherwise it is equal to $-1$. Next, 
\begin{align} \label{eq:direction_2_sum}
    \left| \sum_{i=1}^d \varepsilon_i a_i \right| &= \left| \sum_{i:\varepsilon_i=1} \varepsilon_ia_i + \sum_{i:\varepsilon_i = -1} \varepsilon_i a_i \right| = \left| \sum_{i:\varepsilon_i=1} a_i - \sum_{i:\varepsilon_i = -1} a_i \right| = \left| 2\sum_{i:\varepsilon_i=1} a_i - \sum_{i=1}^d a_i \right| \nonumber \\
    &= \left| 2z - \sum_{i=1}^d a_i \right| \,.
\end{align}
Let us take $v_i = \varepsilon_i a_i / w_i$ for $i=1,\dots,d$. Now we show that the point defined by $\tilde{X}_D^{-1}v$ is a feasible point. First, we check if $\tilde{X}_D^{-1}v$ satisfies the constraints $(\tilde{x}_i^T \tilde{X}_D^{-1}v)^2 = \tilde{y}_i$ for $i=1,\dots,i+1$. Note that
\begin{align*}
    (\tilde{x}_i^T \tilde{X}_D^{-1}v)^2 &= (e_i^Tv)^2 = v_i^2 = \frac{a_i^2}{w_i^2} = \tilde{y}_i \quad \mbox{for} \quad i=1,\dots,d, \quad \mbox{and} \\
    (\tilde{x}_{d+1}^T  \tilde{X}_D^{-1}v)^2 &= (w^Tv)^2 = (\sum_{j=1}^d \varepsilon_j a_j)^2 = (2z-\sum_{j=1}^d a_j)^2 = \tilde{y}_{i+1} \,,
\end{align*}
where the last two equalities follow from \eqref{eq:direction_2_sum} and the definition in \eqref{eq:setting_ytilde}. This shows that the constraints $(\tilde{x}_i^T \tilde{X}_D^{-1}v)^2 = \tilde{y}_i$ for $i=1,\dots,d+1$ are satisfied by $\tilde{X}_D^{-1}v$.

We now check for the other constraint; i.e. does $\tilde{X}_D^{-1}v$ satisfy $|\tilde{X}_D^{-1}v| = \frac{1}{\sqrt{d}} \ones$ where the absolute value is elementwise? This is true because $|(\tilde{X}_D^{-1}v)_i| = |\sum_{j=1}^d (\tilde{X}_{D}^{-1})_{i,j} \varepsilon_j \frac{a_j}{w_j}| = \frac{1}{\sqrt{d}}$ for $i=1,\dots,d$. The second equality follows from how we picked $\tilde{X}_D$ and $\tilde{x}_{d+1}$.
\end{proof}

\begin{proof} [Proof of Corollary \ref{l:S_lemma_w_equality}]
Let us define the quadratic functions $f_1(u) = -u^TQu - b^Tu + \beta$ and $f_2(u) = \|u\|_2^2 - 1$. 
We note that $f_2(u)$ is strictly convex and takes both negative and positive values.
Then by Lemma \ref{l:S_lemma_w_equality_orig}, we have that the system $-u^TQu - b^Tu < -\beta$ (or $u^TQu + b^Tu > \beta$) and $\|u\|_2 = 1$ is not solvable if and only if there exists $\lambda$ such that $-u^TQu - b^Tu + \beta + \lambda (\|u\|_2^2-1) \geq 0$, $\forall u$.

Equivalently, we have $\max_{\|u\|_2 = 1} u^TQu + b^Tu \leq \beta$ if and only if there exists $\lambda$ such that
\begin{align} \label{eq:quad_ineq}
    u^T(\lambda I - Q) u - b^Tu + \beta - \lambda \geq 0, \quad \forall u.
\end{align}
We note that if we make the change of variable $u \leftarrow \frac{u}{c}$ with $c \neq 0$, then \eqref{eq:quad_ineq} implies
\begin{align*}
    \frac{1}{c^2} u^T(\lambda I - Q) u - \frac{1}{c}b^Tu + \beta - \lambda \geq 0, \quad \forall u, \forall c\neq 0
\end{align*}
which is the same as
\begin{align*}
    u^T(\lambda I - Q) u - c b^Tu + c^2 (\beta - \lambda) \geq 0, \quad \forall u, \forall c\neq 0.
\end{align*}
We express this inequality in matrix form as follows
\begin{align} \label{eq:matrix_form_psd}
    \begin{bmatrix} u^T & c \end{bmatrix}\begin{bmatrix}
        \lambda I - Q & -\frac{1}{2}b \\ -\frac{1}{2}b^T & \beta - \lambda
    \end{bmatrix} \begin{bmatrix} u \\ c \end{bmatrix} \geq 0, \quad \forall u, \forall c\neq 0.
\end{align}
For the matrix in \eqref{eq:matrix_form_psd} to be PSD, we first need to show that \eqref{eq:quad_ineq} implies the inequality in \eqref{eq:matrix_form_psd} for $c=0$ as well. We note that \eqref{eq:quad_ineq} implies
\begin{align*}
     \frac{u^T}{\|u\|_2}(\lambda I - Q) \frac{u}{\|u\|_2} - b^T \frac{u}{\|u\|_2^2} + \frac{\beta - \lambda}{\|u\|_2^2} \geq 0, \quad \forall u \mbox{ s.t. } \|u\|_2 \neq 0.
\end{align*}
Next, taking the norm of $u$ to infinity, we have
\begin{align*}
    \lim_{\|u\|_2 \rightarrow \infty} \left( \frac{u^T}{\|u\|_2}(\lambda I - Q) \frac{u}{\|u\|_2} - b^T \frac{u}{\|u\|_2^2} + \frac{\beta - \lambda}{\|u\|_2^2} \right) = u_{n}^T (\lambda I - Q) u_{n},
\end{align*}
where $u_{n} = u / \|u\|_2$ is unit norm. We note that $u_{n}^T (\lambda I - Q) u_{n}$ is non-negative for all unit norm $u_n$, which is the same as the statement that it is non-negative for all $u_n$ (not necessarily unit norm). This shows that \eqref{eq:quad_ineq} implies $u^T (\lambda I - Q) u \geq 0$ for all $u$, which, we note, is the same as \eqref{eq:matrix_form_psd} with $c=0$.
Hence, because the inequality holds for all $\begin{bmatrix}u^T & c \end{bmatrix}^T$, we obtain the matrix inequality 
\begin{align} \label{eq:s_proc_psd}
    \begin{bmatrix}
        \lambda I - Q & -\frac{1}{2}b \\ -\frac{1}{2}b^T & \beta - \lambda
    \end{bmatrix} \succeq 0.
\end{align}

The proof for the other direction of the if and only if statement is straightforward. We note that, by the definition of a PSD matrix, \eqref{eq:s_proc_psd} implies that $u^T(\lambda I - Q) u - c b^Tu + c^2 (\beta - \lambda) \geq 0, \, \forall u, c$. Setting $c=0$, we obtain the inequality in \eqref{eq:quad_ineq}.
\end{proof}

\section{Additional Numerical Results} \label{sec:additional_num_results}
Figure \ref{fig:sgd_plots} compares the costs and accuracy performance of the convex formulation with minibatch SGD. 

\begin{figure} 
\begin{minipage}[b]{0.24\linewidth}
  \centering
  \centerline{\includegraphics[width=\columnwidth]{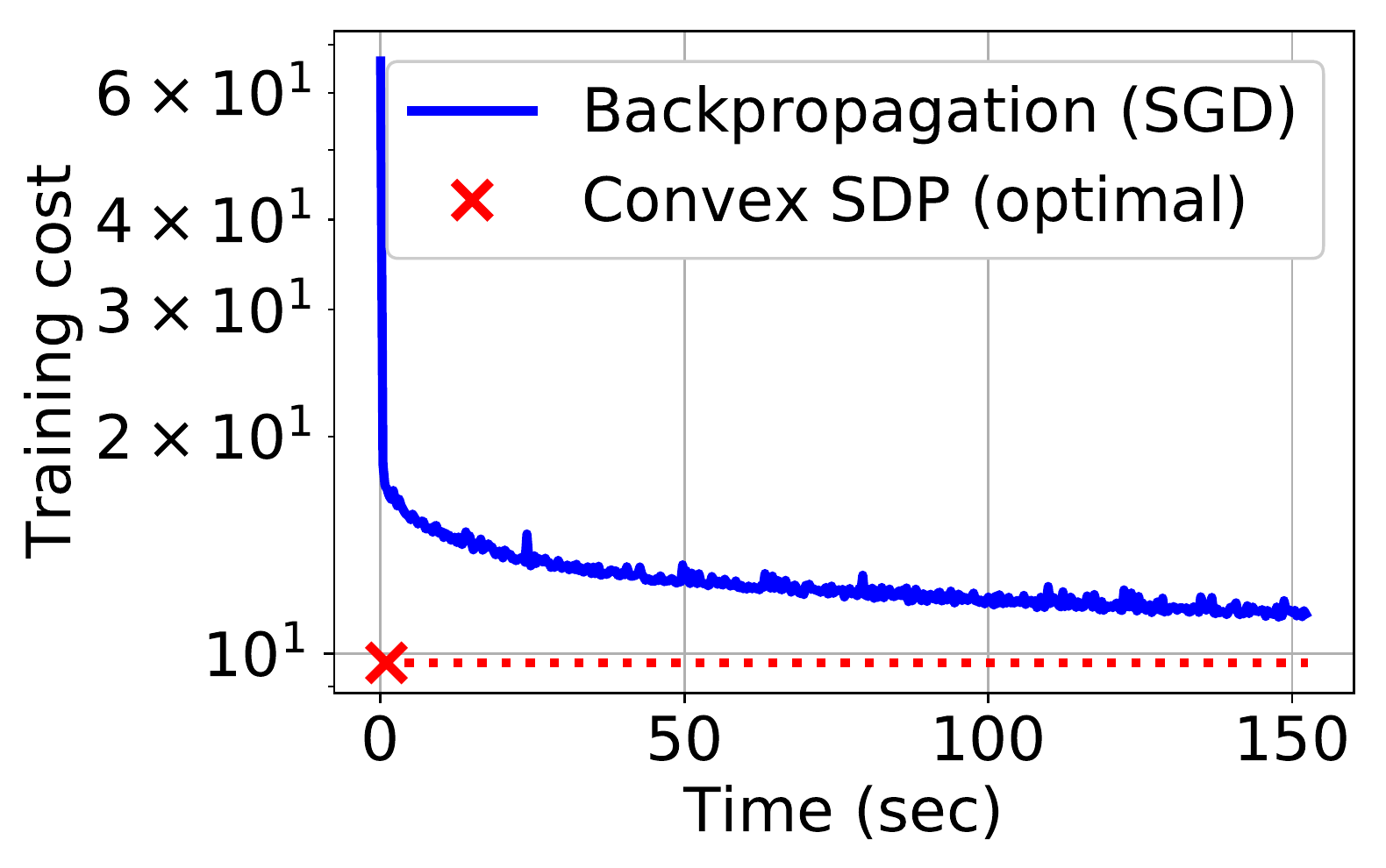}}
  \centerline{(a) DS1, training cost}
\end{minipage}
\hfill
\begin{minipage}[b]{0.24\linewidth}
  \centering
  \centerline{\includegraphics[width=\columnwidth]{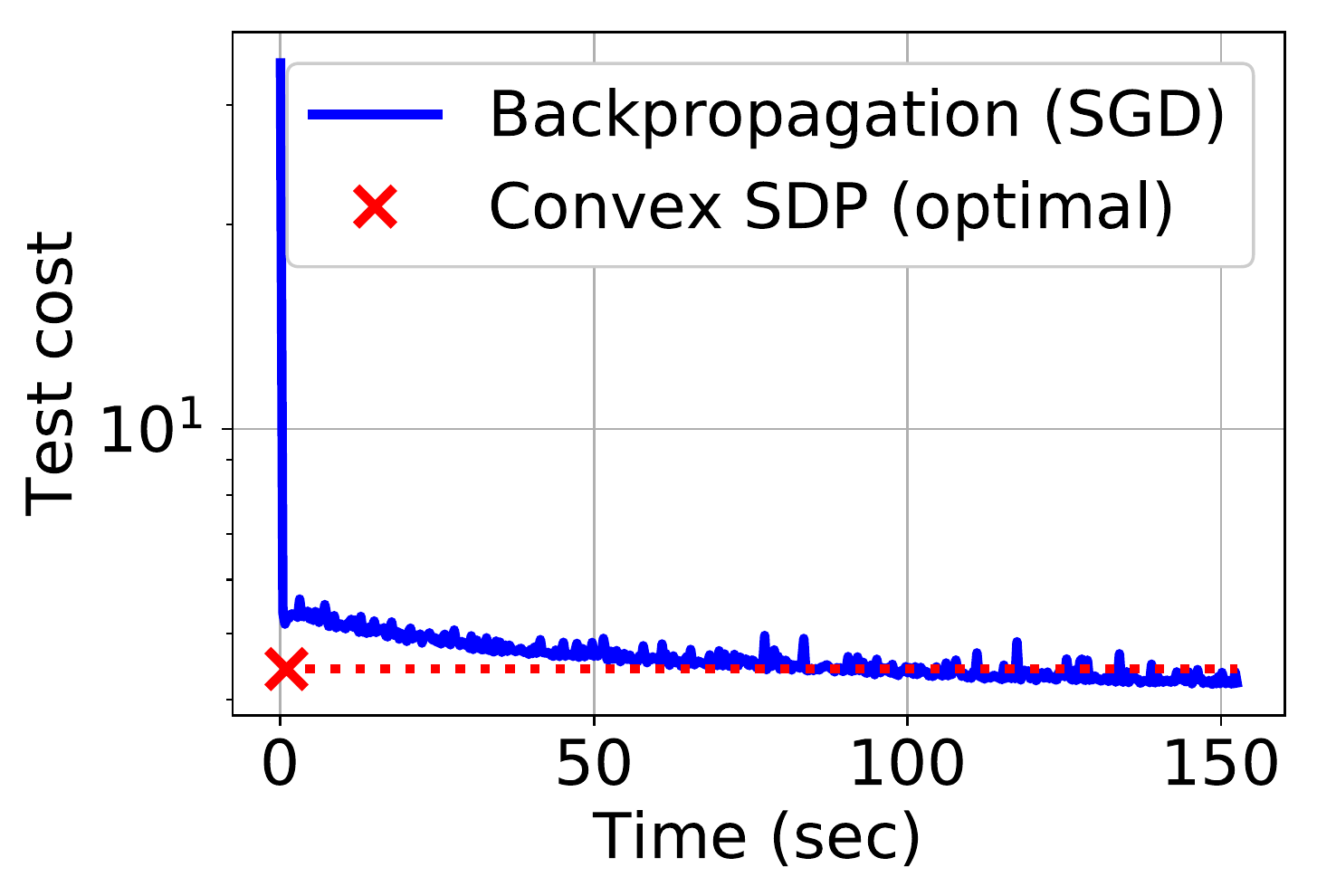}}
  \centerline{(b) DS1, test cost}
\end{minipage}
\hfill
\begin{minipage}[b]{0.24\linewidth}
  \centering
  \centerline{\includegraphics[width=\columnwidth]{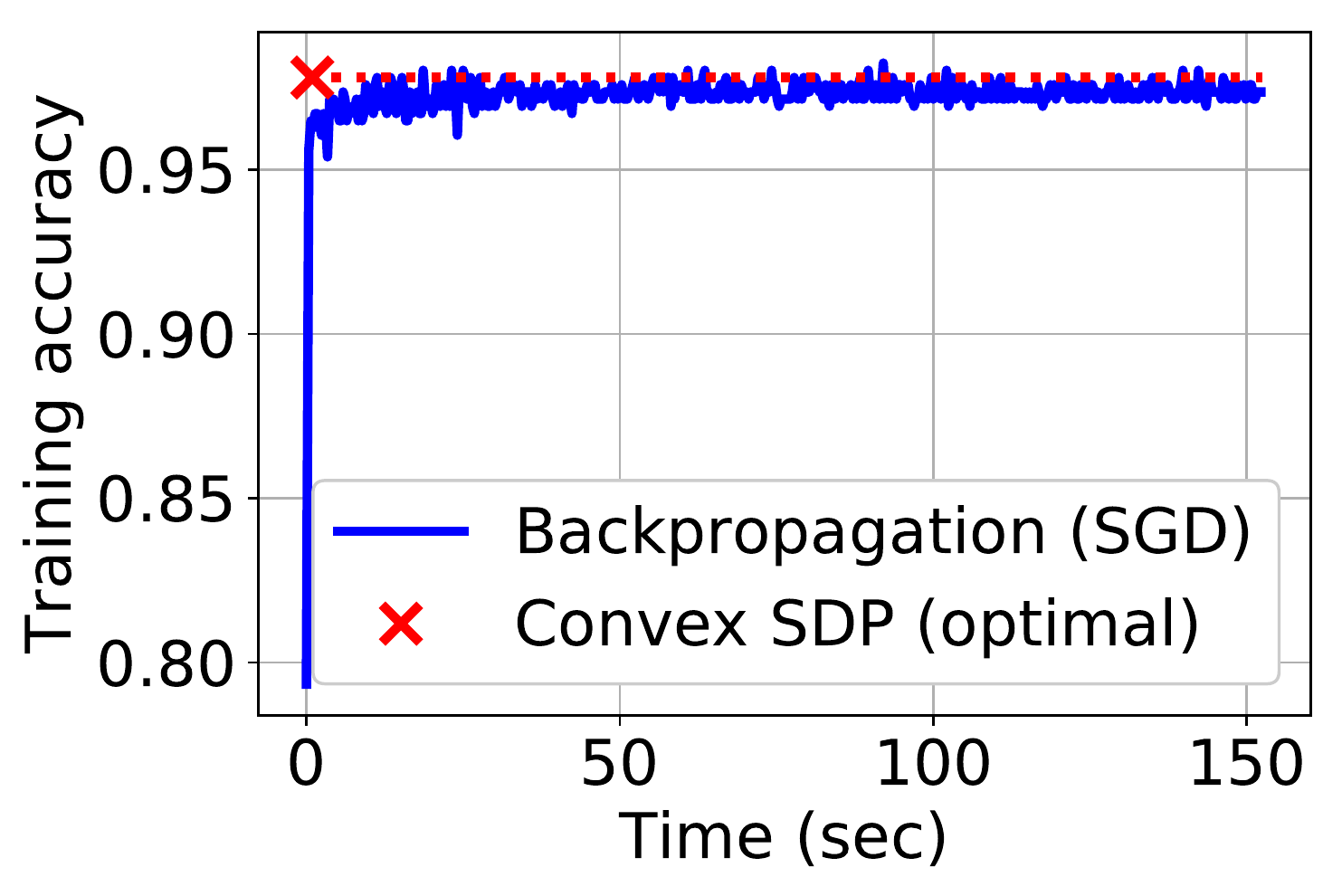}}
  \centerline{(c) DS1, training accuracy}
\end{minipage}
\hfill
\begin{minipage}[b]{0.24\linewidth}
  \centering
  \centerline{\includegraphics[width=\columnwidth]{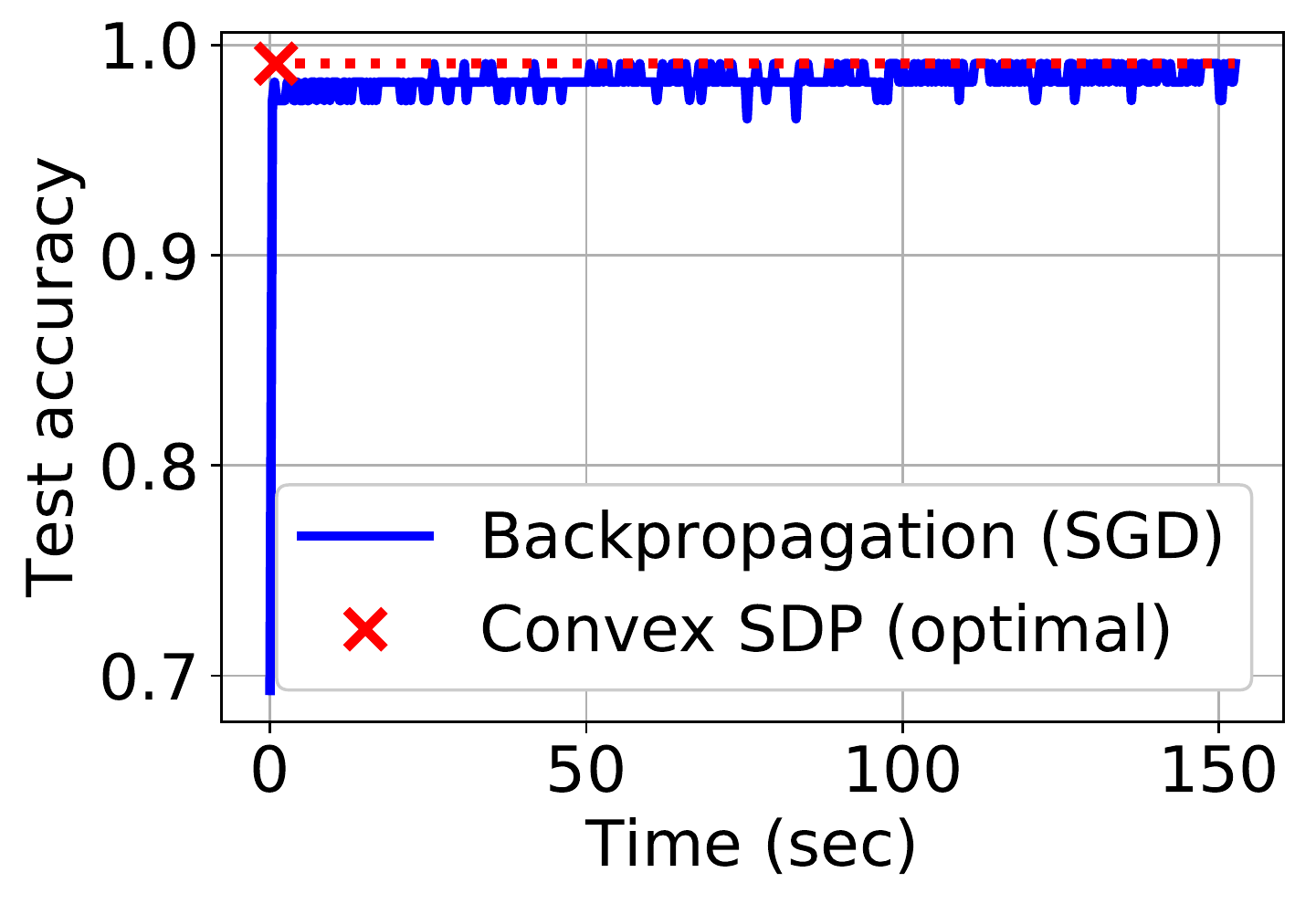}}
  \centerline{(d) DS1, test accuracy}
\end{minipage}

\begin{minipage}[b]{0.24\linewidth}
  \centering
  \centerline{\includegraphics[width=\columnwidth]{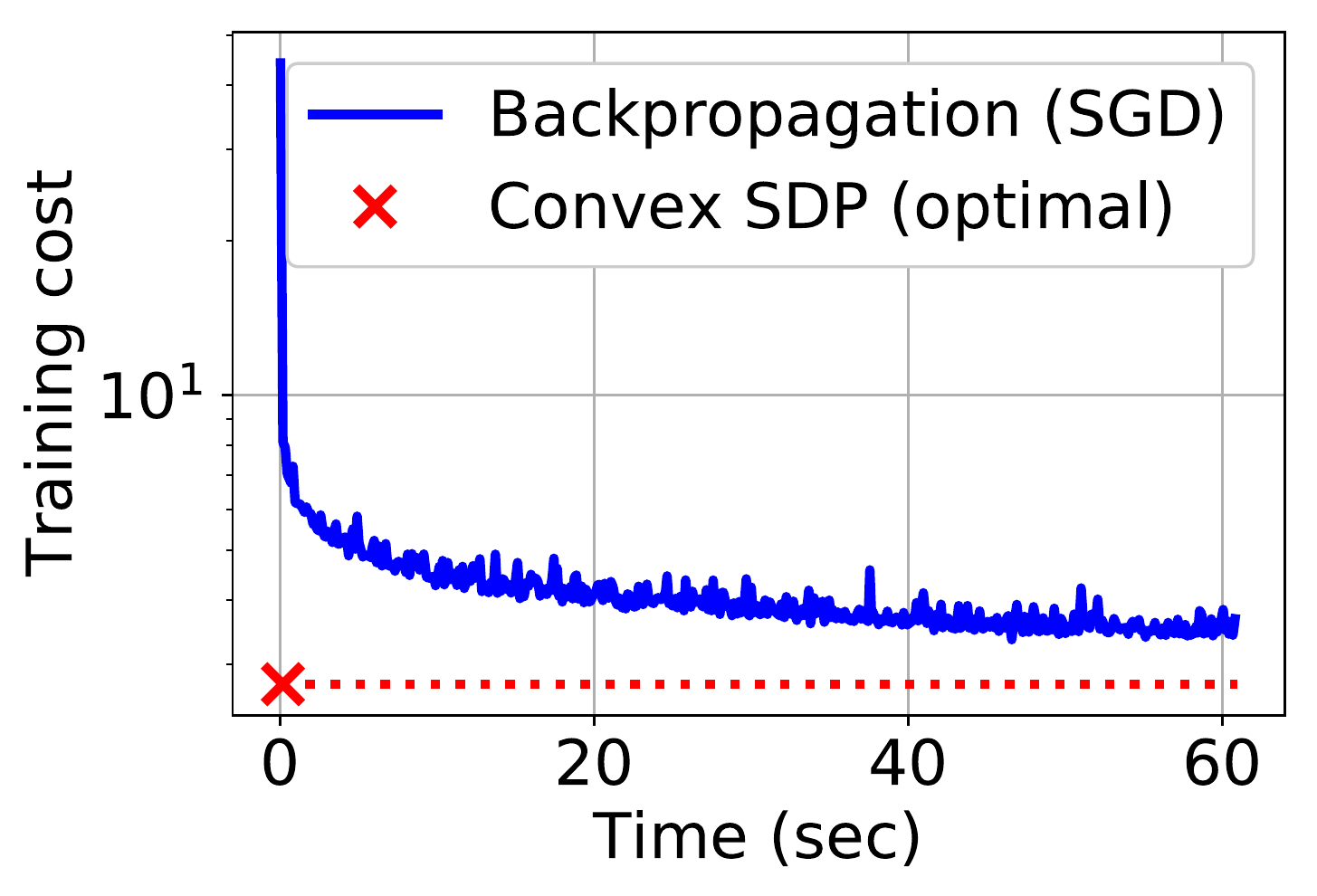}}
  \centerline{(e) DS2, training cost}
\end{minipage}
\hfill
\begin{minipage}[b]{0.24\linewidth}
  \centering
  \centerline{\includegraphics[width=\columnwidth]{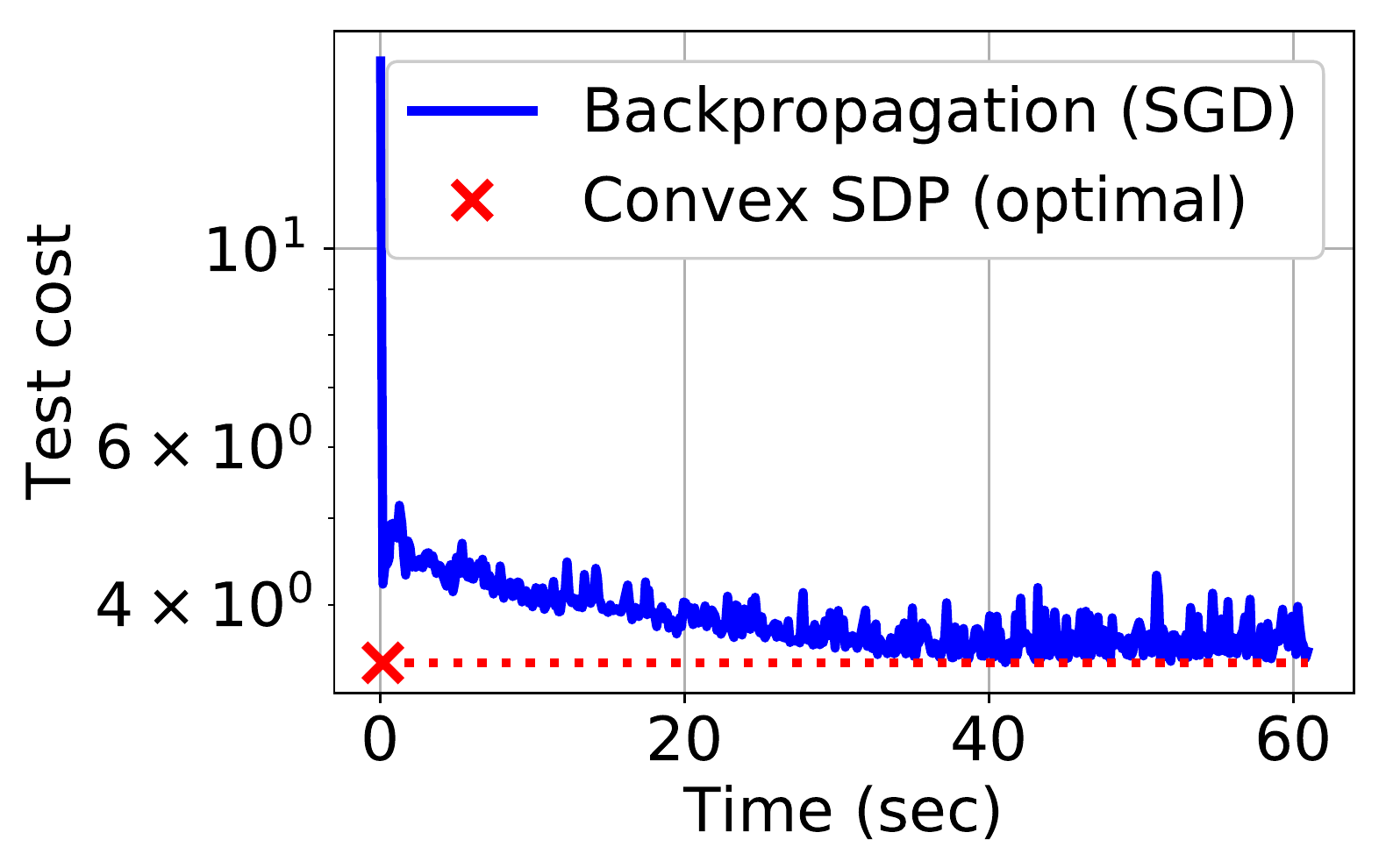}}
  \centerline{(f) DS2, test cost}
\end{minipage}
\hfill
\begin{minipage}[b]{0.24\linewidth}
  \centering
  \centerline{\includegraphics[width=\columnwidth]{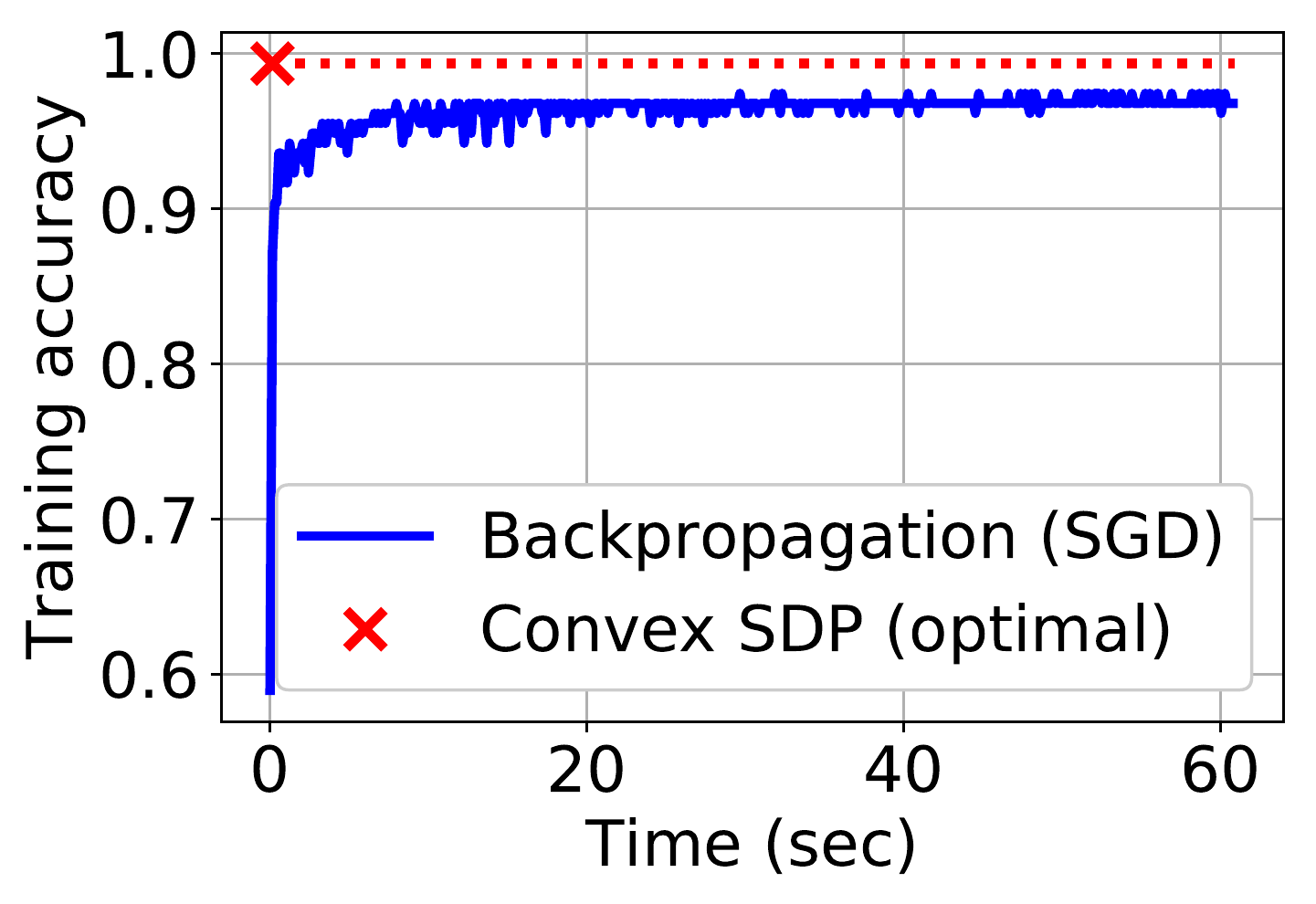}}
  \centerline{(g) DS2, training accuracy}
\end{minipage}
\hfill
\begin{minipage}[b]{0.24\linewidth}
  \centering
  \centerline{\includegraphics[width=\columnwidth]{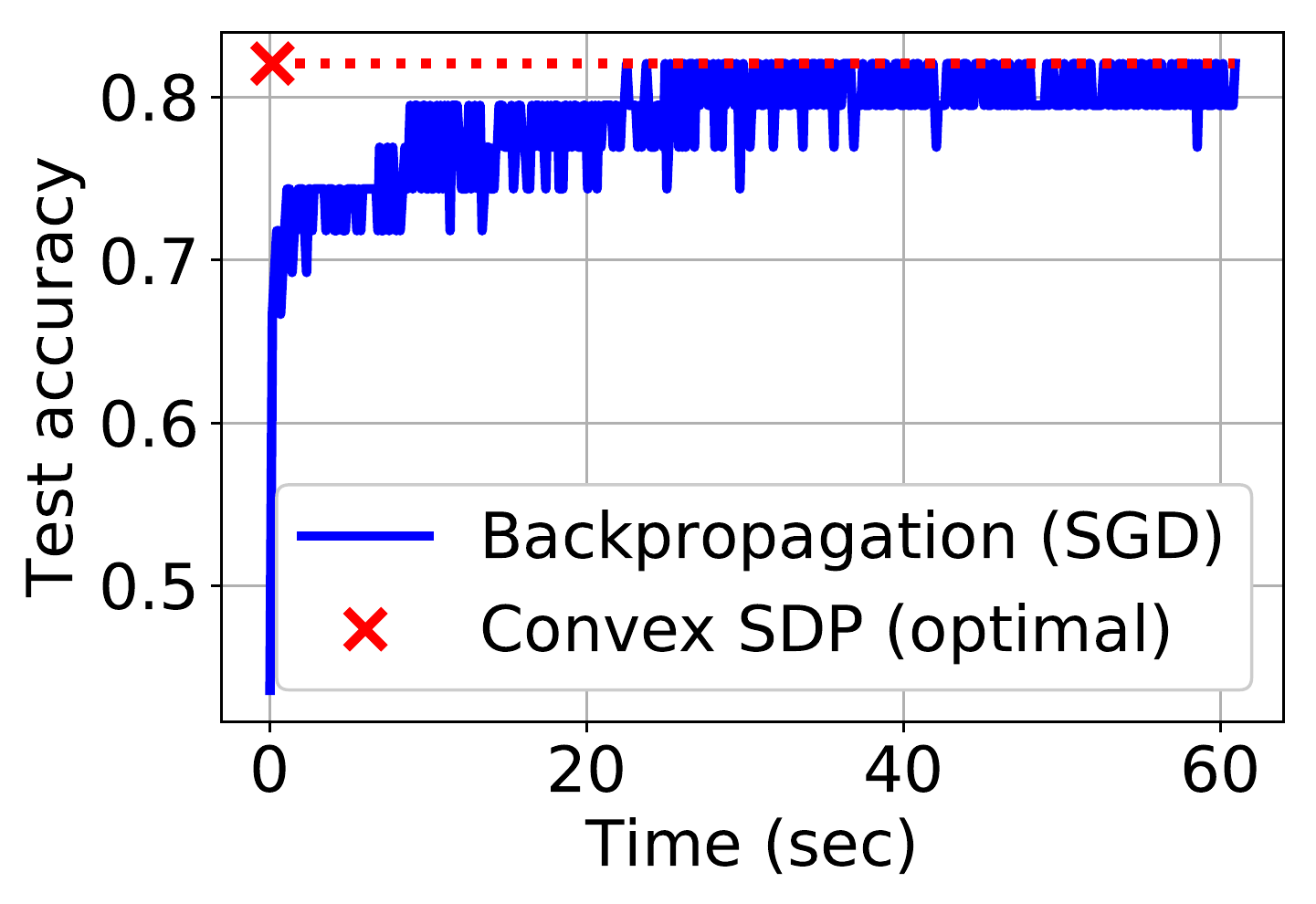}}
  \centerline{(h) DS2, test accuracy}
\end{minipage}
\caption{SGD with minibatch size 13 for two UCI datasets, DS1 is the breast-cancer-wisc-diag dataset with $n=455,d=30$ and DS2 is parkinsons dataset with $d=156,d=22$. The regularization coefficient is set to $\beta=1$ and $\beta=0.1$ and the number of neurons $m^*$ is found as $34$ and $27$ for DS1 and DS2, respectively.}
\label{fig:sgd_plots}
\end{figure}



\end{document}